\DeclareMathOperator*{\argmax}{arg\,max}
\theoremstyle{plain}
\newtheorem{thm}{\protect\theoremname}
\theoremstyle{definition}
\newtheorem{defn}[thm]{\protect\definitionname}
\theoremstyle{definition}
\newtheorem{example}[thm]{Example}
\theoremstyle{plain}
\newtheorem{prop}[thm]{\protect\propositionname}
\theoremstyle{plain}
\newtheorem{lem}[thm]{\protect\lemmaname}
\theoremstyle{plain}
\newtheorem{cor}[thm]{Corollary}
\theoremstyle{plain}
\theoremstyle{definition}
\newtheorem{remark}[thm]{Remark}
\providecommand{\definitionname}{Definition}
\providecommand{\lemmaname}{Lemma}
\providecommand{\propositionname}{Proposition}
\providecommand{\theoremname}{Theorem}
\newcommand{\AOHistorySet}{\overline{\mathcal{A\!O}}}
\newcommand{\AOHistoryRV}{\overline{A\!O}}
\newcommand{\AOHistory}{\tau}
\newcommand{\ObservationSet}{\mathcal{O}}
\newcommand{\ObservationRV}{O}
\newcommand{\Observation}{o}
\newcommand{\HistorySet}{\mathcal{H}}
\newcommand{\HistoryRV}{H}
\newcommand{\History}{\tau}
\newcommand{\ActionSet}{\mathcal{A}}
\newcommand{\ActionRV}{A}
\newcommand{\Action}{a}
\newcommand{\StateSet}{\mathcal{S}}
\newcommand{\StateRV}{S}
\newcommand{\State}{s}
\newcommand{\RewardFunction}{\mathcal{R}}
\newcommand{\RewardRV}{R}
\newcommand{\Reward}{r}
\newcommand{\Aut}{\mathrm{Aut}}
\newcommand{\Sym}{\mathrm{Sym}}
\newcommand{\Iso}{\mathrm{Iso}}
\newcommand{\Auto}{g}
\newcommand{\AutoSecond}{\tilde{\Auto}}
\newcommand{\Isom}{f}
\newcommand{\IsoProfile}{\mathbf{\Isom}}
\newcommand{\SymProfile}{\mathbf{\Isom}}
\newcommand{\AutProfile}{\mathbf{\Auto}}
\newcommand{\DecP}{D}
\newcommand{\DecPSet}{{\mathcal{\DecP}}}
\newcommand{\DecPSetSecond}{{\mathcal{C}}}
\newcommand{\DecPTilde}{{\tilde{\DecP}}}
\newcommand{\Policy}{\pi}
\newcommand{\PolicySet}{\Pi}
\newcommand{\PolicySetDet}{\Pi^0}
\newcommand{\LA}{\sigma}
\newcommand{\LAProfile}{\boldsymbol{\sigma}}
\newcommand{\LASet}{\Sigma}
\newcommand{\OP}{\mathrm{OP}}
\newcommand{\Prob}{\mathbb{P}}
\newcommand{\E}{\mathbb{E}}
\newcommand{\PlayerSet}{\mathcal{N}}
\newcommand{\Proj}{\mathrm{proj}}
\newcommand{\Mixture}{\mu}
\newcommand{\Distr}{\nu}
\newcommand{\U}{\mathcal{U}} 
\newcommand{\PowerSet}{\mathcal{P}}
\newcommand{\PolicyLatent}{Z}
\newcommand{\SymNorm}{{\mathcal{K}}}
\newcommand{\PolicyTilde}{\tilde{\pi}}
\newcommand{\PolicyHat}{\hat{\pi}}
\newcommand{\Tmax}{T}
\newcommand{\Hash}{\#}
\newcommand{\IsomSecond}{\tilde{\Isom}}
\newcommand{\IsomHat}{\hat{\Isom}}
\newcommand{\DecPSecond}{E}
\newcommand{\DecPThird}{F}
\newcommand{\MixtureHat}{\hat{\Mixture}}
\newcommand{\Id}{e}
\newcommand{\Param}{\theta}
\newcommand{\NN}{\xi}
\newcommand{\Tie}{\chi}
\newcommand{\EquivMapping}{\phi}
\newcommand{\EquivSet}{\Phi}
\newcommand{\co}[1]{\ifthenelse{\boolean{commentsactivated}}{{\color{red} {\em CO: #1 }}}{}}
\newcommand{\jt}[1]{\ifthenelse{\boolean{commentsactivated}}{{\color{blue} {\em JT: #1 }}}{}}
\mathchardef\ordinarycolon\mathcode`\:
\newcommandx{\parencite}[3][1, 2]{{\citep[#1][#2]{#3}}}
\newcommandx{\textcite}[3][1, 2]{{\citet[#1][#2]{#3}}}
\icmltitlerunning{A New Formalism, Method and Open Issues for Zero-Shot Coordination}
\newif\ifappendix
\begin{document}

\twocolumn[
\icmltitle{A New Formalism, Method and Open Issues for Zero-Shot Coordination}



\icmlsetsymbol{equal}{\(\dagger\)}

\begin{icmlauthorlist}
\icmlauthor{Johannes Treutlein}{to,vector,equal}
\icmlauthor{Michael Dennis}{chai}
\icmlauthor{Caspar Oesterheld}{duke}
\icmlauthor{Jakob Foerster}{to,vector,face}
\end{icmlauthorlist}

\icmlaffiliation{to}{Department of Computer Science, University of Toronto, Toronto, Canada}
\icmlaffiliation{vector}{Vector Institute, Toronto, Canada}
\icmlaffiliation{duke}{Department of Computer Science, Duke University, Durham, USA}
\icmlaffiliation{face}{Facebook AI Research, USA}
\icmlaffiliation{chai}{Department of Electrical Engineering and Computer Science, University of California, Berkeley, USA}

\icmlcorrespondingauthor{Johannes Treutlein}{treutlein@cs.toronto.edu}

\icmlkeywords{Multi-agent reinforcement learning, zero-shot coordination, Dec-POMDP isomorphism}

\vskip 0.3in
]



\printAffiliationsAndNotice{\textsuperscript{\(\dagger\)}Part of the work was done as an intern at the Center for Human-Compatible AI, University of California, Berkeley}


\begin{abstract}
In many coordination problems, independently reasoning humans are able to discover mutually compatible policies. In contrast, independently trained self-play policies are often mutually incompatible.  
\emph{Zero-shot coordination} (ZSC) has recently been proposed as a new frontier in multi-agent reinforcement learning to address this fundamental issue. 
Prior work approaches the ZSC problem by assuming players can agree on a shared learning algorithm but not on labels for actions and observations, and proposes \emph{other-play} as an optimal solution. 
However, until now, this ``label-free'' problem has only been informally defined. 
We formalize this setting as the \emph{label-free coordination (LFC) problem} by
defining the \emph{label-free coordination game}.  We show that other-play is not an optimal solution to the LFC problem as it fails to consistently break ties between incompatible maximizers of the other-play objective.  We introduce an extension of the algorithm, \emph{other-play with tie-breaking}, and prove that it is optimal in the LFC problem and an equilibrium in the LFC game. Since arbitrary tie-breaking is precisely what the ZSC setting aims to prevent, we conclude that the LFC problem does not reflect the aims of ZSC. To address this, we introduce an alternative informal operationalization of ZSC as a starting point for future work.

\end{abstract}

\section{Introduction}
\label{section-introduction}

In multi-agent reinforcement learning (MARL), variations of the
\emph{self-play} (SP) regime \parencite[][]{tesauro1994td} have been successful in producing superhuman policies for two-player zero-sum games such as chess, go, and poker \parencite{campbell2002deep,silver2017mastering,brown2018superhuman}. SP leads to policies that are highly adapted to each other and thus often appear artifical.
This is not a problem in two-player zero-sum games as all optimal policies are interchangeable \parencite{nash1951non}, at least when considering optimal opponents.

In fully cooperative MARL, however, such arbitrary conventions can be undesirable, as they fail when paired with agents that were not present during SP training. For instance, consider a situation in which robots must avoid collisions, by either swerving right or left or slowing down to avoid the other robot. Here, robots trained via SP would randomly learn to swerve either left or right and thus crash half the time at test time when paired in \emph{cross-play (XP)} with agents from independent training runs. Similarly, the arbitrary conventions learned by agents, e.g., in the card-game Hanabi, can prevent successful human-AI coordination \parencite[][]{foerster2019bayesian,carroll2019utility}.

\begin{figure}
    \centering
    \includegraphics[width=0.475\columnwidth]{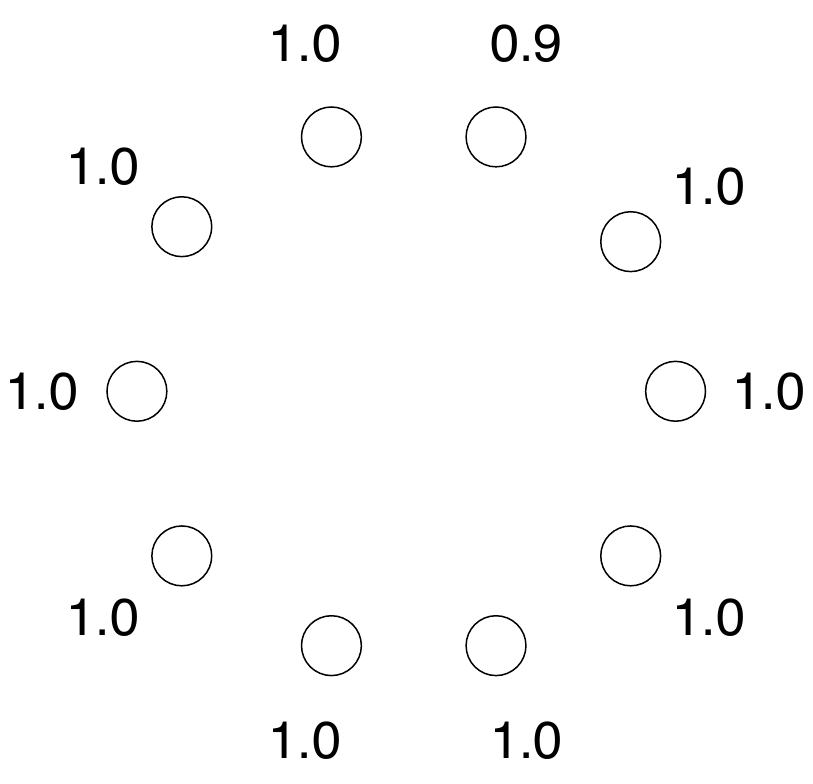}
    \caption{Rewards in the lever coordination game. Levers with equal rewards cannot be distinguished without labels.}
    \label{fig:1}
\end{figure}

This shortcoming of SP in fully cooperative problems motivates the study of the \emph{zero-shot coordination (ZSC) problem}, which \textcite{hu2020other} operationalize as finding a general-purpose learning algorithm that allows independently trained agents to coordinate successfully at test time. The independent training is a proxy for the independent decision making that humans have to undertake when solving coordination tasks, while the ability to agree on an algorithm corresponds to having a common high-level approach for solving these problems. 

More specifically, \textcite{hu2020other} assume that players only agree on a learning algorithm, but without sharing labels for observations, actions, and states in the environment. As an example, consider the lever coordination game in Figure~\ref{fig:1}. There are two agents, each having the choice between \(10\) different levers. If both agents choose the same lever, they receive rewards as specified in Figure~\ref{fig:1}. There is \(1\) lever with a reward of \(0.9\) and \(9\) levers with reward \(1\). If agents pull different levers, the reward is \(0\). Here, the SP algorithm will learn a joint policy wherein  one lever of the nine with a reward of \(1\) is played by both agents, but such a policy cannot be coordinated on without labels for levers.

\textcite{hu2020other} suggest the \emph{other-play} (OP) algorithm as a solution and give an informal optimality proof. The idea behind the algorithm is that it learns policies that are robust to permutations by symmetries of a given problem. For example, in the lever coordination game, the actions leading to a reward of \(1\) are all symmetric, so a randomly permuted policy will pick them with equal probability. In contrast, the lever leading to a payoff of \(0.9\) does not have symmetric counterparts, so it is possible to consistently choose that lever. \textcite{hu2020other} show that in Hanabi, OP improves performance over SP when playing with real humans, showing the benefits of ZSC to human-AI coordination. However, \textcite{hu2020other} do not formalize the ``no labels'' assumption, but rely on an intuitive notion in their proof. Moreover, \textcite{hu2020other}'s proof relies implicitly on the assumption that the  OP algorithm's objective has a unique maximizer.

\begin{figure}
    \centering
    \vspace{0pt}
    \includegraphics[width=0.95\columnwidth]{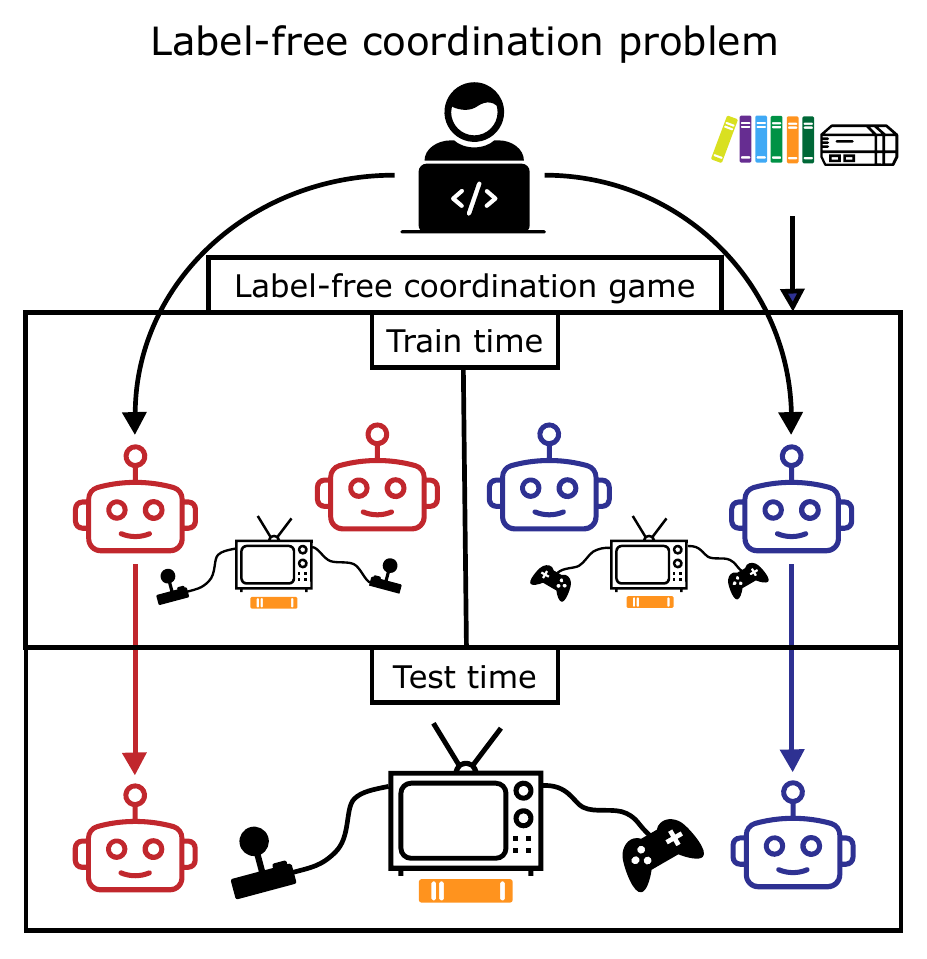}
    \vspace{0pt}
    \caption{Illustration of the LFC problem. A learning algorithm trains agents independently in a randomly chosen LFC game. The use of different controllers by red and blue agents symbolizes that, while the agents can separately coordinate on policies during train time, they do not know the labels used by agents of the other color and cannot coordinate with them before test time.}
 
    \label{fig:10}
\end{figure}

The first goal of this paper is a formalization of \textcite{hu2020other}'s problem setting and a theoretical analysis of OP (Section~\ref{formalism-and-analysis}). To do so, we introduce \emph{label-free coordination (LFC) games} and define the \emph{label-free coordination problem} as finding an optimal algorithm to recommend to players in a random LFC game (Section~\ref{section-the-zero-shot-coordination-game}). The formalization provides a rigorous optimality criterion that can be used to compare OP to other algorithms theoretically and to discuss whether \textcite{hu2020other}'s formulation is aligned with the goal of human-AI coordination. After introducing a generalized version of the OP algorithm (Section~\ref{section-other-play-generalization}), we show that OP can be suboptimal in the LFC problem, as \textcite{hu2020other}'s assumption of a unique maximizer is not always fulfilled (Section~\ref{section-other-play-not-optimal}).
Our findings suggest that the 8 point gap between the SP and XP scores for the vanilla version of OP in Hanabi \parencite{hu2020other} may be due to fundamental problem with the algorithm instead of an optimization issue.

Second, we fix this shortcoming by introducing an extension of the algorithm, \emph{OP with tie-breaking}, in which players use a tie-breaking function to consistently break ties between different OP-optimal policies (Section~\ref{section-other-play-with-tie-breaking}). We prove that this extension is an optimal solution to the LFC problem and that all players using the algorithm is a Nash equilibrium of any LFC game.

Third, we verify our results experimentally in two toy examples (Section~\ref{section-experiments}). Our examples are stylized coordination problems that abstractly model real-world coordination problems. 

Fourth, we argue that the operationalization of the ZSC problem by \textcite{hu2020other} does not reflect ZSC's aims, and we suggest a new operationalization as a starting point for future work (Section~\ref{section-open-issues}). Despite the ``no labels'' assumption, an optimal algorithm for the problem implements arbitrary tie-breaks. While there may be some settings where it is feasible to pre-coordinate on a tie-breaking function, in general, arbitrary tie-breaking is precisely what the ZSC setting aims to prevent. This shows that algorithmic advances towards \textcite{hu2020other} problem formulation are misaligned with the overarching goals of ZSC. We propose an improved informal operationalization in which players are allowed to coordinate only on high-level ideas for a learning algorithm but are prohibited from sharing implementation details such as random seeds, parameters, or code. We leave it to future work to refine and address this revised definition of ZSC.

To save space, we give informal statements and explanations of our theoretical results in the main text. A rigorous treatment of all results, including formal statements and proofs, can be found in the appendix. Our two main results are stated and proven in Appendices~\ref{appendix-proof-of-theorem-1} and \ref{appendix-proof-of-theorem-2}.

\section{Related work}
\label{related-work}

\label{equilibrium-selection-problems}


\paragraph{Game theory} A closely related problem to ZSC is the \emph{equilibrium selection problem} in game theory \parencite[see][]{harsanyi1988general}, which arises when there are different equilibria in a game. The ZSC problem arises when there is an equilibrium selection problem between different optimal policies in a fully cooperative game. Equilibrium selection problems can introduce the additional difficulty that players can have different preferences over the equilibria.

\citeauthor{harsanyi1988general} (\citeyear{harsanyi1988general}; see \citeauthor{harsanyi1975tracing}, \citeyear{harsanyi1975tracing}) introduce a general solution to the equilibrium selection problem in the framework of \emph{standard-form games}, and \textcite{herings2003equilibrium} have adapted it to stochastic games. One property of this solution is invariance to isomorphisms between games, which, like the LFC problem, is based on the idea that a solution should not depend on arbitrary labels \parencite[][ch.~3.4]{harsanyi1988general}. A difference to our setting is that we are interested in practical algorithms that can be run on large-scale games, for which computing \textcite{harsanyi1988general}'s solution would be infeasible \parencite[cf.][]{goldberg2013complexity,herings2002computation}. Another difference is that standard-form games have less structure than Dec-POMDPs. We explicate this and compare our approaches in more detail in Appendix~\ref{section-using-different-symmetry-concepts}.

Another game-theoretic approach to coordination problems is based on exogenous information about agents' options. For instance, consider the famous problem ``you lost your friend in New
York City, where are you going to meet?'' \parencite{schelling1980strategy,mehta1994nature}. Here, additional meaning is attached to each option, independent of dynamics and rewards of the problem, which allows for picking a unique option. Agents might also be able to choose options based on social conventions and norms, such as which side of the street to drive on \parencite{lewis2008convention}.
In this work, we instead restrict our attention to coordination-problem solutions based only on endogenous information present in the abstract structure of the problem. In many settings, conventions need to be introduced and adapted to within an episode at test time, rather than coordinating them beforehand, e.g., via joint training. 

\paragraph{Coordination without joint training} Some work looks at coordination problems in settings that do not assume  agents are trained together. For instance,
\textcite{boutilier1999sequential} introduces a dynamic programming algorithm for fully observable, fully cooperative stochastic games, where no prior coordination between agents is possible.
Agents randomize between different optimal actions in a given state until they succeed on coordinating on an optimal joint action. 
\textcite{goldman2007learning} consider the Dec-POMDP setting with a cheap-talk channel in which agents cannot pre-coordinate on strategies. They introduce an algorithm in which agents learn to interpret each others' messages and use them to communicate observations and to suggest actions to coordinate on. The idea behind ZSC is to learn joint policies that implement similarly robust strategies as the above approaches, without having to explicitly specify such behavior.

Another related approach is ad-hoc teamwork, wherein the goal is to train an agent to perform well in expectation when subbed into a randomly chosen team of agents \parencite{stone2010ad,barrett2011empirical,barrett2015cooperating}. Ultimately, this amounts to learning a best response to a distribution over team members. However, the team members’ policies may themselves be ill-suited for coordination, e.g., if they are obtained via SP. ZSC instead assumes that all agents are optimized for being able to coordinate well, in the absence of pre-established conventions (even though, as mentioned above, optimal ZSC policies in many settings introduce conventions \emph{within} an episode). It can thus find entirely different equilibria, ones that achieve good performance and can be consistently coordinated upon without SP training. As an example, consider the lever coordination game from the introduction. Optimal ad-hoc agents, trained as a best response to a population of SP agents would learn to uniformly randomize between the levers with a payoff of 1.0, while an optimal policy for ZSC always plays the unique lever with a payoff of 0.9.

Alternatively, an agent may be trained using data about other agents' behavior in order to learn a compatible strategy offline \parencite{lerer2019learning,tucker2020adversarially}. This again differs from ZSC in that it is concerned with learning a best response instead of optimizing all agents to find non-arbitrary equilibria. Similarly, human-AI coordination can be improved by training agents as a best response to a human model \parencite{carroll2019utility}. Human-AI coordination is also an aim of ZSC, but we try to uncover general principles behind a human-like learning algorithm instead of learning problem-specific policies from human models.



\section{Background}
\label{section-background}

\paragraph{Dec-POMDPs}
We consider decentralized partially observable Markov decision problems (Dec-POMDPs) \parencite{nair2003taming,oliehoek2016concise}. 
A (finite-horizon) Dec-POMDP \(\DecP\) is a tuple of 
a set of agents $\mathcal{N}=\{1,\dots,N\}$ where \(N\in\mathbb{N}\), a finite set of states $\StateSet$, a set of joint actions \(\ActionSet:=\prod_{i\in \PlayerSet}\ActionSet_{i}\), where $\ActionSet_{i}$ is a finite set of actions for agent $i\in\PlayerSet$, a transition probability kernel $P\colon\StateSet\times\ActionSet\rightarrow\Delta(\StateSet)$, where $\Delta(\StateSet)$ denotes the
set of probability distributions over $\StateSet$, a reward function $\RewardFunction\colon\StateSet\times\ActionSet\rightarrow\mathbb{R}$, a set of joint observations \(\ObservationSet:=\prod_{i\in \PlayerSet}\ObservationSet_{i}\), where $\ObservationSet_{i}$ is a finite set of observations for agent $i\in\PlayerSet$, an observation probability kernel $O\colon\StateSet\times\ActionSet\rightarrow\Delta(\ObservationSet)$, an initial state distribution $b_{0}\in\Delta(\StateSet)$, and a horizon $\Tmax\in\mathbb{N}_0$. 
We write \(\ActionSet^\DecP,\ActionSet^\DecPSecond\), etc.\ to indicate which Dec-POMDP \(\DecP\), \(\DecPSecond\), etc.\ a set or function belongs to. 

In a Dec-POMDP, at time step \(1\leq t\leq \Tmax\), the environment is in a state \(\StateRV_t\), agent \(i\in\PlayerSet\) receives observations \(\ObservationRV_{i,t}\) via \((\ObservationRV_{j,t})_{j\in\PlayerSet}\sim O(\cdot\mid\StateRV_{t},\ActionRV_{t-1})\) and
chooses an action \({\ActionRV_{i,t}\sim \Policy_i(\cdot\mid \AOHistoryRV_{i,t})}\) according to a \emph{local policy} \(\Policy_i\in\PolicySet_i\), where \(\AOHistoryRV_{i,t}:=(\ActionRV_{i,0},\ObservationRV_{i,1},\dotsc,\ActionRV_{i,t-1},\ObservationRV_{i,t})\) is a random variable for agent \(i\)'s action-observation history at step \(t\), with values \(\AOHistory_{i,t}\in\AOHistorySet_{i,t}:=\left(\ActionSet_{i}\times\ObservationSet_{i}\right)^t\).\footnote{In a slight abuse of notation, we use \(\ObservationRV\) for both observation probabilities and observation random variable.}
Agents receive a joint reward \(\RewardRV_t:=\RewardFunction(\StateRV_t,\ActionRV_t)\) and the environment transitions into a state \(\StateRV_{t+1}\sim P(\cdot\mid \StateRV_t,\ActionRV_t)\). The initial state is \(\StateRV_0\sim b_0\). We define the set of entire histories, containing tuples of all states, actions, rewards and observations until step \(\Tmax\) as \(\HistorySet\) and denote \(\HistoryRV\) as a random variable for the entire history.

Denote \(\Prob_\Policy\) for a probability measure on a space with the random variables defined above, where agents follow the \emph{(joint) policy} \(\Policy\in\PolicySet:=\prod_{i\in\PlayerSet}\PolicySet_i\), and let \(\E_\Policy\) be the expectation with respect to that measure. Given a Dec-POMDP \(\DecP\), 
the \emph{self-play (SP) objective} \(J^\DecP\colon \PolicySet^\DecP\rightarrow\mathbb{R}\) of \(\DecP\) is defined via
\(J^{\DecP}(\Policy):=\E_{\Policy}\left[\sum_{t=0}^{\Tmax}R_{t}\right]\)
for \(\Policy\in\PolicySet^\DecP\). Here, \(J^\DecP(\Policy)\) is called the expected return of the joint policy $\Policy$.


\paragraph{Zero-shot coordination and other-play} As explicated in the lever coordination problem, there can be different, incompatible SP-optimal joint policies. A SP algorithm tries to maximize the SP objective and will in general randomly learn any one of these policies. When two such independently trained joint policies \(\Policy^{(1)},\Policy^{(2)}\) are matched, this can yield bad \emph{XP values} \(J(\Policy_1^{(1)},\Policy_2^{(2)})\). 

To address this shortcoming, \textcite{hu2020other} introduce the ZSC problem. In spirit, the problem is to find a general-purpose learning algorithm for fully cooperative environments to train agents that are able to robustly coordinate with their teammates. It is assumed that teammates have also been optimized for ZSC, using a common high-level approach. However, arbitrarily co-adapting agents' policies, e.g., through joint training, is disallowed. \textcite{hu2020other} operationalize this as the problem of recommending one learning algorithm to players in a fully cooperative game. Each player trains a joint policy using the algorithm and discards all but one agent. The resulting agents from all players are then evaluated in XP over one episode. \textcite{hu2020other} assume players are able to coordinate on a common learning algorithm, but that they are unable to coordinate the learned policies based on common labels for the Dec-POMDP.

\textcite{hu2020other} propose the OP algorithm as a method for this setting. The algorithm's main idea is to train a joint policy to achieve high expected return when each local policy is randomly permuted to break symmetries in different ways. The hope is that this results in a unique joint policy, at the cost of a potentially suboptimal expected return. Informally, one can consider \emph{equivalence mappings} \(\EquivMapping\in\EquivSet\), which are maps that can be applied to actions, observations, and states, such that applying the map leaves the problem dynamics unchanged. Equivalence mappings can also be applied to a local policy \(\Policy_i\) to get a new policy \(\EquivMapping(\Policy_i)\). The OP objective \(J_\OP\) can then be defined via \(J_\OP(\Policy):=\E_{\EquivMapping\sim \U(\EquivSet)}[J(\EquivMapping(\Policy_1),\EquivMapping(\Policy_2))]\)
for a joint policy \(\Policy\in\PolicySet\),
where \(\U(\EquivSet)\) is a uniform distribution over \(\EquivSet\).\footnote{This is not \textcite{hu2020other}'s original definition, but it is equivalent, by \textcite{hu2020other}'s Proposition~2. The version presented here is the one we will generalize later.}
\textcite{hu2020other}'s definitions 
do not apply to Dec-POMDPs in which agents have different action or observation sets, and they do not account for symmetries between agents. We will formally define a more general version of OP in Section~\ref{section-other-play-generalization}.

\textcite[][]{hu2020other} provide an informal proof that both players using OP is an optimal equilibrium in the fully cooperative game described above. Using an appropriate formalization, we show in the next section that this is in general not correct. Contrary to \textcite{hu2020other}'s implicit assumption, there can be multiple, incompatible maximizers of the OP objective.

\section{Formalism and analysis of OP}
\label{formalism-and-analysis}

\subsection{Dec-POMDP isomorphisms}

\label{section-dec-pomdp-isomorphism}

To formalize the no-labels assumption, we introduce isomorphisms between Dec-POMDPs, which formalize the intuition that two Dec-POMDPs may represent the same problem using different labels. Our definition is a trivial generalization of \textcite{kang2012exploiting}'s  \emph{automorphisms} over partially observable stochastic games (POSGs)  to the concept of an \emph{isomorphism}, but restricted to fully cooperative problems. Analogous definitions of isomorphisms between games have been introduced before in different frameworks (e.g.,~\citeauthor{harsanyi1988general},~\citeyear{harsanyi1988general}, ch.~3.4; \citeauthor{peleg1999canonical},~\citeyear{peleg1999canonical}).

Let \(\DecP,\DecPSecond\) be two Dec-POMDPs. Consider a tuple of bijective maps
\(\Isom:=(\Isom_{N},\Isom_{S},(\Isom_{A_i})_{i\in\PlayerSet},(\Isom_{O_i})_{i\in\PlayerSet}),\) where 
\begin{alignat}{2}\label{eq:7}
&\Isom_{N}\colon &&\mathcal{N}^\DecP\rightarrow\PlayerSet^\DecPSecond\\
& \Isom_{S}\colon &&\StateSet^\DecP\rightarrow\StateSet^\DecPSecond\\
\forall i\in\PlayerSet\colon\quad&\Isom_{A_i}\colon &&\ActionSet_{i}^\DecP\rightarrow\ActionSet^\DecPSecond_{\Isom_N(i)}\\
\forall i \in\PlayerSet\colon\quad&\Isom_{O_i}\colon
&&\ObservationSet_{i}^\DecP\rightarrow\ObservationSet_{\Isom_N(i)}^\DecPSecond.\label{eq:8}
\end{alignat}
\jt{think about the following again}
Define a map \(\Isom_A\colon\ActionSet^\DecP\rightarrow\ActionSet^\DecPSecond\) via
\begin{equation}\Isom_A(\Action):=\left(\Isom_{A_{\Isom^{-1}_N(j)}}\left(\Action_{\Isom^{-1}_N(j)}\right)\right)_{j\in\PlayerSet^\DecPSecond},\end{equation}
for \(\Action\in\ActionSet^\DecP\),
and \(\Isom_O\) analogously for observations \(\Observation\in\ObservationSet^\DecP\). That is, in the joint action \(\Isom_A(\Action)\in\ActionSet^\DecPSecond\), agent \({j=\Isom_N(i)\in\PlayerSet^\DecPSecond}\) (where \(i\in\PlayerSet^\DecP\)) plays action \({\Isom_{A_i}(a_i)\in\ActionSet_j^\DecPSecond}\).

\begin{defn}[Dec-POMDP isomorphism]\label{dec-pomdp-isomorphism}
\label{def:Dec-POMDP-isomorphism} Let $\DecP,\DecPSecond$ be Dec-POMDPs such that both have the same horizon \({\Tmax^\DecP=\Tmax^\DecPSecond}\), and let \(\Isom\) be a tuple of bijective maps as defined in Equations~(\ref{eq:7})--(\ref{eq:8}). Then \(\Isom\) is an isomorphism from \(\DecP\) to \(\DecPSecond\) if
for any \(\Action\in\ActionSet^\DecP\), \(s,s'\in\StateSet^\DecP\), and \(o\in\ObservationSet^\DecP\), 
\begin{align}
P^\DecP(s'\mid s,a) &= P^\DecPSecond(f_{S}(s')\mid f_{S}(s),f_A(a))\label{eq:2}\\
O^\DecP(o\mid s,a) &= O^\DecPSecond(f_{O}(o)\mid f_{S}(s),f_A(a))\label{eq:3}\\
\RewardFunction^\DecP(s,a) &= \RewardFunction^\DecPSecond(f_{S}(s),f_A(a))\label{eq:4}\\
b^\DecP_{0}(s) &= b_{0}^\DecPSecond(f_{S}(s)).
\end{align}
We denote \(\Iso(\DecP,\DecPSecond)\) for the set of isomorphisms from \(\DecP\) to \(\DecPSecond\). If that set is non-empty, \(\DecP\) and \(\DecPSecond\) are called isomorphic.
\end{defn}
In the following, we adopt the convention to write \(fa\) instead of \(f_A(a)\) and \(f\Action_i\) instead of \(f_{A_i}\Action_i\), and we do the same for observations and states. We can also write \(f\AOHistory_{i,t}\) for \({\AOHistory_{i,t}\in\AOHistorySet_{i,t}}\), which is defined as the element-wise application of \(f\). Letting \(\Isom \Reward := \Reward\) for rewards, we can also define \(\Isom\History\) for entire histories \(\History\in\HistorySet\). One can show that this action of isomorphisms can be inverted by \(\Isom^{-1}\) (see Appendix~\ref{appendix-dec-pomdp-isomorphisms-and-automorphisms}).

A policy \(\Policy\in\PolicySet^\DecP\) can be transformed by an isomorphism \(\Isom\in\Iso(\DecP,\DecPSecond)\) into a policy for \(\DecPSecond\).
We call this operation the \emph{pushforward}, analogously, for instance, to the construction of pushforward measures, as precomposition of \(\Policy\) with \(\Isom^{-1}\) ``pushes'' the policy from one Dec-POMDP to another. The definition is analogous to that of applications of symmetries to policies in \textcite{hu2020other}.

\begin{defn}\label{definition-pushforward}
Let $\DecP,\DecPSecond$ be isomorphic Dec-POMDPs, let \(\Isom\in\Iso(\DecP,\DecPSecond)\), and let \(\Policy\in\PolicySet^\DecP\). 
Then we define the \emph{pushforward} \(\Isom^*\Policy\in\PolicySet^\DecPSecond\) of \(\Policy\) by \(\Isom\) via
\begin{equation}
(\Isom^{*}\Policy)_{j}(\Action_{j}\mid\AOHistory_{j,t}):=\Policy_{\Isom^{-1} j}(f^{-1}\Action_{j}\mid f^{-1}\AOHistory_{j,t})
\end{equation}
for all \(j\in\mathcal{N}^\DecPSecond,a_j\in\ActionSet_j^\DecPSecond,t\in\{0,\dotsc,\Tmax\},\) and \({\AOHistory_{j,t}\in\AOHistorySet_{j,t}^\DecPSecond}\). That is, in the joint policy \(\Isom^*\Policy\), agent \(j\in\PlayerSet^\DecPSecond\) gets assigned the local policy \(\Policy_i\) of agent \(i:=\Isom^{-1}j\in\PlayerSet^\DecP\), precomposed with \(\Isom^{-1}\).
\end{defn}
We show in Appendix~\ref{appendix-pushforward-policies} that \(\Policy\) and \(\Isom^*\Policy\) lead to the same expected return in their respective Dec-POMDPs.

For an example of an isomorphism and a pushforward policy, consider the \emph{two-stage lever game}, a stylized coordination problem like the lever coordination game, but with two rounds instead of one. We will use the example in Section~\ref{section-other-play-not-optimal} to show that OP is not optimal in the LFC problem.

\begin{example}[Two-stage lever game]
\label{example-two-stage-lever-game}


Consider the following variant of the lever coordination game, denoted by \(\DecP\). The problem has two agents,  \(\PlayerSet=\{1,2\}\), and rounds (\(\Tmax=1\)). Each round, each agent pulls a lever, \(\mathcal{A}_1=\mathcal{A}_2=\{1,2\}\). If both agents choose the same lever, the reward is \(1\), otherwise \(-1\). There are two observations, \(\ObservationSet_1=\ObservationSet_2=\{1,2\}\), and one state. In the second round (\(t=1\)), agents observe each other's previous action, so \(\ObservationRV_{i,1}=\ActionRV_{-i,0}\) for \(i=1,2\).

\begin{figure}
    \centering
    \begin{tikzpicture}
\node (init) {};
            \node[ text centered] (t1) {         \small\begin{tabular}{c|c}
        \(\Action\) & \(\RewardFunction^\DecP(\State,\Action)\)\\
        \hline
       (1,1)  & 1 \\
        (1,2) & -1 \\
       (2,1)  & -1 \\
        (2,2) & 1 
    \end{tabular}};
            \node[right=of t1, text centered] (t2) {
             \small\begin{tabular}{c|c}
        \(\Action\) & \(\RewardFunction^\DecPSecond(\State,\Action)\)\\
        \hline
       (1,1)  & -1 \\
        (1,2) & 1 \\
       (2,1)  & 1 \\
        (2,2) & -1 
    \end{tabular}
            };
            
            \path[draw, -latex'] (t1) to node [midway,above] {\(\Isom^{-1}\)} (t2); 

        \end{tikzpicture}

         \caption{Reward function \(\RewardFunction^\DecP\) in the two-stage lever game and reward function \(\RewardFunction^\DecPSecond=\RewardFunction^\DecP\circ\Isom^{-1}\) in an isomorphic problem.}
       
    \label{tab:4}
\end{figure}

Now consider an isomorphic problem \(\DecPSecond\) in which the labels for the actions of the second agent have been switched. A possible isomorphism \(\Isom\in\Iso(\DecP,\DecPSecond)\) is one consisting of identity maps, except for \(\Isom_{A_2}\), which switches the two actions of player \(2\). We give tables of rewards for both the original and the isomorphic problem in Figure~\ref{tab:4}. Applying \(\Isom^*\) to any policy \(\Policy\in\PolicySet^\DecP\) creates an equivalent policy \(\Isom^*\Policy\) for the Dec-POMDP \(\DecPSecond\), in which the actions of agent 2 in \(\DecP\) are replaced by the corresponding actions in \(\DecPSecond\).

\end{example}

\subsection{The LFC game and problem}
\label{section-the-zero-shot-coordination-game}

We begin by defining LFC games, which we then use to define the LFC problem. We formalize an LFC game as a fully cooperative game between \emph{principals} whose strategies are \emph{learning algorithms}, as defined below. The LFC game is defined for a specific ``ground truth'' problem \(\DecP\). The game's players, called principals, are the same as the agents in \(\DecP\). Each principal observes a randomly \emph{relabeled} but isomorphic Dec-POMDP and trains a joint policy on that problem using a learning algorithm. The policies are then pushed back to \(\DecP\) and evaluated in XP.

For a set of Dec-POMDPs \(\DecPSetSecond\), let \(\Delta(\PolicySet^\DecP)\) be a set of probability measures over \(\PolicySet^\DecP\) for \(\DecP\in\DecPSetSecond\). A learning algorithm for \(\DecPSetSecond\) is then defined as a map \(\LA\) that takes in Dec-POMDPs \(\DecPSecond\in\DecPSetSecond\) and outputs distributions 
\(\LA(\DecP)\in\Delta(\PolicySet^\DecP)\), and \(\LASet^\DecPSetSecond\) is defined as the set of learning algorithms for \(\DecPSetSecond\). For \(\Distr\in\Delta(\PolicySet^\DecP)\) and an isomorphism \(\Isom\in\Iso(\DecP,\DecPSecond)\), define the pushforward distribution \(\Isom^*\Distr:=\Distr \circ (\Isom^*)^{-1}\in\Delta(\PolicySet^\DecPSecond)\).

Now fix a Dec-POMDP \(\DecP\). For a given Dec-POMDP, we can create infinitely many different isomorphic problems, as we can use any set of labels, such as natural or real numbers, to define the problem. To describe the process of randomly sampling an isomorphic version of \(\DecP\), then, we restrict ourselves to a specific subset \(\DecPSet\) of isomorphic Dec-POMDPs in which the sets of states, actions, etc.\ are of the form \(\{1,2,\dotsc,k-1,k\}\subseteq\mathbb{N}\). \(\DecPSet\) is defined as the set of all relabeled Dec-POMDPs, i.e., all problems that are isomorphic to \(\DecP\) and have this canonical form (for a rigorous definition, see Appendix~\ref{appendix-dec-pomdp-labelings-and-relabeled-dec-pomdps}). One can interpret sampling from this set as principals coordinating on a canonical way to represent Dec-POMDPs, but each implementing the problem independently.

\begin{defn}[Label-free coordination game]
The \emph{Label-free coordination (LFC) game} for \(\DecP\) is defined as a game \(\Gamma^\DecP\) where the set of players (here called principals) is \(\PlayerSet^\DecP\), the set of strategies is \(\LASet^\DecPSet\), and the common payoff for the strategy profile \(\LAProfile_1,\dotsc,\LAProfile_N\in\LASet^\DecP\) is
    \begin{multline}U^\DecP(\LAProfile):=
    \E_{\DecP_i\sim \U(\DecPSet),\,\Isom_i\sim\U(\Iso(\DecP_i,\DecP)),\,i\in\PlayerSet}\Big[\\
    \E_{\Policy^{(j)}\sim\Isom_j^*\LAProfile_j(\DecP_j),\,j\in\mathcal{N}}
    \Big[
J^\DecP((\Policy^{(k)}_k)_{k\in\PlayerSet})\Big]\Big],\end{multline}
where \(\mathcal{U}(\DecPSet)\) is a uniform distribution over \(\DecPSet\).
\end{defn}
We show in Appendix~\ref{appendix-the-zero-shot-coordination-game} that the LFC games for two isomorphic Dec-POMDPs are equivalent, up to a potential permutation of the principals.

Turning to the LFC problem, the goal is to find a general learning algorithm to recommend to principals in any LFC game (see Figure~\ref{fig:10}). We hence formalize the problem here for a distribution over LFC games; however, since this does not change our theoretical analysis, we will only consider LFC problems for single LFC games afterwards. Let a set \(\DecPSetSecond\) of Dec-POMDPs be given, and denote \(\overline{\DecPSetSecond}:=\bigcup_{\DecPSecond\in\DecPSetSecond}\DecPSet^\DecPSecond\) where \(\DecPSet^\DecPSecond\) is the set of all relabeled problems of \(\DecPSecond\). The LFC problem for \(\DecPSetSecond\) is then defined as the problem of finding one learning algorithm \(\LA\in\LASet^{\overline{\DecPSetSecond}}\) to be used by principals in a randomly drawn game \(\Gamma^\DecPSecond\) for \(\DecPSecond\sim\U(\DecPSetSecond)\).

\begin{defn}[Label-free coordination problem]
We define the \emph{Label-free coordination (LFC) problem} for \(\DecPSetSecond\) as the optimization problem
\begin{equation}\max_{\LA\in\Sigma^{\overline{\DecPSetSecond}}}U^\DecPSetSecond(\LA)\end{equation}
where \(U^\DecPSetSecond(\LA):=\E_{\DecPSecond\sim\U(\DecPSetSecond)}\left[U^\DecPSecond(\LA,\dotsc,\LA)\right]\)
for
\(\LA\in\LASet^{\overline{\DecPSetSecond}}\). If \(\DecPSetSecond=\{\DecPSecond\}\), we refer to this as the LFC problem for \(\DecPSecond\) and write \(U^\DecPSecond(\LA):=U^{\{\DecPSecond\}}(\LA)=U^\DecPSecond(\LA,\dotsc,\LA)\).

\end{defn}

 \subsection{Generalization of OP}
\label{section-other-play-generalization}
\begin{figure*}
    \centering\begin{subfigure}{1\columnwidth}
            \includegraphics[width=1\textwidth]{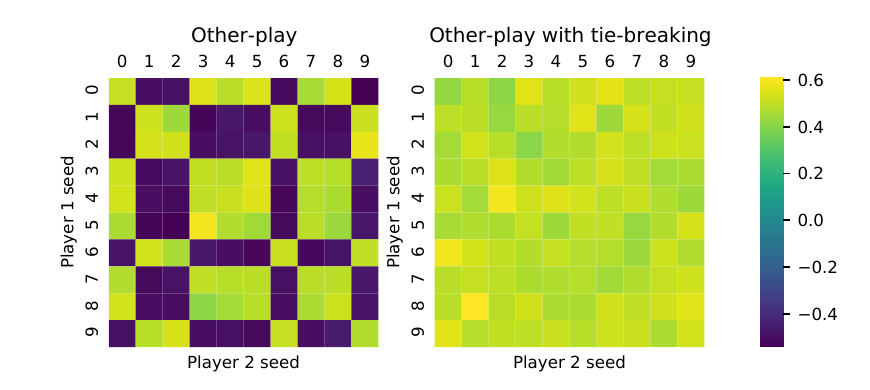}
    \caption{Two-stage lever game}
    \end{subfigure}
        \begin{subfigure}{1\columnwidth}
    \centering
    \includegraphics[width=1\textwidth]{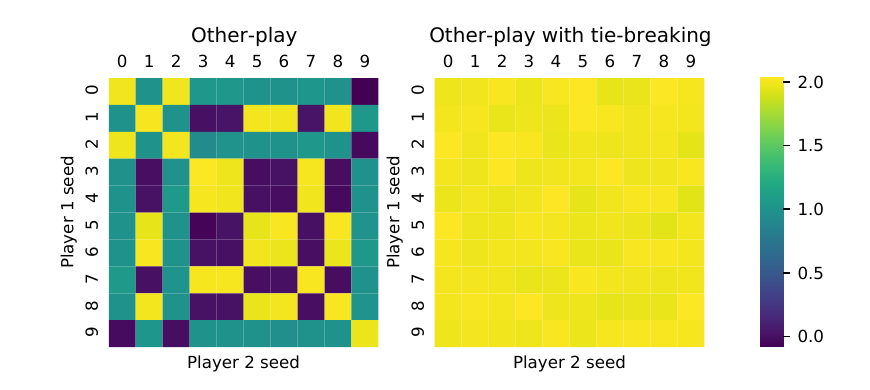}
        \caption{Asymmetric lever game}
    \end{subfigure}
    \caption{Heatmaps indicating XP values between policies from independent training runs. OP is on the left and OP with tie-breaking on the right, in which ties were broken between 32 different seeds. Each value has been averaged over 2048 episodes.}
    \label{fig:21}
\end{figure*}

Here, we introduce a generalized version of the OP algorithm by \textcite{hu2020other}. It is based on \emph{Dec-POMDP automorphisms}, which are isomorphisms $\Auto\in \Aut(\DecP):=\Iso(D,D)$ from a Dec-POMDP onto itself and can be thought of as representing symmetries of the problem. Our definition of an automorphism is equivalent to that of \textcite{kang2012exploiting} in the POSG framework\footnote{It is a straightforward consequence of \textcite{kang2012exploiting}'s results that the problem of finding Dec-POMDP automorphisms is graph isomorphism--complete.} and is a generalization of \textcite{hu2020other}'s equivalence mappings. Unlike the latter, automorphisms are defined for agents with potentially different action and observation sets, they can consist of different permutations for different agents, and they can incorporate permutations of the agents themselves.

As an example, consider an automorphism \(\Auto\in\Aut(\DecP)\) in the two-stage lever game. \(\Auto\) is an automorphism if precomposition with \(\Auto^{-1}\) does not change the reward function, observation probability kernel, etc. For instance, note that players are symmetric, so \(\Auto_N\) can be either the identity or it can switch both agents. In the latter case, applying \(\Auto^{-1}\) to joint actions switches the two players' actions, which does not have an effect on the relevant functions.

To define the OP objective, we use the pushforward by automorphisms, which is by definition a self-map on the set of joint policies \(\PolicySet^\DecP\). Similarly to equivalence mappings, we can randomly permute agents' local policies by different automorphisms, but we have to take into account potential permutations of agents. To that end, for a profile of automorphisms \(\AutProfile\in\Aut(\DecP)^\PlayerSet\), we define the joint policy \(\AutProfile^*\Policy:=\hat{\Policy}\), where the local policy \(\hat{\Policy}_i\) of agent \(i\in\PlayerSet\) is given by
\(\hat{\Policy}_i:= (\AutProfile^*_i\Policy)_i=
\Policy_{\AutProfile_i^{-1}i}(\AutProfile_i^{-1}\cdot\mid \AutProfile_i^{-1}\cdot)\).

\begin{defn}[Other-play]\label{defn-other-play-objective-general}For a Dec-POMDP \(\DecP\) and joint policy \(\Policy\in\PolicySet^\DecP\), define 
\begin{equation}
J^{D}_\OP\colon \Policy\mapsto\E_{\AutProfile\sim \U(\Aut(D)^\mathcal{N})}\left[J^{D}(\AutProfile^*\Policy)
\right].\end{equation}
We say that \(J^\DecP_\OP\) is the \emph{other-play (OP) objective} of \(\DecP\), and \(J^\DecP_\OP(\Policy)\) is the \emph{OP value} of \(\Policy\in\PolicySet^\DecP\).
Given a set of Dec-POMDPs \(\DecPSetSecond\), we define an OP learning algorithm as any learning algorithm \(\LA^\OP\in\Sigma^\DecPSetSecond\) such that \(\E_{\Policy\sim\LA^\OP(\DecP)}[J_\OP^\DecP(\Policy)]=\max_{\Policy\in\PolicySet^\DecP}J^\DecP_\OP(\Policy)\) for all \(\DecP\in\DecPSetSecond\).
\end{defn}
\textcite{hu2020other} show that their objective can be maximized in practice by considering a modified Dec-POMDP and applying any learning algorithm for Dec-POMDPs \parencite[e.g.,][]{sunehag2018value} to that problem. We show in Appendix~\ref{appendix-no-optimal-deterministic-policy} that this does not work for our objective, as optimal policies may need to be stochastic, while in Dec-POMDPs there always exist optimal deterministic policies \parencite[][sec.~2.4.4]{oliehoek2008optimal}. However, we can still apply a vanilla policy gradient method (see Appendix~\ref{appendix-op-implementation}). %

 \subsection{OP is not optimal in the LFC problem}
\label{section-other-play-not-optimal}

An OP learning algorithm may learn different, potentially incompatible OP-optimal policies in independent training runs. Hence, if the algorithm does not only learn compatible policies, it can be suboptimal in the LFC problem.

To see this, consider the two-stage lever game. In a simple game with one round, such as the lever coordination game, applying symmetries helps avoid arbitrary coordination on one lever. However, this changes in a game with two rounds. Since symmetries are not applied independently to the rounds, but they always apply to the whole episode, agents are able to coordinate in the second round if they coordinated by chance in the first round. This is advantageous for getting a higher return, but unfortunately, there are two different ways to coordinate in the second round.

Consider the two policies \(\Policy^R\) and \(\Policy^S\). In both policies, agents randomize uniformly between both levers in the first round. They also both randomize in the second round if coordination was unsuccessful in the first one. If coordination in the first round was successful, there are two different strategies: in \(\Policy^R\), both agents repeat their respective actions from round one. In \(\Policy^S\), both agents switch to the action they did not play in round one, which is unique, given there are only two levers. We show in Appendix~\ref{appendix-two-incompatible} that these policies are both optimal under OP.

Now suppose one agent chooses a local policy from \(\Policy^R\) and the other chooses a local policy from \(\Policy^S\). It is clear that this will yield a suboptimal expected return compared with \(\Policy^R\) or \(\Policy^S\) as agents will always fail to coordinate in the second round if they coordinated in the first round. Thus, in the LFC problem for the two-stage lever game, if a learning algorithm is not concentrated on only one of \(\Policy^S\) or \(\Policy^R\), but instead learns both policies (or potentially equivalent policies in relabeled problems), then that algorithm is suboptimal. We hence have the following result:

\begin{thm}[Informal]\label{thm-op-not-optimal-informal}
Any learning algorithm that learns both \(\Policy^R\) and \(\Policy^S\) in the two-stage lever game is an OP learning algorithm, but it is not optimal in the LFC problem for that game.
\end{thm}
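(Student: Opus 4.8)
The plan is to prove the two assertions of the theorem separately, working throughout in the two-stage lever game \(\DecP\) of Example~\ref{example-two-stage-lever-game}. For the first assertion, by Definition~\ref{defn-other-play-objective-general} it suffices to show that \(\Policy^R\) and \(\Policy^S\) both attain \(\max_{\Policy\in\PolicySet^\DecP}J^\DecP_\OP(\Policy)\): then any learning algorithm whose output distribution on \(\DecP\) is supported on \(\{\Policy^R,\Policy^S\}\) (and, on a relabeled problem \(\DecP'\in\DecPSet\), on the corresponding pair of pushforwards) has maximal expected OP value on every relabeled problem. This is the fact recorded in Appendix~\ref{appendix-two-incompatible}; the argument I would give first enumerates \(\Aut(\DecP)\). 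Since \(\DecP\) has a single state, an automorphism is a permutation of the two players together with a common permutation \(\rho\in\{\mathrm{id},\mathrm{swap}\}\) of the two action labels, which—because \(\ObservationRV_{i,1}=\ActionRV_{-i,0}\)—must act identically on observations, so \(|\Aut(\DecP)|=4\). Because ``repeat your first-round action'' and ``switch to your unplayed action'' are label-equivariant while the event ``\(\ObservationRV_{i,1}=\ActionRV_{i,0}\)'' (first-round coordination) is label-invariant, one checks that \(\AutProfile^*\Policy^R=\Policy^R\) and \(\AutProfile^*\Policy^S=\Policy^S\) for every \(\AutProfile\in\Aut(\DecP)^\PlayerSet\), hence \(J^\DecP_\OP(\Policy^R)=J^\DecP(\Policy^R)=\tfrac12\) and likewise for \(\Policy^S\).

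For the matching upper bound \(J^\DecP_\OP(\Policy)\le\tfrac12\) for \emph{all} \(\Policy\), I would argue that averaging a policy's first-round action law over the action-relabeling automorphisms makes each agent's first-round action uniform and independent of the other's, so first-round coordination succeeds with probability exactly \(\tfrac12\) and contributes \(0\) in expectation; and conditioned on a first-round miscoordination the two agents hold identical information, but the ``action content'' is the whole label set, so no second-round decision rule can, after OP-symmetrization, coordinate with conditional probability above \(\tfrac12\)—bounding the second-round contribution by \(\tfrac12\cdot1+\tfrac12\cdot0=\tfrac12\). Isomorphism-invariance of \(J_\OP\), which follows from the return-preservation of pushforwards (Appendix~\ref{appendix-pushforward-policies}) together with conjugacy of automorphism groups, transports OP-optimality of the relevant pushforwards to every relabeled problem, completing the first assertion.

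For the second assertion I would exhibit a strictly better algorithm. Let \(\LA^R\) be the deterministic algorithm sending each \(\DecP'\in\DecPSet\) to \(\Isom^*\Policy^R\) for an arbitrary \(\Isom\in\Iso(\DecP,\DecP')\); this is well-defined since automorphisms fix \(\Policy^R\). When all principals use \(\LA^R\), each principal's contributed local policy after the random relabel-and-push-back equals \(\Policy^R\) (the composite map lies in \(\Aut(\DecP)\) and fixes \(\Policy^R\)), so every matchup returns \(J^\DecP(\Policy^R)=\tfrac12\) and \(U^\DecP(\LA^R)=\tfrac12\). For an algorithm \(\LA\) that learns both \(\Policy^R\) and \(\Policy^S\), the same invariance makes the two principals' contributed policies i.i.d.\ with common marginal probability \(\bar p\in(0,1)\) of being \(\Policy^R\) and \(1-\bar p\) of being \(\Policy^S\) (equal marginals because it is the same algorithm; \(\bar p\in(0,1)\) because \(\DecP\in\DecPSet\) is drawn with positive probability and \(\LA(\DecP)\) charges both). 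The only new computation needed is the cross value \(J^\DecP(\Policy^R_1,\Policy^S_2)=-\tfrac12\): the first round contributes \(0\); conditioned on first-round coordination (probability \(\tfrac12\)) agent \(1\) repeats and agent \(2\) switches, so they miscoordinate and earn \(-1\); otherwise both randomize and earn \(0\). By symmetry \(J^\DecP(\Policy^S_1,\Policy^R_2)=-\tfrac12\) as well, while \(J^\DecP(\Policy^R)=J^\DecP(\Policy^S)=\tfrac12\), so
\[
U^\DecP(\LA)=\tfrac12\big(\bar p^2+(1-\bar p)^2\big)-\tfrac12\cdot 2\bar p(1-\bar p)=\tfrac12(2\bar p-1)^2<\tfrac12=U^\DecP(\LA^R),
\]
and hence \(\LA\) is not a maximizer of \(U^\DecP\), i.e., not optimal in the LFC problem for \(\DecP\).

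I expect the main obstacle to be the upper bound \(J^\DecP_\OP(\Policy)\le\tfrac12\) for \emph{every} joint policy, as opposed to the obvious candidates: making rigorous the claim that OP-symmetrization necessarily defeats any second-round coordination attempt following a first-round miscoordination—despite the two agents sharing identical information at that point—requires a careful analysis of how the action-relabeling automorphisms act on arbitrary second-round decision rules. A secondary, purely bookkeeping difficulty is pinning down the precise reading of ``learns both \(\Policy^R\) and \(\Policy^S\)'' on relabeled problems and checking that the push-back-to-\(\DecP\) distribution does not depend on the chosen isomorphism; both reduce to the single observation that every automorphism of \(\DecP\) fixes \(\Policy^R\) and \(\Policy^S\) pointwise.
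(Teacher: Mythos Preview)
Your proposal is correct and follows essentially the same approach as the paper: establish that \(\Policy^R,\Policy^S\) are automorphism-invariant OP-optimal policies with value \(\tfrac12\) (Lemma~\ref{lemma-two-classes-in-coop}), compute the cross-play value \(-\tfrac12\) (Lemma~\ref{lemma-XP-suboptimal}), and compare against the deterministic algorithm \(\LA^R\) that always outputs (a pushforward of) \(\Policy^R\). The only cosmetic differences are that the paper routes the LFC payoff through the symmetrizer \(\Psi\) via Theorem~\ref{thm-zero-shot-invariant} rather than invoking automorphism-invariance of \(\Policy^R,\Policy^S\) directly, and that the paper merely bounds \(U^\DecP(\LA)<\tfrac12\) whereas you compute the exact value \(\tfrac12(2\bar p-1)^2\); the upper-bound obstacle you flag is handled in the paper exactly by passing to \(\Psi(\Policy)\) and exploiting the forced player-symmetry and action-symmetry of any automorphism-invariant policy.
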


\co{To save space, I usually title my appendix sections, ``Proof of such and such'', plus I mention in the introduction that formal details and proofs are given in the appendix and then I omit the ``Proof see appendix such and such'' lines, because these take significant space. Only semi-relatedly, I like using the restatable environment for repeating results in the appendix, see \url{https://tex.stackexchange.com/questions/51286/recalling-a-theorem}. (Though because you here give informal versions of results in the main text it's not as clear that this is useful.)}

\section{OP with tie-breaking}
\label{section-other-play-with-tie-breaking}

To fix OP's shortcoming outlined above, we introduce \emph{OP with tie-breaking}, which is based on the notion of a tie-breaking function that uniquely ranks the different OP-optimal policies in a given problem and thus allows for consistently choosing among them. A tie-breaking function could, for instance, compare the two incompatible policies, \(\Policy^R\) and \(\Policy^S\), in the two-stage lever game and choose the one under which actions are more highly correlated, which is \(\Policy^R\). A tie-breaking function has to satisfy certain conditions, e.g., it always must have a unique maximizer, and it must choose equivalent policies in isomorphic Dec-POMDPs. We define OP with tie-breaking as an algorithm that chooses an OP-optimal policy that maximizes a tie-breaking function (see Appendix~\ref{appendix-definition-op-tie-breaking}). We then have the following result:

\begin{thm}[Informal]\label{thm-op-with-tie-breaking-informal}
OP with tie-breaking is optimal in the LFC problem, and all principals using the algorithm is a Nash equilibrium of any LFC game.
In particular, the optimal value in the LFC problem for any Dec-POMDP \(\DecP\) is equal to the OP value of any OP-optimal policy, i.e.,
\(\max_{\LA\in\LASet^\DecPSet}U^{\DecP}(\LA)=\max_{\Policy}J_\OP^\DecP(\Policy)\).
\end{thm}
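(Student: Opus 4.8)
The plan is to prove, for a single Dec-POMDP $\DecP$, both the upper bound $\max_{\LA\in\LASet^\DecPSet}U^\DecP(\LA)\le\max_\Policy J_\OP^\DecP(\Policy)$ and its attainment by OP with tie-breaking, and then to verify the Nash property; the extension from a single game to a distribution $\DecPSetSecond$ is immediate by taking $\E_{\DecPSecond\sim\U(\DecPSetSecond)}$. Two tools are used throughout. First, \emph{multilinearity}: holding all other local policies fixed, $J^\DecP$ is affine in each $\Policy_i$, so an expectation over independently sampled local policies can be pulled inside $J^\DecP$. Second, pushforward by an isomorphism preserves expected return (stated in the excerpt) and hence the OP value as well, since an isomorphism $\DecP\to\DecPSecond$ conjugates $\Aut(\DecP)$ onto $\Aut(\DecPSecond)$. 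I would also record the elementary bijection $\Iso(\DecPSecond,\DecP)\cong\Aut(\DecP)$: fixing one reference $\psi_{\DecPSecond}\in\Iso(\DecPSecond,\DecP)$, every $\Isom\in\Iso(\DecPSecond,\DecP)$ equals $\Auto\circ\psi_{\DecPSecond}$ for a unique $\Auto\in\Aut(\DecP)$, so $\U(\Iso(\DecPSecond,\DecP))$ pushes forward to $\U(\Aut(\DecP))$.

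For the upper bound, fix $\LA$ and unfold $U^\DecP(\LA,\dots,\LA)$. By the bijection above, each principal's pushed-back output has the form $\Auto^*\Policy$ with $\Auto\sim\U(\Aut(\DecP))$ independent of $\Policy$ drawn from the fixed distribution $\nu:=\E_{\DecPSecond\sim\U(\DecPSet)}[\psi_{\DecPSecond}^*\LA(\DecPSecond)]\in\Delta(\PolicySet^\DecP)$, and these are independent across principals. Pulling all expectations inside $J^\DecP$ by multilinearity gives $U^\DecP(\LA,\dots,\LA)=J^\DecP(\bar\Policy)$ with $\bar\Policy_i:=\E_{\Auto\sim\U(\Aut(\DecP)),\,\Policy\sim\nu}[(\Auto^*\Policy)_i]$. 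Since each component of $\bar\Policy$ is already a full uniform average over $\Aut(\DecP)$, a change of variables $\Auto\mapsto\Auto_i\Auto$ in the group shows $\AutProfile^*\bar\Policy=\bar\Policy$ for every automorphism profile $\AutProfile\in\Aut(\DecP)^\PlayerSet$, whence $J^\DecP(\bar\Policy)=J_\OP^\DecP(\bar\Policy)\le\max_\Policy J_\OP^\DecP(\Policy)$.

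For attainment, let $\Policy^*$ be the OP-optimal policy picked out in $\DecP$ by the tie-breaking function. For any $\Auto\in\Aut(\DecP)$, $\Auto^*\Policy^*$ is again OP-optimal (the OP value is automorphism-invariant) and has the same tie-breaking value as $\Policy^*$ (equivariance of the tie-breaking function, applied to the automorphism $\Auto\colon\DecP\to\DecP$); by uniqueness of the tie-breaking maximizer, $\Auto^*\Policy^*=\Policy^*$, so $\Policy^*$ is automorphism-invariant and $J_\OP^\DecP(\Policy^*)=J^\DecP(\Policy^*)$. Equivariance of OP-optimality and of the tie-breaking function also gives $\LA^{\mathrm{tb}}(\DecPSecond)=\delta_{\Isom^*\Policy^*}$ for any $\Isom\in\Iso(\DecP,\DecPSecond)$; taking $\Isom=\Isom_i^{-1}$ for a principal observing $(\DecP_i,\Isom_i)$, its pushed-back output is $\delta_{\Isom_i^*(\Isom_i^{-1})^*\Policy^*}=\delta_{\Policy^*}$. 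So every principal deterministically contributes $\Policy^*$, and $U^\DecP(\LA^{\mathrm{tb}},\dots,\LA^{\mathrm{tb}})=J^\DecP(\Policy^*)=J_\OP^\DecP(\Policy^*)=\max_\Policy J_\OP^\DecP(\Policy)$. Combined with the upper bound, this proves that $\LA^{\mathrm{tb}}$ is optimal in the LFC problem and establishes the displayed equality.

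The remaining, and hardest, part is the Nash property. Since $\Gamma^\DecP$ is common-payoff, it suffices to rule out a strictly improving unilateral deviation from $(\LA^{\mathrm{tb}},\dots,\LA^{\mathrm{tb}})$. If principal $i$ switches to $\LA'$, the others still contribute $\Policy^*$ on all coordinates but $i$, and by multilinearity plus the $\Iso\cong\Aut$ bijection the payoff is $J^\DecP(\Policy^*_{-i},\mu_i)$ with $\mu_i=\E_{\Auto\sim\U(\Aut(\DecP))}[\rho_{\Auto^{-1}i}(\Auto^{-1}\cdot\mid\Auto^{-1}\cdot)]$ for some joint policy $\rho$. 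Pulling $\E_\Auto$ out and pushing the $\Auto$-indexed term forward by $\Auto^{-1}$, and using invariance of $\Policy^*$, this term equals $J^\DecP(\Policy^*_{-\Auto^{-1}i},\rho_{\Auto^{-1}i})$, so the best deviation value is $\E_{\Auto\sim\U(\Aut(\DecP))}[v_{\Auto^{-1}i}]$ where $v_j:=\max_{\rho_j}J^\DecP(\Policy^*_{-j},\rho_j)\ge J^\DecP(\Policy^*)$. Hence it remains to show $v_j=J^\DecP(\Policy^*)$ for all $j$, i.e.\ that $\Policy^*$ is a Nash equilibrium of $\DecP$ — this is the main obstacle. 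I would prove it by contradiction: if some agent $j$ has a strictly improving best response $\rho_j^{\mathrm{BR}}$, set $\Policy^\epsilon:=(\Policy^*_{-j},(1-\epsilon)\Policy^*_j+\epsilon\rho_j^{\mathrm{BR}})$ and expand $J_\OP^\DecP(\Policy^\epsilon)=\E_\AutProfile[J^\DecP(\AutProfile^*\Policy^\epsilon)]$ to first order in $\epsilon$; by multilinearity and invariance of $\Policy^*$ the coefficient of $\epsilon$ is $\big(v_j-J^\DecP(\Policy^*)\big)\cdot\E_\AutProfile[\#\{k:\Auto_k^{-1}k=j\}]$, and the orbit-stabilizer theorem gives $\E_\AutProfile[\#\{k:\Auto_k^{-1}k=j\}]=1$, so $J_\OP^\DecP(\Policy^\epsilon)>J_\OP^\DecP(\Policy^*)$ for small $\epsilon>0$, contradicting OP-optimality of $\Policy^*$. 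The recurring technical difficulty, and the reason the LFC games of isomorphic Dec-POMDPs agree only up to a permutation of principals, is careful bookkeeping of the agent-permuting part of isomorphisms and automorphisms; I expect that to be where most of the care in the formal write-up goes.
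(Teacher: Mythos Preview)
The central gap is your claim that $J^\DecP$ is affine in each local policy $\Policy_i$. This fails whenever $\Tmax\geq 1$: the probability of a history contains a product over time steps of factors $\Policy_i(a_{i,t}\mid\AOHistory_{i,t})$, so a convex combination of behavior policies does not yield a convex combination of returns. (One-line counterexample: a single agent, two steps, reward $1$ iff both actions agree; ``always $0$'' and ``always $1$'' each have return $1$, their pointwise average has return $\tfrac12$.) Consequently $\bar\Policy_i:=\E[(\Auto^*\Policy)_i]$ does not satisfy $U^\DecP(\LA,\dots,\LA)=J^\DecP(\bar\Policy)$, and in your Nash argument the first-order coefficient of $J_\OP^\DecP(\Policy^\epsilon)$ is not $(v_j-J^\DecP(\Policy^*))\cdot\E_\AutProfile[\#\{k:\Auto_k^{-1}k=j\}]$: the directional derivative of $J^\DecP$ along a local-policy direction is a sum of one-step replacements, not the full replacement $J^\DecP(\Policy^*_{-j},\rho_j)-J^\DecP(\Policy^*)$. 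Both your upper bound and your Nash argument rest on this failed step. A secondary gap in the attainment part: the tie-breaking axioms only give $\chi(\DecP,\Policy)=\chi(\DecP,\Policy')\Rightarrow\Policy\equiv_\DecP\Policy'$, so from $\chi(\DecP,\Auto^*\Policy^*)=\chi(\DecP,\Policy^*)$ you can conclude only $\Psi(\Auto^*\Policy^*)=\Psi(\Policy^*)$, not $\Auto^*\Policy^*=\Policy^*$.

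The paper repairs exactly these points. For the upper bound it proves a Kuhn-type result for distributions over stochastic policies (Proposition~\ref{mixture-lemma}): to any $\Mixture\in\Delta(\PolicySet)$ with independent marginals one associates a policy $\Policy^\Mixture$, built by Bayesian conditioning on the realized action--observation history rather than by naive averaging, with $J^\DecP(\Mixture)=J^\DecP(\Policy^\Mixture)$; applied to the OP distribution this yields the symmetrizer $\Psi$, which is automorphism-invariant and commutes with isomorphisms (Corollary~\ref{corollary-psi-isom-commute}). The upper bound then goes through with $\Policy^\Mixture$ in place of your $\bar\Policy$, and attainment uses the representation of $U^\DecP$ in terms of $(\Psi_k(\Policy^{(k)}))_k$ (Theorem~\ref{thm-zero-shot-invariant}), which makes the ``only up to equivalence'' issue harmless. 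For the Nash property the paper avoids any best-response analysis in $\DecP$ and proves instead a self-contained lemma at the LFC-game level (Theorem~\ref{prop-symmetry-invariant-nash-equilibrium}): every optimal symmetric strategy profile is a Nash equilibrium, shown by mixing the deviation into all symmetric principals with small probability $p$ and letting $p\to 0$. This both avoids the non-affinity of $J^\DecP$ in local policies and is what actually yields the equilibrium claim.
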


In practice, we can implement OP with tie-breaking by sampling, for a given Dec-POMDP \(\DecP\), \(K\in\mathbb{N}\) policies using an OP algorithm \(\LA^\OP\) and choosing the policy with the highest tie-breaking value. To compute tie-breaking values, we use a neural network, randomly initialized using a fixed random seed, to map histories deterministically to real numbers. We call these numbers ``hash values'' in analogy to the hash functions used in many areas of computer science to assign unique keys. The joint policy's tie-breaking value is then calculated as the expected hash value of histories under that policy. A few additional operations (randomly permuting policies, computing normal forms of histories, and summing over agent permutations) are required to ensure this works independently of labels.

Concretely, the tie-breaking function is computed as a Monte Carlo estimate of
\begin{equation}\label{eq:tie-breaking-function}
\frac{1}{N!}\sum_{\Isom_N\in\mathrm{Bij}(\PlayerSet)}\E_{\AutProfile\sim\U(\Aut(\DecP)^\PlayerSet)}\left[\E_{\AutProfile^*\Policy}\left[\Hash(\Isom_N(\iota(\HistoryRV)))\right]\right],
\end{equation}
where \(\Hash\) is the neural network, \(\mathrm{Bij}(\PlayerSet)\) is the set of permutations of \(\PlayerSet\), \(\iota(\History)\) is a \emph{normal form} of the history \(\History\), and for \(\iota(\History):=(\State_0,(\Action_{i,0})_{i\in\PlayerSet},\Reward_0,\dotsc)\), we define
\(\Isom_N(\iota(\History)):=(\State_0,(\Action_{\Isom_N^{-1}i,0})_{i\in\PlayerSet},\Reward_0,\dotsc)\). The normal form \(\iota(\History)\) is computed by replacing the first occurrence of each state, action, or observation in \(\History\) by a \(0\), the second occurrence by a \(1\), and so on, and repeating the number if an element repeats itself in the history. Together with the summation over permutations of agents, this achieves that
\(\iota(\History)\) does not depend on particular labels for agents, states, etc. Moreover, consistent tie-breaking is only possible between policies that are randomly permuted by applications of \(\AutProfile\). 
We prove in Appendix~\ref{appendix-random-tie-breaking-functions} that, using a suitable random function \(\Hash\), a modification of the tie-breaking function described here satisfies our formal requirements almost surely.

\begin{figure}
    \centering
    \begin{subfigure}{0.475\columnwidth}
    \centering
    \small
    \includegraphics[width=1\textwidth]{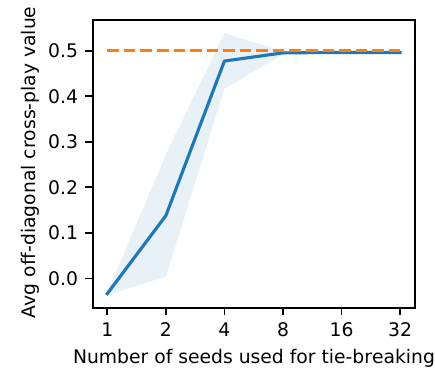}
    \caption{Two-stage lever game}
    \end{subfigure}
    \begin{subfigure}{0.475\columnwidth}
    \centering
    \includegraphics[width=1\textwidth]{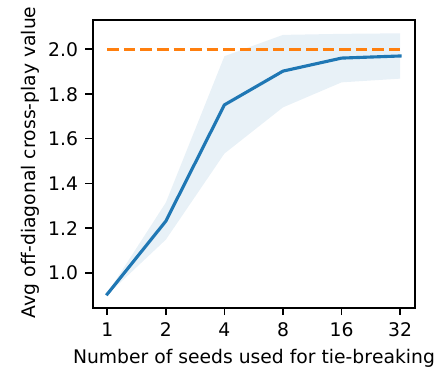}
    \caption{Asymmetric lever game}
    \end{subfigure}
    \caption{Plot of average off-diagonal XP value of the tie-breaking method, using different numbers of seeds for tie-breaking (using one seed is equivalent to OP and represents the baseline). The dashed orange line indicates the theoretical optimum. The shaded area indicates standard deviations across 20 different seeds used for the hash function.}
    \vspace{-0pt}
    \label{fig:avg-off-diagonal-XP}
\end{figure}

\section{Experiments}
\label{section-experiments}

\label{section-training-and-XP-evaluation}

\paragraph{MARL training and XP evaluation}
We use a vanilla policy gradient algorithm (\citeauthor{nguyen2017policy}, \citeyear{nguyen2017policy}; \citeauthor{williams1992simple}, \citeyear{williams1992simple}) to train recurrent neural network policies on the OP objective. We use a randomly initialized feedforward neural network as a hash function and implement a tie-breaking function as described in Section~\ref{section-other-play-with-tie-breaking}. To implement OP with tie-breaking and study the dependency of its performance on the number of policies used for tie-breaking, we apply the tie-breaking function to \(K\) learned policies and choose the one with maximal value, for \(K\in\{2,4,8,16,32\}\).

To evaluate a given learning algorithm \(\LA\), we simplify the objective of the LFC problem. Instead of using relabeled Dec-POMDPs, we evaluate policies from independent training runs on the same Dec-POMDP, permuted by random automorphisms. That is, we estimate
\begin{multline}
\E_{\Policy^{(i)}\sim\LA(\DecP),\,i=1,2}
    \Big[
    \E_{\AutProfile_i\sim \U(\Aut(\DecP)),\,i=1,2}\Big[
    \\
J^\DecP((\AutProfile_1^*\Policy^{(1)})_1,(\AutProfile_2^*\Policy^{(2)})_2)\Big]\Big]
\end{multline}
by computing average off-diagonal XP values between independently trained, permuted policies. We believe that this is a realistic estimate of the objective, as the distributions over policies produced by our learning algorithms should not depend on the used labels (see Appendix~\ref{appendix-equivariant-learning-algorithms}).

In both examples, we train 320 policies using different random seeds, and partition them into 10 sets of 32 policies, where the 32 policies can be used for tie-breaking and each set corresponds to one run to be used for computing XP values. We apply OP with tie-breaking using 20 different random seeds for the hash function to explore to what degree the quality of the tie-breaking function depends on the random initialization of the hash network. Additional experimental details and results are described in Appendix~\ref{appendix-further-training-details}. Code for our experiments can be found at \url{https://github.com/johannestreutlein/op-tie-breaking}.

\begin{table}
        \caption{Rewards in the asymmetric lever game, for \(t=0,1\) on the left, and for \(t=2\) on the right.}
            \vskip 0.15in
    \label{tab:rewards-example-3}
\centering
\small
\begin{subtable}{0.49\columnwidth}\centering
    \begin{tabular}{c|c|c|c}
         &  \(a_{2,1}\) & \(a_{2,2}\) & \(a_{2,3}\)\\
     \hline
      \(a_{1,1}\)& \(1\) & \(-1\) & \(-1\)\\
      \hline
      \(a_{1,2}\)&\(-1\) &  \(1\) & \(-1\)\\
      \hline
      \(a_{1,3}\) & \(-1\) & \(-1\) & \(-1\)
    \end{tabular}
    \end{subtable}
    \begin{subtable}{0.49\columnwidth}\centering
    \begin{tabular}{c|c|c|c}
         &  \(a_{2,1}\) & \(a_{2,2}\) & \(a_{2,3}\)\\
     \hline
      \(a_{1,1}\)&  \(0\) & \(0\) & \(0\)\\
     \hline
      \(a_{1,2}\) &   \(0\) & \(0\) & \(0\)\\
      \hline
      \(a_{1,3}\) &  \(1\) & \(1\) & \(1\)
    \end{tabular}
    \end{subtable}
\end{table}

\paragraph{Environments} As environments, we implement the two-stage lever game introduced in Example~\ref{example-two-stage-lever-game}, as well as the \emph{asymmetric lever game}. In the asymmetric lever game, there are two agents \(i=1,2\), which can pull one of three levers \(\{a_{i,1},a_{i,2},a_{i,3}\}\). There are three states \(\{0,1,2\}\) representing three rounds of this game (i.e., \(P(s+1\mid s,a)=1\) for \(s\in\{0,1\}\) and \(\Tmax=2\)). As before, both agents observe the previous action of the other agent, so \(O_{-i,t}=A_{i,t-1}\) for \(i=1,2\), \(t=1,2\). The reward function is given in Table~\ref{tab:rewards-example-3}. The first agent has an extra task in the third round, which makes the agents asymmetric. We choose this example since, unlike the two-stage lever game, it is one where no OP-optimal policy is an intuitively sensible solution to ZSC.

\paragraph{Results} All learned policies are close to optimal under the OP objective. In the two-stage lever game, both OP-optimal policies are learned equally well, while there is one dominant policy in the asymmetric lever game. In XP evaluation, OP with tie-breaking outperforms OP, achieving close to optimal XP values in both games.

We display XP matrices, which indicate XP values for any matching of two agents from 10 independent runs, in Figure~\ref{fig:21}. OP learns incompatible policies in different training runs, whereas policies chosen by OP with tie-breaking appear to be compatible. Average off-diagonal XP values for different numbers of policies used for tie-breaking are plotted in Figure~\ref{fig:avg-off-diagonal-XP}. Here, using only one policy for tie-breaking is equivalent to OP, as there is only one policy to choose from. 
We divide the policies into classes of policies with a high mutual XP score and show a histogram of the tie-breaking values for each class in Figure~\ref{fig:4}. 


\begin{figure}
\centering
    \centering
    \begin{subfigure}{0.475\columnwidth}
    \centering
    \small
    \includegraphics[width=1\textwidth]{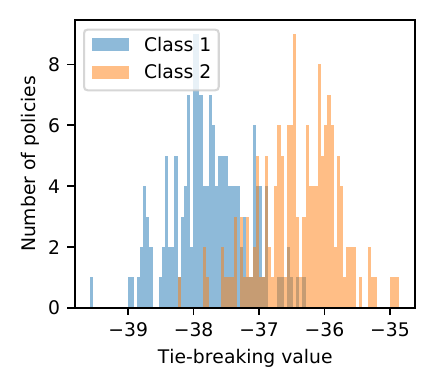}

    \caption{Two-stage lever game}
    \end{subfigure}
    \begin{subfigure}{0.475\columnwidth}
    \centering
    \includegraphics[width=1\textwidth]{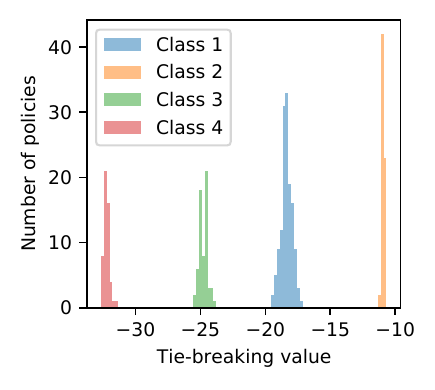}

    \caption{Asymmetric lever game}
    \end{subfigure}
    \vspace{-0pt}
    \caption{Histogram of the tie-breaking values of the learned policies, categorized into classes with mutually high XP values.}

    \label{fig:4}
\end{figure}

\section{Open issues}
\label{section-open-issues}

Tie-breaking may be a feasible solution in some coordination problems. For instance, if a central authority can recommend an arbitrary tie-breaking function to all principals, using OP with tie-breaking may provide an easy way to coordinate over a range of different problems, even if joint training is impossible and common labels for problems are unavailable. Moreover, there could be tie-breaking functions based on natural biases that are shared between all principals, such as a simplicity bias. In the two-stage lever game, such a bias could be used to justify the strategy in which agents both repeat their action if they succeeded in coordinating in the first round.

However, our work shows that the current operationalization of ZSC needs to be revised. This is because OP with tie-breaking, an optimal solution to the LFC problem, is ultimately unsatisfactory as a solution to ZSC. First, OP with tie-breaking allows for arbitrary tie-breaking functions. This goes against the spirit of the ZSC problem, which prohibits arbitrary co-adaptation of policies and allows only tie-breaking functions based on plausible, non-arbitrary meta conventions, such as repeating actions that have previously been coordinated on.

Second, sometimes there may be no plausible tie-breaking function; instead, an entirely different policy should be learned. For instance, in the asymmetric lever game, OP-optimal policies choose one agent to switch to a different lever if coordination failed in the first round. But this choice of agent appears arbitrary. Hence, it would be preferable to learn a policy that randomizes if the players could not coordinate in the first round, similar to the policies in the two-stage lever game. Since such a policy is not learned by OP, no appropriate policy can be chosen by OP with tie-breaking. In fact, no optimal solution to the LFC problem would be to able learn this policy, as doing so would lead to lower performance under the LFC problem's objective.

Our results imply that an operationalization of ZSC should preclude principals from sharing not only labels but also any other implementation details. This suggests the following improved setting: principals coordinate on high-level ideas for learning algorithms, but they cannot coordinate on specific implementation details, such as random seeds, parameters, or code. Each principal then implements their algorithm independently and trains an agent on a given randomly sampled environment. As prior coordination between principals is restricted and the algorithm must work in a range of environments, it can only rely on general high-level principles for coordination, not on arbitrary tie-breaking.

Unfortunately, the question of what counts as an implementation detail versus a high-level idea for an algorithm is vague, and thus, unlike the LFC problem, this operationalization does not have a straightforward formalization. Nevertheless, it better suits ZSC's spirit and thus serves as an improved problem setting. We leave it to future work to address and refine this new operationalization.

\section{Conclusion and future work}
\label{section-conclusion-future-work}

We formalized  \textcite{hu2020other}'s operationalization of ZSC as the LFC problem, showed that OP is not optimal in the problem, and introduced an extension, OP with tie-breaking, that is optimal. We supported our theoretical results experimentally in two toy examples.
Lastly, based on our findings, we concluded that the LFC problem is misaligned with ZSC's aims and suggested a revised intuitive operationalization of ZSC.


More work is required to devise formalisms and algorithms that suit this revised operationalization. One avenue may be different symmetry concepts \parencite[e.g.,][]{harsanyi1988general,neumann1947theory,nash1951non,peleg1999canonical,Casajus2001} using weaker notions of equivalence. Unfortunately, as we show in Appendix~\ref{section-using-different-symmetry-concepts}, considering Dec-POMDPs as standard-form games and applying symmetries in that formalism leads to too little possible coordination between agents. This raises the question whether there is a ``Goldilocks'' concept obviating the need for tie-breaks while allowing for maximal coordination.

\section*{Acknowledgements}
We are grateful to Leon Lang and Lennart Stern for help with proofs and writing, and Christian Schroeder de Witt for assistance with the PyMARL code repository. Johannes Treutlein would like to thank his readers Prof.~Gitta Kutyniok and Prof.~Reinhold Schneider at Technical University of Berlin, where he submitted a version of this paper as a BSc thesis. During his work on this paper, Johannes Treutlein was supported by Open Philanthropy, the Berkeley Existential Risk Initiative, and the Center on Long-Term Risk. Caspar Oesterheld is thankful for support by the National Science Foundation under Award IIS-1814056.

\bibliography{citations}
\bibliographystyle{icml2021}

\onecolumn

\ifappendix

\newgeometry{}
\raggedbottom


\let\headwidth\textwidth




\appendix

\newpage

\part{Appendix}

In Appendix~\ref{section-using-different-symmetry-concepts}, we briefly compare our approach to \textcite{harsanyi1988general}. In Appendix~\ref{appendix-further-training-details}, we provide additional details about our implemented algorithms, cross-play evaluation, and further results. Everything afterwards, starting with Appendix~\ref{appendix-formalization-of-the-zero-shot-coordination-game}, is a self-contained rigorous treatment of the results that were informally stated in the paper. The two main theorems are subject of Appendices~\ref{appendix-proof-of-theorem-1} and \ref{appendix-proof-of-theorem-2}. In the following, we give a brief outline of Appendices~\ref{appendix-formalization-of-the-zero-shot-coordination-game}--\ref{appendix-proof-of-theorem-2}.

In Appendix~\ref{appendix-formalization-of-the-zero-shot-coordination-game}, we make rigorous the definition of an LFC game and LFC problem from Section~\ref{section-the-zero-shot-coordination-game}, and we provide auxiliary results required to prove our main theorems. Among those, we show in Section~\ref{appendix-optimal-symmetric-strategy-profiles} that any optimal symmetric profile of learning algorithms in an LFC game is a Nash equilibrium. In addition, in Appendix~\ref{appendix-equivariant-learning-algorithms}, we briefly discuss a condition under which the objective in the LFC problem is equivalent to the formulation used in our experiments as outlined in Section~\ref{section-training-and-XP-evaluation}.

In Appendix~\ref{appendix-characterization-of-other-play}, we provide characterizations of both the OP objective and the payoff in an LFC game, in terms of equivalence classes of policies under random permutations by automorphisms. This notion of equivalence makes it possible to analyze the OP-optimal policies in terms of representatives of equivalence classes that are invariant to automorphisms. We will use our results from this section for the proofs about the LFC problem, and for a proof about the existence of random tie-breaking functions.

In Appendix~\ref{appendix-proof-of-theorem-1}, we then turn to stating and proving a rigorous version of Theorem~\ref{thm-op-not-optimal-informal}. To that end, we show that there are two distinct OP-optimal equivalence classes in the two-stage lever game (Appendix~\ref{appendix-two-incompatible}), and then prove that any algorithm that learns both of these is an OP learning algorithm, but not optimal in the LFC problem of that game (Appendix~\ref{appendix-proof-op-suboptimal}).

Lastly, in Appendix~\ref{appendix-proof-of-theorem-2}, we provide theoretical results about OP with tie-breaking and state and prove a rigorous version of Theorem~\ref{thm-op-with-tie-breaking-informal}. First, we define OP with tie-breaking and discuss to what degree the formal definition is satisfied by our method (Appendix~\ref{appendix-definition-op-tie-breaking}). Second, we show that OP with tie-breaking is optimal in the LFC problem and that all principals using OP with tie-breaking is an optimal symmetric Nash equilibrium of any LFC game (Appendix~\ref{appendix-optimality-zero-shot-coordination-problem}). Third, we prove that a modification of the tie-breaking function introduced in Section~\ref{section-other-play-with-tie-breaking} satisfies our formal requirements (Appendix~\ref{appendix-random-tie-breaking-functions}).

\section*{List of Symbols}

\subsection*{General mathematical notation}

\begin{xtabular}{p{.15\columnwidth}  p{.85\columnwidth}}
        \(\mathbb{N}\) & natural numbers excluding \(0\) \\
        \(\mathbb{R}\) & real numbers\\
        \(\mathbb{N}_0\) & \(\mathbb{N}\cup\{0\}\) \\
        \(\PowerSet(\mathcal{X})\) & power set of the set \(\mathcal{X}\)\\
        \(\prod_{i=1}^N \mathcal{X}_i\) & Cartesian product \(\mathcal{X}_1\times\dotsb\times \mathcal{X}_N\) of sets \(\mathcal{X}_1,\dotsc,\mathcal{X}_N\)\\
        \(|\mathcal{X}|\) & cardinality of the set \(\mathcal{X}\)\\
        \(\mathcal{X}\setminus \mathcal{Y}\) &  set of elements of \(\mathcal{X}\) that are not in \(\mathcal{Y}\)\\
        \(X:=Y\) & \(X\) is defined as \(Y\) \\ 
        \(f\circ g\) & composition of two composable functions \(f\) and \(g\)\\
        \(\Proj_i(x)\) & projection on the \(i\)th component of the vector \(x=(x_i)_{i\in\mathcal{X}}\)\\
        \(x_{-i}\) & vector with the \(i\)th component removed, \(x_{-i}:=(x_1,\dotsc,x_{i-1},x_{i+1},\dotsc x_N)\)\\
        \((\tilde{x}_i,x_{-i})\) & vector with \(\tilde{x}_i\) as \(i\)th component, \((\tilde{x}_i,x_{-i}):=(x_1,\dots x_{i-1},\tilde{x}_i,x_{i+1},\dotsc x_N)\)\\
        \(\delta_{ij}\) & Kronecker delta\\
        \(\mathds{1}_\mathcal{X}\) & indicator function of set \(\mathcal{X}\) \\
        \(\Prob\) &probability measure\\
        \(\Delta(\mathcal{X})\) & set of probability mass functions or measures over the set \(\mathcal{X}\)\\
        \(\mathcal{E}_1\otimes\mathcal{E}_2\) & product-\(\sigma\)-Algebra of \(\mathcal{E}_1\) and \(\mathcal{E}_2\)\\
        \(\Mixture_1\otimes\Mixture_2\) & product measure of \(\Mixture_1\) and \(\Mixture_2\) \\
\(\E[X]\) & expectation of the random variable \(X\)\\
\(\E_{x\sim \Distr}[f(x)]\) & integral of \(f\) with respect to the measure \(\Distr\)\\
\(\U(\mathcal{X})\) & uniform distribution over the set \(\mathcal{X}\)\\
\(\delta_x\) & Dirac measure\\
\(\Param\) & parameter value\\
\(\NN\) & neural network\\
\(\mathcal{L}\) & loss function\\
\end{xtabular}

\subsection*{Dec-POMDPs}

\begin{xtabular}{p{.15\columnwidth}  p{.85\columnwidth}}
        \(\DecP,\DecPSecond,\DecPThird\) & Dec-POMDPs\\
        \(\mathcal{X}^\DecP\) & set belonging to the Dec-POMDP \(\DecP\) (e.g., \(\PolicySet^\DecP\) is the set of policies of \(\DecP\))\\
        \(\PlayerSet\) & set of agents or principals \(\PlayerSet=\{1,\dotsc, N\}\) \\
        \(i\) & agent or principal\\
        \(\StateSet\) & set of states \\
        \(\StateRV_t\)  & random variable for the state at step \(t\)\\
        \(\State\) & state\\
        \(\ActionSet\) & set of joint actions\\
        \(\ActionSet_i\) & set of actions of player \(i\) \\
        \(\ActionRV_{i,t}\) & random variable for the action of agent \(i\) at step \(t\)\\
        \(\Action\) & joint action \\
        \(\Action_i\) & action of agent \(i\)\\
        \(\ObservationSet\) & set of observations\\
        \(\ObservationSet_i\) & set of observations of agent \(i\) \\
        \(\ObservationRV_{i,t}\) & random variable for the observation of agent \(i\) at step \(t\)\\
        \(\Observation\) & joint observation\\
        \(\Observation_i\) & observation of agent \(i\)\\
        \(\RewardRV_t\) & random variable for the reward at step \(t\)\\
        \(\Reward\) & reward \\
        \(P(\State'\mid\State,\Action)\) & transition probability\\
        \(O(\Observation\mid\State,\Action)\) & observation probability \\
        \(\RewardFunction(\Action,\State)\) & reward given joint action \(\Action\) and state \(\State\) \\
        \(\Tmax\) & horizon \\
        \(\AOHistorySet\) & set of joint action-observation histories\\
        \(\AOHistorySet_{i,t}\) & set of local action-observation history of agent \(i\) of length \(t\)\\
        \(\AOHistoryRV_{i,t}\) & random variable for action-observation history of agent \(i\) of length \(t\)\\
        \(\AOHistory_{i,t}\) & local action-observation history of agent \(i\) of length \(t\)\\
        \(\HistorySet\) & set of histories\\
        \(\HistoryRV\) & random variable for the history\\
        \(\History\) & history\\
        \(\HistorySet_t\) & set of histories of length \(t\)\\
        \(\HistoryRV_t\) & random variable for the history of length \(t\)\\
        \(\History_t\) & history of length \(t\) \\
        \(\PolicySet\) & set of joint policies\\
        \(\PolicySet_i\) & set of local policies of agent \(i\)\\
        \(\Policy\) & joint policy \\
        \(\Policy_i\) & local policy of agent \(i\)\\
        \(\PolicySet^0\) & set of joint deterministic policies\\
        \(\PolicySet^0_i\) & set of local deterministic policies of agent \(i\)\\
        \((\Omega,\PowerSet(\Omega),\Prob_\Policy)\) & measure space for a Dec-POMDP environment induced by policy \(\Policy\)\\
        \(\E_\Policy\) & expectation with respect to \(\Prob_\Policy\)\\
        \(J^\DecP(\Policy)\) & expected return of policy \(\Policy\) in Dec-POMDP \(\DecP\)\\
    \end{xtabular}

\subsection*{Label-free coordination and other-play}

\begin{xtabular}{p{.15\columnwidth}  p{.85\columnwidth}}
\(\Aut(\DecP)\) & set of automorphisms \\
\(\Iso(\DecP,\DecPSecond)\) & set of isomorphisms from \(\DecP\) to \(\DecPSecond\)\\
\(\Sym(\DecP)\) & set of labelings of \(\DecP\)\\
\(\Isom\) & isomorphism or labeling\\
\(\Isom^*\DecP\) & relabeled Dec-POMDP\\
\(\Isom^*\Policy\) & pushforward policy\\
\(\IsoProfile\) & profile of isomorphisms\\
\(\Aut(\DecP)\) & set of automorphisms of \(\DecP\)\\
\(\Auto\) & automorphism\\
\(\Id\) & identity automorphism\\
\(\AutProfile\) & profile of automorphisms\\
\(J^\DecP_\OP(\Policy)\) & other-play value of \(\Policy\) in Dec-POMDP \(\DecP\)\\
\(\DecPSet,\DecPSetSecond\) & sets of Dec-POMDPs\\
\(\mathcal{F}_i\) & \(\sigma\)-Algebra over \(\PolicySet_i\)\\
\(\mathcal{F}\) & product-\(\sigma\)-Algebra over \(\PolicySet\) \\
\(\Distr\) & distribution over policies \\
\(\LASet^\DecPSet\) & set of learning algorithms for \(\DecPSet\)\\
\(\LA\) & learning algorithm\\
\(\LAProfile\) & profile of learning algorithms\\
\(U^\DecP(\LAProfile)\) & payoff in the label-free coordination game for \(\DecP\) given strategy profile \(\LAProfile\)\\
\(U^\DecP(\LA)\) & value of \(\LA\) in the label-free coordination problem for \(\DecP\)\\
\(\Mixture\) & distribution over policies with independent local policies\\
\(\PolicyLatent_i\) & latent variable for the policy of agent \(i\) \\
\(\mathcal{Z}\) & measurable set of policies\\
\(\Policy^\Mixture\) & policy corresponding to the distribution \(\Mixture\)\\
\(\Psi(\Policy)\) & policy corresponding to the other-play distribution of \(\Policy\)\\
\([\Policy]\) & equivalence class of policies\\
\(\Hash\) & hash function\\
\(\Tie\) & tie-breaking function\\
\end{xtabular}

\section{Comparison to the solution by Harsanyi \& Selten}
\label{section-using-different-symmetry-concepts}

\jt{possibly perform experiments}

Here, we compare the solution to the equilibrium selection problem provided by \textcite{harsanyi1988general} to our approach. It is unclear how to apply \textcite{harsanyi1988general}'s solution to Dec-POMDPs, and this would be an interesting area for future work. However, we can translate Dec-POMDPs into \textcite{harsanyi1988general}'s formalism of standard-form games, using similar constructions as the ones for normal-form games and extensive-form games by \textcite{oliehoek2006dec}, and apply \textcite{harsanyi1988general}'s solution to such a problem. We can then compare it to OP with tie-breaking as an optimal solution to the LFC problem.

Below, we give an example in which OP with tie-breaking is equivalent to any OP algorithm in theory, as there is only one OP-optimal policy (ignoring differences between policies that do not matter under OP; see Appendix~\ref{appendix-characterization-of-other-play}). We also consider that policy as good solution to ZSC \emph{in spirit}. However, applying \textcite{harsanyi1988general}'s procedure to a corresponding standard-form game leads to a policy in which agents cannot coordinate and which thus leads to a lower payoff. We restrict ourselves to an informal exposition and leave a more rigorous analysis to future work.

Consider a version of the two-stage lever game with 10 instead of 2 levers. As in the two-stage lever game, pulling the same lever gives a reward of \(1\), non-coordination gives a reward of \(-1\), and the game is fully observable. Note that, like in the game with two levers, an OP-optimal policy uniformly randomizes between all levers in the first round. If no coordination was achieved in the first round, then in the second round, an optimal policy randomizes between the two levers that have been played in the first round by both players, similarly to the two-lever variant. There is a difference, however, if players coordinated on one lever in the first round. Clearly, in one optimal policy, players repeat their action from the first round, as was the case in the two-stage lever game. However, unlike in the two-lever case, here, there is no second optimal policy. It is not possible for the players to consistently switch to a different lever, as there are now not one but 9 other levers to choose from. Hence, the only optimal policy is one that chooses the unique lever that was chosen in the first round. This appears to us as a good solution to ZSC in this case.

Now consider a corresponding standard-form game \parencite[][ch.~2]{harsanyi1988general}. It is sufficient for us here to note that in this game, each player \(i=1,2\) is split into agents \(j_{\AOHistory_{i,t}}\) with distinct sets of actions \(\ActionSet_{\AOHistory_{i,t}}\) (corresponding to the 10 levers) for each possible action-observation history \(\AOHistory_{i,t}\in\AOHistorySet_{i,t}\) in the corresponding Dec-POMDP.
The payoff for a strategy for all agents of all players is then the expected return that the corresponding policy would receive. Importantly, symmetries as introduced by \textcite[][ch.~3.4]{harsanyi1988general} can permute each of the individual action sets \(\ActionSet_{\AOHistory_{i,t}}\) separately, as long as this does not change the payoffs (there are more rules for how symmetries can permute actions, players, and agents, but these do not matter for us here).
For instance, one symmetry may leave the actions of both players \(i=1,2\) in the first round unchanged, while it may apply one permutation to the action sets \(\ActionSet_{\AOHistory_{i,1}}\) corresponding to action-observation histories \(\AOHistory_{i,1}\in\AOHistorySet_{i,1}\) of all agents of both players in the second round. 
Since rewards in the second round do not depend on actions in the first round, and permuting all actions of all second-round agents in the same way does not change the rewards for actions, such a permutation is a symmetry of the game.

As a result, in the first and the second round, all individual actions are symmetric, and, unlike in the corresponding Dec-POMDP, the symmetries for both rounds can be applied independently of each other. Hence, a symmetry-invariant strategy needs to play all actions with equal probability in both rounds. Since \textcite{harsanyi1988general}'s solution always chooses a strategy that is invariant to symmetries \parencite[][ch.~3.4]{harsanyi1988general}, it follows that the strategy chosen by their procedure is a uniform distribution. Clearly, this strategy yields a lower return than the OP-optimal policy described above. In particular, since this applies independently of labelings, it follows from Theorem~\ref{thm-op-with-tie-breaking-informal} that the solution must be suboptimal in the associated LFC problem.

A similar argument could be made about cheap-talk: in a standard-form game, players using a symmetry-invariant policy would never be able to use cheap-talk, as they could not learn the meanings of each others' messages over time. Transforming a Dec-POMDP into a standard-form game thus yields too many symmetries, precluding players from coordinating based on the structure of the Dec-POMDP, even if it was possible to uniquely do so. In contrast, OP exhibits in a sense the opposite failure mode in the two-stage lever game, allowing players to coordinate arbitrarily due to too few symmetries between policies.

\section{Further experimental details}
\label{appendix-further-training-details}

Here, we provide additional details about the experiments outlined in Section~\ref{section-experiments}. We describe our implementation of OP (Appendix~\ref{appendix-op-implementation}), our implementation other OP with tie-breaking (Appendix~\ref{appendix-deep-tie-breaking}), and discuss our cross-play evaluation procedure as well as some further results (Appendix~\ref{appendix-XP-evaluation}).

\subsection{Other-play implementation}
\label{appendix-op-implementation}

\begin{algorithm}
\begin{algorithmic}
\newlength{\Input}
\settowidth{\Input}{{\bfseries Input: }}
\STATE \parbox{\Input}{\bfseries Input:}Dec-POMDP \(\DecP\)
\STATE \hskip \Input Number of training steps \(L\)
\STATE \hskip\Input Episode batch-size \(K\)
\STATE \hskip\Input Gradient-based optimizer
\STATE {\bfseries Output:} Joint policy \(\Policy\in\PolicySet^\DecP\)

\STATE Initialize \(\Param\)
 \FOR{\(l=1\) to \(L\)}
    \FOR{\(k=1\) to \(K\)}
        \STATE Sample profile of automorphisms \(\AutProfile^{(k)}\sim \U(\Aut(\DecP)^\mathcal{N})\)
        \STATE Sample history \(\History^{(k)}\sim \Prob^\DecP_{{\AutProfile^{(k)}}^*\Policy_\Param}\)
        using joint policy \({\AutProfile^{(k)}}^*\Policy_\Param\)
        \FOR{\(t=1\) to \(\Tmax\)}
            \STATE \(G^{(k)}_t\leftarrow\sum_{t'=t}^\Tmax \Reward^{(k)}_{t'}\)
        \ENDFOR
    \ENDFOR
    \STATE Compute loss
    \(\mathcal{L}(\Param)\leftarrow-\frac{1}{KTN}\sum_{k=1}^K\sum_{t=1}^\Tmax G^{(k)}_t\sum_{i=1}^N\log \Policy_{\theta,{\AutProfile^{(k)}_i}^{-1}i}({\AutProfile^{(k)}_i}^{-1}\Action^{(k)}_{i,t}\mid {\AutProfile_i^{(k)}}^{-1}\AOHistory_{i,t}^{(k)})\)
    \STATE Update \(\Param\) using \(\nabla_\Param\mathcal{L}(\Param)\) to minimize \(\mathcal{L}\)
\ENDFOR
 \STATE Return \(\Policy_\Param\)
 \end{algorithmic}
\caption{Other-play learning algorithm based on vanilla policy gradient}
\label{fig:pseudo-code-other-play-learning-algorithm}
\end{algorithm}

Our implementation of the OP learning algorithm is based on the PyMARL framework \parencite{samvelyan19smac}. We use recurrent neural networks to parameterize the policies of agents and a policy gradient algorithm to train agents' policies. Given that our toy problems are very small, they could also be solved by simple tabular methods. Nevertheless, we choose to employ this framework to demonstrate that our results transfer to state-of-the-art methods, even if the problems do not require them.

PyMARL is based on the PyTorch deep learning framework \parencite{NEURIPS2019-pytorch}. Neural network layers are implemented using the PyTorch module \texttt{nn.Linear} and the recurrent neural network uses a single \texttt{nn.GRUCell}, with the input encoding being one layer with ReLU activation functions. Hidden states are transformed into probabilities by a single \texttt{nn.Linear} layer followed by a softmax. The dimension of the hidden state is \(64\). Agent parameters are optimized using the \texttt{RMSProp} module, with a learning rate of \(0.0005\), an alpha of \(0.99\) and epsilon of \(0.00001\). These hyperparameters were all adopted as default values from the PyMARL framework.

Since our generalization of the OP objective requires policies that can randomize, and since it cannot be implemented as the SP objective in a modified Dec-POMDP (see Appendix~\ref{appendix-no-optimal-deterministic-policy}), it is not clear how to use multi-agent methods based on value functions \parencite[e.g.][]{sunehag2018value,foerster2017counterfactual}. For this reason, we use a vanilla multi-agent policy gradient algorithm without baseline (\citeauthor{nguyen2017policy}, \citeyear{nguyen2017policy}; \citeauthor{williams1992simple}, \citeyear{williams1992simple}; \citeauthor{sutton2018reinforcement}, \citeyear{sutton2018reinforcement}, ch.~13.1), which can easily be applied to our generalization of the OP objective (see Algorithm~\ref{fig:pseudo-code-other-play-learning-algorithm}).

We use \emph{weight sharing}, that is, all agents use the same neural network and receive an additional observation specifying their agent-ID. In the two-stage lever game, where agents are symmetric, we omit this agent-ID and thus force the resulting joint policy to be symmetric, \(\Policy_1=\Policy_2\). We can do this as a symmetric policy is optimal under OP in this case (see Theorem~\ref{thm-op-mixture}). As a benefit, we do not have to implement permutations of agents for OP. In the asymmetric lever game, since agents are not symmetric, agent-IDs are added to observations as one-hot vectors.

Lastly, we add a penalty for the negative entropy of a policy to the loss-function \parencite{mnih2016asynchronous,schulman2017equivalence}. The entropy of the probability distribution \(\Policy_{\theta,i}(\cdot\mid \AOHistory_{i,t})\) is defined as
\[H(\Policy_{\theta,i}(\cdot\mid \AOHistory_{i,t})):=-\sum_{\Action_i\in\ActionSet_i}\Policy_{\theta,i}(\Action_i\mid \AOHistory_{i,t})\log \Policy_{\theta,i}(\Action_i\mid \AOHistory_{i,t}).\]
The loss-function is then
\begin{equation}\label{eq:100}
\tilde{\mathcal{L}}(\Param)=-\frac{1}{KTN}\sum_{k=1}^K \sum_{t=0}^\Tmax
\left(G_{t}^{(k)}\sum_{i=1}^N\log\Policy_{\Param,i}(\Action_{i,t}^{(k)}\mid \AOHistory^{(k)}_{i,t}) + \alpha\sum_{i\in\PlayerSet}H(\Policy_{\Param,i}(\cdot\mid \AOHistory^{(k)}_{i,t}))\right),
\end{equation}
where \(\alpha\) is a hyperparameter.

We choose \(\alpha=0.5\), as the highest \(\alpha\) at which we still got fast convergence to an approximately optimal policy, after testing \(\alpha=1,0.5,0.1\), and \(0.05\). First, this encourages exploration and avoids a policy prematurely converging to a local minimum. Without this term, a small percentage of the learned policies in the asymmetric lever game converged to suboptimal equilibria. Second, we did this to make sure that agents learn to play a unique uniform distribution where actions do not matter for the OP value of a policy. Since it can be the case that actions do not matter but each choice of distribution still creates a different policy, this helps reduce the number of different policies that are learned, and thus facilitates tie-breaking between the remaining policies. Additionally, in the lever game with asymmetric players, it ensures that one can always infer from a distribution over histories the Dec-POMDP that these histories belong to. It hence suffices to let our hash function depend only on histories (see Appendix~\ref{appendix-random-tie-breaking-functions}).

Finally, we briefly outline how sampling from a policy \(\Isom^*\Policy\) is implemented, in the two-stage lever game where agents are symmetric and use the same policy network. Actions and observations are encoded as one-hot vectors, i.e., as elements of the canonical basis \(\{e_1,\dotsc,e_k\}\), where \(k\) is the cardinality of the respective set. For a given episode, one profile of automorphisms \(\AutProfile_1,\AutProfile_2\sim\U(\Aut(\DecP))\) is sampled. At time step \(t\), the observation input of the agent \(i\) is \(\AutProfile_i^{-1}(\ObservationRV_{i,t},\ActionRV_{i,t-1})\). Then an action \(\tilde{\ActionRV}_{i,t}\) is sampled from the agent policy, and that action is permuted by applying \(\AutProfile_i\), i.e., \(\ActionRV_{i,t}:=\AutProfile_i\tilde{\ActionRV}_{i,t}\). Otherwise, the Dec-POMDP model proceeds as normal. One can easily see that this results in a history \(\HistoryRV\sim\Prob_{\AutProfile^*\Policy}\).

\subsection{Deep tie-breaking}
\label{appendix-deep-tie-breaking}
We implement OP with tie-breaking as described in Section~\ref{section-other-play-with-tie-breaking} (see Algorithm~\ref{algorithm-other-play-with-tie-breaking}).
\begin{algorithm}
\begin{algorithmic}
\settowidth{\Input}{{\bfseries Input: }}
\STATE \parbox{\Input}{\bfseries Input:}Dec-POMDP \(\DecPSecond\)
\STATE \hskip \Input OP learning algorithm \(\LA^\OP\)
\STATE \hskip \Input Tie-breaking-function \(\chi\)
\STATE \hskip \Input Number of seeds \(K\)
\STATE {\bfseries Output:} Joint policy \(\Policy^*\in\PolicySet^\DecPSecond\)
\FOR{\(k=1\) to \(K\)}
    \STATE Train policy \(\Policy^{(k)}\sim\sigma^\OP(\DecPSecond)\)
    \STATE Calculate tie-breaking value \(x^{(k)}\leftarrow \chi(\DecPSecond,\Policy^{(k)})\)
\ENDFOR
 \STATE \(k_\mathrm{max}\leftarrow\argmax_{k=1}^Kx^{(k)}\)
 \STATE \(\Policy^*\leftarrow \Policy^{(k_\mathrm{max})}\)
 \STATE Return \(\Policy^*\)
\end{algorithmic}
    \caption{OP with tie-breaking}
    \label{algorithm-other-play-with-tie-breaking}
\end{algorithm}

Turning to the tie-breaking function, the function is only applied to actions of both agents and to rewards, but not to observations or states. This is because in our problems, states are always the same, and observations are completely determined by actions. The hash network has four hidden layers with ReLu activation functions, a hidden dimension of \(32\), and weights and biases are initialized uniformly in \([-1,1]\). We chose these hyperparameters mostly based on prior considerations, but we did compare neural network depths 2 to 5 and hidden-layer dimensions 8, 16, 32 and 64 to determine hyperparameters for which OP with tie-breaking performed well. To calculate the tie-breaking function, we use 2048 episode samples. See Algorithm~\ref{algorithm-tie-breaking-function-pseudo-code} for pseudo code.

\begin{algorithm}
\begin{algorithmic}
\settowidth{\Input}{{\bfseries Input: }}
\STATE \parbox{\Input}{\bfseries Input:}Dec-POMDP \(\DecPSecond\)
\STATE \hskip \Input Policy \(\Policy\in\PolicySet^\DecPSecond\)
\STATE \hskip \Input Neural network architecture \(\NN_\Hash\)
\STATE \hskip \Input Random seed \(n\)
\STATE \hskip \Input Number of episode samples \(K\)
\STATE \hskip \Input Tie-breaking value \(\chi(\DecPSecond,\Policy)\)
\STATE Initialize \(\NN_\Hash\) using random seed \(n\)
\FOR{\(k=1\) to \(K\)}
   \STATE Sample profile of automorphisms \(\AutProfile^{(k)}\sim\U(\Aut(\DecPSecond)^\PlayerSet)\)
   \STATE Sample history \(\History^{(k)}\sim\Prob^\DecPSecond_{{\AutProfile^{(k)}}^*\Policy}\)
\ENDFOR
\STATE \(\chi(\DecPSecond,\Policy)\leftarrow \frac{1}{N!K} \sum_{k=1}^K\sum_{\Isom_N\in\mathrm{Bij}(\PlayerSet)}\NN_\Hash(\Isom_N(\iota(\History^{(k)}))\)
\STATE Return \(\chi(\DecP,\Policy)\)
\end{algorithmic}
\caption{Tie-breaking function}
\label{algorithm-tie-breaking-function-pseudo-code}
\end{algorithm}

\subsection{Cross-play evaluation}
\label{appendix-XP-evaluation}
Recall that, to evaluate algorithms in the LFC problems for our environments, we simplify the objective to
\begin{equation}
\tilde{U}^\DecP(\LA):=
\E_{\Policy^{(i)}\sim\LA(\DecP),\,i=1,2}
    \Big[
    \E_{\AutProfile_i\sim \U(\Aut(\DecP)),\,i=1,2}\Big[
    \\
J^\DecP((\AutProfile_1^*\Policy^{(1)})_1,(\AutProfile_2^*\Policy^{(2)})_2)\Big]\Big].
\end{equation}
We discuss this simplification further in Appendix~\ref{appendix-equivariant-learning-algorithms}.

For each game, we train 320 joint policies in total, using OP with different seeds for network initialization and environmental randomness.  In the two-stage lever game, we train each policy for three million environment steps, and in the variant with asymmetric players for five million steps (see Figure~\ref{fig:training-curves} for learning curves).  The training for both environments took around three days on a MacBook Pro laptop from 2017. We split the 320 policies into 10 sets of 32 policies each.  The policies in each set are used for application of the tie-breaking method, while the 10 different sets represent independent runs which can be used to calculate cross-play values.

\begin{figure}
    \centering
    \begin{subfigure}{0.3\textwidth}
    \centering
    \includegraphics[width=1\textwidth]{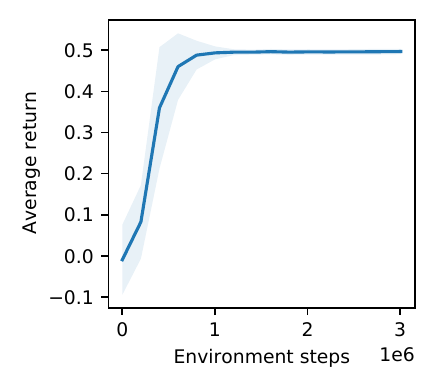}
    \caption{Two-stage lever game}
    \end{subfigure}
    \begin{subfigure}{0.3\textwidth}
    \centering
    \includegraphics[width=1\textwidth]{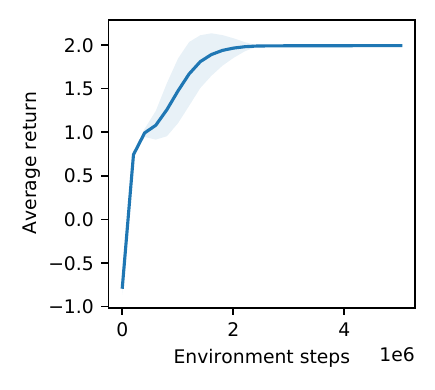}
    \caption{Asymmetric lever game}
    \end{subfigure}
        \caption{Learning curves for 320 independent training runs of the OP algorithm, with shaded standard deviation.}
            \label{fig:training-curves}
\end{figure}

Given a list of 10 policies \(\Policy^{(1)},\dotsc,\Policy^{(10)}\) produced by independent runs of an algorithm, we estimate the cross-play value \(G_{k,l}\approx\E_{\AutProfile_i\sim \U(\Aut(\DecP)),\,i=1,2}\left[
J((\AutProfile_1^*\Policy^{(k)})_1,(\AutProfile_2^*\Policy^{(l)})_2)\right]\) for any two indices of policies \(k, l\in\{1,\dotsc,10\}\). These values are used to print cross-play matrices (Figure~\ref{fig:21}) The average off-diagonal cross-play value is calculated as \(G:=\frac{1}{10(10-1)}\sum_{k\neq l}G_{k,l}\), where we leave out values on the diagonal because these do not represent cross-play between independent runs. Note that in the two-stage lever game, off-diagonal values can be higher than the optimal OP value. This is because in cross-play, agents can use different joint policies that accidentally work better in cross-play than a symmetric policy. It is counterbalanced in expectation by other entries of the matrix, in which unsuitable agents are matched. We provide in Table~\ref{tab:average-off-diag-XP} the average off-diagonal cross-play values used to create the graph in Figure~\ref{fig:avg-off-diagonal-XP}. 

Lastly, we categorize policies into classes of mutually compatible policies. To do so, we calculate a cross-play value for each combination of two out of the \(320\) trained joint policies, using \(256\) episodes each. We then dynamically build classes of policies by comparing a policy's expected return to the cross-play value with a policy from a given class, and assigning the policy to that class if the difference between the values is below a threshold of \(0.6\). If no class is compatible with a policy, the method creates a new class containing that policy. In that way, all 320 policies are assigned to a class. In Table~\ref{tab:3} we list the relative sizes of the different classes.

While in the two-stage lever game, both classes are represented approximately equally, a clear majority of policies belongs to one class in the asymmetric lever game. This class corresponds to the strategy in which player 2 switches to a different action upon non-coordination in the first round, and in which both players repeat their action given successful coordination in the first round. The least frequent class was the one in which player 2 switches to a different action upon non-coordination, but where both players switch to a different action if they were successful in the first round. The imbalance of classes in this case could be used to implement a tie-breaking rule that chooses the policy that is learned more often.

We think that a reasonable policy in the two-stage lever game is one in which a player repeats their action upon coordination.  Hence, a tie-breaking function that chooses the “repeat”-policy is preferable.  Interestingly, out of 20 seeds for the hash function, only 4 of the resulting tie-breaking functions gave higher values to that policy.

We used new random seeds for the final experiments that have not been used to improve the method. 

\begin{table}
    \caption{Average off-diagonal cross-play value and standard deviation for OP with tie-breaking with \(K=1\), 2, 4, 8, 16, and 32, in the two-stage lever game (TSLG) and asymmetric lever game (ALG).}
        \label{tab:average-off-diag-XP}
    \vskip 0.15in
    \small
    \centering
    \begin{tabular}{c|cccccc}
        Problem & 1&2&4&8&16&32 \\
        \hline
               TSLG & -0.03 (\textpm 0.00) & 0.14 (\textpm 0.13) & 0.48 (\textpm 0.06) & 0.50 (\textpm 0.00) & 0.50 (\textpm 0.00) & 0.50 (\textpm 0.00)  \\
      ALG &   0.90 (\textpm 0.00) & 1.23 (\textpm 0.08) & 1.75 (\textpm 0.22) & 1.90 (\textpm 0.16) & 1.96 (\textpm 0.11) & 1.97 (\textpm 0.10) 
    \end{tabular}
\end{table}

\begin{table}
    \small
    \centering
    \caption{Percentage of policies learned corresponding to different classes of mutually compatible policies, for two-stage lever game (TSLG) and asymmetric lever game (ALG), each time out of 320 seeds for training.}
    \vskip 0.15in
    \begin{tabular}{c|ccccc}
        Problem & Class 1 & Class 2 & Class 3 & Class 4 \\
        \hline
        TSLG& 50.94\%& 49.06\%& / & / \\
       ALG &43.75\% &20.94\%& 19.38\% &   15.94\% 
    \end{tabular}
    \label{tab:3}
\end{table}

\section{Formalization of a label-free coordination game and problem}
\label{appendix-formalization-of-the-zero-shot-coordination-game}
In this section, we formalize label-free coordination (LFC) games and the LFC problem, and we provide auxiliary results required to prove our main theorems.

In Appendix~\ref{appendix-recapitulation-of-dec-pomdps}, we recall the definition of Dec-POMDPs and introduce some additional notation. In Appendix~\ref{appendix-dec-pomdp-isomorphisms-and-automorphisms}, we recall the definition of isomorphisms and automorphisms, provide more comprehensive examples, and prove first lemmas about these concepts. Afterwards, in Appendix~\ref{appendix-pushforward-policies}, we discuss the pushforward and prove, among other things, that a pushforward policy has the same expected return as the original policy.  In Appendix~\ref{appendix-relation-to-group-theory}, we show that automorphisms define group actions on joint actions, policies, etc.\ and introduce the concept of an orbit. In Appendix~\ref{appendix-dec-pomdp-labelings-and-relabeled-dec-pomdps}, we then introduce Dec-POMDP labels to construct the set of relabeled Dec-POMDPs used to define an LFC game. In Appendix~\ref{appendix-the-zero-shot-coordination-game}, we recall the definition of LFC games and of the LFC problem, and we provide different expressions for the payoff in LFC games. We also prove that LFC games for isomorphic Dec-POMDPs are equivalent, up to a permutation of the principals in the game. In Appendix~\ref{appendix-optimal-symmetric-strategy-profiles}, we prove that any strategy profile that is optimal among those that respects symmetries between principals is a Nash equilibrium in the game, making use of group actions and orbits. This theorem is needed later to prove that all principals using OP with tie-breaking is a Nash equilibrium. Lastly, in Appendix~\ref{appendix-equivariant-learning-algorithms}, we briefly discuss a condition under which the objective in the LFC problem is equivalent to the formulation without relabeled Dec-POMDPs used in our experiments as outlined in Section~\ref{section-training-and-XP-evaluation}.

\subsection{Recapitulation of Dec-POMDPs}
\label{appendix-recapitulation-of-dec-pomdps}
Before we turn to isomorphisms and automorphisms, we briefly recapitulate Dec-POMDPs and introduce some additional notation that we will use throughout the following. That is, we also introduce a history of length \(t\), \(\History_t\in\HistorySet_t\), the set of deterministic policies \(\PolicySetDet\) and we define the measure space \((\Omega,\PowerSet(\Omega),\Prob_\Policy)\) corresponding to a Dec-POMDP in which agents follow the joint policy \(\Policy\). We also define the notation \(x_{-i}:=(x_1,\dotsc,x_{i-1},x_{i+1},\dotsc x_N)\) and the projection operator \(\Proj_i(x)=x_i\). Apart from this, the definitions in this section are a more elaborate version of those in Section~\ref{section-background} of the main text.

To begin, recall the definition of a Dec-POMDP.

\begin{defn}
A (finite-horizon) Dec-POMDP is a tuple
\[
\DecP=\Big(\PlayerSet,\StateSet,\ActionSet=\prod_{i\in \PlayerSet}\ActionSet_{i},P,\RewardFunction,\ObservationSet=\prod_{i\in \PlayerSet}\ObservationSet_{i},O,b_{0},\Tmax\Big)
\]
where
\begin{itemize}
\item $\mathcal{N}=\{1,\dots,N\}$, \(N\in\mathbb{N}\) is a finite set of agents.
\item $\StateSet$ is a finite set of states.
\item $\ActionSet_{i}$ is a finite set of actions for player $i\in\PlayerSet$.
\item $P\colon\StateSet\times\ActionSet\rightarrow\Delta(\StateSet)$
is the transition probability kernel (where $\Delta(\StateSet)$ denotes the
set of probability mass functions over $\StateSet$). 
\item $\RewardFunction\colon\StateSet\times\ActionSet\rightarrow\mathbb{R}$ is
the joint reward function.
\item $\ObservationSet_{i}$ is a finite set of observations for player $i\in\PlayerSet$.
\item $O\colon\StateSet\times\ActionSet\rightarrow\Delta(\ObservationSet)$ is
the observation probability kernel.
\item $b_{0}\in\Delta(\StateSet)$ is a distribution over the initial
state.
\item $\Tmax\in\mathbb{N}_0$ is the horizon of the problem.

\end{itemize}
\end{defn}

When considering different Dec-POMDPs \(\DecP,\DecPSecond\) at the same time, we write \(\ActionSet^\DecP,\ActionSet^\DecPSecond\), etc., to indicate to which Dec-POMDP the set belongs. Similarly, we do this for transition, observation, and reward functions. We omit the index \(\DecP\) if it is clear which Dec-POMDP is meant.


Given a Dec-POMDP \(\DecP\), define the set of \emph{local action-observation histories} of length \(t\in\{0,\dotsc,\Tmax\}\) for player
\(i\in\PlayerSet\) as
\[\AOHistorySet_{i,t}:=\left(\ActionSet_{i}\times\ObservationSet_{i}\right)^t,\]
(where \(\left(\ActionSet\times\ObservationSet\right)^0:=\{\emptyset\}\))
and the set of local action-observation histories for player \(i\) as
\[\AOHistorySet_{i}:=\bigcup_{0\leq t\leq \Tmax}\left(\ActionSet_{i}\times\ObservationSet_{i}\right)^t.\]
Moreover, we define the set of histories of length \(t\) as
\[\HistorySet_t:=\StateSet\times\ActionSet\times\RewardFunction(\StateSet\times\ActionSet)\times\left(\StateSet\times\ObservationSet\times\ActionSet\times\RewardFunction(\StateSet\times\ActionSet)\right)^t.\]
and \(\HistorySet:=\HistorySet_\Tmax\) as the set of histories. Histories \(\History\in\HistorySet\) are also called episodes, when one is talking about a particular sample of the stochastic process induced by agents following a policy in the Dec-POMDP, as defined below.

At step \(0\leq t\leq \Tmax\), agent \(i\in\PlayerSet\) chooses a distribution over actions, conditional on a past action-observation history \(\AOHistory_{i,t}\in \AOHistorySet_{i,t}\).
This choice is described by a \emph{local (stochastic) policy} for player $i$, which is a mapping $\Policy_{i}\colon\AOHistorySet_{i}\rightarrow\Delta(\ActionSet_{i})$. Since actions can be chosen stochastically, a policy and past observations do not imply which past actions have been taken, so policies are also able to condition on past actions (i.e., agents remember their past actions). We write \(\Policy_i(\Action_i\mid\AOHistory_{i,t})\) for the probability of action \(\Action_i\in\ActionSet_i\) given the action-observation history \(\AOHistory_{i,t}\in\AOHistorySet_{i,t}\). A \emph{(joint stochastic) policy} is a tuple $\Policy=(\Policy_{1},\dots,\Policy_{N})$
with a local policy for each player.
We denote $\PolicySet^\DecP$ as the set of joint stochastic policies for a Dec-POMDP $D$, and
$\PolicySet^{D}_{i}$ as the set of local policies for player $i$.
A \emph{local deterministic policy} for player \(i\in\PlayerSet\) is defined as a 
policy $\Policy_{i}$
such that $\Policy_{i}(\cdot \mid\AOHistory_{i,t})$ is concentrated on exactly one action (i.e., there exists \(\Action_i\in\ActionSet_i\) such that \(\Policy_{i}(\Action_i\mid\AOHistory_{i,t})=1\)) for
each $\AOHistory_{i,t}\in\AOHistorySet_{i,t}$.
A joint deterministic policy is defined analogously to joint stochastic policies, and \((\PolicySet^0)^\DecP\) is the set of joint deterministic policies for \(\DecP\).\footnote{Normally, a deterministic policy is defined as having values in \(\ActionSet_i\), but we can identify each element $\Action_i\in\ActionSet_{i}$
with a probability mass function in $\Delta(\ActionSet_{i})$ with support $\{\Action_i\}$, so these definitions are interchangeable. Since a deterministic policy outputs only one action for each action-observation history, it there is only one possible action for each observation history \((\Observation_{i,1},\dots,\Observation_{i,t})\), and we could hence write a deterministic policy as a map from observation histories to actions. We omit this here for simplicity.}

A Dec-POMDP \(\DecP\) together with a joint policy \(\Policy\in\PolicySet^\DecP\) specify the stochastic process according to which an episode evolves, yielding a distribution over histories. Formally, we define a discrete probability space \((\Omega,\PowerSet(\Omega),\Prob_\Policy)\) with random variables for states, actions, observations, and rewards at all time steps.

The distributions of these random variables are defined inductively in the following way. First, $S_{0}\sim b_{0}$, that is, the first state is an independent random variable with values in \(\StateSet\) and image distribution \(b_0\). Similarly, the first action of agent \(i\) is distributed according to $A_{i,0}\sim\Policy_{i}(\cdot\mid \emptyset)$ (note that \(\emptyset\in\AOHistorySet_i\) by definition), and the first reward is $R_{0}:=\RewardFunction(S_{0},A_{0})$, where we define \(\ActionRV_0:=(\ActionRV_{i,0})_{i\in\PlayerSet}\).

Now assume there are random variables for all states, actions, and observations until step \(0\leq t\leq\Tmax\) (in the case \(t=0\), there are no observations defined yet). We can summarize them into a random variable for the history of length \(t\) as
\[\HistoryRV_{t}:=(\StateRV_0,\ActionRV_0,(\StateRV_{t'},(\ObservationRV_{i,t'},\ActionRV_{i,t'})_{i\in\PlayerSet},\RewardRV_{t'})_{1\leq t'\leq t})\]
(and we let \(\HistoryRV:=\HistoryRV_\Tmax\)). 
At step \(t+1\), a new state \(S_{t+1}\sim P(\cdot\mid S_{t},A_{t})\) and a new joint observation \(O_{t+1}\sim O(\cdot\mid S_{t+1},A_{t})\) are sampled. Note that, in a slight abuse of notation, we use \(\ObservationRV\) for both observation probabilities and observation random variable. We can define a random variable for the action-observation history of agent \(i\) at time \(t+1\) as $\AOHistoryRV_{i,t+1}:=(A_{i,0},O_{i,1},A_{i,1},\dotsc,A_{i,t},O_{i,t+1})$. Conditioning on this action-observation history, agent \(i\) samples an action $A_{i,t+1}\sim\Policy_{i}(\cdot \mid \AOHistoryRV_{i,t+1})$. This yields a new joint action \(\ActionRV_{t+1}\). Finally, the new reward is
$R_{t+1}:=\RewardFunction(S_{t+1},A_{t+1})$, which concludes the definition.

We will sometimes make use of a simplified notation for a tuple excluding a particular player. Let \((x_i)_{i\in\PlayerSet}\) be a vector indexed by agents, with elements \(x_i\in \mathcal{X}_i\) where \(\mathcal{X}_i, i\in\PlayerSet\) are some sets. Then we define \(x_{-j}:=(x_1,\dotsc,x_{j-1},x_{j+1},\dotsc,x_N)\) for \(j\in\PlayerSet\). Moreover, for any \(x'_i\in X_i\), we then write \((x'_i,x_{-i}):=(x_1,\dotsc,x_{i-1},x_i',x_{i+1},\dotsc,x_N)\). We will also sometimes use the projection operator \(\Proj_i(x):=x_i\) to clearly refer to a particular element of \(x\) when \(x\) is a more complicated expression.

Now, given a problem \(\DecP\) and a joint policy \(\Policy\in\PolicySet^\DecP\), the measure \(\Prob_\Policy\) defined above specifies a distribution over rewards \(\RewardRV_0,\dotsc,\RewardRV_\Tmax\). We can use this fact to define the expected return of the policy. To that end, let $\E_{\Policy}:=\E_{\Prob_\Policy}$ denote the expectation with respect to \(\Prob_\Policy\).

\begin{defn}[Self-play objective]
\label{defn-self-play-objective}Let \(\DecP\) be a Dec-POMDP. Define the \emph{self-play (SP) objective} \(J^\DecP\colon \PolicySet^\DecP\rightarrow\mathbb{R}\) for \(\DecP\) via
\[J^{\DecP}(\Policy):=\E_{\Policy}\left[\sum_{t=0}^{\Tmax}R_{t}\right]\]
for \(\Policy\in\PolicySet^\DecP\). Here, \(J^\DecP(\Policy)\) is called the expected return of a joint policy $\Policy$.
\end{defn}

\subsection{Dec-POMDP isomorphisms and automorphisms}
\label{appendix-dec-pomdp-isomorphisms-and-automorphisms}

Here, we recall the definitions of isomorphisms and automorphisms from Sections~\ref{section-dec-pomdp-isomorphism} and \ref{section-other-play-generalization} and give more comprehensive examples than in the main text. Then we prove some elementary results.

Let \(\DecP,\DecPSecond\) be two Dec-POMDPs. Consider a tuple of bijective maps
\[\Isom:=(\Isom_{N},\Isom_{S},(\Isom_{A_i})_{i\in\PlayerSet},(\Isom_{O_i})_{i\in\PlayerSet}),\] where 
\begin{alignat}{2}\label{eq:7appendix}
&\Isom_{N}\colon &&\mathcal{N}^\DecP\rightarrow\PlayerSet^\DecPSecond\\
& \Isom_{S}\colon &&\StateSet^\DecP\rightarrow\StateSet^\DecPSecond\\
\forall i\in\PlayerSet\colon\quad&\Isom_{A_i}\colon &&\ActionSet_{i}^\DecP\rightarrow\ActionSet^\DecPSecond_{\Isom_N(i)}\\
\forall i \in\PlayerSet\colon\quad&\Isom_{O_i}\colon
&&\ObservationSet_{i}^\DecP\rightarrow\ObservationSet_{\Isom_N(i)}^\DecPSecond.\label{eq:8appendix}
\end{alignat}

Given a joint action \(\Action\in\ActionSet^\DecP\) and such tuple \(\Isom\), we define \begin{equation}\label{eq:13appendix}
\Isom_A(\Action):=\left(\Isom_{A_{\Isom^{-1}_N(j)}}\left(\Action_{\Isom^{-1}_N(j)}\right)\right)_{j\in\PlayerSet^\DecPSecond}\in\ActionSet^\DecPSecond,
\end{equation}
and analogously for \(\Observation\in\ObservationSet^\DecP\)
\begin{equation}\label{eq:14appendix}
\Isom_O(\Observation):=\left(\Isom_{O_{\Isom^{-1}_N(j)}}\left(\Observation_{\Isom^{-1}_N(j)}\right)\right)_{j\in\PlayerSet^\DecPSecond}\in\ObservationSet^\DecPSecond.\end{equation}
That is, in the joint action \(\Isom_A(\Action)\in\ActionSet^\DecPSecond\), the agent \(j=\Isom_N(i)\in\PlayerSet^\DecPSecond\) (where \(i\in\PlayerSet^\DecP\)) plays the action \(\Isom_{A_i}(a_i)\in\ActionSet_j^\DecPSecond\).
Note that it is really \(f_A(a)\in\ActionSet^\DecPSecond\), as for any $a\in\ActionSet^\DecP$ and \(i:=f_{N}^{-1}(j)\in \PlayerSet^\DecP\), by definition it is
\[f_{A_{i}}(a_{i})\in\ActionSet^\DecPSecond_{f_{N}(i)}=\ActionSet^\DecPSecond_{f_{N}(f_N^{-1}(j))}=\ActionSet^\DecPSecond_{j}.\]
The analogous holds for observations.

\begin{defn}[Dec-POMDP isomorphism]\label{dec-pomdp-isomorphism-appendix}
 Let $\DecP,\DecPSecond$ be Dec-POMDPs such that both have the same horizon \(\Tmax^\DecP=\Tmax^\DecPSecond\), and let \[\Isom:=(\Isom_{N},\Isom_{S},(\Isom_{A_i})_{i\in\PlayerSet},(\Isom_{O_i})_{i\in\PlayerSet})\] be a tuple of bijective maps as defined in equations (\ref{eq:7appendix})--(\ref{eq:8appendix}). Then \(\Isom\) is an isomorphism from \(\DecP\) to \(\DecPSecond\) if
for any \(\Action\in\ActionSet^\DecP\), \(s,s'\in\StateSet^\DecP\) and \(o\in\ObservationSet^\DecP\), it is
\begin{align}
P^\DecP(s'\mid s,a)= & P^\DecPSecond(f_{S}(s')\mid f_{S}(s),f_A(a))\label{eq:2appendix}\\
O^\DecP(o\mid s,a)= & O^\DecPSecond(f_{O}(o)\mid f_{S}(s),f_A(a))\label{eq:3appendix}\\
\RewardFunction^\DecP(s,a)= & \RewardFunction^\DecPSecond(f_{S}(s),f_A(a))\label{eq:4appendix}\\
b^\DecP_{0}(s)= & b_{0}^\DecPSecond(f_{S}(s)).
\end{align}
If such an isomorphism exists, $\DecP$ and $\DecPSecond$ are called isomorphic. We denote $\Iso(D,E)$ for the set of isomorphisms from \(D\) to \(E\).
\end{defn}

\begin{remark}
In a Dec-POMDP, distributions over histories and policies can be ranked by their associated expected return. If a reward function is multiplied with a positive constant \(\alpha\in\mathbb{R}_{>0}\) and shifted by a constant \(\beta\in\mathbb{R}\), then the new reward function \(\RewardFunction':= \alpha\RewardFunction+\beta\) still induces the same ranking. For this reason, one could consider such transformations as part of an isomorphism between two problems. For instance, \textcite[][p.~72]{harsanyi1988general} make it part of their definition of an isomorphism in the framework of standard-form games.
For simplicity, we ignore this complication here and consider only isomorphic problems with reward functions that have the same range.\end{remark} 

As in the main text, we write \(fa\) instead of \(f_A(a)\) and \(f\Action_i\) instead of \(f_{A_i}\Action_i\), and we do the same for observations, states, etc. We also write \(f\AOHistory_{i,t}\) for actions of isomorphisms on action-observation histories, which is defined as the element-wise application of \(f\), and letting \(\Isom \Reward := \Reward\) for rewards, we define \(\Isom\History\) for entire histories \(\History\in\HistorySet\).

Now we prove a first basic result about actions of isomorphisms. Note that an isomorphism \(\Isom\in\Iso(\DecP,\DecPSecond)\) is a bijective map
\begin{equation}
\label{eq:isomorphisms-is-bijective-map}
\Isom\colon \PlayerSet^\DecP\times\StateSet^\DecP\times\prod_{i\in\PlayerSet^\DecP}\ActionSet^\DecP_i\times\prod_{i\in\PlayerSet^\DecP}\ObservationSet^\DecP_i\rightarrow\PlayerSet^\DecPSecond\times\StateSet^\DecPSecond\times\prod_{i\in\PlayerSet^\DecPSecond}\ActionSet^\DecPSecond_{\Isom i}\times\prod_{i\in\PlayerSet^\DecPSecond}\ObservationSet^\DecPSecond_{\Isom i},
\end{equation}
which can be inverted and composed with other maps. The
following result shows that actions of isomorphisms on joint actions and joint observations are compatible with function composition and with taking inverses of functions. It follows that also the element-wise application to histories and action-observation histories is compatible in this way. In particular, this justifies omitting brackets when applying isomorphisms. We will use this lemma liberally in the following.

\begin{lem}\label{lemma-action-function-composition-compatible}
Let \(\DecP,\DecPSecond,\DecPThird\) be Dec-POMDPs, \(\Action\in\ActionSet^\DecP,\Observation\in\ObservationSet^\DecP\), let \(\Id\in\Iso(\DecP,\DecP)\) be the identity, and let \(\Isom\in\Iso(\DecP,\DecPSecond)\), \(\IsomSecond\in\Iso(\DecPSecond,\DecPThird)\). Then it is

\begin{enumerate}
    \item[(i)]
    \(\Id\Action=\Action\) and \(\Id\Observation=\Observation\).
    \item[(ii)] \(\IsomSecond(\Isom\Action)=(\IsomSecond\circ\Isom)\Action\) and \(\IsomSecond(\Isom\Observation)=(\IsomSecond\circ\Isom)\Observation\).
\end{enumerate}
In particular, actions of isomorphisms \(\Isom\) on joint actions and joint observations can be inverted using \(\Isom^{-1}\).
\end{lem}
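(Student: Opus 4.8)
The plan is to prove both parts by unwinding the componentwise definition of the action of an isomorphism on a joint action in Equation~(\ref{eq:13appendix}) (respectively on a joint observation in Equation~(\ref{eq:14appendix})) and chasing indices. Since the argument for joint observations is word-for-word identical to the one for joint actions---replace every ``$A$'' by ``$O$''---I would carry it out only for actions and state the observation case as ``analogously''.

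For part~(i), recall that the identity $\Id\in\Iso(\DecP,\DecP)$ is the tuple all of whose components are identity maps, so in particular $\Id_N=\mathrm{id}_{\PlayerSet^\DecP}$ and $\Id_{A_i}=\mathrm{id}_{\ActionSet_i^\DecP}$ for every $i\in\PlayerSet^\DecP$. Plugging this into~(\ref{eq:13appendix}), for each $i\in\PlayerSet^\DecP$ the $i$-th component of $\Id_A(\Action)$ is $\Id_{A_{\Id_N^{-1}(i)}}(\Action_{\Id_N^{-1}(i)})=\Id_{A_i}(\Action_i)=\Action_i$, hence $\Id\Action=\Action$, and likewise $\Id\Observation=\Observation$.

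For part~(ii), I would first record what the composite $\IsomSecond\circ\Isom$ is as a tuple of bijective maps in the sense of~(\ref{eq:isomorphisms-is-bijective-map}): its $N$-component is $\IsomSecond_N\circ\Isom_N$, and for $i\in\PlayerSet^\DecP$ its action component on player $i$ is $\IsomSecond_{A_{\Isom_N(i)}}\circ\Isom_{A_i}\colon\ActionSet_i^\DecP\to\ActionSet_{(\IsomSecond_N\circ\Isom_N)(i)}^\DecPThird$ (and analogously for $S$ and $O$); these compositions are well-defined by the codomain bookkeeping already spelled out after~(\ref{eq:14appendix}). Then, fixing $j\in\PlayerSet^\DecPThird$ and writing $k:=\IsomSecond_N^{-1}(j)\in\PlayerSet^\DecPSecond$ and $i:=\Isom_N^{-1}(k)=(\IsomSecond_N\circ\Isom_N)^{-1}(j)\in\PlayerSet^\DecP$, one computes from~(\ref{eq:13appendix}) that the $j$-th component of $\IsomSecond_A(\Isom_A(\Action))$ equals $\IsomSecond_{A_k}\big((\Isom_A\Action)_k\big)=\IsomSecond_{A_k}\big(\Isom_{A_i}(\Action_i)\big)$, while the $j$-th component of $(\IsomSecond\circ\Isom)_A(\Action)$ equals $(\IsomSecond\circ\Isom)_{A_i}(\Action_i)=\big(\IsomSecond_{A_{\Isom_N(i)}}\circ\Isom_{A_i}\big)(\Action_i)=\IsomSecond_{A_k}\big(\Isom_{A_i}(\Action_i)\big)$, using $\Isom_N(i)=k$. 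The two expressions agree, so $\IsomSecond(\Isom\Action)=(\IsomSecond\circ\Isom)\Action$, and the same chase gives the observation statement. The only delicate point here---and the main place to slip up---is tracking where the inverses $\Isom_N^{-1}$ and $\IsomSecond_N^{-1}$ sit and checking at each step that the map being applied has the right domain; once the substitutions $k=\IsomSecond_N^{-1}(j)$ and $i=\Isom_N^{-1}(k)$ are in place, everything matches mechanically.

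Finally, for the ``in particular'' clause, note that viewed as a map~(\ref{eq:isomorphisms-is-bijective-map}) an isomorphism $\Isom$ is a bijection because each of its components $\Isom_N,\Isom_S,(\Isom_{A_i})_i,(\Isom_{O_i})_i$ is; its inverse tuple $\Isom^{-1}$, defined componentwise by $(\Isom^{-1})_N:=\Isom_N^{-1}$, $(\Isom^{-1})_{A_j}:=\Isom_{A_{\Isom_N^{-1}(j)}}^{-1}$ for $j\in\PlayerSet^\DecPSecond$, and analogously for $S$ and $O$, satisfies $\Isom^{-1}\circ\Isom=\Id$ and $\Isom\circ\Isom^{-1}=\Id$ on the respective domains. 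Applying parts~(i) and~(ii) then yields $\Isom^{-1}(\Isom\Action)=(\Isom^{-1}\circ\Isom)\Action=\Id\Action=\Action$ for every $\Action\in\ActionSet^\DecP$, and symmetrically $\Isom(\Isom^{-1}\Action')=\Action'$ for $\Action'\in\ActionSet^\DecPSecond$; the same holds for observations, and, since the actions on histories and action-observation histories are by definition the elementwise applications of these maps, it extends to $\History\in\HistorySet^\DecP$ and $\AOHistory_{i,t}\in\AOHistorySet_{i,t}^\DecP$ as well. This also justifies omitting parentheses when composing isomorphisms, as claimed before the lemma.
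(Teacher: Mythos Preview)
Your proof is correct and follows the same approach as the paper's: both unwind the componentwise definition of the action on joint actions (Equation~(\ref{eq:13appendix})) and chase indices through the player permutation, with the observation case handled analogously. Your version is simply more explicit about the substitutions and the componentwise description of $\IsomSecond\circ\Isom$ and $\Isom^{-1}$, but the argument is identical in substance.
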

\begin{proof}
First, it is $\Id a=(\Id a_{\Id ^{-1}i})_{i\in\mathcal{N}}=\Id $,
and analogously for $o$. Second,
\[
\IsomSecond(\Isom a)=\IsomSecond(\Isom a_{\Isom ^{-1}i})_{i\in\mathcal{N}}=(\IsomSecond(\Isom a_{\Isom ^{-1}\IsomSecond^{-1}i}))_{i\in\mathcal{N}}=(\IsomSecond \Isom a_{(\IsomSecond \Isom )^{-1}i})_{i\in\mathcal{N}}=(\IsomSecond\circ \Isom )a
\]
as function composition is associative and the inverse of $\IsomSecond\circ \Isom$ is
$\Isom^{-1}\circ \IsomSecond^{-1}$.

Next, it follows from the previous that \(\Isom^{-1}(\Isom \Action)= (\Isom^{-1}\circ\Isom)\Action = \Id \Action=\Action\). The analogous holds for \(\Observation\), which concludes the proof.
\end{proof}

The following corollary states that the action of \(\Isom\) on histories \(\History_t\in\HistorySet_t\) is bijective. An analogous corollary also hold for action-observation histories.

\begin{cor}\label{corollary-action-isomorphism-is-bijective-histories}
Let \(\DecP,\DecPSecond\) be isomorphic Dec-POMDPs with isomorphism \(\Isom\in\Iso(\DecP,\DecPSecond)\) and let \(t\in\{0,\dotsc,\Tmax\}\). Then \(\Isom_H\colon \HistorySet_t^\DecP\rightarrow\HistorySet_t^\DecPSecond,\History_t\mapsto\Isom\History_t\) is a bijective map.
\end{cor}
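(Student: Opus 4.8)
The plan is to realize $\Isom_H$ as a Cartesian product of coordinate maps and to check each coordinate map is a bijection; a finite product of bijections is then a bijection, which is exactly the claim.

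First I would unwind the definitions. Writing a history $\History_t \in \HistorySet_t^\DecP$ as a tuple consisting of state coordinates, joint-action coordinates, joint-observation coordinates, and reward coordinates, the element-wise action $\Isom\History_t$ applies $\Isom_S$ to each state, $\Isom_A$ to each joint action, $\Isom_O$ to each joint observation, and the identity to each reward. Thus $\Isom_H$ is the product map $\Isom_S \times \Isom_A \times \mathrm{id} \times (\Isom_S \times \Isom_O \times \Isom_A \times \mathrm{id})^t$ from $\HistorySet_t^\DecP$ to the analogous product over $\DecPSecond$.

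Next I would verify that each factor is a well-defined bijection between the corresponding sets. The map $\Isom_S \colon \StateSet^\DecP \to \StateSet^\DecPSecond$ is bijective by Definition~\ref{dec-pomdp-isomorphism-appendix}, as it is one of the bijective maps making up $\Isom$. The map $\Isom_A \colon \ActionSet^\DecP \to \ActionSet^\DecPSecond$ is bijective: by its defining formula~(\ref{eq:13appendix}) it is the composition of the reindexing of coordinates induced by the bijection $\Isom_N$ of the finite set $\PlayerSet$ with the product $\prod_{i \in \PlayerSet} \Isom_{A_i}$ of the bijections $\Isom_{A_i}$; equivalently, Lemma~\ref{lemma-action-function-composition-compatible}(i)--(ii) exhibits $\Isom^{-1}_A$ as its two-sided inverse. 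The same applies to $\Isom_O$ via~(\ref{eq:14appendix}). Finally, for the reward coordinate, equation~(\ref{eq:4appendix}) together with bijectivity of $\Isom_S$ and $\Isom_A$ yields $\RewardFunction^\DecP(\StateSet^\DecP \times \ActionSet^\DecP) = \RewardFunction^\DecPSecond(\StateSet^\DecPSecond \times \ActionSet^\DecPSecond)$ (this range equality is also the content of the convention in the remark following Definition~\ref{dec-pomdp-isomorphism-appendix}), so the identity is a bijection between the two reward sets and, in particular, $\Isom\History_t$ genuinely lies in $\HistorySet_t^\DecPSecond$. A finite product of bijections is a bijection, so $\Isom_H$ is bijective, its inverse being the corresponding action of $\Isom^{-1}$. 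The analogous statement for action-observation histories $\AOHistory_{i,t} \in \AOHistorySet_{i,t}$ follows by the identical argument, using instead that $\Isom_{A_i}$ and $\Isom_{O_i}$ are bijective directly by Definition~\ref{dec-pomdp-isomorphism-appendix}.

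I do not expect a genuine obstacle here; the only step requiring a moment's care is the reward coordinate, where one must invoke the range-preservation consequence of~(\ref{eq:4appendix}) (or the simplifying convention noted in the remark) to see that fixing rewards even defines a map into $\HistorySet_t^\DecPSecond$ at all.
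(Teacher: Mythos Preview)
Your proposal is correct and takes essentially the same approach as the paper: both exploit that $\Isom_H$ acts componentwise and that each component map is invertible via the corresponding component of $\Isom^{-1}$. The paper simply writes out the explicit verification $\Isom^{-1}(\Isom\History_t)=\History_t$ using Lemma~\ref{lemma-action-function-composition-compatible}, whereas you phrase the same content as ``a finite product of bijections is a bijection'' and are slightly more careful about the reward coordinate; the underlying argument is identical.
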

\begin{proof}
Let \(\History_t=(\State_0,\Action_0,\State_1,\Observation_1,\Action_1,\Reward_1,\dotsc,\State_t,\Observation_t,\Action_t,\Reward_t)\in\HistorySet^\DecP_t\).
Define \(\Isom_H\) as above and \(\Isom_H^{-1}\colon \History\mapsto\Isom^{-1}\History\). Then
\begin{multline}\Isom^{-1}(\Isom(\History_t))
=
\Isom^{-1}((\Isom\State_0,\Isom\Action_0,\Isom\State_1,\Isom\Observation_1,\Isom\Action_1,\Isom\Reward_1,\dotsc,\Isom\State_t,\Isom\Observation_t,\Isom\Action_t,\Isom\Reward_t))
\\
=(\Isom^{-1}\Isom\State_0,\Isom^{-1}\Isom\Action_0,\Isom^{-1}\Isom\State_1,\Isom^{-1}\Isom\Observation_1,\Isom^{-1}\Isom\Action_1,\Isom^{-1}\Isom\Reward_1,\dotsc,\Isom^{-1}\Isom\Reward_t))
\\
\overset{\text{Lemma~\ref{lemma-action-function-composition-compatible}}}{=}
(\State_0,\Action_0,\State_1,\Observation_1,\Action_1,\Reward_1,\dotsc,\State_t,\Observation_t,\Action_t,\Reward_t)
=\History_t.
\end{multline}
\end{proof}

Recall the definition of an \emph{automorphism}, which can be thought of as describing a symmetry of the Dec-POMDP.

\begin{defn}[Dec-POMDP automorphism] An isomorphism $f\in \Iso(D,D)$ from $D$ to itself
is called an automorphism. We define $\Aut(D):=\text{\ensuremath{\Iso(D,D)}}$ as the set
of all automorphisms of $D$.
\end{defn}

Now we give an example of an isomorphism and automorphism, using the lever coordination game.

\begin{example}[Lever coordination game]
Consider the lever coordination game introduced in Section~\ref{section-introduction}. This can be formalized as a Dec-POMDP \(\DecP\) with only one state, one observation for each agent, and in which \(\Tmax=0\). The agents are \(\PlayerSet=\{1,2\}\) and actions are \(\ActionSet_1=\ActionSet_2=\{1,\dotsc,10\}\). One possible reward function is \[\RewardFunction(\Action_1,\Action_2,\State)=\delta_{\Action_1,\Action_2}\left(1.0\cdot\mathds{1}_{\{1,\dotsc,9\}}(a_1)+0.9\cdot\mathds{1}_{\{10\}}(a_1)\right),\]
where \[\delta_{\Action_1,\Action_2}:=\begin{cases}1&\text{if }\Action_1=\Action_2\\0&\text{otherwise}\end{cases}\]
is the Kronecker delta.

As an isomorphic problem \(\DecPSecond\), consider a problem with the same sets of actions for both players, but where the reward function is defined as
\[\RewardFunction^\DecPSecond(\Action_1,\Action_2,\State)=\delta_{\Action_1,\Action_2}\left(1.0\cdot\mathds{1}_{\{2,\dotsc,10\}}(a_1)+0.9\cdot\mathds{1}_{\{1\}}(a_1)\right)\]
for \(\Action_1\in\ActionSet_1,\Action_2\in\ActionSet_2\) and the trivial state \(\State\).
Note that this is formally a different Dec-POMDP. Nevertheless, we can define an isomorphism \(\Isom\in\Iso(\DecP,\DecPSecond)\) in the following way. For \(i=1,2\), define \(\Isom_{A_i}\colon\ActionSet_i\rightarrow\ActionSet_i\) such that \(\Isom_{A_i}(10)=1\) and \(\Isom_{A_i}(1)=10\), and let \(\Isom_{A_1}=\Isom_{A_2}\) be arbitrary otherwise. Let the remaining components of \(\Isom\) be the identity map. Then for any joint action \(\Action\) and the trivial state \(\State\), it is
\begin{multline}
\RewardFunction^\DecPSecond(\Isom \State,\Isom\Action)=\delta_{\Isom\Action_1,\Isom\Action_2}\left(1.0\cdot\mathds{1}_{\{2,\dotsc,10\}}(\Isom a_1)+0.9\cdot\mathds{1}_{\{1\}}(\Isom a_1)\right)
\\
=\delta_{\Action_1,\Action_2}\left(1.0\cdot\mathds{1}_{\{1,\dotsc,9\}}(a_1)+0.9\cdot\mathds{1}_{\{10\}}(a_1)\right)=\RewardFunction^\DecP(\State,\Action)\end{multline}
and hence \(\DecP\) and \(\DecPSecond\) are isomorphic (as observation and transition probabilities as well as the initial state distribution are trivial here).

Note that we could have used any two (potentially different) permutations \(\hat{\Isom}_1,\hat{\Isom}_2\) of the two sets \(\ActionSet_1,\ActionSet_2\) and defined a new reward function \(\RewardFunction'(\State,\Action):=\RewardFunction^{\DecP}(\State,\hat{\Isom}_1^{-1}\Action_1,\hat{\Isom}_2^{-1}\Action_2)\). This reward function would then define a new Dec-POMDP \(\DecP'\), and the isomorphism from \(\DecP\) to \(\DecP'\) would be exactly \(\Isom\) defined by \(\Isom_{A_1}=\hat{\Isom}_1,\Isom_{A_2}=\hat{\Isom}_2\) and the identity in the other components, as \(\RewardFunction'(\Isom\State,\Isom\Action)=\RewardFunction^{\DecP}(\State,\Isom_{A_1}\hat{\Isom}_1^{-1}\Action_1,\Isom_{A_2}\hat{\Isom}_2^{-1}\Action_2)=\RewardFunction^{\DecP}(\State,\hat{\Isom}_1\hat{\Isom}_1^{-1}\Action_1,\hat{\Isom}_2\hat{\Isom}_2^{-1}\Action_2)=\RewardFunction^\DecP(\State,\Action)\). We will use this idea in Section~\ref{appendix-dec-pomdp-labelings-and-relabeled-dec-pomdps} to define relabeled Dec-POMDPs.

Next, consider the automorphisms of the lever coordination game. Note that the agents in this game are symmetric. For instance, we can define \(\Auto\) via \(\Auto_N(1)=2\) and \(\Auto_N(2)=1\), and such that \(\Auto_{A_1}=\Auto_{A_2}=\hat{\Auto}\), where \(\hat{\Auto}\) is any permutation of \(\{1,\dotsc,10\}\) such that \(\hat{\Auto}(10)=10\). Then one can easily check that \(\RewardFunction^\DecP(\Auto\State,\Auto\Action)=\RewardFunction^\DecP(\State,\Action)\) for any joint action \(\Action\) and the state \(\State\).
\end{example}

Next, we introduce the automorphisms of the two-stage lever game (Example~\ref{example-two-stage-lever-game}), which we will need later to prove that OP is suboptimal in the corresponding LFC problem.

\begin{example}
\label{example-two-stage-lever-game-automorphisms-isomorphisms}

Recall that in the two-stage lever game, there are two agents, \(\PlayerSet=\{1,2\}\), and the problem has two rounds, so \(\Tmax=1\). Each round, each agent has to pull one of two levers, \(\mathcal{A}_1=\mathcal{A}_2=\{1,2\}\). If both agents choose the same lever, they get a reward of \(1\). Otherwise, the reward is \(-1\). There is again only one state, but there are two observations, \(\ObservationSet_1=\ObservationSet_2=\{1,2\}\). In the second stage (\(t=1\)), each player observes the previous action of the other player, so \(\ObservationRV_{i,1}=\ActionRV_{-i,0}\) for \(i=1,2\). The reward and observation probabilities are given in Table~\ref{tab:4appendix}.

\begin{table}
\centering
    \caption{Reward function and observation probabilities in the two-stage lever game (Example~\ref{example-two-stage-lever-game}).}
    \label{tab:4appendix}
    \vskip 0.15in
    \small
    \begin{subtable}{.45\linewidth}
     \caption{Reward function \(\RewardFunction(\State,\Action)\) for each joint action \(\Action\)}
             \label{tab:4a}
    \centering
         \begin{tabular}{c|c}
        \(\Action\) & \(\RewardFunction(\State,\Action)\)\\
        \hline
       (1,1)  & 1 \\
        (1,2) & -1 \\
       (2,1)  & -1 \\
        (2,2) & 1 
    \end{tabular}
    \end{subtable}
    \centering
    \begin{subtable}{.45\linewidth}
      \centering
          \caption{Observation probabilities \(O(\Observation\mid\State, \Action)\) for joint observations \(\Observation\) and joint actions \(\Action\).}
    \label{tab:4b}
    \begin{tabular}{c|cccc}
     \( \Action\,\backslash\, \Observation\) & (1,1) & (1,2) & (2,1) & (2,2)\\
      \hline
       (1,1)  & 1 & 0 & 0 & 0\\
        (1,2) & 0 & 0 & 1 & 0\\
       (2,1)  & 0 & 1 & 0 & 0\\
        (2,2) & 0 & 0 & 0 &1
    \end{tabular}
    \end{subtable}
\end{table}

Using the table for reward function and observation probabilities, we can easily visualize isomorphisms and automorphisms. Consider any isomorphism \(\Isom\in\Iso(\DecP,\DecPSecond)\) where \(\DecPSecond\) is some other Dec-POMDP. Considering, for instance, the reward function, we know that \(\RewardFunction^\DecP(\State,\Action)=\RewardFunction^\DecPSecond(\Isom \State,\Isom\Action)\). This means that if we want to check the value of the reward function of \(\DecPSecond\) when agents choose action \(\Isom\Action\), we can look it up in the table corresponding to \(\RewardFunction^\DecP\) in the row for \(\Action\). So we can visualize an isomorphism by applying \(\Isom_A\) to the index (i.e., first) column of this table, but leaving the other cells unchanged. This creates a new table with the reward function for \(\DecPSecond\), or, equivalently, this new table corresponds to the reward function \(\RewardFunction^\DecP\) precomposed with \(\Isom^{-1}\), i.e., \(\RewardFunction^\DecP(\Isom^{-1}_S\cdot,\Isom^{-1}_A\cdot)=\RewardFunction^\DecPSecond\).
Analogously, we can apply \(\Isom_{O}\) to the header row and \(\Isom_A\) to the index column of the table with observation probabilities, yielding \(O^\DecP(\Isom^{-1}_O\cdot\mid \Isom^{-1}_S\cdot ,\Isom^{-1}_A\cdot)=O^\DecPSecond\).
An automorphism is then simply an isomorphism such that applying it to index column and header of the tables does not change the table, other than permuting rows and columns.

Now let \(\Auto\) be an automorphism. \(\Auto_N\) can either be the identity or it can switch both agents. In either case, it must be \(\Auto_{A_1}=\Auto_{A_2}\), which can also either be the identity or the map that switches the actions. The observation permutation then has to be equal to that of the actions, \(\Auto_{O_1}=\Auto_{O_2}=\Auto_{A_1}\). There is trivially only one option for the state permutation.

\begin{table}
\centering
    \caption{Visualization of precomposition of both reward function and observation probability kernel with \(\Auto^{-1}\), by applying \(\Auto\) to the index column and header row of the tables from Table~\ref{tab:4appendix}. Note that apart from a permutation of rows and columns, the tables are identical to the ones in Table~\ref{tab:4appendix}, showing that \(\Auto\) is an automorphism.}
        \label{tab:5appendix}
    \vskip 0.15in
    \small
    \begin{subtable}{.45\linewidth}
         \caption{Reward function \(\RewardFunction(\Auto^{-1}\State,\Auto^{-1}\Action)\)}
             \label{tab:5a}
    \centering
         \begin{tabular}{c|c}
        \(\Action\) & \(\RewardFunction(\State,\Action)\)\\
        \hline
       (2,2)  & 1 \\
        (1,2) & -1 \\
       (2,1)  & -1 \\
        (1,1) & 1 
    \end{tabular}
    \end{subtable}
    \centering
    \begin{subtable}{.45\linewidth}
      \centering
          \caption{Observation probabilities \(O(\Auto^{-1}\Observation\mid\Auto^{-1}\State,\Auto^{-1}\Action)\)}
              \label{tab:5b}
    \begin{tabular}{c|cccc}
     \( \Action\,\backslash\, \Observation\) & (2,2) & (1,2) & (2,1) & (1,1)\\
      \hline
       (2,2)  & 1 & 0 & 0 & 0\\
        (1,2) & 0 & 0 & 1 & 0\\
       (2,1)  & 0 & 1 & 0 & 0\\
        (1,1) & 0 & 0 & 0 &1
    \end{tabular}
    \end{subtable}
\end{table}
We visualize the application of the automorphism that switches agents as well as actions and thus also observations in Table~\ref{tab:5appendix}. For instance, using the definition of actions of automorphisms on joint actions in Equations~(\ref{eq:13appendix}) and (\ref{eq:14appendix}), we have \(\Isom_A(1,1)=(\Isom_{A_{\Isom_N 1}}1,\Isom_{A_{\Isom_N 2}}1)=(\Isom_{A_2}1,\Isom_{A_1} 1) = (2,2)\), and \(\Isom_A(1,2)=(\Isom_{A_2}2,\Isom_{A_1}1)=(1,2)\).
\end{example}

Before we turn to applying isomorphisms and automorphisms to policies, we briefly provide two basic results about isomorphisms and the relationship between isomorphisms and automorphisms.

First, we prove that function composition and inversion preserve isomorphisms.

\begin{lem}\label{lemma-inverse-composition-isomorphism}
Let \(\DecP,\DecPSecond,\DecPThird\) be Dec-POMDPs and let \(\Isom\in\Iso(\DecP,\DecPSecond)\) and \(\IsomSecond\in\Iso(\DecPSecond,\DecPThird)\). Then \(\Isom^{-1}\in \Iso(\DecPSecond,\DecP)\) and \(\IsomSecond\circ\Isom\in\Iso(\DecP,\DecPThird)\).
\end{lem}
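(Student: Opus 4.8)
The plan is to verify, for each of $\Isom^{-1}$ and $\IsomSecond\circ\Isom$, first that it is a tuple of bijective maps of the shape prescribed by Equations~(\ref{eq:7appendix})--(\ref{eq:8appendix}), and then that it satisfies the four defining identities~(\ref{eq:2appendix})--(\ref{eq:4appendix}) together with the initial-state condition. Bijectivity of all components is immediate, since inverses and composites of bijections are bijections; the horizons agree because $\Tmax^\DecP=\Tmax^\DecPSecond=\Tmax^\DecPThird$. The only slightly delicate bookkeeping is tracking how the codomains of the $A_i$- and $O_i$-components are shifted by the agent relabelling.

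For $\Isom^{-1}$, set $(\Isom^{-1})_N := \Isom_N^{-1}$, $(\Isom^{-1})_S := \Isom_S^{-1}$, and for $j\in\PlayerSet^\DecPSecond$ put $(\Isom^{-1})_{A_j} := (\Isom_{A_{\Isom_N^{-1}j}})^{-1}\colon \ActionSet^\DecPSecond_j\to\ActionSet^\DecP_{\Isom_N^{-1}j}$, and analogously for observations; this is exactly the shape of~(\ref{eq:7appendix})--(\ref{eq:8appendix}) with the roles of $\DecP$ and $\DecPSecond$ swapped. A short computation (the same kind as in the proof of Lemma~\ref{lemma-action-function-composition-compatible}) shows that the action on joint actions and observations induced by this tuple is the inverse of the action of $\Isom$, so one may write $\Isom^{-1}a$, $\Isom^{-1}o$ unambiguously. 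Now fix $s,s'\in\StateSet^\DecPSecond$, $a\in\ActionSet^\DecPSecond$, $o\in\ObservationSet^\DecPSecond$ and put $\tilde s:=\Isom^{-1}s$, $\tilde s':=\Isom^{-1}s'$, $\tilde a:=\Isom^{-1}a$. Evaluating the isomorphism identity for $\Isom$ at $(\tilde s,\tilde s',\tilde a)$ and using $\Isom\tilde s=s$, $\Isom\tilde a=a$, etc.\ (Lemma~\ref{lemma-action-function-composition-compatible}), we obtain
\begin{multline*}
P^\DecPSecond(s'\mid s,a) = P^\DecPSecond(\Isom\tilde s'\mid\Isom\tilde s,\Isom\tilde a)\\ = P^\DecP(\tilde s'\mid\tilde s,\tilde a) = P^\DecP(\Isom^{-1}s'\mid\Isom^{-1}s,\Isom^{-1}a),
\end{multline*}
and identically for $O$, $\RewardFunction$, and $b_0$. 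Hence $\Isom^{-1}\in\Iso(\DecPSecond,\DecP)$.

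For $\IsomSecond\circ\Isom$, define $(\IsomSecond\circ\Isom)_N := \IsomSecond_N\circ\Isom_N$, $(\IsomSecond\circ\Isom)_S := \IsomSecond_S\circ\Isom_S$, and for $i\in\PlayerSet^\DecP$ put $(\IsomSecond\circ\Isom)_{A_i} := \IsomSecond_{A_{\Isom_N i}}\circ\Isom_{A_i}\colon \ActionSet^\DecP_i\to\ActionSet^\DecPThird_{\IsomSecond_N\Isom_N i}$, and likewise for observations; these again have the shape of~(\ref{eq:7appendix})--(\ref{eq:8appendix}) with $\DecPThird$ in place of $\DecPSecond$. By Lemma~\ref{lemma-action-function-composition-compatible}(ii) the induced action on joint actions and observations satisfies $(\IsomSecond\circ\Isom)a=\IsomSecond(\Isom a)$ and $(\IsomSecond\circ\Isom)o=\IsomSecond(\Isom o)$. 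Then for any $s,s'\in\StateSet^\DecP$, $a\in\ActionSet^\DecP$, $o\in\ObservationSet^\DecP$, chaining the isomorphism identity for $\Isom$ and then for $\IsomSecond$ gives
\begin{multline*}
P^\DecP(s'\mid s,a) = P^\DecPSecond(\Isom s'\mid\Isom s,\Isom a) = P^\DecPThird(\IsomSecond\Isom s'\mid\IsomSecond\Isom s,\IsomSecond\Isom a)\\ = P^\DecPThird\big((\IsomSecond\circ\Isom)s'\mid(\IsomSecond\circ\Isom)s,(\IsomSecond\circ\Isom)a\big),
\end{multline*}
and the same for $O$, $\RewardFunction$ (with $\IsomSecond\circ\Isom$ acting as the identity on rewards, so the reward identity simply composes two equalities of reals), and $b_0$. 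Hence $\IsomSecond\circ\Isom\in\Iso(\DecP,\DecPThird)$.

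I expect the only real obstacle to be notational rather than mathematical: one must check carefully that the type declarations in~(\ref{eq:7appendix})--(\ref{eq:8appendix}) are met, i.e.\ that the per-agent action and observation maps of the inverted/composed tuple land in the correctly indexed sets once the agent relabellings $\Isom_N^{-1}$ and $\IsomSecond_N\circ\Isom_N$ are accounted for, and that the induced action on joint objects coincides with the inverse/composite of the actions via Lemma~\ref{lemma-action-function-composition-compatible}. Once that identification is in place, each of the four defining identities follows by the one-line substitution shown above.
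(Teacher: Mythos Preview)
Your proposal is correct and follows essentially the same approach as the paper's proof: both chain the isomorphism identities for $\Isom$ and $\IsomSecond$ to handle the composite, and both substitute $\Isom^{-1}s,\Isom^{-1}a$ into the identity for $\Isom$ (invoking Lemma~\ref{lemma-action-function-composition-compatible}) to handle the inverse. Your version is somewhat more careful about explicitly verifying the type declarations~(\ref{eq:7appendix})--(\ref{eq:8appendix}) for the per-agent maps, which the paper leaves implicit.
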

\begin{proof}
Let \(\State,\State'\in\StateSet^\DecP\), \(\Action\in\ActionSet^\DecP\). First, using the definition of an isomorphism and associativity of function composition, it is
\[
P^\DecP(s'\mid s,a)=P^\DecPSecond(fs'\mid fs,fa)=P^\DecPThird(\IsomSecond \Isom s'\mid \IsomSecond\Isom s,\IsomSecond\Isom a)
=P^\DecPThird((\IsomSecond\circ\Isom) s'\mid (\IsomSecond\circ\Isom) s,(\IsomSecond\circ\Isom) a)
.\]
An analogous calculation applies to observation probabilities, reward functions and initial state distribution. This shows
that \(\IsomSecond\circ\Isom\in\Iso(\DecP,\DecPThird)\).

Next, let \(\State',\State\in\StateSet^\DecPSecond\). Using Lemma~\ref{lemma-action-function-composition-compatible} and the definition of an isomorphism, it is
\[
P^\DecPSecond(s'\mid s,a)=P^\DecPSecond(ff^{-1}s'\mid ff^{-1}s,ff^{-1}a)=P^\DecP(f^{-1}s'\mid f^{-1}s,f^{-1}a)
.\]
Again, an analogous calculation applies to observation probabilities and the other relevant functions. This shows
that $f^{-1}\in\Iso(\DecPSecond,\DecP)$.
\end{proof}

The next lemma shows that one can decompose isomorphisms into any isomorphism composed with an automorphism. Essentially, an isomorphism is a map from one Dec-POMDP to another, composed with a symmetry of that Dec-POMDP. This will be important later when we apply these concepts to define the LFC problem and show how OP relates to it. 


\begin{lem}\label{decompose-isomorphisms}
Let \(\DecP,\DecPSecond\) be Dec-POMDPs and \(\Isom,\IsomSecond\in \Iso(\DecP,\DecPSecond)\). Then there exists exactly one \(\Auto\in\Aut(\DecPSecond)\) such that \(\Auto\circ\Isom=\IsomSecond\). Analogously, there exists exactly one \(\Auto\in\Aut(\DecP)\) such that \(\Isom\circ\Auto=\IsomSecond\). In particular, it is
\[\Iso(\DecP,\DecPSecond)=\Isom\circ\Aut(\DecPSecond)=\Aut(\DecP)\circ \Isom,
\]
where \(\Isom\circ\Aut(\DecPSecond):=\{\Isom\circ\Auto\mid\Auto\in\Aut(\DecPSecond)\}\) and \(\Aut(\DecP)\circ \Isom\) is defined analogously.
\end{lem}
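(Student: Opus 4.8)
The plan is to exhibit $\Iso(\DecP,\DecPSecond)$ as a coset of the group $\Aut(\DecPSecond)$ (equivalently, of $\Aut(\DecP)$), so that the entire statement collapses to the single fact that the ``difference'' of two isomorphisms is an automorphism. Given $\Isom,\IsomSecond\in\Iso(\DecP,\DecPSecond)$, I would set $\Auto:=\IsomSecond\circ\Isom^{-1}$ and $\Auto':=\Isom^{-1}\circ\IsomSecond$. By Lemma~\ref{lemma-inverse-composition-isomorphism}, $\Isom^{-1}\in\Iso(\DecPSecond,\DecP)$; composing it with $\IsomSecond\in\Iso(\DecP,\DecPSecond)$ on the appropriate side, and applying the same lemma again, gives $\Auto\in\Iso(\DecPSecond,\DecPSecond)=\Aut(\DecPSecond)$ and $\Auto'\in\Iso(\DecP,\DecP)=\Aut(\DecP)$.

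Next I would check the two decompositions and their uniqueness. For existence, $\Auto\circ\Isom=\IsomSecond\circ(\Isom^{-1}\circ\Isom)=\IsomSecond$ and $\Isom\circ\Auto'=(\Isom\circ\Isom^{-1})\circ\IsomSecond=\IsomSecond$, using that $\Isom^{-1}\circ\Isom$ and $\Isom\circ\Isom^{-1}$ both equal the identity tuple $\Id$. Uniqueness is pure cancellation: if $\AutoSecond\in\Aut(\DecPSecond)$ also satisfies $\AutoSecond\circ\Isom=\IsomSecond$, then right-composing with $\Isom^{-1}$ forces $\AutoSecond=\IsomSecond\circ\Isom^{-1}=\Auto$, and symmetrically for the automorphism acting on the right of $\Isom$. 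Finally, for the set identity $\Iso(\DecP,\DecPSecond)=\Isom\circ\Aut(\DecP)=\Aut(\DecPSecond)\circ\Isom$: the inclusions ``$\supseteq$'' follow directly from Lemma~\ref{lemma-inverse-composition-isomorphism}, since composing $\Isom$ with any automorphism of the matching Dec-POMDP again yields an isomorphism $\DecP\to\DecPSecond$; and the inclusions ``$\subseteq$'' are precisely the existence statements just proved, applied to an arbitrary element of $\Iso(\DecP,\DecPSecond)$.

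The main --- and fairly minor --- obstacle is bookkeeping: I should be careful that composition and inversion of the \emph{tuple}-valued maps $\Isom=(\Isom_N,\Isom_S,(\Isom_{A_i})_i,(\Isom_{O_i})_i)$ genuinely behave like ordinary function composition and inversion, even though the action and observation components are re-indexed through $\Isom_N$ (so that, e.g., $(\IsomSecond\circ\Isom)_{A_i}=\IsomSecond_{A_{\Isom_N i}}\circ\Isom_{A_i}$). This is already subsumed by the single-bijection viewpoint of Equation~\eqref{eq:isomorphisms-is-bijective-map} together with Lemma~\ref{lemma-action-function-composition-compatible}, so I would simply cite those and otherwise manipulate $\circ$ and $(\cdot)^{-1}$ as usual. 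It is also worth noting, and reusing, that $\Aut(\DecP)$ and $\Aut(\DecPSecond)$ are groups under composition --- closure, identity, and inverses are all instances of Lemma~\ref{lemma-inverse-composition-isomorphism} --- which is what makes the coset picture literally correct.
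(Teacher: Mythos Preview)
Your proposal is correct and follows essentially the same route as the paper: define $\Auto=\IsomSecond\circ\Isom^{-1}$ (resp.\ $\Isom^{-1}\circ\IsomSecond$), invoke Lemma~\ref{lemma-inverse-composition-isomorphism} to land in the right automorphism group, verify the decomposition and cancel for uniqueness, then deduce the set identities from existence ($\subseteq$) and Lemma~\ref{lemma-inverse-composition-isomorphism} ($\supseteq$). Your added remarks on the tuple bookkeeping and the coset/group viewpoint are sound elaborations but do not change the argument; note also that you wrote the set identity as $\Isom\circ\Aut(\DecP)=\Aut(\DecPSecond)\circ\Isom$, which is the form in which the compositions are actually well-defined and is exactly what the paper's proof establishes (the displayed equation in the lemma statement has the two automorphism groups swapped).
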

\begin{proof}
Existence: By Lemma~\ref{lemma-inverse-composition-isomorphism}, \(\Auto:=\IsomSecond\circ\Isom^{-1}\) is an isomorphism in \(\Iso(\DecPSecond,\DecPSecond)=\Aut(\DecPSecond)\), and it is \(\Auto\circ\Isom=(\IsomSecond\circ\Isom^{-1})\circ\Isom=\IsomSecond\).

Uniqueness: Assume \(\Auto\circ\Isom=\AutoSecond\circ\Isom=\IsomSecond\) for automorphisms \(\Auto,\AutoSecond\in\Aut(\DecPSecond)\). Then it follows that
\[\Auto=\Auto\circ(\Isom\circ\Isom^{-1})=(\Auto\circ\Isom)\circ\Isom^{-1}=(\AutoSecond\circ\Isom)\circ\Isom^{-1}=\AutoSecond.\]

The proof for the second part of the lemma is exactly analogous, but using \(\Auto:=\Isom^{-1}\IsomSecond\).

Turning to the ``in particular'' statement, the above shows that  \[\Iso(\DecP,\DecPSecond)\subseteq\Isom\circ\Aut(\DecPSecond)\]
and
\[\Iso(\DecP,\DecPSecond)\subseteq\Aut(\DecP)\circ\Isom.\]
The two inclusions in the other direction follow directly from Lemma~\ref{lemma-inverse-composition-isomorphism}.
\end{proof}

\subsection{Pushforward policies}
\label{appendix-pushforward-policies}
Recall the definition of pushforward policies.

\begin{defn}[Pushforward policy]\label{definition-pushforward-appendix}
Let $\DecP,\DecPSecond$ be isomorphic Dec-POMDPs, let \(\Isom\in\Iso(\DecP,\DecPSecond)\), and let \(\Policy\in\PolicySet^\DecP\). 
Then we define the pushforward $\Isom^*\Policy\in\PolicySet^\DecPSecond$ of \(\Policy\) by \(\Isom\) via
\[
(\Isom^{*}\Policy)_{i}(\Action_{i}\mid\AOHistory_{i,t}):=\Policy_{\Isom^{-1} i}(f^{-1}\Action_{i}\mid f^{-1}\AOHistory_{i,t})
\]
for all \(i\in\mathcal{N}^\DecPSecond,a_i\in\ActionSet_i^\DecPSecond,t\in\{0,\dotsc,\Tmax\},\) and \(\AOHistory_{i,t}\in\AOHistorySet_{i,t}^\DecPSecond\). That is, in the joint policy \(\Isom^*\Policy\), agent \(j\in\PlayerSet^\DecPSecond\) gets assigned the local policy \(\Policy_i\) of agent \(i:=\Isom^{-1}j\in\PlayerSet^\DecP\), precomposed with \(\Isom^{-1}\).
\end{defn}

One can easily see that $f^{*}\Policy$ is a policy for the Dec-POMDP $\DecPSecond$.
Hence, when \(\Isom\) is an automorphism, \(\Isom^*\Policy\) is a policy for the same Dec-POMDP as \(\Policy\).

Like actions of isomorphisms on joint actions and observations, the pushforward is compatible with function composition, and it can be inverted using the inverse map \(\Isom^{-1}\).

\begin{lem}\label{lemma-pull-back-function-composition-compatible}Let \(\DecP,\DecPSecond\) and \(\DecPThird\) be Dec-POMDPs. Let \(\Policy\in\PolicySet^\DecP\), \(\Id\in\Aut(\DecP,\DecP)\) be the identity and \(\Isom\in\Iso(\DecP,\DecPSecond)\), \(\IsomSecond\in\Iso(\DecPSecond,\DecPThird)\). Then it is
\begin{enumerate}
    \item[(i)] \(\Id^*\Policy=\Policy\).
    \item[(ii)] \(\IsomSecond^*(\Isom^*\Policy)=(\IsomSecond\circ\Isom)^*\Policy\).
\end{enumerate}

In particular, the pushforward can be inverted using the inverse map \(\Isom^{-1}\).

\end{lem}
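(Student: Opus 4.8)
The plan is to verify both identities pointwise on the defining data of a policy, namely the numbers $(\cdot)_i(\Action_i \mid \AOHistory_{i,t})$, and to reduce each to the corresponding compatibility statement for the action of isomorphisms on agents, actions, and action-observation histories, which is already available from Lemma~\ref{lemma-action-function-composition-compatible} in its coordinatewise form for histories. First I would record the well-definedness checks: $\Isom^*\Policy$ is a policy for $\DecPSecond$ (as noted right after Definition~\ref{definition-pushforward-appendix}), so iterating the pushforward makes sense, and $\Isom^{-1}\in\Iso(\DecPSecond,\DecP)$ as well as $\IsomSecond\circ\Isom\in\Iso(\DecP,\DecPThird)$ by Lemma~\ref{lemma-inverse-composition-isomorphism}, so every pushforward appearing below is legitimate.

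For (i), unfolding Definition~\ref{definition-pushforward-appendix} gives $(\Id^*\Policy)_i(\Action_i \mid \AOHistory_{i,t}) = \Policy_{\Id^{-1}i}(\Id^{-1}\Action_i \mid \Id^{-1}\AOHistory_{i,t})$; since $\Id$ is the identity isomorphism, $\Id^{-1}$ is again the identity, so by Lemma~\ref{lemma-action-function-composition-compatible}(i) (and its element-wise version for histories) it fixes $i$, $\Action_i$, and $\AOHistory_{i,t}$, and the right-hand side is $\Policy_i(\Action_i \mid \AOHistory_{i,t})$. For (ii), I would apply Definition~\ref{definition-pushforward-appendix} twice: writing $g := \Isom^*\Policy$,
\[
\big(\IsomSecond^*(\Isom^*\Policy)\big)_i(\Action_i \mid \AOHistory_{i,t}) = g_{\IsomSecond^{-1}i}(\IsomSecond^{-1}\Action_i \mid \IsomSecond^{-1}\AOHistory_{i,t}) = \Policy_{\Isom^{-1}\IsomSecond^{-1}i}\big(\Isom^{-1}\IsomSecond^{-1}\Action_i \mid \Isom^{-1}\IsomSecond^{-1}\AOHistory_{i,t}\big),
\]
and then invoke Lemma~\ref{lemma-action-function-composition-compatible}(ii) together with $(\IsomSecond\circ\Isom)^{-1} = \Isom^{-1}\circ\IsomSecond^{-1}$ (Lemma~\ref{lemma-inverse-composition-isomorphism}) to rewrite $\Isom^{-1}\IsomSecond^{-1}(\cdot)$ as $(\IsomSecond\circ\Isom)^{-1}(\cdot)$ on agent indices, actions, and action-observation histories, which turns the displayed expression into exactly $\big((\IsomSecond\circ\Isom)^*\Policy\big)_i(\Action_i \mid \AOHistory_{i,t})$. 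The ``in particular'' statement then follows by combining (i) and (ii): $(\Isom^{-1})^*(\Isom^*\Policy) = (\Isom^{-1}\circ\Isom)^*\Policy = \Id^*\Policy = \Policy$, and symmetrically $\Isom^*\big((\Isom^{-1})^*\Policy'\big) = \Policy'$ for $\Policy'\in\PolicySet^\DecPSecond$.

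I do not expect any genuine obstacle: this is a pure bookkeeping lemma, and the one point requiring care is keeping straight which component of $\Isom$ acts on which object (the agent index via $\Isom_N$, the action via the $\Isom_{A_i}$, the entries of $\AOHistory_{i,t}$ via the corresponding action/observation maps) and invoking Lemma~\ref{lemma-action-function-composition-compatible} in its element-wise form for action-observation histories rather than only for joint actions and joint observations.
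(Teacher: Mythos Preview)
Your proposal is correct and follows essentially the same route as the paper: unfold Definition~\ref{definition-pushforward-appendix} pointwise on $(\cdot)_i(\Action_i\mid\AOHistory_{i,t})$, reduce to Lemma~\ref{lemma-action-function-composition-compatible} for the identity and composition properties of the underlying action, and then combine (i) and (ii) for the invertibility claim. The only minor remark is that the identity $(\IsomSecond\circ\Isom)^{-1}=\Isom^{-1}\circ\IsomSecond^{-1}$ is just the standard formula for inverses of bijections and does not require Lemma~\ref{lemma-inverse-composition-isomorphism}; that lemma is rather what justifies that $\IsomSecond\circ\Isom$ and $\Isom^{-1}$ are again isomorphisms, which you already used for well-definedness.
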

\begin{proof}
First, let \(i\in\PlayerSet^\DecP,\Action_i\in\ActionSet_i^\DecP,t\in\{0,\dotsc,\Tmax\}\), and \(\AOHistory_{i,t}\in\AOHistorySet_{i,t}^\DecP\). Then
\[(\Id^*\Policy)_i(\Action_i\mid \AOHistory_{i,t})=\Policy_{\Id i}(\Id \Action_i\mid \Id \AOHistory_{i,t})=\Policy_i(\Action_i\mid\AOHistory_{i,t}),\]
which shows that \(\Id^*\Policy=\Policy\).

Second, let \(i\in\PlayerSet^{\DecPThird},\Action_i\in\ActionSet^{\DecPThird}_i\) and \(\AOHistory_{i,t}\in\AOHistorySet^{\DecPThird}_{i,t}\).
Using Lemma~\ref{lemma-action-function-composition-compatible}, it is
\begin{multline}
(\IsomSecond^*(\Isom^*\Policy))_i(\Action_i\mid \AOHistory_{i,t})
=
(\Isom^*\Policy)_{\IsomSecond^{-1} i}(\IsomSecond^{-1} \Action_i\mid\IsomSecond^{-1}\AOHistory_{i,t})
=\Policy_{\Isom^{-1}(\IsomSecond^{-1} i)}(\Isom^{-1}(\IsomSecond^{-1}\Action_i)\mid\Isom^{-1}(\IsomSecond^{-1}\AOHistory_{i,t}))
\\\overset{\text{Lemma~\ref{lemma-action-function-composition-compatible}}}{=}\Policy_{(\Isom\circ\IsomSecond)^{-1} i}((\IsomSecond\circ\Isom)^{-1}\Action_i\mid(\IsomSecond\circ\Isom)^{-1}\AOHistory_{i,t})
=((\IsomSecond\circ\Isom)^*\Policy)_i(\Action_i\mid\AOHistory_{i,t}),
\end{multline}
which proves that \(\IsomSecond^*(\Isom^*\Policy)=(\IsomSecond\circ\Isom)^*\Policy\).

Regarding the invertibility of the pushforward, note that by (ii), it is
\[(f^{-1})^*(f^*\Policy)\overset{\text{(ii)}}{=}(f^{-1}\circ f)^*\Policy=\Policy.\]
This concludes the proof.
\end{proof}

One may wonder how pushforward policies and isomorphisms are related. In particular, how is the distribution over histories induced by a pushforward policy related to the distribution induced by the original policy? This question is answered by the following theorem, which we will use throughout this paper. The theorem demonstrates how the isomorphism from \(\DecP\) to \(\DecPSecond\) preserves the structure of the problem. It is similar to \textcite[][]{kang2012exploiting}'s Theorem 3, which applies to automorphisms in POSGs and makes a statement about the expected returns of all agents under the pushforward policy. We will provide a result about expected returns as a corollary.

\begin{thm}\label{lem-pull-back-isomorphism-compatibility}
Let \(\DecP,\DecPSecond\) be isomorphic Dec-POMDPs, let \(\Isom\in\Iso(\DecP,\DecPSecond)\), and let \(\Policy\in\PolicySet^\DecP\). Then for any \(\History\in\HistorySet^\DecP\), it is
\[\Prob_{\Policy}(\HistoryRV^\DecP=\History)=\Prob_{\Isom^*\Policy}(\HistoryRV^\DecPSecond =\Isom \History).\]
In particular, for any \(i\in\PlayerSet,t\in\{0,\dotsc,\Tmax\}\) and \(\AOHistory_{i,t}\in\AOHistorySet_i\), it is \[\Prob_{\Policy}(\AOHistoryRV_{ i,  t}= \AOHistory_{i,t})=\Prob_{\Isom^*\Policy}(\AOHistoryRV_{\Isom i,t}=\Isom\AOHistory_{i,t})
.\]
\end{thm}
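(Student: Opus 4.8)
The plan is to prove the history identity by induction on the length $t\in\{0,\dots,\Tmax\}$, establishing $\Prob_\Policy(\HistoryRV^\DecP_t=\History_t)=\Prob_{\Isom^*\Policy}(\HistoryRV^\DecPSecond_t=\Isom\History_t)$ for every length-$t$ history $\History_t$; the asserted identity is then the case $t=\Tmax$. Since the measure $\Prob_\Policy$ was defined inductively, unrolling that definition yields a product formula: writing $\History_t=(\State_0,\Action_0,\Reward_0,\State_1,\Observation_1,\Action_1,\Reward_1,\dots,\State_t,\Observation_t,\Action_t,\Reward_t)$, the probability $\Prob_\Policy(\HistoryRV^\DecP_t=\History_t)$ equals an indicator that $\Reward_{t'}=\RewardFunction^\DecP(\State_{t'},\Action_{t'})$ for all $t'\le t$, times $b_0^\DecP(\State_0)$, times $\prod_{i\in\PlayerSet}\Policy_i(\Action_{i,0}\mid\emptyset)$, times $\prod_{t'=1}^t\big[P^\DecP(\State_{t'}\mid\State_{t'-1},\Action_{t'-1})\,O^\DecP(\Observation_{t'}\mid\State_{t'},\Action_{t'-1})\prod_{i\in\PlayerSet}\Policy_i(\Action_{i,t'}\mid\AOHistory_{i,t'})\big]$, where each $\AOHistory_{i,t'}$ is the agent-$i$ action-observation slice read off from $\History_t$. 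For the base case $t=0$ I would transform the three surviving factors directly: the reward indicator via~(\ref{eq:4appendix}), the factor $b_0^\DecP(\State_0)$ via the corresponding equation for $b_0$ in Definition~\ref{dec-pomdp-isomorphism-appendix}, and each $\Policy_i(\Action_{i,0}\mid\emptyset)$ via Definition~\ref{definition-pushforward-appendix}, using that $\Isom$ maps the empty action-observation history to itself.

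For the inductive step I would peel off the final block $(\State_{t+1},\Observation_{t+1},\Action_{t+1},\Reward_{t+1})$, apply the induction hypothesis to the length-$t$ prefix, and rewrite the four new factors one at a time using~(\ref{eq:2appendix}) for the transition, (\ref{eq:3appendix}) for the observation, (\ref{eq:4appendix}) for the reward indicator (together with the convention $\Isom\Reward=\Reward$), and Definition~\ref{definition-pushforward-appendix} for the product of policy factors. The only nonroutine point is the agent-indexed product: after substituting $(\Isom^*\Policy)_j(\cdot\mid\cdot)=\Policy_{\Isom_N^{-1}j}(\Isom^{-1}\cdot\mid\Isom^{-1}\cdot)$ and reindexing by $i:=\Isom_N^{-1}j$, one must check that the argument supplied to $\Policy_i$ is exactly $(\Action_i,\AOHistory_{i,t+1})$. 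This holds because the action of $\Isom$ on a history is element-wise, so the agent-$j$ slice of $\Isom\History_{t+1}$ with $j=\Isom_N i$ is precisely $\Isom$ applied to the agent-$i$ slice of $\History_{t+1}$, and then $\Isom^{-1}$ inverts $\Isom$ on that slice by Lemma~\ref{lemma-action-function-composition-compatible} (applied componentwise to actions and observations). This bookkeeping with the relabeling of agents and the matching of action-observation slices is the main obstacle; everything else is mechanical substitution.

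For the ``in particular'' claim I would note that $\AOHistoryRV_{i,t}$ is a deterministic function of $\HistoryRV^\DecP$, so $\Prob_\Policy(\AOHistoryRV_{i,t}=\AOHistory_{i,t})$ is the sum of $\Prob_\Policy(\HistoryRV^\DecP=\History)$ over all $\History\in\HistorySet^\DecP$ whose agent-$i$ action-observation slice equals $\AOHistory_{i,t}$. By Corollary~\ref{corollary-action-isomorphism-is-bijective-histories} the map $\History\mapsto\Isom\History$ is a bijection $\HistorySet^\DecP\to\HistorySet^\DecPSecond$, and (again since $\Isom$ acts element-wise on histories) it sends exactly the histories with agent-$i$ slice $\AOHistory_{i,t}$ onto exactly the histories with agent-$\Isom i$ slice $\Isom\AOHistory_{i,t}$. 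Applying the already-proven history identity termwise to this reindexed sum then gives $\Prob_\Policy(\AOHistoryRV_{i,t}=\AOHistory_{i,t})=\Prob_{\Isom^*\Policy}(\AOHistoryRV_{\Isom i,t}=\Isom\AOHistory_{i,t})$, as required.
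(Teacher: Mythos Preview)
Your proposal is correct and follows essentially the same approach as the paper: induction on the history length $t$, transforming each factor (initial state, transition, observation, reward indicator, policy terms) via the isomorphism equations and the pushforward definition, with the agent-reindexing bookkeeping handled via Lemma~\ref{lemma-action-function-composition-compatible}. The paper phrases the inductive step through conditional probabilities (and separately treats the zero-probability case) rather than the unrolled product, and derives the action-observation marginal by summing over length-$t$ histories rather than full ones, but these are cosmetic differences.
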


\begin{proof}
Proof by induction over \(t\in\{0,\dotsc,\Tmax\}\).

To start the induction, let \(\State\in\StateSet^\DecP,\Action\in\ActionSet,\Reward\in\mathbb{R}\) and \(\History_0:=(\State,\Action,\Reward)\in\HistorySet_0^\DecP\). Then it is
\begin{align}
    \Prob_\Policy(\HistoryRV_0^\DecP=\History_0)
    &=b_0^\DecP(\State)\prod_{i\in\PlayerSet^\DecP}\Policy_i(\Action_i\mid\emptyset)\delta_{\RewardFunction^\DecP(\State,\Action),\Reward}
    \\
    &=
    b_0^\DecPSecond(\Isom\State)\prod_{i\in\PlayerSet^\DecP}\Policy_{\Isom^{-1}\Isom i}(\Isom^{-1}\Isom\Action_i\mid\emptyset)
    \delta_{\RewardFunction^\DecPSecond(\Isom\State,\Isom\Action),\Reward}\label{eq:18}
    \\
    &=
    b_0^\DecPSecond(\Isom\State)\prod_{i\in\PlayerSet^\DecP}(\Isom^*\Policy)_{\Isom i}(\Isom\Action_i\mid\emptyset)
    \delta_{\RewardFunction^\DecPSecond(\Isom\State,\Isom\Action),\Reward}\label{eq:19}
    \\
    &=b_0^\DecPSecond(\Isom\State)\prod_{i\in\PlayerSet^\DecPSecond}(\Isom^*\Policy)_{i}(\Isom\Action_{\Isom^{-1}i}\mid\emptyset)
    \delta_{\RewardFunction^\DecPSecond(\Isom\State,\Isom\Action),\Reward}
    \\
    &=
    \Prob_{\Isom^*\Policy}(\HistoryRV_0^\DecPSecond=\Isom\History).\label{eq:20}
\end{align}
Here, in (\ref{eq:18}), we use the definition of an isomorphism, in (\ref{eq:19}) we use the definition of the pushforward policy, and in (\ref{eq:20}) we use the definition of \(\Isom\Action\) from Equation~(\ref{eq:13appendix}).

Next, let \(t>0\) and assume that it is \(\Prob_\Policy(\History^\DecP_{t-1}=\History_{t-1})=\Prob_{\Isom^*\Policy}(\History^\DecPSecond=\Isom\History_{t-1})\) for any \(\History_{t-1}\in\HistorySet_{t-1}\). Let \[\History_t=(\dotsc,\State_{t-1},\Observation_{t-1},\Action_{t-1},\Reward_{t-1},\State_t,\Observation_t,\Action_t,\Reward_t)\in\HistorySet_t^\DecP\]
arbitrary, define \(\Action_{i,t}\) for \(i\in\PlayerSet\) such that \((\Action_{i,t})_{i\in\PlayerSet^\DecP}=\Action_t\), and define \(\AOHistory_{i,t}\) as the action-observation history  for player \(i\in\PlayerSet^\DecP\) corresponding to \(\History_{t}\). 

If \(\Prob_\Policy(\HistoryRV^\DecP_{t-1}=\History_{t-1})=0\), then also \(\Prob_{\Isom^*\Policy}(\HistoryRV_{t-1}^\DecPSecond=\Isom\History_{t-1})=0\), and thus also
\[
\Prob_\Policy(\HistoryRV^\DecP_{t}=\History_{t})=0=\Prob_{\Isom^*\Policy}(\HistoryRV_{t}^\DecPSecond=\Isom\History_{t}).
\]
Thus, assume now that \(\Prob_\Policy(\HistoryRV^\DecP_{t-1}=\History_{t-1})>0\).
Then it is
\begin{align}
    &\Prob_\Policy(\StateRV_t^\DecP=\State_t,\ObservationRV^\DecP_t=\Observation_t,\ActionRV^\DecP_t=\Action_T,\RewardRV^\DecP_t=\Reward_t\mid\HistoryRV^\DecP_{t-1}=\History_{t-1})
    \\
    &=P^\DecP(\State_t\mid \State_{t-1},\Action_{t-1})O^\DecP(\Observation_{t}\mid \State_{t},\Action_{t-1})
    \prod_{i\in\PlayerSet^\DecP}\Policy_i(\Action_{i,t}\mid\AOHistory_{t})\delta_{\RewardFunction^\DecP(\State_t,\Action_t),\Reward}
    \\
    &=
    P^\DecPSecond(\Isom\State_t\mid \Isom\State_{t-1},\Isom\Action_{t-1})O^\DecPSecond(\Isom\Observation_{t}\mid \Isom\State_{t},\Isom\Action_{t-1})\label{eq:15}\\
    &\quad
    \prod_{i\in\PlayerSet^\DecP}\Policy_{\Isom^{-1}\Isom i}(\Isom^{-1}\Isom\Action_{i,t}\mid\Isom^{-1}\Isom\AOHistory_{t})\delta_{\RewardFunction^\DecPSecond(\Isom\State_t,\Isom\Action_t),\Reward}
    \\
   &=
    P^\DecPSecond(\Isom\State_t\mid \Isom\State_{t-1},\Isom\Action_{t-1})O^\DecPSecond(\Isom\Observation_{t}\mid \Isom\State_{t},\Isom\Action_{t-1})\label{eq:16}\\
    &\quad
    \prod_{i\in\PlayerSet^\DecP}(\Isom^*\Policy)_{\Isom i}(\Isom\Action_{i,t}\mid\Isom\AOHistory_{t})\delta_{\RewardFunction^\DecPSecond(\Isom\State_t,\Isom\Action_t),\Reward}
    \\
    &=
    P^\DecPSecond(\Isom\State_t\mid \Isom\State_{t-1},\Isom\Action_{t-1})O^\DecPSecond(\Isom\Observation_{t}\mid \Isom\State_{t},\Isom\Action_{t-1})
    \\
    &\quad\prod_{i\in\PlayerSet^\DecPSecond}(\Isom^*\Policy)_{i}(\Isom\Action_{\Isom^{-1}i,t}\mid\Isom\AOHistory_{\Isom^{-1}i,t})
    \delta_{\RewardFunction^\DecPSecond(\Isom\State_t,\Isom\Action_t),\Reward}
    \\
    &=
    \Prob_{\Isom^*\Policy}(\StateRV_t^\DecPSecond=\Isom\State_t,\ObservationRV^\DecPSecond_t=\Isom\Observation_t,\ActionRV^\DecPSecond_t=\Isom\Action_t,\RewardRV^\DecPSecond_t=\Reward_t
    \mid\HistoryRV^\DecPSecond_{t-1}=\Isom\History_{t-1}).\label{eq:17}
\end{align}
Again, we have used the definitions of isomorphism, pushforward policy, and Equations~(\ref{eq:13appendix}) and (\ref{eq:14appendix}) in lines (\ref{eq:15}), (\ref{eq:16}), and (\ref{eq:17}), respectively.
Using the inductive hypothesis, it follows that
\begin{align}\label{eq:21}&\Prob_\Policy(\HistoryRV^\DecP_t=\History_t)
    \\
   & =
    \Prob_\Policy(\StateRV_t^\DecP=\State_t,
    \ObservationRV^\DecP_t=\Observation_t,
    \ActionRV^\DecP_t=\Action_T,
    \RewardRV^\DecP_t=\Reward_t
    \mid\HistoryRV^\DecP_{t-1}=\History_{t-1})
    \Prob_\Policy(\HistoryRV^\DecP_{t-1}=\History_{t-1})
    \\
   & =\Prob_{\Isom^*\Policy}(\StateRV_t^\DecPSecond=\Isom\State_t,\ObservationRV^\DecPSecond_t=\Isom\Observation_t,\ActionRV^\DecPSecond_t=\Isom\Action_t,\RewardRV^\DecPSecond_t=\Reward_t\mid\HistoryRV^\DecPSecond_{t-1}=\Isom\History_{t-1})
     \Prob_\Policy(\HistoryRV^\DecP_{t-1}=\History_{t-1})
     \\
    & \overset{\text{I.H.}}{=}
     \Prob_{\Isom^*\Policy}(\StateRV_t^\DecPSecond=\Isom\State_t,\ObservationRV^\DecPSecond_t=\Isom\Observation_t,\ActionRV^\DecPSecond_t=\Isom\Action_t,\RewardRV^\DecPSecond_t=\Reward_t\mid\HistoryRV^\DecPSecond_{t-1}=\Isom\History_{t-1})
     \Prob_{\Isom^*\Policy}(\HistoryRV^\DecPSecond_{t-1}=\Isom\History_{t-1})
     \\
    & =\Prob_{\Isom^*\Policy}(\HistoryRV^\DecPSecond_t=\Isom\History_t).
\end{align}
This concludes the induction and thus proves that \(\Prob_\Policy(\HistoryRV^\DecP=\History)=\Prob_{\Isom^*\Policy}(\HistoryRV^\DecPSecond=\Isom\History)\) for any history \(\History:=\History_\Tmax\in\HistorySet_\Tmax^\DecP\).

Turning to the ``in particular'' part of the proposition, let \(i\in\PlayerSet,t\in\{0,\dotsc,\Tmax\}\), and \(\AOHistory_{i,t}\in\AOHistorySet_{i,t}\). Let \(\History_t\in\HistorySet_t^\DecP\) such that \(\Prob_\Policy(\HistoryRV^\DecP_t=\History_t)>0\) and thus also \(\Prob_{\Isom^*\Policy}(\HistoryRV^\DecPSecond_t=\Isom\History_t)>0\).

First, assume that actions and observations of agent \(i\) in this history equal those in \(\AOHistory_{i,t}\). Then it is \(\{\HistoryRV^\DecP_t=\History_t\}\subseteq \{\AOHistoryRV^\DecP_{i,t}=\AOHistory_{i,t}\}\) and hence \(\Prob_\Policy(\AOHistoryRV_{i,t}=\AOHistory_{i,t}\mid \HistoryRV_{t}=\History_t)=1\). Moreover, by Equations~(\ref{eq:13appendix}) and (\ref{eq:14appendix}), it follows for any \(t'\leq t,\Action_{t'}\in\ActionSet^\DecP\) and \(\Observation_{t'}\in\ObservationSet^\DecP\) that
\(\Proj_{\Isom i}(\Isom \Action_{t'})=\Isom \Action_{i,t'}\) and analogously \(\Proj_{\Isom i}(\Isom \Observation_{t'})=\Isom \Observation_{i,t'}\). Hence, it follows that actions of observations of agent \(\Isom i\) in \(\Isom \History_t\) are equal to those in \(\Isom\AOHistory_{i,t}\), and thus it is \(\{\HistoryRV^\DecPSecond_t=\Isom\History_t\}\subseteq \{\AOHistoryRV^\DecPSecond_{\Isom i,t}=\Isom\AOHistory_{i,t}\}\), which implies \(\Prob_{\Isom^*\Policy}(\AOHistoryRV^\DecPSecond_{\Isom i,t}=\Isom\AOHistory_{i,t}\mid \HistoryRV_{t}=\Isom\History_t)=1\).

If, on the other hand, a history \(\History_t\) disagrees with \(\AOHistory_{i,t}\) in any way, then trivially \(\Prob_\Policy(\AOHistoryRV_{i,t}=\AOHistory_{i,t}\mid \HistoryRV_{t}=\History_t)=0\) and thus by same argument as before also \(\Prob_{\Isom^*\Policy}(\AOHistoryRV_{\Isom i,t}=\Isom\AOHistory_{i,t}\mid \HistoryRV_{t}=\Isom\History_t)=0\).

It follows that for any \(\History_t\in\HistorySet^\DecP_t\) such that \(\Prob_\Policy(\HistoryRV^\DecP_t=\History)>0\), it is
\begin{equation}
   \Prob_\Policy(\AOHistoryRV_{i,t}=\AOHistory_{i,t}\mid \HistoryRV_{t}=\History_t)= \Prob_{\Isom^*\Policy}(\AOHistoryRV^\DecPSecond_{\Isom i,t}=\Isom\AOHistory_{i,t}\mid \HistoryRV_{t}=\Isom\History_t)\label{eq:22},
\end{equation}
and hence
\begin{align}\Prob_{\Policy}(\AOHistoryRV_{ i,  t}= \AOHistory_{i,t})
&=
\sum_{\History_t\in\HistorySet^\DecP_t}
\Prob_\Policy(\AOHistoryRV_{i,t}=\AOHistory_{i,t}\mid \HistoryRV_{t}=\History_t)\Prob_\Policy(\HistoryRV^\DecP_t=\History_t)
\\&\overset{(\ref{eq:21})}{=}
\sum_{\History_t\in\HistorySet^\DecP_t}
\Prob_\Policy(\AOHistoryRV_{i,t}=\AOHistory_{i,t}\mid \HistoryRV_{t}=\History_t)\Prob_{\Isom^*\Policy}(\HistoryRV^\DecP_t=\Isom\History_t)
\\&\overset{(\ref{eq:22})}{=}
\sum_{\History_t\in\HistorySet^\DecP_t}
\Prob_{\Isom^*\Policy}(\AOHistoryRV_{\Isom i,t}=\Isom\AOHistory_{i,t}\mid \HistoryRV_{t}=\Isom\History_t)
\Prob_{\Isom^*\Policy}(\HistoryRV^\DecP_t=\Isom\History_t)
\\&=
\sum_{\History_t\in\Isom^{-1}\left(\HistorySet^\DecP_t\right)}
\Prob_{\Isom^*\Policy}(\AOHistoryRV_{\Isom i,t}=\Isom\AOHistory_{i,t}\mid \HistoryRV_{t}=\History_t)
\Prob_{\Isom^*\Policy}(\HistoryRV^\DecP_t=\History_t)
\\&=\Prob_{\Isom^*\Policy}(\AOHistoryRV_{\Isom i,t}=\Isom\AOHistory_{i,t})\label{eq:23}.
\end{align}
In line (\ref{eq:23}), we have used that, by Corollary~\ref{corollary-action-isomorphism-is-bijective-histories}, \(\Isom\) is a bijective map when applied to histories, and thus \(\Isom^{-1}\left(\HistorySet^\DecP_t\right)=\HistorySet^\DecPSecond_t\).
This concludes the proof.
\end{proof} 

It is an immediate corollary that the expected return of a policy is not changed by the pushforward.

\begin{cor}\label{cor-sp-invariant-to-pullback}
Let \(\DecP,\DecPSecond\) be Dec-POMDPs, let \(\Isom\in\Iso(\DecP,\DecPSecond)\), and let \(\Policy\in\PolicySet^\DecP\). Then
\[J^{\DecP}(\Policy)=J^\DecPSecond(\Isom^*\Policy).\]
\end{cor}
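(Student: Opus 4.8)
The plan is to reduce the statement directly to Theorem~\ref{lem-pull-back-isomorphism-compatibility}, which already gives the key pushforward-invariance of the distribution over histories. The only additional ingredients needed are (a) that the expected return is a deterministic function of the history, (b) that this function is unchanged when a history is acted on by $\Isom$ (since we set $\Isom\Reward:=\Reward$, the rewards appearing in $\Isom\History$ literally coincide with those in $\History$), and (c) that $\Isom$ acts bijectively on $\HistorySet^\DecP$, which is Corollary~\ref{corollary-action-isomorphism-is-bijective-histories}.

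Concretely, I would write $J^\DecP(\Policy)=\E_\Policy[\sum_{t=0}^\Tmax \RewardRV_t]=\sum_{\History\in\HistorySet^\DecP}\Prob_\Policy(\HistoryRV^\DecP=\History)\,g(\History)$, where $g(\History):=\sum_{t=0}^{\Tmax}r_t$ is the sum of the rewards recorded in $\History$ (the sum is finite since $\HistorySet^\DecP$ is finite and $\Tmax$ is finite). By Theorem~\ref{lem-pull-back-isomorphism-compatibility}, $\Prob_\Policy(\HistoryRV^\DecP=\History)=\Prob_{\Isom^*\Policy}(\HistoryRV^\DecPSecond=\Isom\History)$, and since the reward components of $\Isom\History$ equal those of $\History$ by the convention $\Isom\Reward:=\Reward$, we also have $g(\History)=g(\Isom\History)$. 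Substituting both identities gives $J^\DecP(\Policy)=\sum_{\History\in\HistorySet^\DecP}\Prob_{\Isom^*\Policy}(\HistoryRV^\DecPSecond=\Isom\History)\,g(\Isom\History)$. Finally, reindexing the sum along the bijection $\Isom\colon\HistorySet^\DecP\to\HistorySet^\DecPSecond$ from Corollary~\ref{corollary-action-isomorphism-is-bijective-histories} turns this into $\sum_{\History'\in\HistorySet^\DecPSecond}\Prob_{\Isom^*\Policy}(\HistoryRV^\DecPSecond=\History')\,g(\History')=J^\DecPSecond(\Isom^*\Policy)$, which is the claim.

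I do not anticipate a genuine obstacle here; the statement is essentially a corollary ``by inspection'' of Theorem~\ref{lem-pull-back-isomorphism-compatibility}. The only point requiring a line of care is making the passage from ``equality of history distributions'' to ``equality of expected returns'' precise, i.e.\ spelling out that $\sum_{t}\RewardRV_t$ factors through the history random variable and that the relevant function $g$ is $\Isom$-invariant; everything else is just a change of summation index justified by bijectivity. If one wanted extra rigour one could instead phrase the argument at the level of the reward random variables directly, noting $\RewardRV_t^\DecP\circ(\text{history map})= \RewardRV_t^\DecPSecond\circ\Isom\circ(\text{history map})$, but the summation form above is cleaner and suffices.
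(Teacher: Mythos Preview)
Your proposal is correct and takes essentially the same approach as the paper: both arguments invoke Theorem~\ref{lem-pull-back-isomorphism-compatibility} for the equality of history distributions, use the convention $\Isom\Reward:=\Reward$ to see that rewards are preserved, and appeal to Corollary~\ref{corollary-action-isomorphism-is-bijective-histories} for the bijection on histories. The only cosmetic difference is that you write the expected return as an explicit sum over $\HistorySet^\DecP$ and reindex, whereas the paper phrases the same step as ``$\Isom(\HistoryRV^\DecP)$ under $\Prob_\Policy$ has the same image distribution as $\HistoryRV^\DecPSecond$ under $\Prob_{\Isom^*\Policy}$'' and then reads off the equality of the reward marginals.
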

\begin{proof} By
Corollary~\ref{corollary-action-isomorphism-is-bijective-histories} and Theorem~\ref{lem-pull-back-isomorphism-compatibility}, it is
\begin{multline}
\Prob_\Policy(\Isom\left(\HistoryRV^\DecP\right)=\History^\DecPSecond)
\overset{\text{Corollary~\ref{corollary-action-isomorphism-is-bijective-histories}}}{=}
\Prob_\Policy(\HistoryRV^\DecP=\Isom^{-1}\History^\DecPSecond)
\\
\overset{\text{Theorem~\ref{lem-pull-back-isomorphism-compatibility}}}{=}
\Prob_{\Isom^*\Policy}(\HistoryRV^\DecPSecond = \Isom(\Isom^{-1}\History^\DecPSecond))
=\Prob_{\Isom^*\Policy}(\HistoryRV^\DecPSecond = \History^\DecPSecond)
\end{multline}
for any history \(\History^\DecPSecond\in\HistorySet^\DecPSecond\).
This shows that the random variable \(\Isom\left(\HistoryRV^\DecP\right)\) has the same image distribution under \(\Prob_\Policy\) as the variable \(\HistoryRV^\DecPSecond\) under \(\Prob_{\Isom^*\Policy}\). In particular, this means that for any \(t=0,\dotsc,\Tmax\), the variables \(\Isom(\RewardRV_t^\DecP)=\RewardRV^\DecP_t\) and \(\RewardRV^\DecPSecond_t\) have the same distribution in the respective probability spaces (*).
Using the definition of the expected return, it follows that
\begin{equation}J^{\DecP}(\Policy)=\E_{\Policy}\left[\sum_{t=0}^\Tmax \RewardRV^\DecP_t\right]
\overset{\text{(*)}}{=}
\E_{\Isom^*\Policy}\left[\sum_{t=0}^\Tmax 
\RewardRV^\DecPSecond_t\right]=J^\DecPSecond(\Isom^*\Policy).\end{equation}
\end{proof}

\subsection{Relation to group theory}
\label{appendix-relation-to-group-theory}

The study of symmetries is a focus of group theory, and the concepts introduced above hence correspond to group-theoretic notions. For instance, as we show below, \(\Aut(\DecP)\) is a group, and its elements do act on the elements of a Dec-POMDP in the sense of group actions. We discuss this here as we will need these results later, in the discussion of symmetric profiles of learning algorithms in Appendix~\ref{appendix-optimal-symmetric-strategy-profiles}, as well as in the discussion of random tie-breaking functions in Appendix~\ref{appendix-random-tie-breaking-functions}. For a reference on the group-theoretic concepts discussed here, see \textcite[][ch.~3]{rotman2012introduction}.

We begin by showing that \(\Aut(\DecP)\) is a group.

\begin{prop}Let \(\DecP\) be a Dec-POMDP. Then
\((\Aut(\DecP),\circ)\) is a group, where \(\circ\) is the function composition.
\end{prop}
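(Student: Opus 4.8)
The plan is to verify the four group axioms; essentially all the work is already contained in the lemmas on composition and inversion of isomorphisms, since $\Aut(\DecP) = \Iso(\DecP,\DecP)$ by definition. First I would spell out what the product $\Auto\circ\AutoSecond$ of two automorphisms $\Auto,\AutoSecond\in\Aut(\DecP)$ means: it is taken componentwise, $(\Auto\circ\AutoSecond)_N := \Auto_N\circ\AutoSecond_N$, $(\Auto\circ\AutoSecond)_S := \Auto_S\circ\AutoSecond_S$, $(\Auto\circ\AutoSecond)_{A_i} := \Auto_{A_{\AutoSecond_N(i)}}\circ\AutoSecond_{A_i}$, and likewise for observations; note the action component $(\Auto\circ\AutoSecond)_{A_i}$ has codomain $\ActionSet_{(\Auto_N\circ\AutoSecond_N)(i)}$, so the resulting tuple again has the shape required in~(\ref{eq:7appendix})--(\ref{eq:8appendix}).

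With this in hand the axioms are short. Closure: by Lemma~\ref{lemma-inverse-composition-isomorphism} applied with $\DecP=\DecPSecond=\DecPThird$, if $\Auto,\AutoSecond\in\Iso(\DecP,\DecP)$ then $\Auto\circ\AutoSecond\in\Iso(\DecP,\DecP)=\Aut(\DecP)$. Associativity: composition is componentwise and ordinary function composition is associative; Lemma~\ref{lemma-action-function-composition-compatible} also records that the induced action on joint actions and observations respects composition, which is why brackets may be dropped. Identity: the tuple $\Id$ of identity maps satisfies the defining equations of Definition~\ref{dec-pomdp-isomorphism-appendix} trivially --- using $\Id s=s$, $\Id a=a$, $\Id o=o$ (Lemma~\ref{lemma-action-function-composition-compatible}(i)) each condition becomes ``$X=X$'' --- so $\Id\in\Aut(\DecP)$, and evidently $\Id\circ\Auto=\Auto\circ\Id=\Auto$. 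Inverses: Lemma~\ref{lemma-inverse-composition-isomorphism} gives $\Auto^{-1}\in\Iso(\DecP,\DecP)=\Aut(\DecP)$, and since every component of $\Auto$ is a bijection, $\Auto\circ\Auto^{-1}=\Auto^{-1}\circ\Auto=\Id$.

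The one point needing a little care --- the nearest thing to an obstacle --- is checking that componentwise composition of these tuples is well-typed, i.e.\ that the index sets match up ($\AutoSecond_{A_i}$ lands in $\ActionSet_{\AutoSecond_N(i)}$, so it must be post-composed with $\Auto_{A_{\AutoSecond_N(i)}}$, not with $\Auto_{A_i}$, and symmetrically for inverses). This is precisely the bookkeeping already carried out for isomorphisms around~(\ref{eq:isomorphisms-is-bijective-map}) and in Lemma~\ref{lemma-action-function-composition-compatible}, so no new idea is required.
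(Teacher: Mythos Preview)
Your proposal is correct and follows essentially the same approach as the paper: both invoke Lemma~\ref{lemma-inverse-composition-isomorphism} for closure and inverses, note that function composition is associative, and take the tuple of identities as the neutral element. Your version is a bit more explicit about the componentwise bookkeeping (the index shift $\Auto_{A_{\AutoSecond_N(i)}}\circ\AutoSecond_{A_i}$), whereas the paper just points to~(\ref{eq:isomorphisms-is-bijective-map}) to justify that composition makes sense as a map, but this is the same idea.
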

\begin{proof}
First, we show that the binary operation \[
\circ\colon \Aut(\DecP)\times\Aut(\DecP)\rightarrow\Aut(\DecP),(\Auto,\AutoSecond)\mapsto\Auto\circ\AutoSecond\]
is well-defined.
By Equation~\ref{eq:isomorphisms-is-bijective-map}, an automorphism \(\Auto\in\Aut(\DecP)\) is a bijective self-map, so we can compose any two automorphisms \(\Auto,\AutoSecond\in\Aut(\DecP)\). Moreover, by Lemma~\ref{lemma-inverse-composition-isomorphism}, for any \(\Auto\in\Aut(\DecP),\AutoSecond\in\Aut(\DecP)\), we also have \(\AutoSecond\circ\Auto\in\Iso(\DecP,\DecP)=\Aut(\DecP)\). This shows that \(\Aut(\DecP)\) is closed under function composition.

Second, note that \(\circ\) is an associative operation as function composition is associative. Moreover, for the identity map \(\Id\), it is \(\Id\circ \Auto=\Auto\) for any \(\Auto\in\Aut(\DecP)\), so \(\Aut(\DecP)\) has a neutral element. Lastly, by Lemma~\ref{lemma-inverse-composition-isomorphism}, it is also \(\Auto^{-1}\in \Aut(\DecP)\), and since \(\Auto^{-1}\circ\Auto=\Id\), this implies that \(\Auto\) has an inverse in \(\Aut(\DecP)\). This concludes the proof.
\end{proof}

Next, we turn to group actions, which formalize the idea that elements of groups can be applied to sets. In the case of symmetry groups, this connects the abstract group elements with their role as transformation of an underlying set. For instance, consider the set \(X\) of the vertices of an equilateral triangle in \(\mathbb{R}^2\) and the cyclic group \(\faktor{\mathbb{Z}}{3\mathbb{Z}}\). Each element of \(\faktor{\mathbb{Z}}{3\mathbb{Z}}\) can be regarded as a rotation of the vertices of the triangle, mapping one vertex to another.

\begin{defn}[Group action]
Let \((G,\cdot)\) be a group with identity \(\Id\) and let \(X\) be any set.
A group action is defined as a map \(\alpha\colon G\times X\rightarrow X\) such that
\begin{enumerate}
\item[(i)] Identity: \(\alpha(\Id,x)=x\) for any \(x\in X\)
\item[(ii)] Compatibility: \(\alpha(f, \alpha(g,x)) = \alpha(f\cdot g,x)\) for any \(f,g\in G\), \(x\in X\).
\end{enumerate}
\end{defn}
It is common to write \(\Auto x:=\alpha(\Auto,x)\) for \(\Auto\in G\), \(x\in X\), if it is clear which group action is referred to.

We have already proven these two properties for isomorphisms and both their actions on joint actions and observations, as well as the pushforward of policies, in Lemma~\ref{lemma-action-function-composition-compatible} and Lemma~\ref{lemma-pull-back-function-composition-compatible}, respectively. Hence, it follows that also \(\Aut(\DecP)\) acts on these sets in the sense of group actions.

\begin{cor}
Let \(\DecP\) be a Dec-POMDP.
The actions of \(\Aut(\DecP)\) on \(\ActionSet\) and \(\ObservationSet\), defined as \(\alpha_A\colon (\Auto,\Action)\mapsto\Auto_A(\Action)\) respectively \(\alpha_O\colon (\Auto,\Observation)\mapsto\Auto_O(\Observation)\) as in Equations~\ref{eq:13appendix} and \ref{eq:14appendix} are group actions. Similarly, the pushforward of policies by automorphisms \((\Auto,\Policy)\mapsto\Auto^*\Policy\) as defined in Definition~\ref{definition-pushforward-appendix} is an action of \(\Aut(\DecP)\) on \(\PolicySet^\DecP\).
\end{cor}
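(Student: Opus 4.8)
The plan is to simply invoke the two compatibility lemmas already established, specialised to the case of automorphisms. Recall that checking that a map $\alpha\colon G\times X\to X$ is a group action amounts to verifying exactly two things: the identity axiom $\alpha(\Id,x)=x$ for all $x\in X$, and the compatibility axiom $\alpha(f,\alpha(g,x))=\alpha(f\cdot g,x)$ for all $f,g\in G$ and $x\in X$. Here $G=\Aut(\DecP)=\Iso(\DecP,\DecP)$ with the group operation $\circ$, which we have just shown to be a group.

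First I would dispatch well-definedness. Since any $\Auto\in\Aut(\DecP)$ is an isomorphism from $\DecP$ to itself, the map $\Auto_A$ sends $\ActionSet$ into $\ActionSet$ and $\Auto_O$ sends $\ObservationSet$ into $\ObservationSet$ (this was already noted after Equation~(\ref{eq:14appendix}), where the codomains $\ActionSet^\DecPSecond_{\Isom i}$, $\ObservationSet^\DecPSecond_{\Isom i}$ collapse to $\ActionSet$, $\ObservationSet$ when $\DecPSecond=\DecP$ and $\Isom$ permutes the agents). Likewise $\Auto^*\Policy\in\PolicySet^\DecP$ whenever $\Policy\in\PolicySet^\DecP$, as remarked right after Definition~\ref{definition-pushforward-appendix}. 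Hence $\alpha_A$, $\alpha_O$, and the pushforward are genuinely maps of the required type $\Aut(\DecP)\times X\to X$.

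Then I would verify the two axioms in each case by direct citation. For $\alpha_A$ and $\alpha_O$: the identity axiom is Lemma~\ref{lemma-action-function-composition-compatible}(i) with $\Auto=\Id$, and the compatibility axiom is Lemma~\ref{lemma-action-function-composition-compatible}(ii) applied with $\Isom,\IsomSecond\in\Aut(\DecP)$ (admissible since $\Aut(\DecP)=\Iso(\DecP,\DecP)$), which gives $\IsomSecond(\Isom\Action)=(\IsomSecond\circ\Isom)\Action$ and the analogous statement for observations. For the pushforward: the identity axiom is Lemma~\ref{lemma-pull-back-function-composition-compatible}(i), and the compatibility axiom is Lemma~\ref{lemma-pull-back-function-composition-compatible}(ii), $\IsomSecond^*(\Isom^*\Policy)=(\IsomSecond\circ\Isom)^*\Policy$, again specialised to $\Isom,\IsomSecond\in\Aut(\DecP)$. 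This exhausts the definition of a group action in all three cases.

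There is essentially no obstacle here: the corollary is a routine specialisation of facts already proven in full generality for isomorphisms. The only point worth a moment's care is the bookkeeping in the well-definedness step, namely that the relevant codomains really are $\ActionSet$, $\ObservationSet$, and $\PolicySet^\DecP$ themselves, rather than the a priori larger or differently-indexed sets that appear when one works with a general isomorphism $\DecP\to\DecPSecond$ with $\DecPSecond\neq\DecP$.
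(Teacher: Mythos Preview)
Your proposal is correct and follows exactly the paper's approach: the paper's proof consists of the single sentence that the result follows directly from Lemma~\ref{lemma-action-function-composition-compatible} and Lemma~\ref{lemma-pull-back-function-composition-compatible}. Your version is simply a more spelled-out rendition of the same argument, with the well-definedness bookkeeping made explicit.
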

\begin{proof}
This follows directly from Lemma~\ref{lemma-action-function-composition-compatible} and Lemma~\ref{lemma-pull-back-function-composition-compatible}.
\end{proof}

Of course, automorphisms also act on states and agents, and one can also easily see that they act on histories and action-observation histories.

Some further results immediately follow from this, such as the fact that \(\PlayerSet\) decomposes into equivalence classes of \emph{orbits} under \(\Aut(\DecP)\). 
The orbit of agent \(i\) is defined as the set of all agents \(j\) that can be obtained from \(i\) by applying automorphisms.

\begin{defn}[Orbit] Let \(\DecP\) be a Dec-POMDP and assume that \(\Aut(\DecP)\) acts on the set \(X\). Then for \(x\in X\), the set
\[\Aut(\DecP)x:=\{\Auto x\mid\Auto\in\Aut(\DecP)\}\]
is called the orbit of \(x\) under \(\Aut(\DecP)\). 
\end{defn}

For instance, for \(i\in\PlayerSet\), the set
\(\Aut(\DecP)i:=\{\Auto i\mid\Auto\in\Aut(\DecP)\}\)
is the orbit of agent \(i\) under \(\Aut(\DecP)\). It is a standard result from group theory that orbits form a partition \(\{\Aut(\DecP)i\mid i\in\PlayerSet\}\subseteq\PowerSet(\PlayerSet)\) of the set. This follows from the fact that since group actions have an identity and are invertible, belonging to the same orbit is an equivalence relation. For example, in the case of the triangle in \(\mathbb{R}^2\), all the vertices in \(V\) can be reached from any other vertex by rotations, so they all belong to the same orbit. In general, though, the orbits may form any other partition of the set. 

\subsection{Dec-POMDP labelings and relabeled Dec-POMDPs}
\label{appendix-dec-pomdp-labelings-and-relabeled-dec-pomdps}

In the following, assume that a Dec-POMDP \(\DecP\) is given. Here, we want to define a set of isomorphic Dec-POMDPs \(\DecPSet\) as described in Section \ref{section-the-zero-shot-coordination-game}, in which the sets of states, actions, etc.\ are of the form \(\{1,2,\dotsc,k-1,k\}\subseteq\mathbb{N},k\in\mathbb{N}\). This set can then be used to define the LFC game for \(\DecP\) in a way that does not depend on labels.

We begin by defining a \emph{labeling} of \(\DecP\). A labeling \(\Isom\) is a special Dec-POMDP isomorphism from \(\DecP\) to another, relabeled Dec-POMDP, that can be constructed using \(\Isom\).

\begin{defn}[Dec-POMDP labeling]
A Dec-POMDP labeling is a tuple of bijective maps
\[\Isom:=(\Isom_{N},\Isom_{S},(\Isom_{A_i})_{i\in\PlayerSet},(\Isom_{O_i})_{i\in\PlayerSet}),\] where 
\begin{alignat}{2}
&\Isom_{N}\colon &&\PlayerSet\rightarrow\{1,\dotsc,|\PlayerSet|\}\\
& \Isom_{S}\colon &&\StateSet\rightarrow\{1,\dotsc,|\StateSet|\}\\
\forall i\in\PlayerSet\colon\quad&\Isom_{A_i}\colon &&\ActionSet_{i}\rightarrow\{1,\dotsc,|\ActionSet_i|\}\\
\forall i \in\PlayerSet\colon\quad&\Isom_{O_i}\colon
&&\ObservationSet_{i}\rightarrow\{1,\dotsc,|\ObservationSet_i|\}.
\end{alignat}
We denote \(\Sym(\DecP)\) for the set of labelings of \(\DecP\).
\end{defn}

Note that if \(X\) is some set, then \(\Sym(X)\) usually denotes the \emph{symmetric group} of \(X\). The symmetric group is the set of permutations of \(X\), together with the operation of function composition. We use the same notation, as \(\Sym(\DecP)\) can be understood of as containing all the permutations of the different sets that \(\DecP\) consists of, with the caveat that we first map those sets to subsets of the first \(k\) natural numbers. This is done for simplification, especially regarding the treatment of the individual action and observation sets of different agents.

Next, we introduce the pushforward Dec-POMDP \(\Isom^*\DecP\) for a labeling \(\Isom\in\Sym(\DecP)\). This is a Dec-POMDP that is isomorphic to \(\DecP\), with isomorphism \(\Isom\). In the following, we let \(\Isom\in\Sym(\DecP)\) act on joint actions, observations, etc., in the same way as before for isomorphisms. For instance, for \(\Action\in\ActionSet\), it is \(\Isom\Action:=(\Isom_{A_{\Isom_N^{-1}(i)}}(\Action_{\Isom_N^{-1}(i)}))_{i\in \{1,\dotsc,|\PlayerSet|\}}\). The compatibility of these actions with function composition and function inversion trivially still hold.

\begin{defn}[Relabeled Dec-POMDP]
Let $f\in\Sym(D)$. Let \(N\in\mathbb{N}\) such that \(\{1,\dotsc,N\}=\PlayerSet\). The pushforward of \(\DecP\) by \(\Isom\), called a relabeled Dec-POMDP, is the Dec-POMDP
\[
f^*D:=(\hat{\mathcal{N}},\hat{\StateSet},\hat{\ActionSet},\hat{P},\hat{\RewardFunction},\hat{\ObservationSet},\hat{O},\hat{b}_{0},\hat{\Tmax}),
\]
where
\begin{itemize}
\item \(\hat{\PlayerSet}:=\{1,\dotsc,N\}=\PlayerSet\).
\item \(\hat{\StateSet}:=\{1,\dotsc,|\StateSet|\}\).
\item $\hat{\ActionSet}_{i}:=\{1,\dotsc,|\ActionSet_{\Isom^{-1}i}|\}$ for \(i\in \hat{\PlayerSet}\).
\item $\hat{P}(s'\mid s,a):=P(\Isom^{-1}s'\mid \Isom^{-1}s,\Isom^{-1}a)$ for \(s',s\in\hat{\StateSet},\Action\in\hat{\ActionSet}\).
\item $\hat{\RewardFunction}(s,a):=\RewardFunction(f^{-1}s,f^{-1}a)$ for  \(\State\in\hat{\StateSet},\Action\in\hat{\ActionSet}\).
\item $\hat{\ObservationSet}_{i}:=\{1,\dotsc,|\ObservationSet_{\Isom^{-1}i}|\}$ for \(i\in \hat{\PlayerSet}\).
\item $\hat{O}(o\mid s,a):=O(\Isom^{-1}o\mid \Isom^{-1}s,\Isom^{-1}a)$ for \(\Observation\in\hat{\ObservationSet},\State\in\hat{\StateSet},\Action\in\hat{\ActionSet}\).
\item $\hat{b}_{0}(s):=b_{0}(f^{-1}s)$ for \(\State\in\hat{\StateSet}\).
\item \(\hat{\Tmax}:=\Tmax\).
\end{itemize}
\end{defn}

First, we have to check that this is well-defined, e.g., that \(\Isom^{-1}\Action\in\ActionSet\) for any \(\Action\in\hat{\ActionSet}\). For states, it is clear from the definition of \(\Sym(\DecP)\) that \(\Isom^{-1}(\hat{\StateSet})=\Isom^{-1}(\{1,\dotsc,|\StateSet|\})=\StateSet\). Moreover, the same applies to agents, i.e., \(\Isom^{-1}i\in \PlayerSet\) for any \(i\in\{1,\dotsc,|\PlayerSet|\}\). This leaves joint actions and observations.

\begin{prop}
For $\hat{\ActionSet},\hat{\ObservationSet}$ as defined above, it is $\Isom^{-1}(\hat{\ActionSet})=\ActionSet$
and $\Isom^{-1}(\hat{\ObservationSet})=\ObservationSet$.
\end{prop}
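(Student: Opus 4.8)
The plan is to treat this proposition as the well-definedness check that makes the formulas $\hat{P}(s'\mid s,a):=P(\Isom^{-1}s'\mid\Isom^{-1}s,\Isom^{-1}a)$, $\hat{\RewardFunction}(s,a):=\RewardFunction(\Isom^{-1}s,\Isom^{-1}a)$, etc., meaningful, and to reduce it to two facts already available: the component maps of a labeling are bijections, and the action of an isomorphism on joint actions and joint observations is invertible via $\Isom^{-1}$ (Lemma~\ref{lemma-action-function-composition-compatible}, whose compatibility statements, as noted just before the proposition, carry over verbatim to labelings). First I would record that, by the definition of $\Sym(\DecP)$, each $\Isom_{A_i}\colon\ActionSet_i\rightarrow\{1,\dotsc,|\ActionSet_i|\}$ is a bijection; substituting $i:=\Isom_N^{-1}(j)$ for $j\in\hat{\PlayerSet}$ shows that $\Isom_{A_{\Isom_N^{-1}(j)}}$ is a bijection from $\ActionSet_{\Isom_N^{-1}(j)}$ onto $\{1,\dotsc,|\ActionSet_{\Isom_N^{-1}(j)}|\}$, which is precisely the $j$-th factor $\hat{\ActionSet}_j$ of $\hat{\ActionSet}=\prod_{j\in\hat{\PlayerSet}}\hat{\ActionSet}_j$.

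Next I would observe that the map $\Action\mapsto\Isom\Action$, namely
\[\Isom_A(\Action)=\bigl(\Isom_{A_{\Isom_N^{-1}(j)}}(\Action_{\Isom_N^{-1}(j)})\bigr)_{j\in\{1,\dotsc,|\PlayerSet|\}},\]
is a bijection from $\ActionSet=\prod_{i\in\PlayerSet}\ActionSet_i$ onto $\hat{\ActionSet}=\prod_{j\in\hat{\PlayerSet}}\hat{\ActionSet}_j$: it is the composition of the coordinate permutation of $\PlayerSet=\hat{\PlayerSet}$ induced by the bijection $\Isom_N$ with the product of the componentwise bijections $\Isom_{A_{\Isom_N^{-1}(j)}}\colon\ActionSet_{\Isom_N^{-1}(j)}\rightarrow\hat{\ActionSet}_j$ identified in the previous step. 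Hence $\Isom_A(\ActionSet)=\hat{\ActionSet}$. Since, by Lemma~\ref{lemma-action-function-composition-compatible}, the action of $\Isom^{-1}$ on joint actions is the two-sided inverse of the map $\Action\mapsto\Isom\Action$, it follows that $\Isom^{-1}(\hat{\ActionSet})=\Isom_A^{-1}(\hat{\ActionSet})=\ActionSet$. The argument for observations is word-for-word the same, with $\Isom_{O_i}$, $\ObservationSet_i$, and $\hat{\ObservationSet}_j$ replacing $\Isom_{A_i}$, $\ActionSet_i$, and $\hat{\ActionSet}_j$.

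I do not expect any genuine obstacle: there is no real mathematical content beyond unwinding definitions. The only thing to be careful about is the index bookkeeping, in particular checking that the codomain $\{1,\dotsc,|\ActionSet_i|\}$ of $\Isom_{A_i}$ lines up, under the reindexing $j=\Isom_N(i)$, with the factor $\hat{\ActionSet}_j=\{1,\dotsc,|\ActionSet_{\Isom_N^{-1}(j)}|\}$ of $\hat{\ActionSet}$, and that the coordinate permutation built from $\Isom_N$ indeed carries $\prod_{i\in\PlayerSet}\ActionSet_i$ onto $\prod_{j\in\hat{\PlayerSet}}\hat{\ActionSet}_j$; once these identifications are spelled out, the claim is immediate, and the same remarks apply to $\ObservationSet$.
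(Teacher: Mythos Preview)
Your proposal is correct and follows essentially the same approach as the paper: both unwind the definition of $\Isom_A$ and use the bijectivity of the component maps $\Isom_{A_i}$ together with the reindexing by $\Isom_N$. The only cosmetic difference is that the paper works elementwise (take $\hat a\in\hat{\ActionSet}$, pick preimages componentwise, compute $\Isom^{-1}\hat a$ directly), whereas you package the same computation as ``$\Isom_A$ is a bijection $\ActionSet\to\hat{\ActionSet}$, and by Lemma~\ref{lemma-action-function-composition-compatible} its inverse is $\Isom^{-1}$''; your framing has the small advantage of making the full equality (not just the inclusion needed for well-definedness) explicit.
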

\begin{proof}
Let $\hat{a}\in\hat{\ActionSet}$. Then for any \(i\in \hat{\PlayerSet}\), we can define \(j\in\PlayerSet\) and \(\Action_{j}\in\ActionSet_{j}\) such that \(\Isom j = i\) and \(\Isom \Action_j = \hat{\Action}_i\). 
Then
\[\Isom^{-1}\hat{a}=
(\Isom^{-1} \hat{\Action}_{\Isom j}))_{j\in \PlayerSet}
=
( \Isom^{-1}\Isom\Action_{j}))_{j\in\PlayerSet}
=(\Action_j)_{j\in\PlayerSet}=\Action\in\ActionSet
.\]
The same argument works for $\hat{o}\in\hat{\ObservationSet}$.
\end{proof}

Importantly, it can be \(\Isom^*\DecP\neq\DecP\) for a labeling \(\Isom\in\Sym(\DecP)\). 
Nevertheless, it is easy to see from the definitions that \(\Isom^*\DecP\) is actually isomorphic to \(\DecP\), with isomorphism \(\Isom\). This also implies that it is \(\Isom^*\DecP=\DecP\) if and only if \(\Isom\) is an automorphism.

\begin{lem}\label{prop-pushforward-creates-isomorphic-dec-pomdp}
For any \(\Isom\in \Sym(\DecP)\), it is \(\Isom\in\Iso(\DecP,\Isom^*\DecP)\).
\end{lem}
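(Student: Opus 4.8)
The plan is to verify directly that \(\Isom\), regarded as the tuple of bijective maps it already is, has the correct type to be an isomorphism from \(\DecP\) to \(\Isom^*\DecP\), and then to check the four defining equations of Definition~\ref{dec-pomdp-isomorphism-appendix} by simply unwinding the definition of the relabeled Dec-POMDP \(\Isom^*\DecP\).

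First I would check the types required by Equations~(\ref{eq:7appendix})--(\ref{eq:8appendix}). By the definition of a labeling, \(\Isom_N\colon\PlayerSet\to\{1,\dotsc,|\PlayerSet|\}=\PlayerSet^{\Isom^*\DecP}\) and \(\Isom_S\colon\StateSet\to\{1,\dotsc,|\StateSet|\}=\StateSet^{\Isom^*\DecP}\) are bijections, and the horizons agree since \(\Tmax^{\Isom^*\DecP}:=\Tmax^\DecP\). For the action components, \(\Isom_{A_i}\colon\ActionSet_i^\DecP\to\{1,\dotsc,|\ActionSet_i|\}\), and since \(\ActionSet^{\Isom^*\DecP}_{\Isom_N(i)}=\{1,\dotsc,|\ActionSet_{\Isom_N^{-1}\Isom_N(i)}|\}=\{1,\dotsc,|\ActionSet_i|\}\), we get \(\Isom_{A_i}\colon\ActionSet_i^\DecP\to\ActionSet^{\Isom^*\DecP}_{\Isom_N(i)}\), as needed; the observation components are identical. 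The induced maps \(\Isom_A\colon\ActionSet^\DecP\to\ActionSet^{\Isom^*\DecP}\) and \(\Isom_O\colon\ObservationSet^\DecP\to\ObservationSet^{\Isom^*\DecP}\) are then well-defined bijections — this is exactly the content of the Proposition immediately above, which gives \(\Isom^{-1}(\hat{\ActionSet})=\ActionSet\) and \(\Isom^{-1}(\hat{\ObservationSet})=\ObservationSet\) — and, as noted in the text, the analogue of Lemma~\ref{lemma-action-function-composition-compatible} for labelings yields \(\Isom^{-1}\Isom\Action=\Action\) and \(\Isom^{-1}\Isom\Observation=\Observation\).

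Next I would substitute into the four equations. Fix \(\State,\State'\in\StateSet^\DecP\), \(\Action\in\ActionSet^\DecP\), \(\Observation\in\ObservationSet^\DecP\). Applying the definition \(\hat P(s'\mid s,a)=P^\DecP(\Isom^{-1}s'\mid\Isom^{-1}s,\Isom^{-1}a)\) at the arguments \(\Isom\State',\Isom\State,\Isom\Action\) and cancelling \(\Isom^{-1}\Isom\),
\[
P^{\Isom^*\DecP}(\Isom\State'\mid\Isom\State,\Isom\Action)=P^\DecP(\Isom^{-1}\Isom\State'\mid\Isom^{-1}\Isom\State,\Isom^{-1}\Isom\Action)=P^\DecP(\State'\mid\State,\Action),
\]
which is Equation~(\ref{eq:2appendix}). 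The same computation, using the definitions of \(\hat O\), \(\hat\RewardFunction\), and \(\hat b_0\), gives Equations~(\ref{eq:3appendix}) and (\ref{eq:4appendix}) and \(b_0^\DecP(\State)=b_0^{\Isom^*\DecP}(\Isom\State)\); for instance \(\RewardFunction^{\Isom^*\DecP}(\Isom\State,\Isom\Action)=\RewardFunction^\DecP(\Isom^{-1}\Isom\State,\Isom^{-1}\Isom\Action)=\RewardFunction^\DecP(\State,\Action)\). Hence \(\Isom\in\Iso(\DecP,\Isom^*\DecP)\).

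This argument is pure bookkeeping and there is no genuine obstacle; the only points requiring a little care are the type-checking for the per-agent action and observation maps (confirming \(\Isom_{A_i}\) lands in \(\ActionSet^{\Isom^*\DecP}_{\Isom_N(i)}\)) and the fact that \(\Isom\) restricts to a bijection on joint actions and observations, both of which are already secured by the construction of \(\hat{\ActionSet}_i,\hat{\ObservationSet}_i\) and the Proposition above.
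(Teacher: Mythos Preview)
Your proof is correct and follows essentially the same approach as the paper: unwind the definition of \(\Isom^*\DecP\) and use the cancellation \(\Isom^{-1}\Isom=\Id\) (via Lemma~\ref{lemma-action-function-composition-compatible}) to verify the isomorphism equations, with the transition-probability case spelled out as the representative example. You simply include more explicit type-checking than the paper, which handles this by a one-line reference to the definitions.
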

\begin{proof}
This follows directly from the definition of an isomorphism, together with Lemma~\ref{lemma-action-function-composition-compatible}. For instance, considering transition probabilities, it is
\[P(s'\mid s,a)=P(\Isom^{-1}\Isom s'\mid \Isom^{-1}\Isom s, \Isom^{-1}\Isom a)=\hat{P}(\Isom s'\mid \Isom s \mid \Isom a)
\]
for any \(s',s\in\StateSet,\Action\in\ActionSet\). Similar calculations apply to all the other relevant functions.
\end{proof}

It follows as a corollary that \(\Isom^*\Policy\) is a policy for the Dec-POMDP \(\Isom^*\DecP\), where \(\Policy\in\PolicySet^\DecP,\Isom\in\Sym(\DecP)\). Note also that the results in Lemma~\ref{lemma-pull-back-function-composition-compatible} still apply to the pushforward by labelings.

\begin{cor}
Let \(\Isom\in\Sym(\DecP)\) and \(\Policy\in\PolicySet^\DecP\). Then \(\Isom^*\Policy\in\PolicySet^{\Isom^*\DecP}\).
\end{cor}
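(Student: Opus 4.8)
The plan is to reduce this to the general fact, already noted after Definition~\ref{definition-pushforward-appendix}, that the pushforward of a policy along an isomorphism is a policy for the target Dec-POMDP. By Lemma~\ref{prop-pushforward-creates-isomorphic-dec-pomdp} we have \(\Isom\in\Iso(\DecP,\Isom^*\DecP)\), so \(\DecP\) and \(\Isom^*\DecP\) are isomorphic via \(\Isom\), and the defining formula of \(\Isom^*\Policy\) in Definition~\ref{definition-pushforward-appendix} makes sense with \(\DecPSecond:=\Isom^*\DecP\). It then only remains to check that the object so defined genuinely lies in \(\PolicySet^{\Isom^*\DecP}\), i.e., that for every agent \(j\) of \(\Isom^*\DecP\) and every local action-observation history of \(j\) in \(\Isom^*\DecP\), the prescribed assignment is a probability mass function on \(\ActionSet_j^{\Isom^*\DecP}\).

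First I would unwind the types. For \(j\in\PlayerSet^{\Isom^*\DecP}=\PlayerSet^\DecP\), the index \(i:=\Isom^{-1}j\) is a well-defined agent of \(\DecP\); by the definition of \(\Sym(\DecP)\) and of the relabeled Dec-POMDP, \(\Isom_{A_i}^{-1}\) maps \(\ActionSet_j^{\Isom^*\DecP}=\{1,\dots,|\ActionSet_i^\DecP|\}\) bijectively onto \(\ActionSet_i^\DecP\), and likewise (using Lemma~\ref{lemma-action-function-composition-compatible}) \(\Isom^{-1}\) maps \(\AOHistorySet_{j,t}^{\Isom^*\DecP}\) bijectively onto \(\AOHistorySet_{i,t}^\DecP\). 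Hence \(\Action_j\mapsto\Policy_i(\Isom^{-1}\Action_j\mid \Isom^{-1}\AOHistory_{j,t})\) is a well-defined nonnegative function on \(\ActionSet_j^{\Isom^*\DecP}\).

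Next I would check normalization: fixing \(j\) and \(\AOHistory_{j,t}\in\AOHistorySet_{j,t}^{\Isom^*\DecP}\), summing \((\Isom^*\Policy)_j(\Action_j\mid\AOHistory_{j,t})\) over \(\Action_j\in\ActionSet_j^{\Isom^*\DecP}\) equals, after the change of variables \(\Action_i:=\Isom^{-1}\Action_j\) (a bijection of \(\ActionSet_j^{\Isom^*\DecP}\) onto \(\ActionSet_i^\DecP\) by the previous step), the sum of \(\Policy_i(\Action_i\mid \Isom^{-1}\AOHistory_{j,t})\) over all \(\Action_i\in\ActionSet_i^\DecP\), which is \(1\) since \(\Policy_i\) is a local policy of \(\DecP\). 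This yields \(\Isom^*\Policy\in\PolicySet^{\Isom^*\DecP}\).

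There is essentially no real obstacle here; the only thing to be careful about is the bookkeeping of which action and observation sets each object inhabits, and this is exactly what the definition of the relabeled Dec-POMDP, together with Lemma~\ref{lemma-action-function-composition-compatible}, is set up to handle. One could alternatively invoke the fact (already remarked) that Lemma~\ref{lemma-pull-back-function-composition-compatible} continues to hold for pushforwards by labelings, but that machinery is not needed for this particular statement.
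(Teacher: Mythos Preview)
Your proposal is correct and takes essentially the same approach as the paper: invoke Lemma~\ref{prop-pushforward-creates-isomorphic-dec-pomdp} to get \(\Isom\in\Iso(\DecP,\Isom^*\DecP)\), then appeal to the definition of the pushforward policy. The paper's proof is just that one line, relying on the remark after Definition~\ref{definition-pushforward-appendix} that \(\Isom^*\Policy\) is a policy for the target Dec-POMDP; your version makes that remark explicit by checking types and normalization, which is fine but not strictly needed.
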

\begin{proof}
Follows from the definition of \(\Isom^*\Policy\) and Lemma~\ref{prop-pushforward-creates-isomorphic-dec-pomdp}.
\end{proof}

Lastly, we provide some further useful results about labelings. First, the set \(\Sym(\DecP)\) already contains all the isomorphisms in \(\Iso(\DecP,\Isom^*\DecP)\).
We will need this result later to relate results about isomorphisms and automorphisms to the relabeled Dec-POMDPs used in an LFC game.

\begin{lem}\label{lem-sym-is-superset-of-iso-pushforward}
Let \(\Isom\in\Sym(\DecP)\). Then
\[\Iso(\DecP,\Isom^*\DecP)=\{\IsomSecond\in\Sym(\DecP)\mid \IsomSecond^*\DecP=\Isom^*\DecP\}\]
\end{lem}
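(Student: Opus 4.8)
The plan is to prove the two inclusions of the set equality separately, splitting the nontrivial direction into a ``type check'' (that an isomorphism onto $\Isom^*\DecP$ is automatically a labeling of $\DecP$) and a ``uniqueness'' step (that its pushforward Dec-POMDP must coincide with $\Isom^*\DecP$). The inclusion $\{\IsomSecond\in\Sym(\DecP)\mid \IsomSecond^*\DecP=\Isom^*\DecP\}\subseteq\Iso(\DecP,\Isom^*\DecP)$ is immediate from Lemma~\ref{prop-pushforward-creates-isomorphic-dec-pomdp}: if $\IsomSecond\in\Sym(\DecP)$ and $\IsomSecond^*\DecP=\Isom^*\DecP$, then $\IsomSecond\in\Iso(\DecP,\IsomSecond^*\DecP)=\Iso(\DecP,\Isom^*\DecP)$.

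For the reverse inclusion, fix an arbitrary $\IsomSecond=(\IsomSecond_N,\IsomSecond_S,(\IsomSecond_{A_i})_{i},(\IsomSecond_{O_i})_{i})\in\Iso(\DecP,\Isom^*\DecP)$. First I would verify $\IsomSecond\in\Sym(\DecP)$ by unwinding the definition of an isomorphism against the explicit form of $\Isom^*\DecP$: $\IsomSecond_N$ is a bijection $\PlayerSet\to\PlayerSet^{\Isom^*\DecP}=\{1,\dots,|\PlayerSet|\}$ (as $\PlayerSet=\{1,\dots,N\}$), $\IsomSecond_S$ a bijection $\StateSet\to\{1,\dots,|\StateSet|\}$, and each $\IsomSecond_{A_i}$ a bijection $\ActionSet_i\to\ActionSet^{\Isom^*\DecP}_{\IsomSecond_N(i)}=\{1,\dots,|\ActionSet_{\Isom_N^{-1}\IsomSecond_N(i)}|\}$; since a bijection preserves cardinality, $|\ActionSet_i|=|\ActionSet_{\Isom_N^{-1}\IsomSecond_N(i)}|$, so this codomain is just $\{1,\dots,|\ActionSet_i|\}$, and the same for the $\IsomSecond_{O_i}$ and observations. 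Hence $\IsomSecond$ has precisely the shape of a labeling. For $\IsomSecond^*\DecP=\Isom^*\DecP$, note first that the two relabeled problems share the agent set $\PlayerSet$, state set $\{1,\dots,|\StateSet|\}$, and horizon $\Tmax$; substituting $j=\IsomSecond_N(i)$ in the cardinality identity above gives $|\ActionSet_{\IsomSecond_N^{-1} j}|=|\ActionSet_{\Isom_N^{-1} j}|$ for all $j$, so $\ActionSet^{\Isom^*\DecP}_j=\ActionSet^{\IsomSecond^*\DecP}_j$, and likewise for observation sets. Finally, for the remaining data: from $\IsomSecond\in\Iso(\DecP,\Isom^*\DecP)$ we have $P^\DecP(s'\mid s,a)=P^{\Isom^*\DecP}(\IsomSecond s'\mid\IsomSecond s,\IsomSecond a)$; replacing $s',s,a$ by $\IsomSecond^{-1}s',\IsomSecond^{-1}s,\IsomSecond^{-1}a$ and cancelling with Lemma~\ref{lemma-action-function-composition-compatible} yields $P^{\Isom^*\DecP}(s'\mid s,a)=P^\DecP(\IsomSecond^{-1}s'\mid\IsomSecond^{-1}s,\IsomSecond^{-1}a)$, which is exactly the transition kernel of $\IsomSecond^*\DecP$. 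Applying the same substitution to the isomorphism identities for $O$, $\RewardFunction$, and $b_0$ shows all remaining components agree, so $\Isom^*\DecP=\IsomSecond^*\DecP$.

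I do not expect a genuine obstacle here: the whole argument is an unwinding of the definitions of isomorphism and of relabeled Dec-POMDP, together with the composition/inversion compatibility of Lemma~\ref{lemma-action-function-composition-compatible}. The only point requiring a little care is matching the individual action and observation sets of $\Isom^*\DecP$ and $\IsomSecond^*\DecP$, which is not forced by the isomorphism identities alone and instead uses the bijectivity of the $\IsomSecond_{A_i},\IsomSecond_{O_i}$ together with the fact that all sets in a relabeled Dec-POMDP have the canonical form $\{1,\dots,k\}$.
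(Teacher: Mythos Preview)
Your proof is correct and follows the same overall structure as the paper: both inclusions are handled separately, the inclusion $\supseteq$ via Lemma~\ref{prop-pushforward-creates-isomorphic-dec-pomdp}, and the inclusion $\subseteq$ by first verifying that any $\IsomSecond\in\Iso(\DecP,\Isom^*\DecP)$ has the correct codomains to be a labeling (your ``type check'' matches the paper's argument almost verbatim), then showing $\IsomSecond^*\DecP=\Isom^*\DecP$.

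The only difference is in how this last equality is established. You unwind the isomorphism identities component by component, substituting $\IsomSecond^{-1}$ to identify each piece of data of $\Isom^*\DecP$ with the corresponding piece of $\IsomSecond^*\DecP$. The paper instead observes that since $\IsomSecond\in\Iso(\DecP,\Isom^*\DecP)$ and (by Lemma~\ref{prop-pushforward-creates-isomorphic-dec-pomdp}) $\IsomSecond\in\Iso(\DecP,\IsomSecond^*\DecP)$, composing gives $\Id=\IsomSecond\circ\IsomSecond^{-1}\in\Iso(\Isom^*\DecP,\IsomSecond^*\DecP)$, and the identity being an isomorphism immediately forces the two Dec-POMDPs to coincide. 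Your route is more explicit and self-contained; the paper's is a slightly slicker packaging that avoids re-deriving the kernel identities by hand. Both are equally valid.
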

\begin{proof}
``\(\supseteq\)'': for any \(\IsomSecond\in\Sym(\DecP)\) such that \(\IsomSecond^*\DecP=\Isom^*\DecP\), it is also \(\Iso(\DecP,\IsomSecond^*\DecP)=\Iso(\DecP,\Isom^*\DecP)\), and thus it follows from Lemma~\ref{prop-pushforward-creates-isomorphic-dec-pomdp} that \(\IsomSecond\in \Iso(\DecP,\IsomSecond^*\DecP)=\Iso(\DecP,\Isom^*\DecP)\).

``\(\subseteq\)'': Let \(\IsomHat\in \Iso(\DecP,\Isom^*\DecP)\). Note that the set of agents, states, and the individual action and observation sets in \(\Isom^*\DecP\) are all of the form \(\{1,\dotsc,k\}\) where \(k\in\mathbb{N}\) depends on the respective set. Now consider, for instance, the map \(\IsomHat_{A_i}\) for \(i\in\PlayerSet\). Then by the definition of an isomorphism, \(\IsomHat_{A_i}\) is a bijective map, and its domain and codomain are \(\ActionSet_i\) and \(\{1,\dotsc,k\}\) for some \(k\in\mathbb{N}\). Moreover, since \(\IsomHat_{A_i}\) is bijective, it must be \(k=|\ActionSet_i|\). \emph{Mutatis mutandis}, the same applies to all of the other maps that are part of the tuple \(\IsomHat\). Hence, \(\IsomHat\) satisfies the definition of a Dec-POMDP labeling, so \(\IsomHat\in\Sym(\DecP)\).

Next, it follows that \(\IsomHat\in\Iso(\DecP,\IsomHat^*\DecP)\) by Lemma~\ref{prop-pushforward-creates-isomorphic-dec-pomdp}, and thus \(\Id=\IsomHat\circ\IsomHat^{-1}\in \Iso(\Isom^*\DecP,\IsomHat^*\DecP)\) by the assumption and Lemma~\ref{lemma-inverse-composition-isomorphism}. Hence, using the definition of an isomorphism, it follows that also \(\Isom^*\DecP=\IsomHat^*\DecP\). This shows that
\[\IsomHat\in\{\IsomSecond\in\Sym(\DecP)\mid \IsomSecond^*\DecP=\Isom^*\DecP\},\]
which concludes the proof.
\end{proof}

Second, we show that labelings and pushforward are compatible with composition with isomorphisms.

\begin{lem}
\label{d-sym-closed-under-labeling}
Let \(\DecP,\DecPSecond\) be isomorphic Dec-POMDPs with \(\Isom\in\Iso(\DecP,\DecPSecond)\). Then
\[\Sym(\DecP)=\Sym(\DecPSecond)\circ \Isom.\]
Moreover, it is \(\IsomSecond^*\DecPSecond=(\IsomSecond\circ\Isom)^*\DecP\) for any \(\IsomSecond\in\Sym(\DecPSecond)\).
\end{lem}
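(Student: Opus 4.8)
The plan is to prove the two assertions in order, the second using the first. The mathematical content is light; the work is careful index bookkeeping, and the available tools are Lemmas~\ref{lemma-action-function-composition-compatible}, \ref{lemma-inverse-composition-isomorphism}, \ref{prop-pushforward-creates-isomorphic-dec-pomdp}, and \ref{lem-sym-is-superset-of-iso-pushforward} together with the definition of a relabeled Dec-POMDP.

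First I would isolate the observation underlying the ``\(\subseteq\)'' direction of the proof of Lemma~\ref{lem-sym-is-superset-of-iso-pushforward}: if \(\DecPThird\) is a Dec-POMDP all of whose constituent sets (\(\PlayerSet^\DecPThird\), \(\StateSet^\DecPThird\), and each \(\ActionSet_i^\DecPThird\), \(\ObservationSet_i^\DecPThird\)) has the canonical form \(\{1,\dots,k\}\), then every \(\IsomSecond\in\Iso(\DecP',\DecPThird)\) is already a labeling of \(\DecP'\), since each component of \(\IsomSecond\) is a bijection onto such a set and hence its codomain is \(\{1,\dots,|{\cdot}|\}\). Granting this, ``\(\supseteq\)'' follows: for \(\IsomHat\in\Sym(\DecPSecond)\) we have \(\IsomHat\in\Iso(\DecPSecond,\IsomHat^*\DecPSecond)\) by Lemma~\ref{prop-pushforward-creates-isomorphic-dec-pomdp}, hence \(\IsomHat\circ\Isom\in\Iso(\DecP,\IsomHat^*\DecPSecond)\) by Lemma~\ref{lemma-inverse-composition-isomorphism}, and since \(\IsomHat^*\DecPSecond\) is in canonical form by construction, the observation gives \(\IsomHat\circ\Isom\in\Sym(\DecP)\). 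For ``\(\subseteq\)'', take \(\IsomSecond\in\Sym(\DecP)\) and set \(\IsomHat:=\IsomSecond\circ\Isom^{-1}\); from \(\Isom^{-1}\in\Iso(\DecPSecond,\DecP)\) and \(\IsomSecond\in\Iso(\DecP,\IsomSecond^*\DecP)\) (again Lemmas~\ref{lemma-inverse-composition-isomorphism} and \ref{prop-pushforward-creates-isomorphic-dec-pomdp}) we get \(\IsomHat\in\Iso(\DecPSecond,\IsomSecond^*\DecP)\), and as \(\IsomSecond^*\DecP\) is in canonical form the observation (now with \(\DecP':=\DecPSecond\)) yields \(\IsomHat\in\Sym(\DecPSecond)\); finally \(\IsomHat\circ\Isom=\IsomSecond\circ\Isom^{-1}\circ\Isom=\IsomSecond\) by associativity of composition and \(\Isom^{-1}\circ\Isom=\Id\), so \(\IsomSecond\in\Sym(\DecPSecond)\circ\Isom\).

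For the second claim, fix \(\IsomSecond\in\Sym(\DecPSecond)\); by the first part \(\IsomSecond\circ\Isom\in\Sym(\DecP)\), so \((\IsomSecond\circ\Isom)^*\DecP\) is defined, and I would compare it with \(\IsomSecond^*\DecPSecond\) componentwise using the definition of a relabeled Dec-POMDP. The constituent sets agree because \(\Isom\) induces bijections between corresponding sets of \(\DecP\) and \(\DecPSecond\): e.g.\ \(|\StateSet^\DecP|=|\StateSet^\DecPSecond|\), and, using \((\IsomSecond\circ\Isom)^{-1}=\Isom^{-1}\circ\IsomSecond^{-1}\) together with the isomorphism property \(\ActionSet^\DecP_j\cong\ActionSet^\DecPSecond_{\Isom_N(j)}\), one gets \(|\ActionSet^\DecP_{(\IsomSecond\circ\Isom)^{-1}i}|=|\ActionSet^\DecPSecond_{\IsomSecond^{-1}i}|\) for each \(i\), and likewise for observations. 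For the transition kernels: the kernel of \((\IsomSecond\circ\Isom)^*\DecP\) at \((s'\mid s,a)\) equals \(P^\DecP(\Isom^{-1}\IsomSecond^{-1}s'\mid \Isom^{-1}\IsomSecond^{-1}s,\Isom^{-1}\IsomSecond^{-1}a)\) by the definition of a relabeled Dec-POMDP and Lemma~\ref{lemma-action-function-composition-compatible}; applying the isomorphism identity \(P^\DecP(x'\mid x,y)=P^\DecPSecond(\Isom x'\mid \Isom x,\Isom y)\) with \(x':=\Isom^{-1}\IsomSecond^{-1}s'\) etc.\ (which lies in \(\StateSet^\DecP\), resp.\ \(\ActionSet^\DecP\), so the identity applies) and cancelling \(\Isom\circ\Isom^{-1}=\Id\), this equals \(P^\DecPSecond(\IsomSecond^{-1}s'\mid \IsomSecond^{-1}s,\IsomSecond^{-1}a)\), which is precisely the kernel of \(\IsomSecond^*\DecPSecond\). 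The identical computation handles \(O\), \(\RewardFunction\), and \(b_0\), and the horizons coincide since isomorphic Dec-POMDPs share a horizon; hence \(\IsomSecond^*\DecPSecond=(\IsomSecond\circ\Isom)^*\DecP\).

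The main obstacle is purely notational: making sure the agent-permutation components line up so that \((\IsomSecond\circ\Isom)^{-1}i\) and \(\IsomSecond^{-1}i\) index the same action/observation sets after the comparison, and that every cancellation \(\Isom\circ\Isom^{-1}=\Id\) and every composition is justified by Lemmas~\ref{lemma-action-function-composition-compatible} and \ref{lemma-inverse-composition-isomorphism}, with each argument verified to lie in the intended domain before an isomorphism identity is invoked. No step requires an idea beyond the lemmas already established.
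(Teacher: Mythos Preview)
Your proposal is correct and follows essentially the same approach as the paper: both inclusions for \(\Sym(\DecP)=\Sym(\DecPSecond)\circ\Isom\) are obtained by checking that \(\IsomSecond\circ\Isom^{-1}\) (resp.\ \(\IsomHat\circ\Isom\)) satisfies the definition of a labeling, and the second claim is verified componentwise from the definition of a relabeled Dec-POMDP together with the isomorphism identities. Your version is slightly more explicit in routing the first part through the observation extracted from Lemma~\ref{lem-sym-is-superset-of-iso-pushforward} and the isomorphism-composition lemmas, whereas the paper simply notes directly that the composed map has the correct domains and codomains; but the underlying argument is the same.
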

\begin{proof}
First, let \(\IsomSecond\in\Sym(\DecP)\) and define \(\IsomHat:=\IsomSecond\circ\Isom^{-1}\). Note that \(\IsomHat\) has as components bijective maps with a domain and codomain that satisfies the definition of a labeling of \(\DecPSecond\). Hence, \(\IsomHat\in\Sym(\DecPSecond)\). Next, let \(\IsomSecond\in\Sym(\DecPSecond)\). Then similarly, \(\IsomSecond\circ\Isom\) fulfills the requirements for a labeling in \(\Sym(\DecP)\).

To prove the second statement, let again \(\IsomSecond\in\Sym(\DecPSecond)\). Note that since \(\DecP\) and \(\DecPSecond\) are isomorphic, they must have the same set of players and sets of states with the same cardinalities. Now let \(i\in\PlayerSet^\DecP\). Using the definition of an isomorphism and of a labeling, it is then
\[\ActionSet^{\IsomSecond^*\DecPSecond}_i=\{1,\dotsc,|\ActionSet_{\IsomSecond^{-1}i}^\DecPSecond|\}
=\{1,\dotsc,|\ActionSet_{\Isom^{-1}(\IsomSecond^{-1}i)}^\DecP|\}=\{1,\dotsc,|\ActionSet_{(\IsomSecond\circ\Isom)^{-1}i}^\DecP|\}= \ActionSet^{(\IsomSecond\circ\Isom)^*\DecP}_i.\]
A similar argument applies to the sets \(\ObservationSet_i\) for \(i\in\PlayerSet\). Finally, let \(\State,\State'\in\StateSet^{\IsomSecond^*\DecPSecond},\Action\in\ActionSet^{\IsomSecond^*\DecPSecond}\). Using again the definition of an isomorphism and a labeling, it follows that
\begin{multline}P^{\IsomSecond^*\DecPSecond}(\State'\mid\State,\Action)
= P^\DecPSecond(\IsomSecond^{-1}\State'\mid\IsomSecond^{-1}\State,\IsomSecond^{-1}\Action)
\\
= P^\DecP(\Isom^{-1}\IsomSecond^{-1}\State'\mid\Isom^{-1}\IsomSecond^{-1}\State,\Isom^{-1}\IsomSecond^{-1}\Action)
=P^{(\IsomSecond\circ\Isom)^*\DecP}(\State'\mid\State,\Action).\end{multline}
Again, an analogous argument applies to the observation probability kernel and reward function, as well as the initial state distribution.
This concludes the proof.
\end{proof}

\subsection{The label-free coordination game and problem}
\label{appendix-the-zero-shot-coordination-game}

Here, we recall the definitions of an LFC game and of the LFC problem. To begin, we define a measure space of policies and recall the definition of a learning algorithm. For any Dec-POMDP \(\DecP\), let \(\Delta(\PolicySet^\DecP)\) be the set of measures on the space \((\PolicySet^\DecP,\mathcal{F}^\DecP)\) where \(\mathcal{F}^\DecP:=\otimes_{i\in\PlayerSet}\mathcal{F}^\DecP_i\) is a product \(\sigma\)-Algebra and \(\mathcal{F}_i^\DecP\subseteq\PowerSet(\PolicySet_i^\DecP)\) are \(\sigma\)-Algebras that make the random variables and sets discussed in this paper measurable. For instance, for \(i\in\PlayerSet\), this could be the Borel \(\sigma\)-Algebra with respect to the standard topology on \(\PolicySet_i^\DecP\) that comes from regarding \(\PolicySet_i^\DecP\) as a subset of \([0,1]^{\ActionSet^\DecP_i\times\AOHistorySet^\DecP_i}\). Although we do not investigate this here, all the relevant functions and sets should be measurable in that sense. 

\begin{defn}[Learning algorithm] Let \(\DecPSet\) be a finite set of Dec-POMDPs. A learning algorithm for \(\DecPSet\) is a map \[\LA\colon \DecPSet\rightarrow \bigcup_{D\in\mathcal{D}}\Delta(\PolicySet^\DecP)\]
such that \(\LA(D)\in\Delta(\PolicySet^D)\) for all \(D\in\mathcal{D}\). We write \(\LASet^\mathcal{D}\) for the set of learning algorithms for \(\mathcal{D}\).
\end{defn}

Note that this definition is general enough so as to include planning algorithms that construct a policy directly from the environment dynamics, instead of incrementally updating a policy from experience. Nevertheless, here, we imagine that \(\LA(\DecP)\) is a policy that was trained by an RL algorithm, using a simulator of \(\DecP\). Note also that a learning algorithm can learn different joint policies in different training runs, which we formalize as outputting a measure over joint policies.

Similarly to the case of policies, for a distribution \(\Distr\in\Delta(\PolicySet^\DecP)\) and an isomorphism \(\Isom\in\Iso(\DecP,\DecPSecond)\), we can define a pushforward distribution \(\Isom^*\Distr:=\Distr \circ (\Isom^*)^{-1}\in\Delta(\PolicySet^\DecPSecond)\), which is the image measure of \(\Distr\) under \(\Isom^*\). It is apparent that for two isomorphisms \(\Isom\in\Iso(\DecP,\DecPSecond),\IsomSecond\in\Iso(\DecPSecond,\DecPThird)\), it is \(\IsomSecond^*(\Isom^*\Distr)=(\IsomSecond\circ\Isom)^*\Distr\).

In the following, for some distributions \(\Distr^{(i)}\in\Delta(\PolicySet^\DecP)\) for \(i\in\PlayerSet\) and bounded measurable function \(\eta\colon \PolicySet^\DecP\times\dotsb\times\PolicySet^\DecP\rightarrow\mathbb{R}\), we will use the notational shorthands
\[\E_{\Policy^{(i)}\sim\Distr^{(i)},\,i\in\PlayerSet}
    \left[\eta(\Policy^{(1)},\dotsc,\Policy^{(N)})\right]
:=\E_{\Policy^{(1)}\sim\Distr^{(1)}}\left[\dots\left[\E_{\Policy^{(N)}\sim\Distr^{(N)}}
    \left[\eta(\Policy^{(1)},\dotsc,\Policy^{(N)})\right]\right]\dots\right]
\]
 and
\[\E_{\Policy^{(i)}\sim\Distr^{(i)}}
    \left[
\eta(\Policy^{(1)},\dotsc,\Policy^{(N)})\right]:=\int_{\PolicySet^\DecP}\eta(\Policy^{(1)},\dotsc,\Policy^{(N)})\mathrm{d}\Distr^{(i)}(\Policy^{(i)}).\]
Note that by Fubini's theorem \parencite[see][ch.~8]{williams1991probability}, it is
\[\E_{\Policy^{(i)}\sim\Distr^{(i)},\,i\in\PlayerSet}
    \left[\eta(\Policy^{(1)},\dotsc,\Policy^{(N)})\right]
    =\int_{\PolicySet^\DecP\times\dotsb\times\PolicySet^\DecP}\eta(\Policy^{(1)},\dots,\Policy^{(N)})\mathrm{d}\otimes_{i\in\PlayerSet}\Distr^{(i)}.
    \]
    
Now we define the LFC game for a Dec-POMDP.
    
\begin{defn}[Label-free coordination game]
Let \(\DecP\) be a Dec-POMDP and define \(\DecPSet:=\{\Isom^*\DecP\mid\Isom\in\Sym(\DecP)\}\). The \emph{label-free coordination (LFC) game} for \(\DecP\) is defined as a tuple \(\Gamma^\DecP:=(\PlayerSet^\DecP,(\LASet^\DecPSet)_{i\in\PlayerSet},(U^{\DecP})_{i\in\PlayerSet})\) where
\begin{itemize}
    \item \(\PlayerSet^\DecP\) is the set of players, called principals.
    \item \(\LASet^\DecPSet\) is the set of strategies for all principals \(i\in\PlayerSet^\DecP\).
    \item the common payoff for the strategy profile \(\LAProfile_1,\dotsc,\LAProfile_N\in\LASet^\DecP\) is 
    \begin{equation}U^\DecP(\LAProfile):=
    \E_{\DecP_i\sim \U(\DecPSet),\,i\in\PlayerSet}\Big[\E_{\Isom_j\sim\U(\Iso(\DecP_j,\DecP)),\,j\in\PlayerSet}\Big[\\
    \E_{\Policy^{(k)}\sim\Isom_k^*\LAProfile_k(\DecP_k),\,k\in\mathcal{N}}
    \Big[
J^\DecP((\Policy^{(l)}_l)_{l\in\PlayerSet})\Big]\Big]\Big],\end{equation}
where \(\mathcal{U}(\DecPSet)\) is a uniform distribution over \(\DecPSet\) and \(\U(\Iso(\DecP_j,\DecP))\) a uniform distribution over \(\Iso(\DecP_j,\DecP)\).
\end{itemize}
\end{defn}
\begin{remark}
Note that set of strategies \(\LASet^\DecP\) is continuous. The game \(\Gamma^\DecP\) could hence be considered a continuous game, which is a generalization of the concept of a normal-form game to continuous strategy spaces \parencite[see][]{glicksberg1952generalization}. In continuous games, it is usually assumed that the set of strategies is compact and that the payoffs are continuous functions, which we believe does apply in our case.

Moreover, we believe that there is some other normal-form game with finite strategy space such that the \emph{mixed strategies} in that game correspond to the set of strategies \(\LASet^\DecPSet\) in \(\Gamma^\DecP\) \parencite[for a reference on these concepts from game theory, see][]{osborne1994course,gibbons1992game}. In particular, one can see that the set of strategies \(\LASet^\DecPSet\) is already convex.

We do not need any further characterization of an LFC game in the following, so we do not investigate issues such as compactness or convexity of the set of strategies. The formalism at hand was chosen primarily to work well with an intuitive formulation of the LFC problem and to suit our discussion of the OP algorithm.
\end{remark}

Next, to recall the definition of the LFC problem, let any set \(\DecPSetSecond\) of Dec-POMDPs be given, and denote \(\overline{\DecPSetSecond}:=\bigcup_{\DecP\in\DecPSetSecond}\DecPSet^\DecP\) where \(\DecPSet^\DecP:=\{\Isom^*\DecP\mid\Isom\in\Sym(\DecP)\}\) is the set of all relabeled problems of \(\DecP\). The LFC problem for \(\DecPSetSecond\) is then defined as the problem of finding one learning algorithm \(\LA\in\LASet^{\overline{\DecPSetSecond}}\) to be used by principals in a randomly drawn game \(\Gamma^\DecP\) for \(\DecP\sim\U(\DecPSetSecond)\).

\begin{defn}[Label-free coordination problem]
Let \(\DecPSetSecond\) be any set of Dec-POMDPs.
Define the objective \(U^\DecPSetSecond\colon \LASet^{\overline{\DecPSetSecond}}\rightarrow\mathbb{R}\) via
\begin{equation}
U^\DecPSetSecond(\LA):=\E_{\DecPSecond\sim\U(\DecPSetSecond)}\left[U^\DecPSecond(\LA,\dotsc,\LA)\right]
\end{equation}
for
\(\LA\in\LASet^{\overline{\DecPSetSecond}}\). Then we define the \emph{Label-free coordination (LFC) problem} for \(\DecPSetSecond\) as the optimization problem
\begin{equation}
\max_{\LA\in\Sigma^{\overline{\DecPSetSecond}}}U^\DecPSetSecond(\LA)\end{equation}
and we call \(U^\DecPSetSecond(\LA)\) the value of \(\LA\) in the LFC problem for \(\DecPSetSecond\).
If \(\DecPSetSecond=\{\DecP\}\), we write \(U^\DecP:=U^{\{\DecP\}}\) in a slight abuse of notation and refer to this as the LFC problem for \(\DecPSecond\).
\end{defn}

\begin{remark}
The aim of the LFC problem is to find a general learning algorithm to recommended to principals in any LFC game. For this reason, we defined the problem here for a distribution over LFC games. However, a learning algorithm is optimal in the problem for a set of Dec-POMDPs if and only if it is optimal in the problem for each Dec-POMDP in that set. That is because, as one can easily see, the sets \(\DecPSet^\DecP,\DecPSet^\DecPSecond\) do never overlap for two non-isomorphic Dec-POMDPs \(\DecP,\DecPSecond\), and we will show in Corollary~\ref{lem-lfc-problems-isomorphic-decpompds-identical} that the LFC problems for two isomorphic Dec-POMDPs are identical. So to evaluate a learning algorithm in the LFC problem for a set of Dec-POMDPs, we can decompose the set into equivalence classes of isomorphic Dec-POMDPs and evaluate the learning algorithm separately for each of these classes. In the following, we will thus simplify our analysis and restrict ourselves entirely to problems defined for single Dec-POMDPs. Note that the objective in the LFC problem is then simply
\begin{equation}U^\DecP(\LA)=U^{\{\DecP\}}(\LA)=\E_{\DecPSecond\sim\U(\{\DecP\})}\left[U^\DecPSecond(\LA,\dotsc,\LA)\right]=U^\DecP(\LA,\dotsc,\LA).
\end{equation}
If we then prove, e.g., that a learning algorithm is optimal in any such problem, it follows that it is also optimal for the problem defined for any set of Dec-POMDPs.
\end{remark}

Now we will provide two different expressions of the payoff in an LFC game. To that end, let \(\DecP\) be a Dec-POMDP and define \(\DecPSet:=\{\Isom^*\DecP\mid\Isom\in\Sym(\DecP)\}\). First, we provide an expression in terms of labelings.

\begin{lem}
\label{lem-payoff-lfc-game-old-form}
Let \(\LAProfile_1,\dotsc,\LAProfile_N\in\LASet^\DecPSet\).
Then
\begin{equation}
    U^\DecPSet(\LAProfile_1,\dotsc,\LAProfile_N)
    =
    \E_{\SymProfile\sim \U(\Sym(\DecP)^\PlayerSet)}\left[
    \E_{\Policy^{(i)}\sim(\SymProfile_i^{-1})^*\LAProfile_i(\SymProfile_i^*\DecP),\,i\in\mathcal{N}}
    \left[
J^\DecP((\Policy^{(j)}_j)_{j\in\PlayerSet})\right]\right]
\end{equation}
\end{lem}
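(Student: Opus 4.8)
The plan is to exhibit a reparametrization of the outer randomness in the definition of \(U^\DecPSet\). Concretely, I will show that the map \(\Sym(\DecP)\ni\Isom\mapsto(\Isom^*\DecP,\,\Isom^{-1})\) pushes the uniform distribution \(\U(\Sym(\DecP))\) forward to the ``two-stage'' law that first draws a relabeled problem \(\DecPSecond\sim\U(\DecPSet)\) and then an isomorphism \(\IsomHat\sim\U(\Iso(\DecPSecond,\DecP))\). Substituting \(\DecP_i=\SymProfile_i^*\DecP\) and \(\Isom_i=\SymProfile_i^{-1}\) into the definition of \(U^\DecPSet\) and changing variables then yields the claim. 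First I would check that this map has the right target: fixing \(\Isom\in\Sym(\DecP)\) and setting \(\DecPSecond:=\Isom^*\DecP\in\DecPSet\), Lemma~\ref{prop-pushforward-creates-isomorphic-dec-pomdp} gives \(\Isom\in\Iso(\DecP,\Isom^*\DecP)=\Iso(\DecP,\DecPSecond)\), hence \(\Isom^{-1}\in\Iso(\DecPSecond,\DecP)\) by Lemma~\ref{lemma-inverse-composition-isomorphism}. In particular \(\Iso(\DecPSecond,\DecP)\) is a nonempty finite set for every \(\DecPSecond\in\DecPSet\), so the uniform distribution \(\U(\Iso(\DecPSecond,\DecP))\) used in the definition of the LFC game is well-defined.

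Second, and this is the only real bookkeeping step, I would compute the law of \((\DecPSecond,\IsomHat):=(\Isom^*\DecP,\Isom^{-1})\) for \(\Isom\sim\U(\Sym(\DecP))\). The fibres of \(\Isom\mapsto\Isom^*\DecP\) partition the finite set \(\Sym(\DecP)\), and Lemma~\ref{lem-sym-is-superset-of-iso-pushforward} identifies the fibre over \(\DecPSecond\in\DecPSet\) with \(\Iso(\DecP,\DecPSecond)\). By Lemma~\ref{decompose-isomorphisms}, \(\Iso(\DecP,\DecPSecond)=\Isom_0\circ\Aut(\DecP)\) for any fixed \(\Isom_0\in\Iso(\DecP,\DecPSecond)\), so \(\lvert\Iso(\DecP,\DecPSecond)\rvert=\lvert\Aut(\DecP)\rvert\) is independent of \(\DecPSecond\); summing over the partition gives \(\lvert\DecPSet\rvert\cdot\lvert\Aut(\DecP)\rvert=\lvert\Sym(\DecP)\rvert\). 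Consequently \(\Prob(\DecPSecond=\DecPThird)=\lvert\Aut(\DecP)\rvert/\lvert\Sym(\DecP)\rvert=1/\lvert\DecPSet\rvert\) for every \(\DecPThird\in\DecPSet\), i.e.\ \(\DecPSecond\sim\U(\DecPSet)\); and conditionally on \(\DecPSecond=\DecPThird\), \(\Isom\) is uniform on the fibre \(\Iso(\DecP,\DecPThird)\), so its image \(\IsomHat=\Isom^{-1}\) under the bijection \(g\mapsto g^{-1}\colon\Iso(\DecP,\DecPThird)\to\Iso(\DecPThird,\DecP)\) (a bijection by Lemma~\ref{lemma-inverse-composition-isomorphism}) is uniform on \(\Iso(\DecPThird,\DecP)\). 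This is exactly the outer two-stage law appearing in the definition of \(U^\DecPSet\).

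Finally I would assemble the \(N\) principals and change variables. Since \(\U(\Sym(\DecP)^\PlayerSet)=\bigotimes_{i\in\PlayerSet}\U(\Sym(\DecP))\) is a product measure, the \(\SymProfile_i\) are i.i.d.\ uniform, hence the pairs \((\DecP_i,\Isom_i):=(\SymProfile_i^*\DecP,\SymProfile_i^{-1})\) are i.i.d.\ with the law computed above, matching the joint law of \((\DecP_i)_{i\in\PlayerSet}\) and \((\Isom_j)_{j\in\PlayerSet}\) in the definition; the policy draws \(\Policy^{(k)}\sim\Isom_k^*\LAProfile_k(\DecP_k)\) are conditionally independent given \(((\DecP_k,\Isom_k))_{k\in\PlayerSet}\) in both formulations, and \(\Isom_k^*\LAProfile_k(\DecP_k)=(\SymProfile_k^{-1})^*\LAProfile_k(\SymProfile_k^*\DecP)\) by construction. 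Using that \(J^\DecP\) is bounded on the finite-horizon problem and the relevant measurability (as set up earlier in this section), Fubini's theorem lets me reorganize the nested expectations, and the two displayed expressions coincide. The main obstacle is the fibre-counting argument of the middle step — ensuring that the first marginal is \emph{exactly} uniform on \(\DecPSet\) and that the conditional is uniform on the correct \(\Iso\)-set — while the remaining manipulations are a routine measurable change of variables.
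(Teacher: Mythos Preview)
Your proposal is correct and takes essentially the same approach as the paper: both identify \(\Sym(\DecP)\) as the disjoint union \(\bigcup_{\DecPSecond\in\DecPSet}\Iso(\DecP,\DecPSecond)\) via Lemma~\ref{lem-sym-is-superset-of-iso-pushforward}, use Lemma~\ref{decompose-isomorphisms} to show all fibres have equal size \(|\Aut(\DecP)|\), and invoke the inversion bijection from Lemma~\ref{lemma-inverse-composition-isomorphism} to match \(\Iso(\DecP,\DecP_j)\) with \(\Iso(\DecP_j,\DecP)\). Your phrasing as a pushforward of measures is slightly more conceptual than the paper's explicit summation, but the substance and the lemmas used are identical.
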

\begin{proof}
It follows from Lemma~\ref{lem-sym-is-superset-of-iso-pushforward} that it is \(\Sym(\DecP)=\bigcup_{\DecPSecond\in\DecPSet}\Iso(\DecP,\DecPSecond)\), where one can easily see that the union is disjoint (i). Moreover, it follows from Lemma~\ref{decompose-isomorphisms} that \(|\Iso(\DecP,\DecPSecond)|=|\Iso(\DecP,\DecPThird)|\) for any Dec-POMDPs \(\DecPSecond,\DecPThird\in\DecPSet\), so there exists \(M\in\mathbb{N}\) such that \(M=|\Iso(\DecP,\DecPSecond)|\) for any \(\DecPSecond\in\DecPSet\), and from (i) it follows that \(|\Sym(\DecPSecond)|=|\DecPSet||M|\) (ii).

Next, by Lemma~\ref{lemma-inverse-composition-isomorphism}, for any \(\Isom\in\Iso(\DecP,\DecP_j)\), it is \(\Isom^{-1}\in \Iso(\DecP_j,\DecP)\) for any \(\Isom\in\Iso(\DecP_j,\DecP)\). In addition, by the same Lemma, for \(\Isom\in\Iso(\DecP_j,\DecP)\), it is \(\Isom=(\Isom^{-1})^{-1}\) and \(\Isom^{-1}\in \Iso(\DecP,\DecP_j)\) and thus \(\Isom\in\{\IsomSecond^{-1}\mid\IsomSecond\in\Iso(\DecP,\DecP_j)\}\). Hence, it is
\begin{equation}
\label{eq:507}
\{\IsomSecond\mid\IsomSecond\in\Iso(\DecP_j,\DecP)\}=\{\IsomSecond^{-1}\mid\IsomSecond\in\Iso(\DecP,\DecP_j)\}.\end{equation}

Using the above, it follows that
\begin{align}&U^{\DecPSet}(\LAProfile_1,\dotsc,\LAProfile_N)\label{eq:36c}
\\
&=
\E_{\DecP_i\sim \U(\DecPSet),\,i\in\PlayerSet}\left[
    \E_{\IsoProfile_j\sim\Iso(\DecP_j,\DecP), \,j\in\PlayerSet}\left[
    \E_{\Policy^{(k)}\sim\SymProfile_k^*\LAProfile_k(\DecP_k),\,k\in\mathcal{N}}
    \left[
J^\DecP((\Policy^{(l)}_l)_{l\in\PlayerSet})\right]\right]\right]\label{eq:36d}
\\
&\overset{(\ref{eq:507})}{=}\label{eq:506a}
\E_{\DecP_i\sim \U(\DecPSet),\,i\in\PlayerSet}\left[
    \E_{\IsoProfile_j\in\Iso(\DecP,\DecP_j),\,j\in\PlayerSet}\left[
    \E_{\Policy^{(k)}\sim(\IsoProfile_k^{-1})^*\LAProfile_k(\DecP_k),\,k\in\mathcal{N}}
    \left[
J^\DecP((\Policy^{(l)}_l)_{l\in\PlayerSet})\right]\right]\right]
\\
&\overset{\text{(i),\,(ii)}}{=}
    \E_{\SymProfile_i\sim \U(\Sym(\DecPSecond)),\,i\in\PlayerSet}\left[
    \E_{\Policy^{(j)}\sim(\SymProfile_j^{-1})^*\LAProfile_j(\SymProfile_j^*\DecP),\,j\in\mathcal{N}}
    \left[
J^\DecP((\Policy^{(k)}_k)_{k\in\PlayerSet})\right]\right]
\\
&=\E_{\SymProfile\sim \U(\Sym(\DecPSecond)^\PlayerSet)}\left[
    \E_{\Policy^{(i)}\sim(\SymProfile_i^{-1})^*\LAProfile_i(\SymProfile_i^*\DecP),\,i\in\mathcal{N}}
    \left[
J^\DecP((\Policy^{(j)}_j)_{j\in\PlayerSet})\right]\right].
\end{align}
This concludes the proof.
\end{proof}

Second, we can prove a useful decomposition of the payoff in an LFC game into isomorphisms and automorphisms. We can already see here the connection to the OP objective (we will recall the OP objective in Appendix~\ref{appendix-generalization-of-other-play}). Recall the projection operator, \(\Proj_i(x):=x_i\) for \(x=(x_i)_{i\in\PlayerSet}\).

\begin{lem}
\label{lemma-decomposition-payoff-zero-shot-coordination-game}
Let \(\LAProfile_1,\dotsc,\LAProfile_N\in\LASet^\DecPSet\).
For any \(\DecPSecond,\DecPThird\in\DecPSet\), choose \(\Isom_{\DecPSecond,\DecPThird}\in\Iso(\DecPSecond,\DecPThird)\) arbitrarily. Then
\begin{align}
    &U^\DecPSet(\LAProfile_1,\dotsc,\LAProfile_N)
   \\ &=\label{eq:36b}
    \E_{\DecP_i\sim \U(\DecPSet),\,i\in\PlayerSet}\left[
    \E_{\Policy^{(j)}\sim\Isom_{\DecP_j,\DecP}^*\LAProfile_j(\DecP_j),\,j\in\mathcal{N}}
    \left[
        \E_{\AutProfile\in\Aut(\DecP)^\PlayerSet}\left[
    J^\DecP\left(\left(\Proj_k(\AutProfile_k^*\Policy^{(k)})\right)_{k\in\PlayerSet}\right)\right]\right]\right].
\end{align}
\end{lem}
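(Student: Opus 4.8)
The plan is to start from the definition of the LFC game payoff,
\[
U^\DecP(\LAProfile)=\E_{\DecP_i\sim\U(\DecPSet),\,\Isom_i\sim\U(\Iso(\DecP_i,\DecP)),\,i\in\PlayerSet}\Bigl[\E_{\Policy^{(j)}\sim\Isom_j^*\LAProfile_j(\DecP_j),\,j\in\mathcal{N}}\bigl[J^\DecP((\Policy^{(k)}_k)_{k\in\PlayerSet})\bigr]\Bigr],
\]
and to re-parametrise each random isomorphism $\Isom_j$ as the fixed isomorphism $\Isom_{\DecP_j,\DecP}$ post-composed with a uniformly random automorphism of $\DecP$. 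Conditioning on the draw of $(\DecP_i)_{i\in\PlayerSet}$, Lemma~\ref{decompose-isomorphisms} (applied with its "$\DecP$" being $\DecP_j$, its "$\DecPSecond$" being $\DecP$, and its distinguished isomorphism being $\Isom_{\DecP_j,\DecP}\in\Iso(\DecP_j,\DecP)$) tells us that for every $\IsomSecond\in\Iso(\DecP_j,\DecP)$ there is exactly one $\AutProfile_j\in\Aut(\DecP)$ with $\IsomSecond=\AutProfile_j\circ\Isom_{\DecP_j,\DecP}$; hence $\AutProfile_j\mapsto\AutProfile_j\circ\Isom_{\DecP_j,\DecP}$ is a bijection $\Aut(\DecP)\to\Iso(\DecP_j,\DecP)$, and sampling $\Isom_j\sim\U(\Iso(\DecP_j,\DecP))$ is the same as sampling $\AutProfile_j\sim\U(\Aut(\DecP))$ and setting $\Isom_j=\AutProfile_j\circ\Isom_{\DecP_j,\DecP}$. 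Note that $\Aut(\DecP)$ does not depend on the random draw $\DecP_j$, so this conditioning step is legitimate.

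Next I would use the composition identity for pushforward distributions, $\IsomSecond^*(\Isom^*\Distr)=(\IsomSecond\circ\Isom)^*\Distr$, recorded in Appendix~\ref{appendix-the-zero-shot-coordination-game} (with $\Isom=\Isom_{\DecP_j,\DecP}$ and $\IsomSecond=\AutProfile_j\in\Aut(\DecP)=\Iso(\DecP,\DecP)$), to rewrite $\Isom_j^*\LAProfile_j(\DecP_j)=\AutProfile_j^*\bigl(\Isom_{\DecP_j,\DecP}^*\LAProfile_j(\DecP_j)\bigr)$. After this substitution the payoff becomes an expectation over $\DecP_i$, then over $\AutProfile_j\sim\U(\Aut(\DecP))$, then over $\Policy^{(k)}\sim\AutProfile_k^*\bigl(\Isom_{\DecP_k,\DecP}^*\LAProfile_k(\DecP_k)\bigr)$. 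Since by definition $\AutProfile_k^*\Distr=\Distr\circ(\AutProfile_k^*)^{-1}$ is the image measure of $\Distr$ under $\AutProfile_k^*$, the change-of-variables formula gives $\E_{\Policy^{(k)}\sim\AutProfile_k^*\Distr}[\eta(\Policy^{(k)})]=\E_{\tilde\Policy^{(k)}\sim\Distr}[\eta(\AutProfile_k^*\tilde\Policy^{(k)})]$, and applying this simultaneously in each coordinate of the product measure (using that a product of image measures is the image measure of the product) replaces $\Policy^{(k)}_k$ by $\Proj_k(\AutProfile_k^*\tilde\Policy^{(k)})$.

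Finally I would invoke Fubini's theorem to swap the order of the (finite, hence bounded) expectation over $\AutProfile\in\Aut(\DecP)^\PlayerSet$ with the expectation over the sampled policies, observe that a product of uniform distributions over $\Aut(\DecP)$ is the uniform distribution over $\Aut(\DecP)^\PlayerSet$ (matching the notation $\E_{\AutProfile\in\Aut(\DecP)^\PlayerSet}$ in the statement), and rename $\tilde\Policy$ back to $\Policy$, which yields exactly Equation~(\ref{eq:36b}). I do not expect a substantive obstacle here: $J^\DecP$ is bounded and all the integrations are either over finite sets ($\DecPSet$, $\Aut(\DecP)$, $\Iso(\cdot,\cdot)$) or over the probability measures on $\PolicySet^\DecP$, so every interchange of expectations and every image-measure manipulation is routine; the only real work is the index bookkeeping, i.e., keeping straight on which Dec-POMDP each object lives and ensuring the bijection from Lemma~\ref{decompose-isomorphisms} is applied in the correct direction.
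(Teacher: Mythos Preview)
Your proposal is correct and follows essentially the same route as the paper's proof: invoke Lemma~\ref{decompose-isomorphisms} to bijectively reparametrise each uniform draw $\Isom_j\sim\U(\Iso(\DecP_j,\DecP))$ as $\AutProfile_j\circ\Isom_{\DecP_j,\DecP}$ with $\AutProfile_j\sim\U(\Aut(\DecP))$, use the composition rule $(\AutProfile_j\circ\Isom_{\DecP_j,\DecP})^*=\AutProfile_j^*\circ\Isom_{\DecP_j,\DecP}^*$ for pushforward measures, and then apply a change of variables together with Fubini to move the $\AutProfile$-expectation inside. The paper's argument is terser but uses exactly these steps in the same order.
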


\begin{proof}
Let \(i\in\PlayerSet,\DecP_i\in\DecPSet\). We know from Lemma~\ref{decompose-isomorphisms} that for any \(\Isom\in \Iso(\DecP_i,\DecP)\) there is a unique \(\Auto\in\Aut(\DecP)\) such that \(\Isom=\Auto\circ\Isom_{\DecP_i,\DecP}\). Also, for the pushforward measure, it is \((\Auto\circ\Isom_{\DecP_i,\DecP})^*\LAProfile_i(\DecP_i)=\Auto^*\Isom_{\DecP_i,\DecP}^*\LAProfile_i(\DecP_i)\) for any \(\Auto\in\Aut(\DecP)\). Using this, it follows that
\begin{align}&\label{eq:36e}
\E_{\DecP_i\sim \U(\DecPSet),\,i\in\PlayerSet}\left[
    \E_{\IsoProfile_j\sim\U(\Iso(\DecP_j,\DecP)), \,j\in\PlayerSet}\left[
    \E_{\Policy^{(k)}\sim\SymProfile_k^*\LAProfile_k(\DecP_k),\,k\in\mathcal{N}}
    \left[
J^\DecP((\Policy^{(l)}_l)_{l\in\PlayerSet})\right]\right]\right]
\\
&=
\E_{\DecP_i\sim \U(\DecPSet),\,i\in\PlayerSet}\left[
    \E_{\AutProfile\sim\U(\Aut(\DecP)^\PlayerSet)}\left[
    \E_{\Policy^{(k)}\sim\AutProfile_k^*\Isom_{\DecP_k,\DecP}^*\LAProfile_k(\DecP_k),\,k\in\mathcal{N}}
    \left[
J^\DecP((\Policy^{(j)}_j)_{j\in\PlayerSet})\right]\right]\right]
\\
&=
\E_{\DecP_i\sim \U(\DecPSet),\,i\in\PlayerSet}\left[
    \E_{\Policy^{(k)}\sim\Isom_{\DecP_k,\DecP}^*\LAProfile_k(\DecP_k),\,k\in\mathcal{N}}
    \left[
        \E_{\AutProfile\sim\U(\Aut(\DecP)^\PlayerSet)}\left[
J^\DecP\left(\left(\Proj_j(\AutProfile_j^*\Policy^{(j)})\right)_{j\in\PlayerSet}\right)\right]\right]\right],\label{eq:36f}
\end{align}
where we have used a change of variables for pushforward measures in the last line.
\end{proof}

Finally, we can show that the payoff in an LFC game is equal for isomorphic Dec-POMDPs, up to a possible permutation of principals. Intuitively, this means that the game does not depend on labels for the problem.

\begin{thm}
\label{zero-shot-coordination-game-invariant-to-isomorphism}
Let \(\DecP,\DecPSecond\) be isomorphic and \(\Isom\in\Iso(\DecP,\DecPSecond)\) arbitrary. Define
\(\DecPSet:=\{\Isom^*\DecP\mid\Isom\in\Sym(\DecP)\}\) and \(\DecPSetSecond:=\{\Isom^*\DecPSecond\mid\Isom\in\Sym(\DecPSecond)\}\). 
Then \(\DecPSet=\DecPSetSecond\),
and for any profile of algorithms \(\LAProfile=(\LAProfile_1,\dotsc,\LAProfile_N)\in\Sigma^\DecPSet\), it is
\[U^{\DecP}(\LAProfile_1,\dotsc,\LAProfile_N)
=U^{\DecPSecond}(\LAProfile_{\Isom^{-1}1},\dotsc,\LAProfile_{\Isom^{-1}N}).
\]
\end{thm}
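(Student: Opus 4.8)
The plan is to reduce the statement to the definition of the LFC payoff, using invariance of the expected return under pushforward (Corollary~\ref{cor-sp-invariant-to-pullback}) together with a change of variables in the isomorphisms that appear in the payoff, plus a relabeling of principals by $\Isom_N$. First I would dispose of the claim $\DecPSet=\DecPSetSecond$: by Lemma~\ref{d-sym-closed-under-labeling} we have $\Sym(\DecP)=\Sym(\DecPSecond)\circ\Isom$ and $\IsomSecond^*\DecPSecond=(\IsomSecond\circ\Isom)^*\DecP$ for every $\IsomSecond\in\Sym(\DecPSecond)$, so $\DecPSetSecond=\{(\IsomSecond\circ\Isom)^*\DecP:\IsomSecond\in\Sym(\DecPSecond)\}=\{\Isom'^*\DecP:\Isom'\in\Sym(\DecP)\}=\DecPSet$. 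In particular the two games have the same strategy set $\LASet^\DecPSet$, and $\U(\DecPSet)=\U(\DecPSetSecond)$.

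For the payoff identity, the key algebraic observation is that if $\Policy:=(\Policy^{(l)}_l)_{l\in\PlayerSet}$ denotes the joint policy whose $l$-th local component is taken from $\Policy^{(l)}$, then unpacking Definition~\ref{definition-pushforward-appendix} shows $\Isom^*\Policy=\big((\Isom^*\Policy^{(\Isom^{-1}j)})_j\big)_{j\in\PlayerSet}$, since both sides have $j$-th local component $\Policy^{(\Isom^{-1}j)}_{\Isom^{-1}j}(\Isom^{-1}\cdot\mid\Isom^{-1}\cdot)$. Combining this with Corollary~\ref{cor-sp-invariant-to-pullback} gives $J^\DecP((\Policy^{(l)}_l)_l)=J^\DecPSecond\big((\tilde\Policy^{(l)}_l)_l\big)$, where $\tilde\Policy^{(l)}:=\Isom^*\Policy^{(\Isom^{-1}l)}$. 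Now I would start from the explicit random-variable description of $U^\DecP(\LAProfile)$ — namely $\DecP_1,\dots,\DecP_N$ i.i.d.\ $\U(\DecPSet)$, then $\Isom_k\sim\U(\Iso(\DecP_k,\DecP))$ independently, then $\Policy^{(k)}\sim\Isom_k^*\LAProfile_k(\DecP_k)$ independently — and compute the law of the relabeled tuple $(\tilde\Policy^{(1)},\dots,\tilde\Policy^{(N)})$. Setting $\DecPThird_m:=\DecP_{\Isom^{-1}m}$ and $\psi_m:=\Isom\circ\Isom_{\Isom^{-1}m}$, one checks: (i) $(\DecPThird_m)_m$ is i.i.d.\ $\U(\DecPSet)$, since $\Isom^{-1}$ merely permutes the index set; (ii) given $(\DecPThird_m)_m$, the $\psi_m$ are independent with $\psi_m\sim\U(\Iso(\DecPThird_m,\DecPSecond))$, because left-composition with $\Isom$ is a bijection $\Iso(\DecPThird_m,\DecP)\to\Iso(\DecPThird_m,\DecPSecond)$ (Lemma~\ref{lemma-inverse-composition-isomorphism}, with cardinalities equal by Lemma~\ref{decompose-isomorphisms}) carrying the uniform law to the uniform law; and (iii) given the above, the $\tilde\Policy^{(m)}$ are independent with $\tilde\Policy^{(m)}=\Isom^*\Policy^{(\Isom^{-1}m)}\sim\Isom^*\big(\Isom_{\Isom^{-1}m}^*\LAProfile_{\Isom^{-1}m}(\DecPThird_m)\big)=(\Isom\circ\Isom_{\Isom^{-1}m})^*\LAProfile_{\Isom^{-1}m}(\DecPThird_m)=\psi_m^*\LAProfile_{\Isom^{-1}m}(\DecPThird_m)$, using compatibility of the pushforward of measures with composition. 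Since $\LAProfile_{\Isom^{-1}m}$ is exactly the $m$-th entry of the permuted profile $(\LAProfile_{\Isom^{-1}1},\dots,\LAProfile_{\Isom^{-1}N})$ and $\DecPSet=\DecPSetSecond$, this is precisely the data defining $U^\DecPSecond(\LAProfile_{\Isom^{-1}1},\dots,\LAProfile_{\Isom^{-1}N})$; taking expectations of $J^\DecPSecond((\tilde\Policy^{(l)}_l)_l)$ then yields $U^\DecP(\LAProfile)=U^\DecPSecond(\LAProfile_{\Isom^{-1}1},\dots,\LAProfile_{\Isom^{-1}N})$.

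The main obstacle I expect is purely bookkeeping: keeping straight which principal's learning algorithm becomes attached to which principal once the pushforward $\Isom^*$ is composed with the principal-permutation $\Isom_N$, and verifying that the re-indexing $m\mapsto\Isom^{-1}m$ only permutes factors of the relevant product measures without affecting the integrand. Working with the random-variable construction of the payoff rather than nested integrals should keep this manageable, and no genuinely hard analytic step is needed, since measurability is assumed throughout and all group-theoretic facts required (inversion/composition of isomorphisms, compatibility of the pushforward with composition, the labeling lemma) are already available from Lemmas~\ref{lemma-inverse-composition-isomorphism}, \ref{lemma-pull-back-function-composition-compatible}, \ref{decompose-isomorphisms}, and \ref{d-sym-closed-under-labeling}.
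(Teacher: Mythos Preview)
Your proposal is correct and follows essentially the same approach as the paper: establish $\DecPSet=\DecPSetSecond$ via Lemma~\ref{d-sym-closed-under-labeling}, apply invariance of the expected return under pushforward to replace $J^\DecP$ by $J^\DecPSecond$, and then perform a change of variables in the isomorphisms together with a re-indexing of principals by $\Isom_N$. The only presentational difference is that the paper first rewrites $U^\DecP$ in the single-integral labeling form of Lemma~\ref{lem-payoff-lfc-game-old-form} (so the change of variables becomes $\SymProfile_i\mapsto\SymProfile_i\circ\Isom^{-1}$ inside one uniform distribution over $\Sym(\DecP)$), whereas you work directly with the two-stage sampling $\DecP_i\sim\U(\DecPSet)$, $\Isom_k\sim\U(\Iso(\DecP_k,\DecP))$ from the definition; both routes use the same lemmas and the same bookkeeping.
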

\begin{proof}
First, note that by Lemma~\ref{d-sym-closed-under-labeling}, it is
\begin{multline}\DecPSetSecond=\{\IsomSecond^*\DecPSecond\mid\IsomSecond\in\Sym(\DecPSecond)\}\overset{\text{Lemma~\ref{d-sym-closed-under-labeling}}}{=}
\{(\IsomSecond\circ\Isom)^*\DecP\mid\IsomSecond\in\Sym(\DecPSecond)\}
\\=
\{\IsomHat^*\DecP\mid\IsomHat\in\Sym(\DecPSecond)\circ\Isom\}\overset{\text{Lemma~\ref{d-sym-closed-under-labeling}}}{=}\{\IsomHat^*\DecP\mid\IsomHat\in\Sym(\DecP)\}=\DecPSet.\end{multline}
Now let \(\LAProfile_1,\dotsc,\LAProfile_N\in\LASet^\DecPSet\) arbitrary. 
Then, using the expression of \(U^\DecP\) from Lemma~\ref{lem-payoff-lfc-game-old-form}, it is
\begin{align}&U^{\DecP}(\LAProfile_{1},\dotsc,\LAProfile_N)
\\&=
\E_{\SymProfile\sim \U(\Sym(\DecP)^\PlayerSet)}\left[
    \E_{\Policy^{(i)}\sim(\SymProfile_i^{-1})^*\LAProfile_i(\SymProfile_i^*\DecP),\,i\in\mathcal{N}}
    \left[
J^\DecP((\Policy^{(j)}_j)_{j\in\PlayerSet})\right]\right]
\\
&=
\E_{\SymProfile\sim \U(\Sym(\DecP)^\PlayerSet)}\left[
    \E_{\Policy^{(i)}\sim(\SymProfile_i^{-1})^*\LAProfile_i(\SymProfile_i^*\DecP),\,i\in\mathcal{N}}
    \left[
J^\DecPSecond(\Isom^*(\Policy^{(j)}_j)_{j\in\PlayerSet})\right]\right]
\label{eq:25-1}
\\
&=
\E_{\SymProfile\sim \U(\Sym(\DecP)^\PlayerSet)}\left[
    \E_{\Policy^{(i)}\sim(\SymProfile_i^{-1})^*\LAProfile_i(\SymProfile_i^*\DecP),\,i\in\mathcal{N}}
    \left[
J^\DecPSecond((\Policy^{(\Isom^{-1}j)}_{\Isom^{-1}j}(\Isom^{-1}\cdot\mid\Isom^{-1}\cdot))_{j\in\PlayerSet})\right]\right]
\label{eq:25-2}
\\
&=
\E_{\SymProfile\sim \U(\Sym(\DecP)^\PlayerSet)}\left[
    \E_{\Policy^{(i)}\sim(\SymProfile_i^{-1})^*\LAProfile_i(\SymProfile_i^*\DecP),\,i\in\mathcal{N}}
    \left[
J^\DecPSecond((\Proj_j(\Isom^*\Policy^{(\Isom^{-1}j)}))_{j\in\PlayerSet})\right]\right]
\label{eq:25-3}
\\
&=
\E_{\SymProfile\sim \U(\Sym(\DecP)^\PlayerSet)}\left[
    \E_{\Policy^{(i)}\sim\Isom^*(\SymProfile_i^{-1})^*\LAProfile_i(\SymProfile_i^*\DecP),\,i\in\mathcal{N}}
    \left[
J^\DecPSecond((\Proj_j(\Policy^{(\Isom^{-1}j)}))_{j\in\PlayerSet})\right]\right]
\label{eq:25-4}
\\
&=
\E_{\SymProfile\sim \U(\Sym(\DecP)^\PlayerSet)}\left[
    \E_{\Policy^{(i)}\sim((\SymProfile_i\circ \Isom^{-1})^{-1})^*\LAProfile_i(\SymProfile_i^*\DecP),\,i\in\mathcal{N}}
    \left[
J^\DecPSecond((\Proj_j(\Policy^{(\Isom^{-1}j)}))_{j\in\PlayerSet})\right]\right]
\label{eq:25-41}
\\
&=
\E_{\SymProfile\sim \U(\Sym(\DecP)^\PlayerSet)}\left[
    \E_{\Policy^{(i)}\sim((\SymProfile_i\circ \Isom^{-1})^{-1})^*\LAProfile_i((\SymProfile_i\circ\Isom^{-1})^*\DecPSecond),\,i\in\mathcal{N}}
    \left[
J^\DecPSecond((\Proj_j(\Policy^{(\Isom^{-1}j)}))_{j\in\PlayerSet})\right]\right]
\label{eq:25-5}
\\
&=
\E_{\SymProfile\sim \U(\Sym(\DecPSecond)^\PlayerSet)}\left[
    \E_{\Policy^{(i)}\sim(\SymProfile_i^{-1})^*\LAProfile_i(\SymProfile_i^*\DecPSecond),\,i\in\mathcal{N}}
    \left[
J^\DecPSecond((\Proj_j(\Policy^{(\Isom^{-1}j)}))_{j\in\PlayerSet})\right]\right]
\label{eq:25-6}
\\
&=
\E_{\SymProfile\sim \U(\Sym(\DecPSecond)^\PlayerSet)}\left[
    \E_{\Policy^{(i)}\sim(\SymProfile_{\Isom^{-1}i}^{-1})^*\LAProfile_{\Isom^{-1}i}(\SymProfile_{\Isom^{-1}i}^*\DecPSecond),\,i\in\mathcal{N}}
    \left[
J^\DecPSecond((\Proj_j(\Policy^{(j)}))_{j\in\PlayerSet})\right]\right]
\label{eq:25-7}
\\
&=
\E_{\SymProfile\sim \U(\Sym(\DecPSecond)^\PlayerSet)}\left[
    \E_{\Policy^{(i)}\sim(\SymProfile_{i}^{-1})^*\LAProfile_{ \Isom^{-1}i}(\SymProfile_{i}^*\DecPSecond),\,i\in\PlayerSet}
    \left[
J^\DecPSecond((\Proj_i(\Policy^{(i)}))_{i\in\PlayerSet})\right]\right]
\label{eq:25-8}
\\&=U^{\DecPSecond}(\LAProfile_{\Isom^{-1}1},\dotsc,\LAProfile_{\Isom^{-1}N}).
\end{align}
Here, in (\ref{eq:25-1}), we use Theorem~\ref{lem-pull-back-isomorphism-compatibility}; in (\ref{eq:25-2}) and (\ref{eq:25-3}), we use the definition of the pushforward policy; in (\ref{eq:25-4}), we apply a change of variables for pushforward measures, applied to each of the measures \((\SymProfile_i^{-1})^*\LAProfile_i(\SymProfile_i^*\DecP)\) separately; in (\ref{eq:25-41}), we use the associativity of the pushforward measure as well as the fact that \((\SymProfile_i\circ \Isom^{-1})^{-1}=\Isom\circ\IsoProfile_i^{-1}\); in (\ref{eq:25-5}) and in (\ref{eq:25-6}), we use the second respectively first part of Lemma~\ref{d-sym-closed-under-labeling}; in (\ref{eq:25-7}), we again use a change of variables for pushforward measures, this time applied to the joint measure \(\otimes_{i\in\PlayerSet}(\SymProfile_i^{-1})^*\LAProfile_i(\SymProfile_i^*\DecP)\); and in (\ref{eq:25-8}) we use the symmetry of the set \(\Sym(\DecPSecond)^\PlayerSet\) with respect to player permutations, concluding the proof.
\end{proof}

First, this result implies that the LFC problems for two isomorphic problems are identical.

\begin{cor}
\label{lem-lfc-problems-isomorphic-decpompds-identical}
Let \(\DecP,\DecPSecond\) be two isomorphic Dec-POMDPs and let \(\DecPSet:=\{\Isom^*\DecP\mid\Isom\in\Sym(\DecP)\}\) and \(\DecPSetSecond:=\{\Isom^*\DecPSecond\mid\Isom\in\Sym(\DecPSecond)\}\). Then \(\DecPSet=\DecPSetSecond\) and \(U^\DecP(\LA)= U^\DecPSecond(\LA)\) for any \(\LA\in\LASet^{\DecPSet}=\LASet^\DecPSetSecond\).
\end{cor}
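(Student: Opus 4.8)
The plan is to obtain this as an immediate corollary of Theorem~\ref{zero-shot-coordination-game-invariant-to-isomorphism}. First I would record the set equality: that theorem already asserts $\DecPSet=\DecPSetSecond$, so there is nothing new to prove there. Since the set of learning algorithms $\LASet^{\DecPSetSecond'}$ depends only on the underlying collection of Dec-POMDPs $\DecPSetSecond'$, it follows at once that $\LASet^\DecPSet=\LASet^\DecPSetSecond$, so a single $\LA\in\LASet^\DecPSet$ is simultaneously a legitimate strategy in both LFC games $\Gamma^\DecP$ and $\Gamma^\DecPSecond$, and the quantities $U^\DecP(\LA)$ and $U^\DecPSecond(\LA)$ are both well-defined.

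Next I would fix an arbitrary isomorphism $\Isom\in\Iso(\DecP,\DecPSecond)$, which exists because $\DecP$ and $\DecPSecond$ are isomorphic, and apply Theorem~\ref{zero-shot-coordination-game-invariant-to-isomorphism} to the constant strategy profile $\LAProfile=(\LA,\dotsc,\LA)\in\LASet^\DecPSet$. The theorem gives $U^\DecP(\LA,\dotsc,\LA)=U^\DecPSecond(\LAProfile_{\Isom^{-1}1},\dotsc,\LAProfile_{\Isom^{-1}N})$. The key observation is that the permutation of principals induced by $\Isom_N$ acts trivially on a constant profile: since $\LAProfile_j=\LA$ for every $j\in\PlayerSet$, we have $(\LAProfile_{\Isom^{-1}1},\dotsc,\LAProfile_{\Isom^{-1}N})=(\LA,\dotsc,\LA)$ no matter what $\Isom_N$ is. Hence $U^\DecP(\LA,\dotsc,\LA)=U^\DecPSecond(\LA,\dotsc,\LA)$, and unwinding the notational convention $U^\DecP(\LA):=U^\DecP(\LA,\dotsc,\LA)$ (and likewise for $\DecPSecond$) from the remark following the definition of the LFC problem yields $U^\DecP(\LA)=U^\DecPSecond(\LA)$, as claimed.

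There is essentially no real obstacle beyond having Theorem~\ref{zero-shot-coordination-game-invariant-to-isomorphism} available; the only subtlety worth spelling out in the write-up is precisely the remark that the principal permutation appearing in that theorem degenerates to the identity on a symmetric (constant) profile, which is what lets the two-sided equality pass from the game payoff to the single-algorithm LFC objective. All of the substantive work — the change-of-variables manipulations with pushforward measures and the bookkeeping around labelings in $\Sym(\DecP)$ versus $\Sym(\DecPSecond)$ — has already been carried out in the proof of the theorem, so the corollary's proof should be only a few lines.
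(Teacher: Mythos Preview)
Your proposal is correct and follows essentially the same approach as the paper's proof: both invoke Theorem~\ref{zero-shot-coordination-game-invariant-to-isomorphism} for the set equality $\DecPSet=\DecPSetSecond$ and then apply it to the constant profile $(\LA,\dotsc,\LA)$, using that the principal permutation by $\Isom_N$ acts trivially on such a profile. Your write-up is slightly more explicit about this last point than the paper, which compresses the argument into a single displayed equation.
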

\begin{proof}
By Theorem~\ref{zero-shot-coordination-game-invariant-to-isomorphism}, we have \(\DecPSet=\DecPSetSecond\).
Now let \(\LA\in\LASet^\DecPSet=\LASet^\DecPSetSecond\). 
Then again by Theorem~\ref{zero-shot-coordination-game-invariant-to-isomorphism}, it is
\[U^\DecP(\LA)
=U^\DecP(\LA,\dotsc,\LA) \overset{\text{Theorem~\ref{zero-shot-coordination-game-invariant-to-isomorphism}}}{=} U^\DecPSecond(\LA,\dotsc,\LA)=U^\DecPSecond(\LA).\]
\end{proof}

Second, the theorem shows that symmetries between the agents in a Dec-POMDP are also symmetries between principals in \(\Gamma^\DecP\).

\begin{cor}\label{cor-zero-shot-coordination-game-player-symmetries}
Let \(\DecP\) be a Dec-POMDP. Then it is
\[U^{\DecP}(\LA_1,\dotsc,\LA_N)
=U^{\DecP}(\LA_{\Auto^{-1}1},\dotsc,\LA_{\Auto^{-1}N}).
\]
for any \(\Auto\in\Aut(\DecP)\).
\end{cor}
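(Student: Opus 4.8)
The plan is to derive this corollary as an immediate special case of Theorem~\ref{zero-shot-coordination-game-invariant-to-isomorphism}. That theorem asserts that for any two isomorphic Dec-POMDPs $\DecP, \DecPSecond$ and any $\Isom \in \Iso(\DecP,\DecPSecond)$, the associated sets of relabeled Dec-POMDPs coincide, $\DecPSet = \DecPSetSecond$, and for any profile $\LAProfile \in \LASet^\DecPSet$ we have $U^\DecP(\LAProfile_1,\dotsc,\LAProfile_N) = U^\DecPSecond(\LAProfile_{\Isom^{-1}1},\dotsc,\LAProfile_{\Isom^{-1}N})$. The key observation is that a Dec-POMDP is trivially isomorphic to itself, and $\Aut(\DecP) = \Iso(\DecP,\DecP)$ by definition, so any automorphism $\Auto \in \Aut(\DecP)$ qualifies as such an isomorphism from $\DecP$ to $\DecP$.

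Concretely, I would instantiate the theorem with $\DecPSecond := \DecP$ and $\Isom := \Auto$. In that case $\DecPSet = \DecPSetSecond$ holds vacuously (they are literally the same set, namely $\{\IsomSecond^*\DecP \mid \IsomSecond \in \Sym(\DecP)\}$), so a profile $(\LA_1,\dotsc,\LA_N) \in \LASet^\DecPSet$ is automatically a valid argument on both sides. The conclusion of the theorem then reads directly
\[
U^{\DecP}(\LA_1,\dotsc,\LA_N) = U^{\DecP}(\LA_{\Auto^{-1}1},\dotsc,\LA_{\Auto^{-1}N}),
\]
which is exactly the claim. No additional computation is required.

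There is essentially no obstacle here; the only thing to double-check is that nothing in the statement of Theorem~\ref{zero-shot-coordination-game-invariant-to-isomorphism} implicitly assumed $\DecP \neq \DecPSecond$ or that $\Isom$ is not an automorphism — inspection of its proof (which proceeds purely through Lemmas~\ref{lem-payoff-lfc-game-old-form}, \ref{d-sym-closed-under-labeling}, Theorem~\ref{lem-pull-back-isomorphism-compatibility}, and changes of variables for pushforward measures, none of which exclude the endomorphic case) confirms it applies verbatim. Hence the corollary is a one-line consequence: ``This follows from Theorem~\ref{zero-shot-coordination-game-invariant-to-isomorphism} applied with $\DecPSecond := \DecP$ and $\Isom := \Auto \in \Iso(\DecP,\DecP)$.''
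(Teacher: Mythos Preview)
Your proposal is correct and matches the paper's own proof essentially verbatim: the paper simply says ``This follows from Theorem~\ref{zero-shot-coordination-game-invariant-to-isomorphism}, using that \(\Aut(\DecP)=\Iso(\DecP,\DecP)\).'' Your additional sanity check that nothing in the theorem's proof excludes the case $\DecPSecond = \DecP$ is sound but not strictly necessary.
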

\begin{proof}
This follows from Theorem~\ref{zero-shot-coordination-game-invariant-to-isomorphism}, using that \(\Aut(\DecP)=\Iso(\DecP,\DecP)\).
\end{proof}

\subsection{Optimal symmetric strategy profiles}
\label{appendix-optimal-symmetric-strategy-profiles}

Above, we have shown that the payoff in an LFC game is invariant with respect to symmetries of the agents in \(\DecP\). Similarly to the case of Dec-POMDPs and their symmetries, we can also apply the concept of symmetry to profiles of learning algorithms. We can then ask whether a profile of learning algorithms is invariant to symmetries of the principals, in which case we say that the profile is symmetric. In the following, we will show that optimal profiles among the ones that are symmetric are Nash equilibria of an LFC game. Since we will show in Appendix~\ref{appendix-proof-of-theorem-2} that a profile in which all principals choose OP with tie-breaking is an optimal symmetric profile, it will result as a corollary that all principals using OP with tie-breaking is a Nash equilibrium of the game.

To begin, we define symmetric principals and profiles of learning algorithms. In the following, let again \(\DecP\) be a Dec-POMDP and \(\DecPSet:=\{\Isom^*\DecP\mid\Isom\in\Sym(\DecP)\}\).

\begin{defn}[Symmetric principals and strategy profiles]
We say that two principals \(i,j\in\PlayerSet\) are symmetric if there exists an automorphism \(\Auto\in\Aut(\DecP)\) such that \(i=\Auto j\). A profile of learning algorithms \(\LAProfile_1,\dotsc,\LAProfile_N\in\LASet^\DecP\) is called symmetric if it is \(\LA_i=\LA_{\Auto^{-1}i}\) for any automorphism \(\Auto\in\Aut(\DecP)\) and principal \(i\in\PlayerSet\).
\end{defn}

An optimal symmetric profile is then defined as a symmetric profile \(\LAProfile\) such that for all other symmetric profiles \(\tilde{\LAProfile}\), it is \(U^\DecP(\LAProfile)\geq U^\DecP(\tilde{\LAProfile})\). Note that if there are non-symmetric principals in \(\PlayerSet\), then for a single learning algorithm \(\LA\in\LASet^\DecPSet\), the property that \(\LA,\dotsc,\LA\) is an optimal symmetric profile in the LFC \emph{game} for \(\DecP\) is stronger than the property that the algorithm \(\LA\) is optimal in the LFC \emph{problem} for \(\DecP\). The latter only requires that \(U^{\DecP}(\LA,\dotsc,\LA)\geq U^\DecP(\LA',\dotsc,\LA')\) for all \(\LA'\in\LASet^\DecPSet\), while \(\LA,\dotsc,\LA\) being an optimal symmetric profile means that \(U^\DecP(\LA,\dotsc,\LA)\geq U^\DecP(\LAProfile_1,\dotsc,\LAProfile_N)\) for all symmetric profiles \(\LAProfile_1,\dotsc,\LAProfile_N\in\LASet^\DecPSet\), where a profile could potentially include different learning algorithms for non-symmetric principals.

For a simple characterization of symmetric profiles, consider the orbit \(\Aut(\DecP)i\) of a principal \(i\in\PlayerSet\), as defined in Appendix~\ref{appendix-relation-to-group-theory}. Clearly, saying that a profile is symmetric can equivalently be expressed as saying that all principals from the same orbit are assigned the same learning algorithm.

\begin{lem}\label{prop-symmetric-orbit}
A profile \(\LAProfile_1,\dotsc,\LAProfile_N\in\LASet^\DecP\) is symmetric if and only if it is \(\LAProfile_i=\LAProfile_j\) for any two symmetric principals \(i,j\in\PlayerSet\).
\end{lem}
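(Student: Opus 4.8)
The plan is to prove the equivalence directly by unwinding the two definitions involved, namely that of a symmetric profile ($\LAProfile_i=\LAProfile_{\Auto^{-1}i}$ for all $\Auto\in\Aut(\DecP)$ and $i\in\PlayerSet$) and that of a pair of symmetric principals ($i=\Auto j$ for some $\Auto\in\Aut(\DecP)$). No deeper structural facts are needed; the argument is a pure bookkeeping exercise with the definitions.

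For the forward direction I would assume $\LAProfile_1,\dotsc,\LAProfile_N$ is symmetric and take two symmetric principals $i,j\in\PlayerSet$, so that $i=\Auto j$ for some $\Auto\in\Aut(\DecP)$, equivalently $j=\Auto^{-1}i$. Instantiating the defining property of a symmetric profile at this particular $\Auto$ and the index $i$ then immediately yields $\LAProfile_i=\LAProfile_{\Auto^{-1}i}=\LAProfile_j$, as required.

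For the converse I would assume $\LAProfile_i=\LAProfile_j$ for every pair of symmetric principals, fix an arbitrary $\Auto\in\Aut(\DecP)$ and $i\in\PlayerSet$, and set $j:=\Auto^{-1}i$; this is well-defined because $\Aut(\DecP)$ is a group (as shown earlier in the excerpt), so $\Auto^{-1}\in\Aut(\DecP)$. The identity $i=\Auto j$ then exhibits $i$ and $j$ as symmetric principals, so the hypothesis gives $\LAProfile_i=\LAProfile_j=\LAProfile_{\Auto^{-1}i}$. Since $\Auto$ and $i$ were arbitrary, the profile is symmetric.

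The main, and essentially only, point requiring care is the translation between $i=\Auto j$ and $j=\Auto^{-1}i$ together with the observation that $\Aut(\DecP)$ is closed under inverses, so there is no genuine obstacle here; the lemma is stated mainly to record the convenient reformulation of symmetry of a profile in terms of the orbits of $\Aut(\DecP)$ acting on $\PlayerSet$, which is used in the subsequent results.
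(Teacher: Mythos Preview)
Your proof is correct and is precisely the unwinding of definitions that the paper has in mind; the paper's own proof is a single sentence stating that the claim ``can be easily seen from the definition of the orbit and the properties of actions of automorphisms on agents and principals,'' and your argument carries out exactly that verification.
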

\begin{proof}
This can be easily seen from the definition of the orbit and the properties of actions of automorphisms on agents and principals.
\end{proof}

Lastly, we define a Nash equilibrium of an LFC game.

\begin{defn}[Nash equilibrium of an LFC game] A profile of learning algorithms \(\LAProfile_1,\dotsc,\LAProfile_N\in\LASet^\DecPSet\) is a Nash equilibrium of the LFC game for \(D\) if, for any principal \(i\in\PlayerSet\) and learning algorithm \(\LAProfile_i'\in\LASet^\DecPSet\), it is
\[
U^\DecP(\LAProfile)\geq U^\DecP(\LAProfile_i,\LAProfile_{-i}).
\]
\end{defn}

Now we show that any optimal symmetric profile is a Nash equilibrium. An analogous result for normal-form games was proven in \textcite{emmons2021symmetry}. Our proof closely follows that proof, adapted to our setting.

\begin{thm}
\label{prop-symmetry-invariant-nash-equilibrium}
Any optimal symmetric strategy profile in an LFC game is a Nash equilibrium.
\end{thm}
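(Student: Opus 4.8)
The plan is to argue by contradiction, adapting the argument of \textcite{emmons2021symmetry} for symmetric normal-form games. Fix an optimal symmetric profile $\LAProfile=(\LAProfile_1,\dots,\LAProfile_N)\in\LASet^\DecPSet$ and suppose, towards a contradiction, that some principal $i$ has a profitable unilateral deviation $\LAProfile_i'\in\LASet^\DecPSet$, i.e.\ $U^\DecP(\LAProfile_i',\LAProfile_{-i})>U^\DecP(\LAProfile)$. The idea is to use $\LAProfile_i'$ to build a one-parameter family of \emph{symmetric} profiles emanating from $\LAProfile$ along which $U^\DecP$ strictly increases, contradicting the optimality of $\LAProfile$ among symmetric profiles.

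Two structural facts about $\Gamma^\DecP$ are needed. First, $\LASet^\DecPSet$ is convex: for $t\in[0,1]$ the assignment $\DecPSecond\mapsto(1-t)\LAProfile_i(\DecPSecond)+t\LAProfile_i'(\DecPSecond)$ is again a learning algorithm, which I denote $(1-t)\LAProfile_i+t\LAProfile_i'$. Second, $U^\DecP$ is affine in each principal's strategy separately (multilinear): in the defining expression the principals' algorithms enter only through the product measure $\bigotimes_{k\in\PlayerSet}\Isom_k^*\LAProfile_k(\DecP_k)$ governing the independent policy draws; the map $\LAProfile_k\mapsto\Isom_k^*\LAProfile_k(\DecP_k)$ is affine, forming an $N$-fold product measure is multilinear, and the outer expectations over the sampled Dec-POMDPs and isomorphisms as well as the inner expectation over histories are all linear. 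Consequently, if a profile's coordinates depend affinely on a scalar $t$, then $t\mapsto U^\DecP$ of that profile is a polynomial in $t$.

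Next I would exploit player symmetry. By Corollary~\ref{cor-zero-shot-coordination-game-player-symmetries}, $U^\DecP$ is invariant under relabeling principals by any $\Auto\in\Aut(\DecP)$, and since $\LAProfile$ is symmetric, $\LAProfile_{\Auto^{-1}k}=\LAProfile_k$ for all $k$. Relabeling the profile $(\LAProfile_i',\LAProfile_{-i})$ by an automorphism $\Auto$ with $\Auto i=j$ then shows that, for every principal $j$ in the orbit $O:=\Aut(\DecP)i$, the profile $\LAProfile^{[j]}$ obtained from $\LAProfile$ by replacing principal $j$'s strategy with the \emph{same} algorithm $\LAProfile_i'$ satisfies $U^\DecP(\LAProfile^{[j]})=U^\DecP(\LAProfile_i',\LAProfile_{-i})>U^\DecP(\LAProfile)$. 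Now define, for $t\in[0,1]$, the profile $\rho(t)$ in which each principal $j\in O$ plays $(1-t)\LAProfile_j+t\LAProfile_i'$ and every principal $k\notin O$ plays $\LAProfile_k$. By convexity $\rho(t)$ is a valid profile; it is symmetric because $\LAProfile_j=\LAProfile_{j'}$ whenever $j,j'$ lie in a common orbit (Lemma~\ref{prop-symmetric-orbit}), so principals in $O$ all receive the identical algorithm $(1-t)\LAProfile_j+t\LAProfile_i'$ while principals outside $O$ are unchanged; and $\rho(0)=\LAProfile$. Setting $g(t):=U^\DecP(\rho(t))$, a polynomial in $t$, multilinearity makes the coordinatewise derivative at $0$ pick up exactly one term per principal of $O$, so that $g'(0)=\sum_{j\in O}\bigl(U^\DecP(\LAProfile^{[j]})-U^\DecP(\LAProfile)\bigr)=|O|\bigl(U^\DecP(\LAProfile_i',\LAProfile_{-i})-U^\DecP(\LAProfile)\bigr)>0$. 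Hence $g(t)>g(0)=U^\DecP(\LAProfile)$ for all sufficiently small $t>0$; since each $\rho(t)$ is symmetric, this contradicts the optimality of $\LAProfile$ among symmetric profiles. Therefore no profitable deviation exists, so $\LAProfile$ is a Nash equilibrium.

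The main obstacle I anticipate is making the multilinearity of $U^\DecP$ watertight — checking that evaluating a learning algorithm, pushing it forward by an isomorphism, forming the $N$-fold product measure, and integrating over the choice of relabeled Dec-POMDP and isomorphism and over histories all commute appropriately with convex combinations in each principal's argument — together with the related bookkeeping that the $t$-derivative of $g$ at $0$ contributes precisely the term $U^\DecP(\LAProfile^{[j]})-U^\DecP(\LAProfile)$ for each $j\in O$ and nothing else, each such term being equal to the original deviation gain via the symmetry relabeling of Corollary~\ref{cor-zero-shot-coordination-game-player-symmetries}.
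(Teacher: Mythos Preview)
Your proposal is correct and follows essentially the same route as the paper's proof: both construct a symmetric perturbation by mixing the profitable deviation $\LAProfile_i'$ into every principal of the orbit $O=\Aut(\DecP)i$ with a small weight, use multilinearity of $U^\DecP$ to isolate the first-order contribution, and invoke Corollary~\ref{cor-zero-shot-coordination-game-player-symmetries} together with the symmetry of $\LAProfile$ to show that each $U^\DecP(\LAProfile^{[j]})$ equals $U^\DecP(\LAProfile_i',\LAProfile_{-i})$. The only cosmetic difference is that you package the first-order analysis as $g'(0)>0$ for the polynomial $g(t)=U^\DecP(\rho(t))$, whereas the paper writes out the binomial expansion of the product measure in $p$ and argues via limits $B(m=1,p)/B(m>0,p)\to 1$ and higher-order terms $\to 0$; these are equivalent expressions of the same first-order argument.
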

\begin{proof}
In the following, fix a Dec-POMDP \(\DecP\) and let \(\DecPSet:=\{\Isom^*\DecP\mid\Isom\in\Sym(\DecP)\}\). Let \(U:=U^\DecP\).
Let \(\LAProfile_1,\dots,\LAProfile_N\) be an optimal symmetric strategy profile. Towards a contradiction, assume that \(\LAProfile\) is not a Nash equilibrium, i.e., that there is \(i\in\PlayerSet\) and \(\tilde{\LAProfile}_i\in\LASet^\DecPSet\) such that \(U(\tilde{\LAProfile}_i,\LAProfile_{-i})>U(\LAProfile)\). We show that then there is another symmetric strategy profile \(\hat{\LAProfile}\) that achieves a higher payoff than \(\LAProfile\), \(U(\hat{\LAProfile})>U(\LAProfile)\), contradicting the assumption that \(\LAProfile\) was optimal among the symmetric profiles.

To that end, for arbitrary \(p\in(0,1]\) define the profile \(\hat{\LAProfile}_j:=p \tilde{\LAProfile}_i + (1-p)\LAProfile_i\) for any \(j\in\Aut(\DecP)i\) and \(\hat{\LAProfile}_j:=\LAProfile_j\) for \(j\in\PlayerSet\setminus\Aut(\DecP)i\). Note that, since we jointly change all learning algorithms in one orbit \(\Aut(\DecP)i\), the remaining profile \(\hat{\LAProfile}\) is symmetric by Lemma~\ref{prop-symmetric-orbit}. Next, let \(K:=|\Aut(\DecP)i|\), choose Dec-POMDPs \(\DecP_1,\dotsc,\DecP_N\in\DecPSet\), and measurable sets \(\mathcal{Z}_1,\dotsc,\mathcal{Z}_N\subseteq\PolicySet^\DecP\). Then it is
\begin{align}
&(\otimes_{j\in\PlayerSet}\hat{\LAProfile}(\DecP_j))(\prod_{l\in\PlayerSet}\mathcal{Z}_l)
\\
&=\prod_{j\in\PlayerSet\setminus\Aut(\DecP)i}\LAProfile(\DecP_j)(\mathcal{Z}_j)
\prod_{l\in\Aut(\DecP)i}\left(p\tilde{\LAProfile}_i(\DecP_l)(\mathcal{Z}_l) + (1-p)\LAProfile_i(\DecP_l)(\mathcal{Z}_l)\right)
\\\nonumber
&= (1-p)^K\prod_{j\in\PlayerSet}\LAProfile_j(\DecP_j)(\mathcal{Z}_j)
\\
\nonumber
&\quad\quad+\sum_{j\in\Aut(\DecP)i}p(1-p)^{K-1}\tilde{\LAProfile}_i(\DecP_j)(\mathcal{Z}_j)\prod_{l\in\PlayerSet\setminus\{ j\}}\LAProfile_l(\DecP_l)(\mathcal{Z}_l)
\\&\quad\quad+
\sum_{k=2,\dotsc,K}p^k(1-p)^{K-k}\Mixture^{j,k,\DecP_j}\left(\prod_{j\in\PlayerSet}\mathcal{Z}_j\right)
\\
\nonumber
&= (1-p)^K\otimes_{j\in\PlayerSet}\LAProfile_j(\DecP_j)\left(\prod_{j\in\PlayerSet}\mathcal{Z}_j\right)
\\
\nonumber
&\quad\quad+\sum_{j\in\Aut(\DecP)i}p(1-p)^{K-1}\left(\tilde{\LAProfile}_i(\DecP_j)\otimes_{l\in\PlayerSet\setminus\{ j\}}\LAProfile_l(\DecP_l)\right)\left(\prod_{j\in\PlayerSet}\mathcal{Z}_j\right)
\\&\quad\quad+\label{eq:84}
\sum_{k=2,\dotsc,K}p^k(1-p)^{K-k}\Mixture^{j,k,\DecP_j}\left(\prod_{j\in\PlayerSet}\mathcal{Z}_j\right)
\end{align}
where \(\Mixture^{k,\DecP_1,\dotsc,\DecP_N}\) is some measure (not necessarily a probability measure) on the space \(\PolicySet^\DecP\times\dotsb\times\PolicySet^\DecP\) that depends on \(k\) and \(\DecP_1,\dotsc,\DecP_N\), but not on \(p\).
This tells us that we can also decompose the integral with respect to the measure \(\otimes_{j\in\PlayerSet}\hat{\LAProfile}(\DecP_j)\) as in (\ref{eq:84}). It follows that
\begin{align}&U(\hat{\LAProfile}_1,\dotsc,\hat{\LAProfile}_N)\label{eq:85aa}
\\&=\E_{\SymProfile\sim \U(\Sym(\DecP)^\PlayerSet)}\left[
    \E_{\Policy^{(j)}\sim(\SymProfile_j^{-1})^*\hat{\LAProfile}_j(\SymProfile_j^*\DecP),\,j\in\mathcal{N}}
    \left[
J^\DecP((\Policy^{(l)}_l)_{l\in\PlayerSet})\right]\right]
\\
&
=\E_{\SymProfile\sim \U(\Sym(\DecP)^\PlayerSet)}\left[
    \E_{\Policy^{(j)}\sim\hat{\LAProfile}_j(\SymProfile_j^*\DecP),\,j\in\mathcal{N}}
    \left[
J^\DecP\left(\left(\Proj_l((\SymProfile_l^{-1})^*\Policy^{(l)})\right)_{l\in\PlayerSet}\right)\right]\right]\label{eq:85c}
\\
&\overset{(\ref{eq:84})}{=}\nonumber
\E_{\SymProfile\sim \U(\Sym(\DecP)^\PlayerSet)}\Big[
(1-p)^{K}
  \E_{\Policy^{(j)}\sim\LAProfile_j(\SymProfile_j^*\DecP),\,j\in\mathcal{N}}
    \left[
J^\DecP\left(\left(\Proj_l((\SymProfile_l^{-1})^*\Policy^{(l)})\right)_{l\in\PlayerSet}\right)
\right]
\\\nonumber
&\quad+\sum_{j\in\Aut(\DecP)i}
p(1-p)^{K-1}
 \E_{\Policy^{(m)}\sim\LAProfile_m(\SymProfile_m^*\DecP),\,m\in\mathcal{N}\setminus j}
    \left[
     \E_{\Policy^{(j)}\sim\tilde{\LAProfile}_i(\SymProfile_j^*\DecP)}
    \left[
J^\DecP\left(\left(\Proj_l((\SymProfile_l^{-1})^*\Policy^{(l)})\right)_{l\in\PlayerSet}\right)
\right]\right]
\\
&\quad+
 \sum_{k=2,\dotsc,K}p^k(1-p)^{K-k}
 \int
J^\DecP\left(\left(\Proj_l((\SymProfile_l^{-1})^*\Policy^{(l)})\right)_{l\in\PlayerSet}\right)\mathrm{d}\Mixture^{k,\IsoProfile_1^*\DecP,\dotsc,\IsoProfile_N^*\DecP}
\left(\Policy^{(1)},\dotsc,\Policy^{(N)}\right)
\Big]
\\
&
=\nonumber
(1-p)^{K}U(\LAProfile)
+
\sum_{j\in\Aut(\DecP)i}p(1-p)^{K-1}U(\LAProfile_1,\dotsc,\LAProfile_{j-1},\tilde{\LAProfile}_i,\LAProfile_{j+1},\dotsc,\LAProfile_N)
\\
&\quad+\label{eq:85a}
 \sum_{k=2,\dotsc,K}p^k(1-p)^{K-k}C_k
\\
&=\label{eq:85b}
B(m=0,p)U(\LAProfile)
+
B(m=1,p)U(\tilde{\LAProfile}_i,\LAProfile_{-i})
+
 \sum_{k=2,\dotsc,K}p^k(1-p)^{K-k}C_k
\end{align}
for some constants \(C_k\in\mathbb{R}\) for \(k=2,\dotsc,K\), and where we write \(B(k=0,p)\) to denote the probability that a binomial distribution with \(K\) trials and success chance \(p\) has \(0\) successful trials (and \(B(k=1,p)\) analogously). Here, in (\ref{eq:85c}), we use a change of variables for pushforward measures, separately for each measure \(\hat{\LAProfile}_j(\SymProfile_j^*\DecP)\); in (\ref{eq:85a}), we use the linearity of the expectation; and in (\ref{eq:85b}) we use that \(U\) and \(\LAProfile\) are both invariant to symmetries between principals, and thus for \(\Auto\in\Aut(\DecP)\) with \(\Auto^{-1}j=i\), it is
\begin{multline}
U(\tilde{\LAProfile}_i,\LAProfile_{-i})
=U(\tilde{\LAProfile}_{\Auto^{-1}1},\dotsc,\LAProfile_{\Auto^{-1}(j-1)},\tilde{\LAProfile}_i,\LAProfile_{\Auto^{-1}(j+1)},\dots,\LAProfile_{\Auto^{-1}N})
\\
=U(\LAProfile_1,\dotsc,\LAProfile_{j-1},\tilde{\LAProfile}_i,\LAProfile_{j+1},\dotsc,\LAProfile_N).\end{multline}

Now note that if we can show that
\[(1-B(m=0,p))U(\LAProfile)<B(m=1,p)U(\tilde{\LAProfile}_i,\LAProfile_{-i})+\sum_{k=2,\dotsc,K}p^k(1-p)^{K-k}C_k,\]
then it would also follow that
\begin{multline}
U(\hat{\LAProfile})
\overset{\text{(\ref{eq:85aa})--(\ref{eq:85b})}}{=}
B(m=0,p)U(\LAProfile)
+
B(m=1,p)U(\tilde{\LAProfile}_i,\LAProfile_{-i})
+\sum_{k=2,\dotsc,K}p^k(1-p)^{K-k}C_k
\\
>
B(m=0,p)U(\LAProfile)
+
(1-B(m=0,p))U(\LAProfile)
=
U(\LAProfile).
\end{multline}
Thus, this would show that \(\hat{\LAProfile}\) is a symmetric profile with higher payoff than \(\LAProfile\), proving the required contradiction.

In the following, we show the equivalent condition
\[U(\LAProfile)<\frac{B(1,p)}{B(m>0,p)}U(\tilde{\LAProfile}_i,\LAProfile_{-i})+\frac{\sum_{k=2,\dotsc,K}p^k(1-p)^{K-k}C_k}
{B(m>0,p)}
,\]
where \(B(m>0,p)=\sum_{k=1}^KB(m=k,p)=1-B(m=0,p)\).
To that end, note that since \(U(\LAProfile)<U(\tilde{\LAProfile}_i,\LAProfile_{-i})\) by assumption, we can choose some small \(\epsilon>0\) such that still
\[U(\LAProfile)<U(\tilde{\LAProfile}_i,\LAProfile_{-i}) - \epsilon.\]

Moreover, note that \(B(m>0,p)\)
is a polynomial in \(p\), and the degree of its nonzero term with lowest degree is \(1\). Similarly, for \(\sum_{k=2,\dotsc,K}p^{K-k}(1-p)^2C_k\), that lowest degree is \(2\).
Hence, it is
\[\frac{\sum_{k=2,\dotsc,K}p^k(1-p)^{K-k}C_k}{B(m>0,p)}=
\frac{\sum_{k=2,\dotsc,K}p^{k-1}(1-p)^{K-k}C_k}{C+Q(p)}\]
for some constant \(C\neq 0\) and polynomial \(Q\) in \(p\), and it follows that
\[\lim_{p\rightarrow 0}\frac{\sum_{k=2,\dotsc,K}p^k(1-p)^{K-k}C_k}{B(m>0,p)}
=
\lim_{p\rightarrow 0}\frac{\sum_{k=2,\dotsc,K}p^{k-1}(1-p)^{K-k}C_k}{C+Q(p)}=0.\]
With the same argument, it is also \(\lim_{p\rightarrow0}\frac{B(m>1,p)}{B(m>0,p)}=0\) and thus
\[\lim_{p\rightarrow0}\frac{B(m=1,p)}{B(m>0,p)}
=
1 - \lim_{p\rightarrow0}\frac{B(m>1,p)}{B(m>0,p)}=1.\]

Hence, we can find some \(p>0\) that is small enough such that both \[\frac{\sum_{k=2,\dotsc,K}p^k(1-p)^{K-k}C_k}{B(m>0,p)}<\frac{\epsilon}{2}\]
and
\[1+\frac{\epsilon}{2|U(\tilde{\LAProfile}_i,\LAProfile_{-i})|}>\frac{B(m=1,p)}{B(m>0,p)}>1-\frac{\epsilon}{2|U(\tilde{\LAProfile}_i,\LAProfile_{-i})|}.\]
It follows that
\begin{multline}
U(\LAProfile)<U(\tilde{\LAProfile}_i,\LAProfile_{-i}) - \epsilon
=\left(
1-\frac{\epsilon}{2U(\tilde{\LAProfile}_i,\LAProfile_{-i})}
\right)
U(\tilde{\LAProfile}_i,\LAProfile_{-i})-\frac{\epsilon}{2}
\\
<\frac{B(m=1,p)}{B(m>0,p)}U(\tilde{\LAProfile}_i,\LAProfile_{-i})+\frac{\sum_{k=2,\dotsc,K}p^k(1-p)^{K-k}C_k}{B(m>0,p)},
\end{multline}
which is what we wanted to show. This concludes the proof.
\end{proof}

\subsection{Evaluating equivariant learning algorithms}
\label{appendix-equivariant-learning-algorithms}
In the following, let \(\DecP\) be a Dec-POMDP and define \(\DecPSet:=\{\Isom^*\DecP\mid\Isom\in\Sym(\DecP)\}\).
In the special case in which a learning algorithm is in a sense independent from the used labels, we can find a simpler form of the algorithm's value in the LFC problem for \(\DecP\). To do so, we define the notion of an equivariant learning algorithm.

\begin{defn}[Equivariant learning algorithms]\label{definition-equivariant-algorithm} Let \(\LA\in\LASet^\DecPSet\). Then \(\LA\) is called \emph{equivariant} if for any two labelings \(\Isom,\IsomSecond\in\Sym(\DecP)\), it is
\[(\Isom^{-1})^*\LA(\Isom^*\DecP)=(\IsomSecond^{-1})^*\LA(\IsomSecond^*\DecP).\]
\end{defn}

\begin{remark}
We believe that a learning algorithm implemented via neural networks and one-hot encodings, as used in our experiments, should be equivariant. To see this, note that by a symmetry argument, the distribution over functions corresponding to a randomly initialized neural network is invariant with respect to coordinate permutations. Assume that actions, observations, and agents of a given problem are implemented as one-hot vectors, i.e., elements of a canonical basis \(\{e_1,\dotsc,e_k\}\in\mathbb{R}^k\) where \(k\in\mathbb{N}\) is the cardinality of the respective set. Then the distribution over randomly initialized neural network policies will also not depend on particular assignments of actions, etc., to one-hot vectors. We conjecture that, if the used optimizer is equivariant with respect to coordinate permutations (i.e., if the parameter dimensions are permuted, then the prescribed updates to the parameters are equally permuted), then the resulting learning algorithm is equivariant. We leave a rigorous exploration of this issue to future work.
\end{remark}

An equivariant algorithm can be evaluated in the LFC problem for \(\DecP\) by evaluating its cross-play value in any Dec-POMDP
\(\DecPSecond\in\DecPSet\). The resulting policies can be permuted by random automorphisms or they can be evaluated as they are.
\begin{prop}\label{prop-equivariant-algorithm-zero-shot-coordination-problem}
Let \(\LA\in\LASet^\DecPSet\) be equivariant. Then for any \(\Isom\in\Sym(\DecP)\) and \(\DecPSecond=\Isom^*\DecP\in \DecPSet\), it is
\begin{align}U^\DecP(\LA)&=
\E_{\Policy^{(i)}\sim\LA(\DecPSecond),\,i\in\mathcal{N}}\left[\E_{\AutProfile\in\U(\Aut(\DecPSecond)^\PlayerSet)}
    \left[
J^\DecPSecond\left(\left(\Proj_j(\AutProfile_j^*\Policy^{(j)})\right)_{j\in\PlayerSet}\right)\right]\right]
\\
&=
\E_{\Policy^{(i)}\sim\LA(\DecPSecond),\,i\in\mathcal{N}}
    \left[
J^\DecPSecond\left(\left(\Policy^{(j)}_j\right)_{j\in\PlayerSet}\right)\right].
\end{align}
\end{prop}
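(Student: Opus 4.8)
The plan is to derive both identities from the expression for the payoff in Lemma~\ref{lem-payoff-lfc-game-old-form}, after first recording two consequences of equivariance (Definition~\ref{definition-equivariant-algorithm}): that $\LA$, being equivariant with respect to labelings of $\DecP$, is automatically equivariant with respect to labelings of $\DecPSecond$ with common value $\LA(\DecPSecond)$, and that $\LA(\DecPSecond)$ is invariant under the pushforward by every automorphism of $\DecPSecond$.

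First I would establish this structural lemma. Fix $\Isom\in\Sym(\DecP)$ with $\DecPSecond=\Isom^*\DecP$; by Lemma~\ref{prop-pushforward-creates-isomorphic-dec-pomdp}, $\Isom\in\Iso(\DecP,\DecPSecond)$. By Lemma~\ref{d-sym-closed-under-labeling}, every $\psi\in\Sym(\DecPSecond)$ has the form $\psi=\phi\circ\Isom^{-1}$ with $\phi\in\Sym(\DecP)$, and $\psi^*\DecPSecond=(\psi\circ\Isom)^*\DecP=\phi^*\DecP$. Using that the pushforward of measures is compatible with composition, $(\psi^{-1})^*\LA(\psi^*\DecPSecond)=(\Isom\circ\phi^{-1})^*\LA(\phi^*\DecP)=\Isom^*\left((\phi^{-1})^*\LA(\phi^*\DecP)\right)$, which by equivariance of $\LA$ equals $\Isom^*\Distr$ for a fixed distribution $\Distr$ independent of $\phi$, hence of $\psi$. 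Taking $\psi=\Id$ (so $\phi=\Isom$) identifies $\Isom^*\Distr=\LA(\DecPSecond)$, so $(\psi^{-1})^*\LA(\psi^*\DecPSecond)=\LA(\DecPSecond)$ for every $\psi\in\Sym(\DecPSecond)$. Since the component sets of $\DecPSecond$ are already of canonical form, every $\Auto\in\Aut(\DecPSecond)$ is in particular an element of $\Sym(\DecPSecond)$ (cf.\ Lemma~\ref{lem-sym-is-superset-of-iso-pushforward}), and $\Auto^*\DecPSecond=\DecPSecond$; specializing gives $(\Auto^{-1})^*\LA(\DecPSecond)=\LA(\DecPSecond)$, i.e.\ $\Auto^*\LA(\DecPSecond)=\LA(\DecPSecond)$, and since $\Aut(\DecPSecond)$ is a group, $\LA(\DecPSecond)$ is $\Aut(\DecPSecond)$-invariant.

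Next I would assemble the two identities. As $\DecP$ and $\DecPSecond$ are isomorphic, Theorem~\ref{zero-shot-coordination-game-invariant-to-isomorphism} gives $\{\IsomSecond^*\DecPSecond\mid\IsomSecond\in\Sym(\DecPSecond)\}=\DecPSet$, so $\LA$ is a valid strategy profile component in $\Gamma^\DecPSecond$, and Corollary~\ref{lem-lfc-problems-isomorphic-decpompds-identical} gives $U^\DecP(\LA)=U^\DecPSecond(\LA)=U^\DecPSecond(\LA,\dotsc,\LA)$. Applying Lemma~\ref{lem-payoff-lfc-game-old-form} with $\DecPSecond$ in the role of $\DecP$,
\[
U^\DecP(\LA)=\E_{\SymProfile\sim\U(\Sym(\DecPSecond)^\PlayerSet)}\left[\E_{\Policy^{(i)}\sim(\SymProfile_i^{-1})^*\LA(\SymProfile_i^*\DecPSecond),\,i\in\PlayerSet}\left[J^\DecPSecond\left((\Policy^{(j)}_j)_{j\in\PlayerSet}\right)\right]\right].
\]
By the structural lemma, $(\SymProfile_i^{-1})^*\LA(\SymProfile_i^*\DecPSecond)=\LA(\DecPSecond)$ for every value of $\SymProfile_i$, so the outer expectation collapses and $U^\DecP(\LA)=\E_{\Policy^{(i)}\sim\LA(\DecPSecond),\,i\in\PlayerSet}\left[J^\DecPSecond\left((\Policy^{(j)}_j)_{j\in\PlayerSet}\right)\right]$, which is the second displayed equation of the proposition. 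For the first, fix $\AutProfile\in\Aut(\DecPSecond)^\PlayerSet$; a change of variables $\Policy^{(k)}\mapsto\AutProfile_k^*\Policy^{(k)}$ in each factor, using Lemma~\ref{lemma-pull-back-function-composition-compatible} and the definition of the pushforward measure, rewrites $\E_{\Policy^{(i)}\sim\LA(\DecPSecond)}\left[J^\DecPSecond\left((\Proj_k(\AutProfile_k^*\Policy^{(k)}))_{k\in\PlayerSet}\right)\right]$ as $\E_{\Policy^{(i)}\sim\AutProfile_i^*\LA(\DecPSecond)}\left[J^\DecPSecond\left((\Policy^{(k)}_k)_{k\in\PlayerSet}\right)\right]$, which by the $\Aut(\DecPSecond)$-invariance of $\LA(\DecPSecond)$ equals $U^\DecP(\LA)$; averaging over $\AutProfile\sim\U(\Aut(\DecPSecond)^\PlayerSet)$ and exchanging the two finitely iterated expectations by Fubini yields the first displayed equation. (Alternatively, the first equation drops directly out of Lemma~\ref{lemma-decomposition-payoff-zero-shot-coordination-game} upon noting $\Isom_{\DecPThird,\DecPSecond}^*\LA(\DecPThird)=\LA(\DecPSecond)$ by the same argument.) The main obstacle is the structural lemma --- getting the interplay of labelings, composition, and the transfer of equivariance from $\DecP$ to $\DecPSecond$ exactly right via Lemmas~\ref{d-sym-closed-under-labeling}, \ref{lem-sym-is-superset-of-iso-pushforward} and \ref{lemma-pull-back-function-composition-compatible}; everything after that (change of variables for pushforward measures, Fubini) is routine.
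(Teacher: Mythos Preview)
Your proof is correct and reaches the same conclusion, but the organization differs from the paper's in a meaningful way. The paper works directly in $\DecP$: it applies Lemma~\ref{lem-payoff-lfc-game-old-form} to $U^\DecP(\LA)$, uses equivariance once to collapse the average over $\Sym(\DecP)$ to the fixed measure $(\Isom^{-1})^*\LA(\DecPSecond)$, and then pushes the expected return from $J^\DecP$ to $J^\DecPSecond$ via a change of variables and Theorem~\ref{lem-pull-back-isomorphism-compatibility}. For the automorphism version it observes $\Aut(\DecPSecond)\circ\Isom\subseteq\Sym(\DecP)$ and invokes equivariance a second time to insert the automorphisms before doing the same transfer. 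You instead front-load the work into a structural lemma (equivariance transfers from $\Sym(\DecP)$ to $\Sym(\DecPSecond)$ and yields $\Aut(\DecPSecond)$-invariance of $\LA(\DecPSecond)$), then move to $U^\DecPSecond$ via Corollary~\ref{lem-lfc-problems-isomorphic-decpompds-identical} and apply Lemma~\ref{lem-payoff-lfc-game-old-form} there, so the outer expectation collapses directly to $\LA(\DecPSecond)$ with no further transfer needed. Your route trades one invocation of Theorem~\ref{lem-pull-back-isomorphism-compatibility} for the structural lemma (which relies on Lemma~\ref{d-sym-closed-under-labeling}); it is slightly more conceptual, isolating exactly why the labeling average disappears, while the paper's route is more direct and leans on fewer auxiliary results.
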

\begin{proof}
First, let \(\Isom\in\Sym(\DecP)\) and \(\DecPSecond:=\Isom^*\DecP\). Using the expression of the payoff in the LFC game for \(\DecP\) from Lemma~\ref{lem-payoff-lfc-game-old-form}, it is
\begin{align}
U^\DecP(\LA)=U^\DecP(\LA,\dotsc,\LA)&=
\E_{\SymProfile\sim \U(\Sym(\DecP)^\PlayerSet)}\left[
    \E_{\Policy^{(i)}\sim(\SymProfile_i^{-1})^*\LA(\SymProfile_i^*\DecP),\,i\in\mathcal{N}}
    \left[
J^\DecP((\Policy^{(j)}_j)_{j\in\PlayerSet})\right]\right]
\\&=\label{eq:102a}
    \E_{\Policy^{(i)}\sim(\Isom^{-1})^*\LA(\DecPSecond),\,i\in\mathcal{N}}
    \left[
J^\DecP((\Policy^{(j)}_j)_{j\in\PlayerSet})\right]
\\&=\label{eq:102c}
    \E_{\Policy^{(i)}\sim\LA(\DecPSecond),\,i\in\mathcal{N}}
    \left[
J^\DecP((\Isom^{-1})^*(\Policy^{(j)}_j)_{j\in\PlayerSet})\right]
\\&=\label{eq:102b}
    \E_{\Policy^{(i)}\sim\LA(\DecPSecond),\,i\in\mathcal{N}}
    \left[
J^\DecPSecond((\Policy^{(j)}_j)_{j\in\PlayerSet})\right],
\end{align}
where we use equivariance in (\ref{eq:102a}), a change of variables for pushforward measures in (\ref{eq:102c}), and Theorem~\ref{lem-pull-back-isomorphism-compatibility} in (\ref{eq:102b}).

Second, by Lemma~\ref{lem-sym-is-superset-of-iso-pushforward}, it is \(\Iso(\DecP,\DecPSecond)\subseteq\Sym(\DecP)\). Thus, using
Lemma~\ref{decompose-isomorphisms}, it is \(\Aut(\DecPSecond)\circ\Isom=\Iso(\DecP,\DecPSecond)\subseteq\Sym(\DecP)\).
Hence, it follows that
\begin{align}
U^\DecP(\LA)=U^\DecP(\LA,\dotsc,\LA)&=
    \E_{\Policy^{(i)}\sim(\Isom^{-1})^*\LA(\DecPSecond),\,i\in\mathcal{N}}
    \left[
J^\DecP((\Policy^{(j)}_j)_{j\in\PlayerSet})\right]
\\&=\label{eq:101a}
\E_{\AutProfile\sim \U(\Aut(\DecPSecond)^\PlayerSet}\left[
    \E_{\Policy^{(i)}\sim(\Isom^{-1}\circ\AutProfile_i)^*\LA(\DecPSecond),\,i\in\mathcal{N}}
    \left[
J^\DecP((\Policy^{(j)}_j)_{j\in\PlayerSet})\right]\right]
\\&=\label{eq:101b}
\E_{\AutProfile\sim \U(\Aut(\DecPSecond)^\PlayerSet}\left[
    \E_{\Policy^{(i)}\sim\AutProfile_i^*\LA(\DecPSecond),\,i\in\mathcal{N}}
    \left[
J^\DecPSecond((\Policy^{(j)}_j)_{j\in\PlayerSet})\right]\right]
\\&=\label{eq:101c}
\E_{\AutProfile\sim \U(\Aut(\DecPSecond)^\PlayerSet}\left[
    \E_{\Policy^{(i)}\sim\LA(\DecPSecond),\,i\in\mathcal{N}}
    \left[
J^\DecPSecond((\Proj_j(\AutProfile_i^*\Policy^{(j)}))_{j\in\PlayerSet})\right]\right],
\end{align}
where we again use equivariance in (\ref{eq:101a}), a change of variables and Theorem~\ref{lem-pull-back-isomorphism-compatibility} in (\ref{eq:101b}), and another change of variables in (\ref{eq:101c}).
This concludes the proof.
\end{proof}

If a learning algorithm is equivariant, we can use either of the expressions above to evaluate it in the LFC problem. One may choose the first expression, i.e., apply random automorphisms to policies, since this transformation maps different policies that are equivalent under OP to a unique automorphism-invariant policy and thus reduces the variance of the cross-play values across different samples \(\Policy\sim\LA(\DecP)\). We will introduce these notions of equivalence and automorphism-invariant policies in Appendix~\ref{appendix-characterization-of-other-play}.

Note that an equivariant learning algorithm is not necessarily one that performs well in the LFC problem. For instance, a SP algorithm may be equivariant. However, if an algorithm is equivariant and it does well in cross-play, then the preceding shows that it will also do well in the LFC problem.

\section{Characterization of other-play and of label-free coordination games}
\label{appendix-characterization-of-other-play}

In this section, we define for a given policy \(\Policy\) a policy \(\Psi(\Policy)\) that corresponds to agents choosing local policies that are randomly permuted by automorphisms, and that is itself invariant to pushforward by automorphisms. We then use this self-map on policies \(\Psi\), called the \emph{symmetrizer}, to characterize both the OP objective and the payoff in an LFC game. This characterization helps us to analyze the OP-optimal policies in the two-stage lever game in Appendix~\ref{appendix-proof-of-theorem-1}, and it allows us to prove a stronger result, showing that any OP algorithm that is not concentrated on only one equivalence class in the two-stage lever game is suboptimal in the corresponding LFC problem. We also use this notion of equivalence classes of policies to define OP with tie-breaking in Appendix~\ref{appendix-proof-of-theorem-2}, which allows us to then show that random tie-breaking functions exist.

In the following, in Appendix~\ref{appendix-generalization-of-other-play}, we recall the definition of our generalization of OP, and we define policies that are invariant to automorphism. Afterwards, in Appendix~\ref{appendix-policies-corresponding-to-distributions}, we introduce the concept of a policy corresponding to a distribution over policies. In Appendix~\ref{appendix-other-play-distribution}, we introduce the other-play distribution, in which each agent's local policy is chosen as pushforward by a random automorphism, and we define the symmetrizer. Using these concepts, in Appendix~\ref{appendix-main-characterizations}, we give a new expression for both the OP objective and the payoff in an LFC game. The OP objective can be understood of as transforming a policy into one that is invariant to automorphisms, and evaluating that policy in SP. Finally, in Appendix~\ref{appendix-no-optimal-deterministic-policy}, we show that our generalized OP objective can in general not be understood of as the SP objective in a modified Dec-POMDP.

\subsection{Generalization of other-play}
\label{appendix-generalization-of-other-play}
In the following, fix a Dec-POMDP \(\DecP\). Recall that for a profile of automorphisms \(\AutProfile\in\Aut(\DecP)^\PlayerSet\) and a joint policy \(\Policy\in\PolicySet^\DecP\), we define the joint policy \(\AutProfile^*\Policy:=\hat{\Policy}\), where the local policy \(\hat{\Policy}_i\) of agent \(i\in\PlayerSet\) is given by the local policy of agent \(i\) in the pushforward policy \(\AutProfile^*_i\Policy\). That is, for \(i\in\PlayerSet\), we define
\[\hat{\Policy}_i:= \Proj_i(\AutProfile^*_i\Policy)=
\Policy_{\AutProfile_i^{-1}i}(\AutProfile_i^{-1}\cdot\mid \AutProfile_i^{-1}\cdot).\]

Using this, we define the OP objective as the expected return of a policy that is randomly permuted by such profiles of automorphisms.

\begin{defn}[Other-play objective]\label{defn-other-play-objective-general-appendix}
Define \(J_\OP^\DecP\colon\PolicySet^\DecP\rightarrow\mathbb{R}\) via 
\begin{equation}
J^{D}_\OP(\Policy):=\E_{\AutProfile\sim \U(\Aut(D)^\mathcal{N})}\left[J^{D}(\AutProfile^*\Policy)
\right]\label{eq:1appendix}
\end{equation}
for \(\Policy\in\PolicySet^\DecP\),
where \(\U(\Aut(D)^\mathcal{N})\) is a uniform distribution over \(\Aut(D)^\mathcal{N}\). We say that \(J^\DecP_\OP\) is the \emph{other-play (OP) objective} of \(\DecP\), and \(J^\DecP_\OP(\Policy)\) is the \emph{OP value} of \(\Policy\in\PolicySet^\DecP\).
\end{defn}

\begin{remark}\label{remark-other-play-always-admits-a-maximum}
It is clear that this objective always admits a maximum. For instance, we can consider \(\PolicySet^\DecP\) as a subset of \(\prod_{i\in\PlayerSet}[0,1]^{\ActionSet_i\times\AOHistorySet_i}\) with its standard topology. As \(\PolicySet^\DecP\) is a Cartesian product of simplices, it is a compact subset of this space. Moreover, one can check that the objective \(J^\DecP(\Policy)\) is continuous in the policy \(\Policy\), and that for \(\AutProfile\in\Aut(\DecP)^\PlayerSet\), the map \(\Policy\mapsto\AutProfile^*\Policy\) is continuous as well. Thus, also \(J_\OP^\DecP(\Policy)\) is continuous in \(\Policy\), as it is a finite linear combination of continuous functions. By the extreme value theorem, it follows that the function always attains a maximum.
\end{remark}

Next, recall our formal definition of an OP learning algorithm as any algorithm that achieves an optimal OP value in expectation.

\begin{defn}[Other-play learning algorithm]\label{defn-other-play-learning-algorithm-appendix}
Let \(\DecPSet\) be a finite set of Dec-POMDPs. A learning algorithm \(\sigma\in\Sigma^\DecPSet\) is called an \emph{OP learning algorithm} if for any \(\DecP\in\DecPSet\), it is
\[\E_{\Policy\sim\sigma^\OP(\DecP)}[J^\DecP_\OP(\Policy)]=\max_{\Policy\in\PolicySet^\DecP}J^\DecP_\OP(\Policy).\]
\end{defn}

Now fix again a Dec-POMDP \(\DecP\) and consider the notion of invariance to automorphism.

\begin{defn}[Invariance to automorphism] A policy \(\Policy\in\PolicySet\) is called \emph{invariant to automorphism} if \(f^*\Policy=\Policy\) for any \(f\in\Aut(\DecP)\).
\end{defn}

Clearly, if a policy is invariant to automorphism, then it has the same OP value and expected return.

\begin{prop}\label{prop-invariant-policy-self-play-value-equals-other-play-value}
Let \(\Policy\in\PolicySet\) be invariant to automorphism. Then
\[J_\OP(\Policy)=J(\Policy).\]
\end{prop}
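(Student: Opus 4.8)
The plan is to show that the policy $\AutProfile^*\pi$ appearing inside the expectation in the definition of $J_\OP$ is in fact equal to $\pi$ for \emph{every} profile $\AutProfile\in\Aut(\DecP)^{\PlayerSet}$, whenever $\pi$ is invariant to automorphism. Once this is established, the claim follows immediately, since then the integrand $J(\AutProfile^*\pi)$ is constantly equal to $J(\pi)$, and hence $J_\OP(\pi)=\E_{\AutProfile\sim\U(\Aut(\DecP)^{\PlayerSet})}[J(\AutProfile^*\pi)]=\E_{\AutProfile}[J(\pi)]=J(\pi)$.

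To prove $\AutProfile^*\pi=\pi$, recall from the definition of $\AutProfile^*\pi$ that the local policy of agent $i$ in $\AutProfile^*\pi$ is $\Proj_i(\AutProfile^*\pi)=\Proj_i(\AutProfile_i^*\pi)$; that is, it depends only on the single automorphism $\AutProfile_i$ and not on the other components of the profile. Now fix $i\in\PlayerSet$. Since $\AutProfile_i\in\Aut(\DecP)$ and $\pi$ is invariant to automorphism, we have $\AutProfile_i^*\pi=\pi$, and therefore $\Proj_i(\AutProfile_i^*\pi)=\Proj_i(\pi)=\pi_i$. As $i$ was arbitrary, every local policy of $\AutProfile^*\pi$ agrees with that of $\pi$, so $\AutProfile^*\pi=\pi$.

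There is essentially no obstacle here; the only point requiring care is to distinguish the pushforward $g^*\pi$ by a single automorphism $g$ (which is what ``invariance to automorphism'' refers to) from the ``mixed'' pushforward $\AutProfile^*\pi$ by a profile of possibly distinct automorphisms, and to observe that the latter decomposes componentwise into instances of the former, so that invariance under each single automorphism suffices. An alternative, slightly more abstract phrasing would note that the set of automorphism-invariant policies is fixed pointwise under the action of $\Aut(\DecP)^{\PlayerSet}$, but the direct componentwise computation above is the cleanest route and is the one I would present.
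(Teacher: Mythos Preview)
Your proof is correct and follows essentially the same approach as the paper's: both use the componentwise definition $\Proj_i(\AutProfile^*\pi)=\Proj_i(\AutProfile_i^*\pi)$ together with invariance of $\pi$ under each single automorphism $\AutProfile_i$ to reduce $J(\AutProfile^*\pi)$ to $J(\pi)$. The paper writes this as a single display chain without explicitly isolating the intermediate claim $\AutProfile^*\pi=\pi$, but the content is identical.
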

\begin{proof}
It is
\begin{multline}J_\OP(\Policy)=
\E_{\AutProfile\sim \U(\Aut(D)^\mathcal{N})}\left[J(\AutProfile^*\Policy)
\right]
=\E_{\AutProfile\sim \U(\Aut(D)^\mathcal{N})}\left[J((\Proj_i(\AutProfile_i^*\Policy))_{i\in\PlayerSet})
\right]
\\\overset{\text{(*)}}{=}\E_{\AutProfile\sim \U(\Aut(D)^\mathcal{N})}\left[J((\Proj_i(\Policy))_{i\in\PlayerSet})
\right]
=J(\Policy),\end{multline}
where we have used invariance to automorphism in (*).
\end{proof}

In the following, we will show that if a policy \(\Policy\) is not already invariant to automorphism, then one can understand the OP objective as first transforming the policy into a policy \(\Psi(\Policy)\) that is invariant to automorphism by applying the symmetrizer \(\Psi\), and then evaluating the expected return of that policy. In that way, the OP objective ensures that policies cannot make use of arbitrary symmetry-breaking.

\subsection{Policies corresponding to distributions over policies}
\label{appendix-policies-corresponding-to-distributions}

In this section, let some Dec-POMDP \(\DecP\) be fixed. As a first step towards defining the symmetrizer \(\Psi\), we will define policies corresponding to distributions over policies for general distributions. Afterwards, we will turn to the particular policy \(\Psi(\Policy)\) that corresponds to the \emph{OP distribution} of \(\Policy\).

Recall that we introduced the set of distributions over policies as \(\Delta(\PolicySet)\), the set of measures on the space \((\PolicySet,\mathcal{F})\). Let \(\Distr\in\Delta(\PolicySet)\). For a given distribution \(\Distr\in\Delta(\PolicySet)\) and agent \(i\in\PlayerSet\), the marginal distribution \(\Distr_i\) is defined as \(\Distr_i(\mathcal{Z}_i):=\Distr(\Proj_i^{-1}(\mathcal{Z}_i))\) for any measurable set of local policies \(\mathcal{Z}_i\subseteq\PolicySet_i\).
We say that \(\Distr\) has \emph{independent local policies}, if \(\Distr=\otimes_i\Distr_i\), i.e., \(\Distr\) decomposes into independent marginal distributions over local policies for each agent. We denote such distributions by \(\Mixture\).

Now let \(\Mixture\in\Delta(\PolicySet)\) be a distribution with independent local policies. We want to construct a policy \(\Policy^\Mixture\) that represents each agent sampling a local policy \(\Policy_i\sim\Mixture_i\) in the beginning of an episode, and then choosing actions according to that policy until the end of the episode. This policy should be equivalent to \(\Mixture\) in the sense that it should yield the same expected return as \(\Mixture\), where we define the expected return of \(\Mixture\) as
\begin{equation}
    \label{expected-return-mixture}
    J(\Mixture):=\E_{\Policy\sim\Mixture}\left[J(\Policy)\right]=\E_{\Policy_i\sim\Mixture_i,\,i\in\PlayerSet}\left[J(\Policy)\right].
\end{equation}

The statement that such a policy exists is analogous and more general than a result by \textcite{kuhn1953contributions}, which says that in an extensive-form game, given some conditions on the game, for every mixed strategy there is an equivalent behavior strategy. Kuhn's theorem is relevant to Dec-POMDPs since there is a correspondence between Dec-POMDPs and extensive-form games \parencite{oliehoek2006dec}. For Dec-POMDPs, the analogous result states that for every distribution over deterministic policies, there is an equivalent stochastic policy.

We cannot directly apply Kuhn's theorem here, as we require a result for distributions over stochastic policies instead of deterministic policies. This is because such a result fits better with our remaining setup---for instance, the domain of the OP objective is the set of stochastic policies, and learning algorithms are defined as distributions over stochastic policies. Nevertheless, our proof is based on similar ideas as, for instance, the proof of Kuhn's theorem in \textcite{maschler_solan_zamir_2013}, after translating between the different formal frameworks of extensive-form games and Dec-POMDPs.

\co{Maybe in the following you can say again that more specifically $\Omega$ is the set of complete histories. (Right?)}
\jt{\(\Omega\) is just any set on which all of the random variables are defined, but indeed one could just define \(\Omega:=\HistorySet\)  and that would work. If you think this is better, I could explicitly define it that way.}

To define a policy \(\Policy^\Mixture\) that is equivalent to \(\Mixture\), we begin by defining a new measure space \((\PolicySet\times\Omega,\mathcal{F}\otimes\PowerSet(\Omega))\), which is the product space of the space of policies \((\PolicySet,\mathcal{F})\) with the Dec-POMDP environment \((\Omega,\PowerSet(\Omega))\). On this space, for a given distribution \(\Mixture\), we define a probability measure \(\Prob_{\Mixture}\) which represents the procedure outlined above, i.e., in which a policy \(\Policy\) is chosen according to \(\Mixture\), and then samples in \(\Omega\) are distributed according to \(\Prob_\Policy\). Formally, we define \(\Prob_\Mixture\) as the unique measure such that
\begin{equation}\label{defn-semidirect-product}
\Prob_\Mixture(\mathcal{Z}\times \mathcal{Q}):=\E_{\Policy\sim\Mixture}\left[\mathds{1}_{\mathcal{Z}}\Prob_\Policy(\mathcal{Q})\right]
\footnote{\(\Prob_\Mixture\) is the semidirect product of \(\Mixture\) with the Markov kernel \(\kappa(\cdot\mid \Policy):=\Prob_\Policy(\cdot)\).}
\end{equation}
for any measurable sets \(\mathcal{Z}\subseteq\PolicySet\) and \(\mathcal{Q}\subseteq\Omega\).
Note that the product sets \(\mathcal{Z}\times\mathcal{Q}\) for measurable \(\mathcal{Z}\subseteq\PolicySet\) and \(\mathcal{Q}\subseteq\Omega\) are a \(\pi\)-system and generate the product \(\sigma\)-Algebra \(\mathcal{F}\otimes\PowerSet(\Omega)\). Hence, by Carathéodory's extension theorem, there is a unique measure satisfying this definition \parencite[see][ch. 1]{williams1991probability}.

On this new product space \(\PolicySet\times\Omega\), define the random variable \(\PolicyLatent\) as the projection onto \(\PolicySet\). We can define histories \(\HistoryRV\) and all other random variables defined on the space \((\Omega,\PowerSet(\Omega))\) by composing them with the projection onto \(\Omega\) (for notational convenience, we denote these random variables using the same symbols in both spaces). 

\begin{remark}
\label{remark-conditional-probability}
Note that the conditional probability of a particular trajectory \(\History\) given the policy \(\PolicyLatent\) is just the probability of that history under \(\Prob_\PolicyLatent\), that is,
\begin{equation}\label{conditional-probability-mixture}
\Prob_\Mixture(\HistoryRV=\History\mid \PolicyLatent)=\Prob_\PolicyLatent(\HistoryRV=\History)
\end{equation}
for any \(\History\in\HistorySet\).
Here, the conditional probability is a random variable, defined via the conditional expectation
\[\Prob_\Mixture(\HistoryRV=\History\mid\PolicyLatent):=\E_\Mixture\left[\mathds{1}_{\HistoryRV=\History}\mid  \PolicyLatent\right].\]
Intuitively, for a given sample \((\Policy,\omega)\), the value of that random variable is the best estimate of the probability of \(\{\HistoryRV=\History\}\) given \(\PolicyLatent=\Policy\), ignoring \(\omega\). As, in general, \(\{\PolicyLatent=\Policy\}\) may have zero probability, it is impossible to define the conditional probability \(\Prob(\HistoryRV=\History\mid\PolicyLatent=\Policy)\) via \(\Prob(\HistoryRV=\History\mid\PolicyLatent=\PolicyLatent):=\frac{\Prob(\HistoryRV=\History,\PolicyLatent=\Policy)}{\Prob(\PolicyLatent=\Policy)}\). It is still possible to define the \emph{conditional expectation} \(\E_\Mixture\left[\mathds{1}_{\HistoryRV=\History}\mid  \PolicyLatent\right]\), though.

Here, we briefly give the definition of the conditional expectation and show that (\ref{conditional-probability-mixture}) is correct. For a reference on conditional expectations, refer to \parencite[][ch.~9]{williams1991probability}. Applied to our setup, the conditional expectation of \(\mathds{1}_{\HistoryRV=\History}\) given \(\PolicyLatent\) is any random variable (which can be shown to be almost surely unique), denoted by \(\E_\Mixture\left[\mathds{1}_{\HistoryRV=\History}\mid  \PolicyLatent\right]\), that is measurable with respect to \[\sigma(\PolicyLatent):=\{\PolicyLatent^{-1}(\mathcal{Z})\mid \mathcal{Z}\in\mathcal{F}\}=\{\mathcal{Z}\times\Omega\mid\mathcal{Z}\in\mathcal{F}\}\]
such that
\begin{equation}\label{eq:46}\E_{\Mixture}\left[\mathds{1}_{\mathcal{X}}\E_\Mixture\left[\mathds{1}_{\HistoryRV=\History} \mid \PolicyLatent\right]\right]
=\E_\Mixture\left[\mathds{1}_{\mathcal{X}}\mathds{1}_{\HistoryRV=\History}\right]
.\end{equation}
for all \(\mathcal{X}\in\sigma(\PolicyLatent)\). The fact that the random variable is measurable with respect to \(\sigma(\PolicyLatent)\) can be equivalently expressed as saying that it can be written as a function of \(\PolicyLatent\). Moreover, Equation~\ref{eq:46} says that the conditional expectation should represent correct averages of the random variable \(\mathds{1}_{\HistoryRV=\History}\) over the level-sets in \(\sigma(\PolicyLatent)\).

To show that (\ref{conditional-probability-mixture}) is correct, let \(\mathcal{X}\in\sigma(\PolicyLatent)\) arbitrary. 
Since \(\mathcal{X}\) is of the form \(\mathcal{Z}\times\Omega\), and \(\{\HistoryRV=\History\}=\PolicySet\times\mathcal{Q}\) for some \(\mathcal{Q}\subseteq\Omega\), it is \((\mathcal{Z}\times\Omega)\cap\{\HistoryRV=\History\}=\mathcal{Z}\times\mathcal{Q}\).
Using Equation~\ref{defn-semidirect-product}, it follows that
\begin{multline}
\E_\Mixture\left[\mathds{1}_{\mathcal{X}}\Prob_\PolicyLatent(\HistoryRV=\History)\right]
=
\E_\Mixture\left[\mathds{1}_{\mathcal{Z}\times\Omega}\Prob_\PolicyLatent(\mathcal{Q})\right]
=\E_{\Policy\sim \Mixture}\left[\mathds{1}_{\mathcal{Z}}(\Policy)\Prob_\Policy(\mathcal{Q})\right]
\\
\overset{(\ref{defn-semidirect-product})}{=}
\Prob_\Mixture(\mathcal{Z}\times \mathcal{Q})
=\E_\Mixture\left[\mathds{1}_{\PolicySet\times\Omega}\mathds{1}_{\mathcal{Z}\times\mathcal{Q}}\right]
=\E_\Mixture\left[\mathds{1}_{\mathcal{X}}\mathds{1}_{\HistoryRV=\History}\right].
\end{multline}
Hence, letting \(\E_\Mixture[\mathds{1}_{\HistoryRV=\History}\mid \PolicyLatent]:=\Prob_\PolicyLatent(\HistoryRV=\History)\) satisfies condition (\ref{eq:46}). \(\Prob_\PolicyLatent(\HistoryRV=\History)\) is also \(\sigma(\PolicyLatent)\)-measurable, since it is just a function of \(\PolicyLatent\).

We will repeatedly make use of (\ref{conditional-probability-mixture}) in the following, together with the tower property, which, applied to our case, says that
\begin{equation}\label{eq:equation-tower-prop-z}\Prob_\Mixture(\HistoryRV=\History)=\E_\Mixture\left[\Prob_\Mixture\left(\HistoryRV=\History\mid\PolicyLatent\right)\right]
\overset{(\ref{conditional-probability-mixture})}{=}
\E_\Mixture\left[\Prob_\PolicyLatent\left(\HistoryRV=\History\right)\right]
\end{equation}
for any history \(\History\in\HistorySet\) \parencite[see][ch.~9.7]{williams1991probability}.
\end{remark}

Using the measure space defined above, we now define a local policy \(\Policy_i^{\Mixture_i}\) for an agent \(i\in\PlayerSet\), corresponding to a distribution \(\Mixture_i\in\Delta(\PolicySet_i)\). For an action \(\Action_i\in\ActionSet_i\) and an action-observation history \( \AOHistory_{i,t}\in\AOHistorySet_{i,t}\), we define \(\Policy_i^{\Mixture_i}(a_i\mid \AOHistory_{i,t})\) as the probability that agent \(i\), who follows a policy that is sampled from \(\Mixture_i\), plays action \(\Action_i\), conditional on $\{\AOHistoryRV_{i,t}=\AOHistory_{i,t}\}$.

\begin{defn}\label{policy-mixture}Let \(i\in\PlayerSet\) and let \(\Mixture_i\in\Delta(\PolicySet_i)\) be a distribution over local policies of agent \(i\). We define the local policy \(\Policy_i^{\Mixture_i}\) corresponding to \(\Mixture_i\) in the following way. For \(\Action_i\in\ActionSet_i\), \(t\in\{0,\dotsc,\Tmax\}\) and \(\AOHistory_{i,t}\in\AOHistorySet_{i,t}\), let 
\begin{equation}
\Policy^{\Mixture_i}_i(a_i\mid \AOHistory_{i,t})
:= \Prob_{\Mixture_i\otimes\Mixture_{-i}}(\ActionRV_{i,t}=a_i\mid \AOHistoryRV_{i,t}=\AOHistory_{i,t}),\end{equation}
where \(\Mixture_{-i}\) is any distribution over \(\PolicySet_{-i}\) with independent local policies such that \[\Prob_{\Mixture_i\otimes\Mixture_{-i}}( \AOHistoryRV_{i,t}=\AOHistory_{i,t})>0.\] If no such distribution exists, we let \(\Policy^\Mixture_i(a_i\mid \AOHistory_{i,t}):=\frac{1}{|\ActionSet_i|}\).
\end{defn}

Note that if \(\Prob_{\Mixture_i\otimes\Mixture_{-i}}( \AOHistoryRV_{i,t}=\AOHistory_{i,t})=0\) for all the distributions \(\Mixture_{-i}\in\Delta(\PolicySet_{-i})\) over opponent policies, then agent \(i\)'s action-observation history \(\AOHistory_{i,t}\) is almost never reached, independent of the other agents' policies. In that case, we can define the policy arbitrarily and this will never matter for the distribution over histories under \(\Prob_{\Mixture_i\otimes\Mixture_{-i}}\), and as we will see, neither for the distribution under \(\Prob_{\Policy_i^{\Mixture_i},\Policy_{-i}}\) for arbitrary \(\Policy_{-i}\).
%

First, we need to make sure that \(\Policy_i^{\Mixture_i}\) is well-defined, i.e., it does not depend on the chosen distribution \(\Mixture_{-i}\). Unfortunately, the proof for the following Lemma is somewhat technical.

\begin{lem}\label{lemma-mixtures-factorize}
Let \(i\in\PlayerSet\), \(\Mixture_i\in\Delta(\PolicySet_i)\), \(t\in\{0,\dotsc,\Tmax\}\), \(\AOHistory_{i,t}\in\AOHistorySet_i\), and \(\Action_i\in\ActionSet_i\). Let \(\Mixture_{-i},\Mixture'_{-i}\in\Delta(\PolicySet_{-i})\) be any two distributions with independent local policies such that \(\Prob_{\Mixture_i\otimes\Mixture_{-i}}( \AOHistoryRV_{i,t}=\AOHistory_{i,t})>0\) and \(\Prob_{\Mixture_i\otimes\Mixture'_{-i}}( \AOHistoryRV_{i,t}=\AOHistory_{i,t})>0\). Then it is
\[\Prob_{\Mixture_i\otimes\Mixture_{-i}}(\ActionRV_{i,t}=a_i\mid \AOHistoryRV_{i,t}=\AOHistory_{i,t})
=\Prob_{\Mixture_i\otimes\Mixture'_{-i}}(\ActionRV_{i,t}=a_i\mid \AOHistoryRV_{i,t}=\AOHistory_{i,t}).\]
\end{lem}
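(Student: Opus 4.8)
The plan is to reduce the statement to a factorization property of the (single–policy) law of agent $i$'s action–observation history, and then exploit the independence built into $\Mixture_i\otimes\Mixture_{-i}$. Fix $\AOHistory_{i,t}$ and write $\AOHistory_{i,s}$ for its length-$s$ prefix and $a_{i,s}$ for the corresponding action, $s=0,\dots,t-1$; set $q_i(\Policy_i):=\prod_{s=0}^{t-1}\Policy_i(a_{i,s}\mid\AOHistory_{i,s})$ (an empty product when $t=0$). The key lemma to establish is that there is a quantity $r_i(\Policy_{-i})$, depending on $\Policy_{-i}$ and on the fixed datum $\AOHistory_{i,t}$ but not on $\Policy_i$, such that for every joint policy $\Policy=(\Policy_i,\Policy_{-i})\in\PolicySet$,
\[
\Prob_{\Policy}(\AOHistoryRV_{i,t}=\AOHistory_{i,t})=q_i(\Policy_i)\,r_i(\Policy_{-i}),\qquad
\Prob_{\Policy}(\AOHistoryRV_{i,t}=\AOHistory_{i,t},\,\ActionRV_{i,t}=a_i)=\Policy_i(a_i\mid\AOHistory_{i,t})\,q_i(\Policy_i)\,r_i(\Policy_{-i}).
\]

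To prove this I would expand $\Prob_{\Policy}(\AOHistoryRV_{i,t}=\AOHistory_{i,t})$ as the sum of $\Prob_{\Policy}(\HistoryRV_t=\History_t)$ over all histories $\History_t\in\HistorySet_t$ whose induced action–observation history for agent $i$ equals $\AOHistory_{i,t}$, using the explicit inductive formula for $\Prob_{\Policy}(\HistoryRV_t=\History_t)$ as the product of $b_0$, transition factors $P(s_s\mid s_{s-1},a_{s-1})$ and observation factors $O(o_s\mid s_s,a_{s-1})$ for $s=1,\dots,t$, policy factors $\prod_{j}\Policy_j(a_{j,s}\mid\AOHistory_{j,s})$ for $s=0,\dots,t$, and reward indicators (which merely force $r_s=\RewardFunction(s_s,a_s)$). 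The only factors involving $\Policy_i$ are $\Policy_i(a_{i,0}\mid\emptyset),\dots,\Policy_i(a_{i,t}\mid\AOHistory_{i,t})$; the first $t$ of these are pinned to $q_i(\Policy_i)$ by the constraint $\AOHistoryRV_{i,t}=\AOHistory_{i,t}$, and the last one, $\Policy_i(a_{i,t}\mid\AOHistory_{i,t})$, multiplies a remaining factor that does not depend on $a_{i,t}$ — because a length-$t$ history contains no transition, observation or other-agent action after step $t$, and $a_{i,t}$ only appears in $\Policy_i(a_{i,t}\mid\AOHistory_{i,t})$ and (together with a forced reward value) in the step-$t$ reward indicator. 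Summing over $a_{i,t}\in\ActionSet_i$ therefore contributes a factor $1$, leaving $q_i(\Policy_i)$ times a sum over states, the other agents' actions and observations, and rewards of a product of $b_0$, $P$, $O$ and $\Policy_j$-terms with $j\neq i$; this sum is the desired $r_i(\Policy_{-i})$. The second identity follows the same way, now also pinning $a_{i,t}=a_i$ so no summation over the last action occurs.

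Given the key lemma, I would conclude by conditioning on the sampled joint policy via the tower property (Remark~\ref{remark-conditional-probability}) and using that $\Mixture_i\otimes\Mixture_{-i}$ makes agent $i$'s policy independent of the others':
\[
\Prob_{\Mixture_i\otimes\Mixture_{-i}}(\AOHistoryRV_{i,t}=\AOHistory_{i,t})=\E_{\Policy_i\sim\Mixture_i}\!\big[q_i(\Policy_i)\big]\cdot\E_{\Policy_{-i}\sim\Mixture_{-i}}\!\big[r_i(\Policy_{-i})\big],
\]
\[
\Prob_{\Mixture_i\otimes\Mixture_{-i}}(\AOHistoryRV_{i,t}=\AOHistory_{i,t},\,\ActionRV_{i,t}=a_i)=\E_{\Policy_i\sim\Mixture_i}\!\big[\Policy_i(a_i\mid\AOHistory_{i,t})\,q_i(\Policy_i)\big]\cdot\E_{\Policy_{-i}\sim\Mixture_{-i}}\!\big[r_i(\Policy_{-i})\big].
\]
The hypothesis $\Prob_{\Mixture_i\otimes\Mixture_{-i}}(\AOHistoryRV_{i,t}=\AOHistory_{i,t})>0$ forces both expectations on the right of the first line to be strictly positive, so in the quotient defining the conditional probability the factor $\E_{\Policy_{-i}\sim\Mixture_{-i}}[r_i(\Policy_{-i})]$ cancels and
\[
\Prob_{\Mixture_i\otimes\Mixture_{-i}}(\ActionRV_{i,t}=a_i\mid\AOHistoryRV_{i,t}=\AOHistory_{i,t})=\frac{\E_{\Policy_i\sim\Mixture_i}\!\big[\Policy_i(a_i\mid\AOHistory_{i,t})\,q_i(\Policy_i)\big]}{\E_{\Policy_i\sim\Mixture_i}\!\big[q_i(\Policy_i)\big]},
\]
an expression that does not involve $\Mixture_{-i}$; running the same computation with $\Mixture'_{-i}$ yields the identical right-hand side, which proves the lemma. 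The main obstacle is the bookkeeping in the key lemma — isolating precisely the $\Policy_i$-dependent factors in the history-probability expansion and justifying that the final-step action $a_{i,t}$ sums out cleanly (equivalently, that conditioning on its own action–observation history makes agent $i$'s law over the unobserved part of the history independent of its own policy). An induction on $t$ is an alternative route, but it would require establishing the same conditional-independence fact at each step, so the direct expansion seems the cleaner way to package it.
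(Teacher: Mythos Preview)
Your proposal is correct and takes a genuinely different, more elementary route than the paper. The paper argues via the Bayesian-network structure of the process on $\PolicySet\times\Omega$: it invokes $d$-separation to show that $Z_i$ and $Z_{-i}$ are conditionally independent given $\AOHistoryRV_{i,t}$, extracts from this an almost-sure identity of the form $\E_{\Policy_i\sim\Mixture_i}[\mathds{1}_{\mathcal{Z}_i}\Prob_{\Policy_i,Z_{-i}}(\AOHistoryRV_{i,t}=\AOHistory_{i,t})]=\Prob_{\Mixture}(Z_i\in\mathcal{Z}_i\mid\AOHistoryRV_{i,t}=\AOHistory_{i,t})\,\E_{\Policy_i\sim\Mixture_i}[\Prob_{\Policy_i,Z_{-i}}(\AOHistoryRV_{i,t}=\AOHistory_{i,t})]$, and then, to compare $\Mixture_{-i}$ with $\Mixture'_{-i}$, introduces an auxiliary mixture $\MixtureHat=\otimes_j(\tfrac12\Mixture_j+\tfrac12\Mixture'_j)$ and an absolute-continuity argument to locate a common $\Policy_{-i}$ at which one can divide. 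Your approach bypasses all of this: the single-policy factorization $\Prob_\Policy(\AOHistoryRV_{i,t}=\AOHistory_{i,t})=q_i(\Policy_i)\,r_i(\Policy_{-i})$ is the same conditional-independence fact expressed algebraically, but once it is stated at the level of deterministic $\Policy$ rather than random $Z$, integrating against the product measure $\Mixture_i\otimes\Mixture_{-i}$ gives the cancellation immediately, with no need for $d$-separation, for the auxiliary $\MixtureHat$, or for finding a good $\Policy_{-i}$ pointwise. What the paper's route buys is that the conditional-independence structure is made explicit and reusable (it is invoked again in the proof of Proposition~\ref{mixture-lemma}); what your route buys is a shorter, self-contained argument that also produces the explicit formula $\Prob_{\Mixture_i\otimes\Mixture_{-i}}(\ActionRV_{i,t}=a_i\mid\AOHistoryRV_{i,t}=\AOHistory_{i,t})=\E_{\Mixture_i}[\Policy_i(a_i\mid\AOHistory_{i,t})q_i(\Policy_i)]/\E_{\Mixture_i}[q_i(\Policy_i)]$.
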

\begin{proof}
In the following, let \(i\in\PlayerSet, t\in\{0,\dotsc,\Tmax\}\) be fixed.
Our goal is to find an expression for the distribution of \(\PolicyLatent_i\) given \(\AOHistoryRV_{i,t}\) that only depends on \(\Mixture_i\). If we can do that, we can also show that the probability of a particular action chosen by an agent is independent of the distributions over other agents' policies. To begin, we analyze the joint distribution of \(\PolicyLatent_i\) and \(\AOHistoryRV_{i,t}\) for an arbitrary distribution \(\Mixture\in\Delta(\PolicySet)\) with independent local policies. Note that by assumption, the \(Z_j\) are independent for \(j\in\PlayerSet\) under \(\Mixture\).
We now show that conditioning on \(\AOHistoryRV_{i,t}\) still leaves \(Z_i\) independent from \(Z_{-i}\).

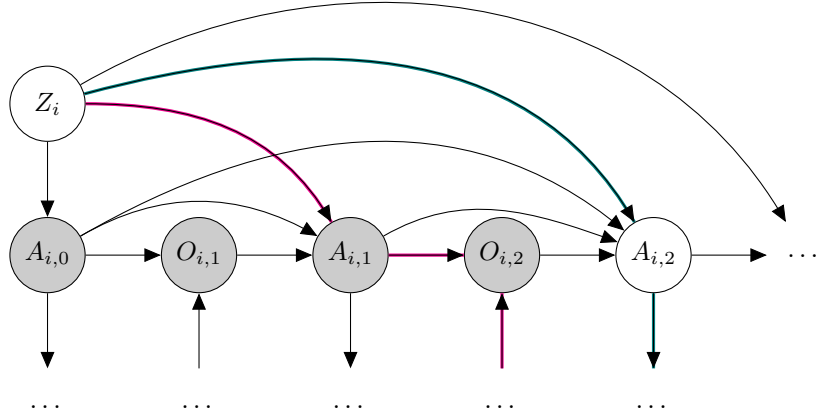
\begin{figure}
    \centering
            \begin{tikzpicture}
            \node[draw, circle, text centered, inner sep=0pt,minimum size=1cm] (z) {$Z_i$};
            
            \node[draw, circle, below=of z, text centered,fill=black!20, inner sep=0pt,minimum size=1cm] (a10) {$A_{i,0}$};
            \node[below=of a10, text centered, inner sep=0pt,minimum size=1cm] (dots1) {$\dots$};
            
            \node[draw, circle, right=of a10, text centered,fill=black!20, inner sep=0pt,minimum size=1cm] (o11) {$O_{i,1}$};
            \node[below=of o11, text centered, inner sep=0pt,minimum size=1cm] (dots2) {$\dots$};
            
            \node[draw, circle, right=of o11, text centered,fill=black!20, inner sep=0pt,minimum size=1cm] (a11) {$A_{i,1}$};
            \node[below=of a11, text centered, inner sep=0pt,minimum size=1cm] (dots3) {$\dots$};
            
            \node[draw, circle, right=of a11, text centered,fill=black!20, inner sep=0pt,minimum size=1cm] (o12) {$O_{i,2}$};
            \node[circle, below=of o12, text centered, inner sep=0pt,minimum size=1cm] (dots4) {$\dots$};
            
            \node[draw, circle, right=of o12, text centered, inner sep=0pt,minimum size=1cm] (a12) {$A_{i,2}$};
            \node[circle, below=of a12, text centered, inner sep=0pt,minimum size=1cm] (dots5) {$\dots$};
            \node[circle, right=of a12, text centered, inner sep=0pt,minimum size=1cm] (dots6) {$\dots$};

            
            \draw[magenta, very thick](z) to [out=0, in=120] (a11);
            \draw[magenta, very thick](a11)--(o12);
            \draw[magenta, very thick](o12)--(dots4);
            \draw[teal, very thick] (z) to [out=15,in=120]  (a12);
            \draw[teal, very thick] (a12) -- (dots5);
            
            \edge {z} {a10};
            \draw[-> ] (z) to [out=0,in=120]  (a11);
            \draw[-> ] (z) to [out=15,in=120]  (a12);
            \draw[-> ] (z) to [out=30,in=120]  (dots6);      
            
            \draw[-> ] (a10) --  (o11); 
            \draw[-> ] (a10) --  (dots1);    
            
            \draw[-> ] (o11) --  (a11);    
            \draw[-> ] (dots2) --  (o11);    
            
            \draw[-> ] (a11) --  (o12);    
            \draw[-> ] (a11) --  (dots3);    
            
            \draw[-> ] (dots4) --  (o12);  
            \draw[-> ] (o12) --  (a12);  
            \draw[-> ] (a12) -- (dots5);
            \draw[-> ] (a12) -- (dots6);
            
            \draw[-> ] (a10) to [out=30,in=150] (a11);
            \draw[-> ] (a10) to [out=30,in=140] (a12);
            \draw[-> ] (a11) to [out=30,in=160] (a12);

        \end{tikzpicture}
    \caption{Part of a Bayesian graph of the random variables on the space \(\PolicySet\times\Omega\). Here, the gray-marked nodes are part of the action-observation history \(\AOHistoryRV_{i,2}\). Conditional on \(\AOHistoryRV_{i,2}\), there is an unblocked path, marked in teal, from \(Z_i\) to \(A_{i,2}\) and to nodes below \(A_{i,2}\) not displayed here, making them \(d\)-connected and thus dependent. The path marked in a lighter magenta, on the other hand, is blocked, illustrating that the part of the graph below \(O_{i,2}\) may be independent of \(Z_i\).
    }
    \label{fig:bayesian_graph}
\end{figure}

To see this, one can consider a Bayesian graph of the random variables on the space \(\PolicySet\times\Omega\). For a reference on Bayesian graphs and the concepts discussed below, refer to \textcite[][ch.~1.2]{Pearl2009-bb}. In Figure~\ref{fig:bayesian_graph}, we have displayed a part of this Bayesian graph with nodes for the agent \(i\), indicating left-out parts of the graph with dots. The arrows in the graph illustrate the dependence relationships between the different variables: if there is an arrow from one node to another, this means that the other node depends on that node. Nodes belonging to the action-observation history \(\AOHistoryRV_{i,2}\) of agent \(i\) are marked in gray.

Given such a Bayesian graph, the \(d\)-separation criterion tells us which variables are dependent after conditioning on a set \(\mathcal{V}\) of variables. The criterion specifies valid, ``unblocked'' paths in the graph, depending on the graph structure and the nodes that are being conditioned on. The \(d\)-separation criterion says that two variables are dependent conditional on the variables in \(\mathcal{V}\) if and only if there is an unblocked path in the graph between them; the variables are then said to be \(d\)-connected. If there is no path, then the variables are \(d\)-separated.

An unblocked path can contain a \emph{chain} \(X\rightarrow W\rightarrow Y\) or a \emph{fork} \(X\leftarrow W\rightarrow Y\) if \(W\) is not being conditioned on. If it contains a \emph{collider} \(X\rightarrow W\leftarrow Y\), on the other hand, i.e., the two incident edges to \(W\) in the path are both directed towards the node, then the path is blocked, unless \(W\) or a descendant of \(W\) in the graph is in \(\mathcal{V}\). For instance, the path that is marked in a lighter magenta in Figure~\ref{fig:bayesian_graph} is blocked---the path cannot go from \(Z_i\) over \(\ActionRV_{i,1}\) to \(\ObservationRV_{i,2}\). An unblocked path is marked in teal.

\jt{changed the following a bit}
Using the \(d\)-separation criterion, one can tell that \(Z_i\) may be \(d\)-separated from left-out parts of the graph in Figure~\ref{fig:bayesian_graph} indicated by the dots below \(\ActionRV_{i,0}\), for instance, but that it is \(d\)-connected to some variables in the part below \(\ActionRV_{i,2}\). We do not work this out here completely, but once considering the entire graph, one can see that there is no unblocked path from \(Z_i\) to the \(Z_{-i}\), because such a path would inevitably have to traverse a collider that is not being conditioned on and that has no descendants that are being conditioned on (for instance, the observations of all other agents are colliders that connect the \(Z_{-i}\) with \(Z_i\)). Generalizing from the example of \(\AOHistory_{i,2}\), we can conclude that \(Z_i\) and \(Z_{-i}\) are independent given \(\AOHistoryRV_{i,t}\).

Now let \(\mathcal{Z}_j\subseteq \PolicySet_j\) be measurable sets for \(j\in\PlayerSet\), and let \(\AOHistory_{i,t}\in\AOHistorySet_{i,t}\) arbitrary. Then, using the above, it follows that
\begin{align}
&\E_{\Policy_{-i}\sim\Mixture_{-i}}[\mathds{1}_{\Policy_{-i}\in \mathcal{Z}_{-i}}\E_{\PolicyLatent_i\sim\mu_i}[\mathds{1}_{\Policy_i\in \mathcal{Z}_i}\Prob_\Policy(\AOHistoryRV_{i,t}=\AOHistory_{i,t})]]
\\\label{eq:27b}
&=
\Prob_{\Mixture}(Z_i\in \mathcal{Z}_i,\AOHistoryRV_{i,t}=\AOHistory_{i,t},\PolicyLatent_{-i}\in \mathcal{Z}_{-i})
\\
&=
\Prob_{\Mixture}(Z_i\in \mathcal{Z}_i\mid\AOHistoryRV_{i,t}=\AOHistory_{i,t},\PolicyLatent_{-i}\in C_{-i})
\Prob_{\Mixture}(\AOHistoryRV_{i,t}=\AOHistory_{i,t},\PolicyLatent_{-i}\in \mathcal{Z}_{-i})
\\
&=\label{eq:27a}
\Prob_{\Mixture}(Z_i\in \mathcal{Z}_i\mid\AOHistoryRV_{i,t}=\AOHistory_{i,t})
\Prob_{\Mixture}(\AOHistoryRV_{i,t}=\AOHistory_{i,t},\PolicyLatent_{-i}\in \mathcal{Z}_{-i})
\\\label{eq:27c}
&=
\E_{\Policy_{-i}\sim\Mixture_{-i}}[\mathds{1}_{\Policy_{-i}\in \mathcal{Z}_{-i}}\Prob_{\Mixture}(Z_i\in \mathcal{Z}_i\mid\AOHistoryRV_{i,t}=\AOHistory_{i,t})\E_{\Policy_i\sim\mu_i}[\Prob_\Policy(\AOHistoryRV_{i,t}=\AOHistory_{i,t})]],
\end{align}
where we use the argument about conditional independence in (\ref{eq:27a}) and the definition of \(\Prob_\Mixture\) from Equation~\ref{defn-semidirect-product} in (\ref{eq:27b}) and (\ref{eq:27c}).

Since the sets \(\mathcal{Z}_{j}\) for \(j\in\PlayerSet\setminus\{i\}\) were arbitrary, it follows that \(\Mixture_{-i}\)-almost surely, it is \begin{multline}\label{equation-marginals-are-equal-1}\E_{\Policy_i\sim\mu_i}[\mathds{1}_{\Policy_i\in \mathcal{Z}_i}\Prob_{\Policy_i,Z_{-i}}(\AOHistoryRV_{i,t}=\AOHistory_{i,t})]
\\=
\Prob_{\Mixture}(Z_i\in \mathcal{Z}_i\mid\AOHistoryRV_{i,t}=\AOHistory_{i,t})\E_{\Policy_i\sim\mu_i}[\Prob_{\Policy_i,Z_{-i}}(\AOHistoryRV_{i,t}=\AOHistory_{i,t})].\end{multline}
Moreover, since \(\Mixture\) was arbitrary, Equation~\ref{equation-marginals-are-equal-1} holds for any distribution with independent local policies. If we divide by the term \(\E_{\Policy_i\sim\mu_i}[\Prob_{\Policy_i,Z_{-i}}(\AOHistoryRV_{i,t}=\AOHistory_{i,t})]\), this becomes
\begin{equation}\label{eq:200}
\Prob_{\Mixture}(Z_i\in \mathcal{Z}_i\mid\AOHistoryRV_{i,t}=\AOHistory_{i,t})
=\frac{\E_{\Policy_i\sim\mu_i}[\mathds{1}_{\Policy_i\in \mathcal{Z}_i}\Prob_{\Policy_i,Z_{-i}}(\AOHistoryRV_{i,t}=\AOHistory_{i,t})]}{
\E_{\Policy_i\sim\mu_i}[\Prob_{\Policy_i,Z_{-i}}(\AOHistoryRV_{i,t}=\AOHistory_{i,t})]},\end{equation}
which gives us a formula for the distribution of \(\PolicyLatent_i\) given \(\AOHistoryRV_{i,t}\) under the measure \(\Prob_\Mixture\) that is independent of the distributions \(\Mixture_{-i}\). But to be able to do so, we have to find a value for \(\PolicyLatent_{-i}\) such that \(\E_{\Policy_i\sim\mu_i}[\Prob_{\Policy_i,Z_{-i}}(\AOHistoryRV_{i,t}=\AOHistory_{i,t})]\) is nonzero under \(\Mixture\).

Now let \(\Mixture_i\in\Delta(\PolicySet_i)\) and let \(\Mixture_{-i},\Mixture'_{-i}\in\Delta(\PolicySet_{-i})\) be any two distributions with independent local policies such that \(\Prob_{\Mixture_i\otimes\Mixture_{-i}}( \AOHistoryRV_{i,t}=\AOHistory_{i,t})>0\) and \(\Prob_{\Mixture_i\otimes\Mixture'_{-i}}( \AOHistoryRV_{i,t}=\AOHistory_{i,t})>0\). Define \(\Mixture_i':=\Mixture_i\), \(\Mixture:=\Mixture_i\otimes\Mixture_{-i}\), and \(\Mixture':=\Mixture_i\otimes\Mixture'_{-i}\). 
Our goal is to show that
\[\Prob_{\Mixture}(\ActionRV_{i,t}=a_i\mid \AOHistoryRV_{i,t}=\AOHistory_{i,t})
=\Prob_{\Mixture'}(\ActionRV_{i,t}=a_i\mid \AOHistoryRV_{i,t}=\AOHistory_{i,t}).\]
To that end, we define a third distribution \(\MixtureHat:=\otimes_{j\in\PlayerSet}(\frac{1}{2}\Mixture_j+\frac{1}{2}\Mixture_j')\). Apparently, it is then \(\Mixture_i=\MixtureHat_i=\Mixture'_i\) and also \(\MixtureHat\) has independent local policies.
We now prove separately for \(\Mixture\) and \(\Mixture'\) that the distribution of \(\PolicyLatent_i\) given \(\{\AOHistoryRV_{i,t}=\AOHistory_{i,t}\}\) under \(\Mixture\) respectively \(\Mixture'\) is equal to the one under \(\MixtureHat\). To do so, we use that \(\Mixture\) and \(\Mixture'\) are absolutely continuous with respect to \(\MixtureHat\) to find desired values for \(\PolicyLatent_{-i}\) such that we can apply Equation~\ref{eq:200}.

First, let \(\mathcal{Z}_i\in\PolicySet_i\) be an arbitrary measurable set. Using Equation~\ref{equation-marginals-are-equal-1}, we can find measurable sets \(\mathcal{Z}_{-i},\hat{\mathcal{Z}}_{-i}\subseteq\PolicySet_{-i}\) such that \(\Mixture_{-i}(\mathcal{Z}_{-i})=1=\MixtureHat_{-i}(\hat{\mathcal{Z}}_{-i})\), and such that
\begin{multline}\label{eq:28a}\E_{\Policy_i\sim\mu_i}[\mathds{1}_{\Policy_i\in \mathcal{Z}_i}\Prob_{\Policy_i,\Policy_{-i}}(\AOHistoryRV_{i,t}=\AOHistory_{i,t})]
\\=
\Prob_{\Mixture}(Z_i\in \mathcal{Z}_i\mid\AOHistoryRV_{i,t}=\AOHistory_{i,t})\E_{\Policy_i\sim\mu_i}[\Prob_{\Policy_i,\Policy_{-i}}(\AOHistoryRV_{i,t}=\AOHistory_{i,t})].\end{multline}
for any \(\Policy_{-i}\in \mathcal{Z}_{-i}\)
and
\begin{multline}\label{eq:28b}\E_{\Policy_i\sim\MixtureHat_i}[\mathds{1}_{\Policy_i\in \mathcal{Z}_i}\Prob_{\Policy_i,\PolicyHat_{-i}}(\AOHistoryRV_{i,t}=\AOHistory_{i,t})]
\\=
\Prob_{\hat{\Mixture}}(Z_i\in \mathcal{Z}_i\mid\AOHistoryRV_{i,t}=\AOHistory_{i,t})\E_{\Policy_i\sim\mu_i}[\Prob_{\Policy_i,\PolicyHat_{-i}}(\AOHistoryRV_{i,t}=\AOHistory_{i,t})].\end{multline}
for any \(\PolicyHat_{-i}\in \hat{\mathcal{Z}}_{-i}\).

Next, it follows that \(\Prob_\Mixture(\{Z_{-i}\in \mathcal{Z}_{-i}\}\cap\{\AOHistoryRV_{i,t}=\AOHistory_{i,t}\})>0\),
which by definition of \(\MixtureHat\) implies
\(\Prob_{\MixtureHat}(\{Z_{-i}\in \mathcal{Z}_{-i}\}\cap\{\AOHistoryRV_{i,t}=\AOHistory_{i,t}\})>0\)
and thus by definition of \(\hat{\mathcal{Z}}_{-i}\) also \(\Prob_{\MixtureHat}(\{Z_{-i}\in \mathcal{Z}_{-i}\cap\hat{\mathcal{Z}}_{-i}\}\cap\{\AOHistoryRV_{i,t}=\AOHistory_{i,t}\})>0\). Since 
\begin{multline}\Prob_{\MixtureHat}(\{Z_{-i}\in \mathcal{Z}_{-i}\cap\hat{\mathcal{Z}}_{-i}\}\cap\{\AOHistoryRV_{i,t}=\AOHistory_{i,t}\})
\\
=\E_{\Policy_{-i}\sim\MixtureHat_{-i}}\left[\mathds{1}_{\mathcal{Z}_{-i}\cap\hat{\mathcal{Z}}_{-i}}(\Policy_{-i})\E_{\Policy_i\sim\MixtureHat_i}[\Prob_{\Policy_i,\Policy_{-i}}(\AOHistoryRV_{i,t}=\AOHistory_{i,t})]\right],\end{multline}
there must be
\(\Policy_{-i}\in \Proj_{\PolicySet_{-i}}(\{Z_{-i}\in \mathcal{Z}_{-i}\cap\hat{\mathcal{Z}}_{-i}\}\cap\{\AOHistoryRV_{i,t}=\AOHistory_{i,t}\})\)
such that
\[\E_{\Policy_i\sim\MixtureHat_i}[\Prob_{\Policy_i,\Policy_{-i}}(\AOHistoryRV_{i,t}=\AOHistory_{i,t})]>0.\]

Lastly, using that \(\MixtureHat_i=\Mixture_i\), it follows that
\begin{equation}\E_{\Policy_i\sim\Mixture_i}[\Prob_{\Policy_i,\Policy_{-i}}(\AOHistoryRV_{i,t}=\AOHistory_{i,t})]
=\E_{\Policy_i\sim\MixtureHat_i}[\Prob_{\Policy_i,\Policy_{-i}}(\AOHistoryRV_{i,t}=\AOHistory_{i,t})]>0.\end{equation}
Hence, we can use Equations~\ref{eq:28a} and \ref{eq:28b} to conclude that
\begin{align}
    \Prob_{\Mixture}(Z_i\in \mathcal{Z}_i\mid\AOHistoryRV_{i,t}=\AOHistory_{i,t})
&\overset{\text{(\ref{eq:28a})}}{=}\frac{
    \E_{\Policy_i\sim\mu_i}[\mathds{1}_{\Policy_i\in \mathcal{Z}_i}\Prob_{\Policy_i,\Policy_{-i}}(\AOHistoryRV_{i,t}=\AOHistory_{i,t})]
}{
    \E_{\Policy_i\sim\mu_i}[\Prob_{\Policy_i,\Policy_{-i}}(\AOHistoryRV_{i,t}=\AOHistory_{i,t})]
}
\\
&\overset{\Mixture_i=\MixtureHat_i}{=}\frac{
    \E_{\Policy_i\sim\MixtureHat_i}[\mathds{1}_{\Policy_i\in \mathcal{Z}_i}\Prob_{\Policy_i,\Policy_{-i}}(\AOHistoryRV_{i,t}=\AOHistory_{i,t})]
}{
    \E_{\Policy_i\sim\MixtureHat_i}[\Prob_{\Policy_i,\Policy_{-i}}(\AOHistoryRV_{i,t}=\AOHistory_{i,t})]
}
\\
&\overset{\text{(\ref{eq:28b})}}{=}
\Prob_{\MixtureHat}(Z_i\in \mathcal{Z}_i\mid\AOHistoryRV_{i,t}=\AOHistory_{i,t}).
\end{align}

Now note that one can make an exactly analogous argument for \(\Mixture'\) and \(\MixtureHat\), potentially using a different \(\Policy_{-i}\). Hence, it follows that
\begin{equation}\label{equation-marginals-are-equal-2}
\Prob_{\Mixture}(Z_i\in \mathcal{Z}_i\mid\AOHistoryRV_{i,t}=\AOHistory_{i,t})=\Prob_{\MixtureHat}(Z_i\in \mathcal{Z}_i\mid\AOHistoryRV_{i,t}=\AOHistory_{i,t})=\Prob_{\Mixture'}(Z_i\in \mathcal{Z}_i\mid\AOHistoryRV_{i,t}=\AOHistory_{i,t}).
\end{equation}
Since \(\mathcal{Z}_i\in\PolicySet_i\) was arbitrary, it follows that the distribution of \(Z_i\) given \(\{\AOHistoryRV_{i,t}=\AOHistory_{i,t}\}\) is equal under \(\Prob_\Mixture\) and \(\Prob_{\Mixture'}\).

To conclude the proof, we can use this to show that also the distribution of actions under \(\Prob_\Mixture\) and \(\Prob_{\Mixture'}\) are equal, conditional on \(\{\AOHistoryRV_{i,t}=\AOHistory_{i,t}\}\). For \(\Action_i\in\ActionSet_i\), it is
\begin{align}
    \label{eq:32a}
    \Prob_{\Mixture}(\ActionRV_{i,t}=\Action_i\mid \AOHistoryRV_{i,t}=\AOHistory_{i,t})
    &=\E_{\Mixture}\left[\Prob_\Mixture(\ActionRV_{i,t}=\Action_i\mid\PolicyLatent, \AOHistoryRV_{i,t})\mid \AOHistoryRV_{i,t}=\AOHistory_{i,t}\right]
    \\\label{eq:32c}
    &=\E_{\Mixture}\left[\Prob_\PolicyLatent(\ActionRV_{i,t}=\Action_i\mid \AOHistoryRV_{i,t})\mid \AOHistoryRV_{i,t}=\AOHistory_{i,t}\right]
    \\&=\E_{\Mixture}\left[\PolicyLatent_i(\Action_i\mid \AOHistory_{i,t})\mid \AOHistoryRV_{i,t}=\AOHistory_{i,t}\right]
    \\
    &\overset{(\ref{equation-marginals-are-equal-2})}{=}
    \E_{\Mixture'}\left[\PolicyLatent_i(\Action_i\mid \AOHistory_{i,t})\mid \AOHistoryRV_{i,t}=\AOHistory_{i,t}\right]
    \\&=\E_{\Mixture'}\left[\Prob_\PolicyLatent(\ActionRV_{i,t}=\Action_i\mid \AOHistoryRV_{i,t})\mid \AOHistoryRV_{i,t}=\AOHistory_{i,t}\right]
    \\
    \label{eq:32d}
    &=\E_{\Mixture'}\left[\Prob_{\Mixture'}(\ActionRV_{i,t}=\Action_i\mid \PolicyLatent,\AOHistoryRV_{i,t})\mid \AOHistoryRV_{i,t}=\AOHistory_{i,t}\right]
    \\
    \label{eq:32b}
    &=\Prob_{\Mixture'}(\ActionRV_{i,t}=\Action_i\mid \AOHistoryRV_{i,t}=\AOHistory_{i,t}),
\end{align}
where we have used the tower property in (\ref{eq:32a}) and (\ref{eq:32b}), and Equation~\ref{conditional-probability-mixture} in (\ref{eq:32c}) and (\ref{eq:32d}). This is what we wanted to show.
\end{proof}


In the following, we write \(\Policy^\Mixture:=(\Policy_i^{\Mixture_i})_{i\in\PlayerSet}\) for the joint policy corresponding to a distribution \(\Mixture\) with independent local policies.
Our next goal is to prove that the distribution over histories is the same under \(\Prob_{\Policy^\Mixture}\) as under \(\Prob_{\Mixture}\).

\begin{prop}\label{mixture-lemma}
Consider any distribution \(\Mixture\in\Delta(\PolicySet)\) with independent local policies. Let \(\Policy^\Mixture\) be the joint policy corresponding to \(\Mixture\), as defined above. Then the history \(\HistoryRV\) has the same distribution under \(\Prob_\Mixture\) as under \(\Prob_{\Policy^\Mixture}\). In particular,
 it is \(J^D(\Mixture)=J^D(\Policy^\Mixture)\).
\end{prop}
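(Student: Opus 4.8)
The plan is to prove the stronger statement that $\HistoryRV$ has the same image distribution under $\Prob_\Mixture$ as under $\Prob_{\Policy^\Mixture}$, and then deduce the equality of expected returns. I would argue by induction on the history length $t\in\{0,\dotsc,\Tmax\}$, showing $\Prob_\Mixture(\HistoryRV_t=\History_t)=\Prob_{\Policy^\Mixture}(\HistoryRV_t=\History_t)$ for every $\History_t\in\HistorySet_t$. Both sides admit the usual one-step factorization: writing $\History_t=(\History_{t-1},\State_t,\Observation_t,\Action_t,\Reward_t)$ with $\History_{t-1}$ the length-$(t-1)$ prefix and $\AOHistory_{i,t}$ the action--observation history of agent $i$ induced by $\History_{t-1}$ and $\Observation_{i,t}$, one has
\[
\Prob_{\Policy^\Mixture}(\HistoryRV_t=\History_t)=\Prob_{\Policy^\Mixture}(\HistoryRV_{t-1}=\History_{t-1})\,P(\State_t\mid\State_{t-1},\Action_{t-1})\,O(\Observation_t\mid\State_t,\Action_{t-1})\,\delta_{\RewardFunction(\State_t,\Action_t),\Reward_t}\prod_{i\in\PlayerSet}\Policy_i^{\Mixture_i}(\Action_{i,t}\mid\AOHistory_{i,t}),
\]
while the tower property (Equation~\ref{eq:equation-tower-prop-z}) gives $\Prob_\Mixture(\HistoryRV_t=\History_t)=\E_{\PolicyLatent\sim\Mixture}[\Prob_\PolicyLatent(\HistoryRV_t=\History_t)]$, and each $\Prob_\PolicyLatent(\HistoryRV_t=\History_t)$ factorizes in the same way with $\PolicyLatent_i(\Action_{i,t}\mid\AOHistory_{i,t})$ in place of $\Policy_i^{\Mixture_i}(\Action_{i,t}\mid\AOHistory_{i,t})$. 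If $\Prob_\Mixture(\HistoryRV_{t-1}=\History_{t-1})=0$ or any of the environment factors $P(\State_t\mid\cdot)$, $O(\Observation_t\mid\cdot)$, $\delta_{\RewardFunction(\State_t,\Action_t),\Reward_t}$ vanishes, both sides are $0$; otherwise the inductive hypothesis matches the prefix probabilities, the environment factors cancel, and it remains to prove
\[
\Prob_\Mixture(\ActionRV_t=\Action_t\mid\HistoryRV_{t-1}=\History_{t-1},\StateRV_t=\State_t,\ObservationRV_t=\Observation_t)=\prod_{i\in\PlayerSet}\Policy_i^{\Mixture_i}(\Action_{i,t}\mid\AOHistory_{i,t}).
\]

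This last identity is the crux, and the step I expect to be the main obstacle; it requires a conditional-independence analysis of the Bayesian network on $\PolicySet\times\Omega$ of the same flavour as the proof of Lemma~\ref{lemma-mixtures-factorize}. Conditioning on $\HistoryRV_{t-1}=\History_{t-1}$ fixes every $\AOHistoryRV_{j,t-1}$, and adding $\ObservationRV_t=\Observation_t$ fixes $\AOHistoryRV_{i,t}=\AOHistory_{i,t}$ for all $i$; moreover $\StateRV_t$ and $\ObservationRV_t$ are, given $\HistoryRV_{t-1}$, conditionally independent of $\PolicyLatent$ (they are sampled from $P$ and $O$, which do not involve $\PolicyLatent$), so the posterior of $\PolicyLatent$ under the full conditioning event agrees with its posterior given $\HistoryRV_{t-1}=\History_{t-1}$ alone. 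A $d$-separation argument as in Figure~\ref{fig:bayesian_graph} then shows this posterior still has independent local policies, since no unblocked path connects $\PolicyLatent_i$ to $\PolicyLatent_{-i}$. As $\ActionRV_{i,t}\sim\PolicyLatent_i(\cdot\mid\AOHistoryRV_{i,t})$ depends only on $\PolicyLatent_i$ and on the now-fixed $\AOHistory_{i,t}$, the actions $\ActionRV_{i,t}$ are conditionally independent, and each satisfies $\Prob_\Mixture(\ActionRV_{i,t}=\Action_{i,t}\mid\HistoryRV_{t-1}=\History_{t-1},\StateRV_t=\State_t,\ObservationRV_t=\Observation_t)=\Prob_\Mixture(\ActionRV_{i,t}=\Action_{i,t}\mid\AOHistoryRV_{i,t}=\AOHistory_{i,t})$, because conditioning on $\AOHistoryRV_{i,t}=\AOHistory_{i,t}$ already carries all the agent-$i$ information in $(\History_{t-1},\Observation_{i,t})$ relevant for $\PolicyLatent_i$. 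Finally, since $\Prob_\Mixture(\HistoryRV_{t-1}=\History_{t-1})>0$ and the $P,O$ factors are positive, $\Prob_\Mixture(\AOHistoryRV_{i,t}=\AOHistory_{i,t})>0$, so the marginal of $\Mixture$ on $\PolicySet_{-i}$ is a legitimate witness in Definition~\ref{policy-mixture}, and thus $\Prob_\Mixture(\ActionRV_{i,t}=\Action_{i,t}\mid\AOHistoryRV_{i,t}=\AOHistory_{i,t})=\Policy_i^{\Mixture_i}(\Action_{i,t}\mid\AOHistory_{i,t})$, with Lemma~\ref{lemma-mixtures-factorize} ensuring this does not depend on the chosen witness. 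Multiplying over $i$ yields the identity and closes the induction; the base case $t=0$ is the same computation with $\AOHistory_{i,0}=\emptyset$ and no environment factors.

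For the ``in particular'' part, the return $\sum_{t=0}^{\Tmax}\RewardRV_t$ is a function of $\HistoryRV$, so by the definition of $J^D(\Mixture)$ (Equation~\ref{expected-return-mixture}) and the tower property, $J^D(\Mixture)=\E_{\PolicyLatent\sim\Mixture}[J^D(\PolicyLatent)]=\E_{\PolicyLatent\sim\Mixture}\bigl[\E_\PolicyLatent[\sum_{t}\RewardRV_t]\bigr]=\E_\Mixture[\sum_{t}\RewardRV_t]$; since $\HistoryRV$ has the same law under $\Prob_\Mixture$ and $\Prob_{\Policy^\Mixture}$, this equals $\E_{\Policy^\Mixture}[\sum_{t}\RewardRV_t]=J^D(\Policy^\Mixture)$.
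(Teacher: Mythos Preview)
Your proposal is correct and follows essentially the same approach as the paper: induction on the history length $t$, the one-step factorization of both $\Prob_{\Policy^\Mixture}(\HistoryRV_t=\History_t)$ and $\Prob_\Mixture(\HistoryRV_t=\History_t)$, a $d$-separation argument on the Bayesian network of Figure~\ref{fig:bayesian_graph} to obtain conditional independence of the $\PolicyLatent_i$ (and hence of the $\ActionRV_{i,t}$) given the history prefix, reduction of each factor to $\Prob_\Mixture(\ActionRV_{i,t}=\Action_{i,t}\mid\AOHistoryRV_{i,t}=\AOHistory_{i,t})$, and identification of the latter with $\Policy_i^{\Mixture_i}(\Action_{i,t}\mid\AOHistory_{i,t})$ via Definition~\ref{policy-mixture} and Lemma~\ref{lemma-mixtures-factorize}. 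The only cosmetic difference is that you condition on $(\HistoryRV_{t-1},\StateRV_t,\ObservationRV_t)$ and explicitly note that $\StateRV_t,\ObservationRV_t$ do not affect the posterior on $\PolicyLatent$, whereas the paper folds this into a single $d$-separation statement about $\HistoryRV_{t-1}$; the ``in particular'' part is argued identically via the tower property.
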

\begin{proof}
Fix a distribution \(\Mixture\in\Delta(\PolicySet)\) with independent local policies. We show by induction that for all \(t\in\{0,\dotsc,\Tmax\}\), it is \(\Prob_\Mixture(\HistoryRV_t=\History_t)=\Prob_{\Policy^\Mixture}(\HistoryRV_t=\History_t)\) for any \(\History_t\in\HistorySet_t\).

First, note that by Definition~\ref{policy-mixture}, for any \(i\in\PlayerSet,\Action_i\in\ActionSet_i, t\in\{0,\dotsc,h\}\), and \(\AOHistory_{i,t}\in\AOHistorySet_i\) such that \(\Prob_\Mixture( \AOHistoryRV_{i,t}=\AOHistory_{i,t})>0\), it is
\begin{equation}\label{eq:5}\Prob_{\Policy^\Mixture}(\ActionRV_{i,t}=\Action_i\mid \AOHistoryRV_{i,t}=\AOHistory_{i,t})=\Policy^{\Mixture_i}_i(a_i\mid \AOHistoryRV_{i,t}=\AOHistory_{i,t}) = 
 \Prob_\Mixture(\ActionRV_{i,t}=\Action_i\mid \AOHistoryRV_{i,t}=\AOHistory_{i,t}).
\end{equation}
In particular, this holds for \(t=0\), in which case it is \(\AOHistoryRV_{i,0}=\emptyset\) for \(i\in\PlayerSet\). Hence, for \(\Action\in\ActionSet, \State\in\StateSet\), it is
\begin{multline}\Prob_\Mixture(\StateRV_0=\State,\ActionRV_0=\Action)
\overset{(\ref{eq:equation-tower-prop-z})}{=}
\E_\Mixture[\Prob_Z(\StateRV_0=\State,\ActionRV_{0}=\Action)]
=\E_\Mixture[\Prob_Z(\StateRV_0=\State)\Prob_Z(\ActionRV_0=\Action)]
\\
\overset{\text{(i)}}{=}b_0(\State)\E_\Mixture[\Prob_Z(\ActionRV_0=\Action)]
=
b_0(\State)\E_\Mixture[\prod_{i\in\PlayerSet}Z_i(\ActionRV_{i,0}\mid \emptyset)]
\overset{\text{(ii)}}{=}b_0(\State)\prod_{i\in\PlayerSet}\E_\Mixture[Z_i(\Action_{i}\mid \emptyset)]
\\
\overset{\text{(\ref{eq:equation-tower-prop-z})}}{=}
b_0(\State)\prod_{i\in\PlayerSet}\Prob_\Mixture(\ActionRV_{i,0}=\Action_{i})
=b_0(\State)\prod_{i\in\PlayerSet}\Prob_\Mixture(\ActionRV_{i,0}=\Action_{i})
\overset{(\ref{eq:5})}{=}
b_0(\State)\prod_{i\in\PlayerSet}\Prob_{\Policy^\Mixture}(\ActionRV_{i,0}=\Action_i)
\\
=b_0(\State)\Prob_{\Policy^\Mixture}(\ActionRV_{0}=\Action)
=\Prob_{\Policy^\Mixture}(\StateRV_0=\State,\ActionRV_0=\Action).
\end{multline}
Here, we also use 
(i) the fact that that the initial state distribution does not depend on the policy, and (ii) the fact that \(\Mixture\) has independent local policies and thus also the \(Z_i\) are independent in the probability space \(\Prob_\Mixture\).

Next, assume that \(0\leq t-1\leq \Tmax\) and \(\Prob_\Mixture(\HistoryRV_{t-1}=\History_{t-1})=\Prob_{\Policy^\Mixture}(\HistoryRV_{t-1}=\History_{t-1})\) for any \(\History_{t-1}\in\HistorySet_{t-1}\). Let \(\History_{t-1}=(\State_0,\Action_0,\Reward_0,\State_1,\dotsc,\Reward_{t-1})\in\HistorySet_{t-1}\) arbitrary such that
\(\Prob_\Mixture(\HistoryRV_{t-1}=\History_{t-1})=\Prob_{\Policy^\Mixture}(\HistoryRV_{t-1}=\History_{t-1})>0\). As in the proof of Lemma~\ref{lemma-mixtures-factorize}, it follows from considering the \(d\)-separation criterion on a Bayesian graph of the random variables defined on \(\PolicySet\times\Omega\) that conditioning on \(\HistoryRV_{t-1}\) does not make the \(Z_i\) dependent \parencite[see][ch. 1.2]{Pearl2009-bb}. The same criterion also says that, after conditioning on \(A_{i,t-1}\) and \(\AOHistoryRV_{i,t-1}\), one cannot gain additional information about \(Z_i\) from the other components of \(\HistoryRV_{t-1}\) or from \(\ObservationRV_{i,t}\) (that is, \(A_{i,t-1}\) and \(\AOHistoryRV_{i,t-1}\) \(d\)-separate \(Z_i\) from the rest of the history).
Using this in (\ref{eq:26a}), it follows for arbitrary \(\State_t,\Observation_t,\Action_t,\Reward_t\) that
\begin{align}\label{eq:26c}
&\Prob_\Mixture(\StateRV_t=\State_t,\ObservationRV_t=\Observation_t,\ActionRV_t=\Action_t,\RewardRV_t=\Reward_t\mid \HistoryRV_{t-1}=\History_{t-1})
\\
&=\E_\Mixture[\Prob_\Mixture(\StateRV_t=\State_t,\ObservationRV_t=\Observation_t,\ActionRV_t=\Action_t,\RewardRV_t=\Reward_t\mid \HistoryRV_{t-1},Z)\mid \HistoryRV_{t-1}=\History_{t-1}]
\\
&
=\E_\Mixture[P(\State_t\mid \State_{t-1},\Action_{t-1})O(\Observation_t\mid \State_t,\Action_{t-1})\prod_{i\in\PlayerSet}Z_i(\Action_{i,t}\mid\AOHistory_{i,t})\mathds{1}_{\RewardFunction(\State,\Action)=\Reward}\mid \HistoryRV_{t-1}=\History_{t-1}]
\\
\nonumber
&=
P(\State_t\mid \State_{t-1},\Action_{t-1})O(\Observation_t\mid \State,\Action_{t-1})
\\\label{eq:26a}
&\quad\prod_{i\in\PlayerSet}\E_\Mixture[Z_i(\Action_{i,t}\mid \AOHistory_{i,t})\mid \ActionRV_{i,t-1}=\Action_{i,t-1}, \AOHistoryRV_{i,t-1}=\AOHistory_{i,t-1}]\mathds{1}_{\RewardFunction(\State_t,\Action_t)=\Reward_t}
\\
&=P(\State_t\mid \State_{t-1},\Action_{t-1})O(\Observation_t\mid \State,\Action_{t-1})\prod_{i\in\PlayerSet}\Prob_\Mixture(\ActionRV_{i,t}=\Action_{i,t}\mid\AOHistoryRV_{i,t}=\AOHistory_{i,t})\mathds{1}_{\RewardFunction(\State_t,\Action_t)=\Reward_t}
\\\label{eq:26b}
&=P(\State_t\mid \State_{t-1},\Action_{t-1})O(\Observation_t\mid \State,\Action_{t-1})\prod_{i\in\PlayerSet}\Prob_{\Policy^\Mixture}(\ActionRV_{i,t}=\Action_{i,t}\mid\AOHistoryRV_{i,t}=\AOHistory_{i,t})\mathds{1}_{\RewardFunction(\State_t,\Action_t)=\Reward_t}
\\
&= \Prob_{\Policy^\Mixture}(\StateRV_t=\State_t,\ObservationRV_t=\Observation_t,\ActionRV_t=\Action_t,\RewardRV_t=\Reward_t\mid \HistoryRV_{t-1}=\History_{t-1}).\label{eq:26d}
\end{align}
In (\ref{eq:26b}), we again use Equation~\ref{eq:5}, which is possible since  \(\Prob_\Mixture(\HistoryRV_{t-1}=\History_{t-1})>0\) implies that also \(\Prob_\Mixture( \AOHistoryRV_{i,t-1}=\AOHistory_{i,t-1})>0\), where \(\AOHistory_{i,t-1}\) is defined as the projection of \(\History_{t-1}\) onto \(\AOHistorySet_{i,t-1}\).

To conclude the inductive step, let \(\History_t\in\HistorySet_t\) and choose \(\History_{t-1}\) as the projection of \(\History_t\) onto \(\HistorySet_{t-1}\). If \[\Prob_\Mixture(\HistoryRV_{t-1}=\History_{t-1})=\Prob_{\Policy^\Mixture}(\HistoryRV_{t-1}=\History_{t-1})=0,\]
necessarily also \(\Prob_\Mixture(\HistoryRV_{t}=\History_{t})=\Prob_{\Policy^\Mixture}(\HistoryRV_{t}=\History_{t})=0\) and there is nothing more to show. Assume now that this is not the case. Using the inductive hypothesis and Equations (\ref{eq:26c})--(\ref{eq:26d}) in (*), it then follows that
\begin{multline}\Prob_\Mixture( \HistoryRV_{t}=\History_t)
=\Prob_\Mixture( \HistoryRV_{t}=\History_t\mid \HistoryRV_{t-1}=\History_{t-1})\Prob_\Mixture(\HistoryRV_{t-1}=\History_{t-1})
\\=\Prob_{\Mixture}(\StateRV_t=\State_t,\ObservationRV_t=\Observation_t,\ActionRV_t=\Action_t,\RewardRV_t=\Reward_t\mid \HistoryRV_{t-1}=\History_{t-1})\Prob_\Mixture(\HistoryRV_{t-1}=\History_{t-1})
\\\overset{\text{(*)}}{=}\Prob_{\Policy^\Mixture}(\StateRV_t=\State_t,\ObservationRV_t=\Observation_t,\ActionRV_t=\Action_t,\RewardRV_t=\Reward_t\mid \HistoryRV_{t-1}=\History_{t-1})\Prob_{\Policy^\Mixture}(\HistoryRV_{t-1}=\History_{t-1})
\\=\Prob_{\Policy^\Mixture}( \HistoryRV_{t}=\History_t).
\end{multline}

This concludes the induction. In particular, it follows that \[\Prob_\Mixture(\HistoryRV_\Tmax=\History_\Tmax)=\Prob_{\Policy^\Mixture}(\HistoryRV_\Tmax=\History_\Tmax).\] Hence, \(\HistoryRV=\HistoryRV_\Tmax\) has the same distribution under both \(\Prob_\Mixture\) and \(\Prob_{\Policy^\Mixture}\).

Turning to the ``in particular'' statement, we can use the tower property (i) and Equation~\ref{conditional-probability-mixture} (ii) to follow that
\begin{multline}J(\Policy^\Mixture)=\E_{\Policy^\Mixture}[\sum_{t=1}^hR_t]=\E_\Mixture[\sum_{t=1}^hR_t]
\\
\overset{\text{(i)}}{=}\E_\Mixture\left[\E_\Mixture\left[\sum_{t=1}^hR_t\,\middle\vert\, Z\right]\right]
\overset{\text{(ii)}}{=}\E_\Mixture\left[\E_Z\left[\sum_{t=1}^hR_t\right]\right]
\\
=\E_{\Policy\sim \Mixture}[J(\Policy)]
=J(\Mixture).
\end{multline}
\end{proof}



Before we turn to the OP distribution, we prove another useful Lemma, stating that the mapping from distributions to corresponding joint policies and the pushforward by isomorphisms commute. That is, the policy corresponding to the pushforward of a distribution and the pushforward of the policy corresponding to that distribution are the same.

\begin{lem}\label{lemma-isomorphism-mixture-commute}
Let \(\DecP,\DecPSecond\) be isomorphic Dec-POMDPs with isomorphism \(\Isom\in\Iso(\DecP,\DecPSecond)\). Let \(\Policy\in\PolicySet^\DecP\) and let \(\Mixture\in \Delta(\PolicySet^\DecP)\) be any distribution with independent local policies. Then
\[\Isom^*\Policy^\Mixture=\Policy^{\Isom^*\Mixture}.\]
\end{lem}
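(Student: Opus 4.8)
The plan is to verify the asserted equality of joint policies componentwise. Fix $j\in\PlayerSet^\DecPSecond$, a length $t$, an action-observation history $\AOHistory_{j,t}\in\AOHistorySet_{j,t}^\DecPSecond$, and an action $\Action_j\in\ActionSet_j^\DecPSecond$, and abbreviate $i:=\Isom^{-1}j$, $\bar{\Action}_i:=\Isom^{-1}\Action_j$, $\bar{\AOHistory}_{i,t}:=\Isom^{-1}\AOHistory_{j,t}$. Unwinding Definition~\ref{definition-pushforward-appendix}, the left-hand side is
\[(\Isom^*\Policy^\Mixture)_j(\Action_j\mid\AOHistory_{j,t})=(\Policy^\Mixture)_i(\bar{\Action}_i\mid\bar{\AOHistory}_{i,t})=\Policy_i^{\Mixture_i}(\bar{\Action}_i\mid\bar{\AOHistory}_{i,t}),\]
so it suffices to show $\Policy_j^{(\Isom^*\Mixture)_j}(\Action_j\mid\AOHistory_{j,t})=\Policy_i^{\Mixture_i}(\bar{\Action}_i\mid\bar{\AOHistory}_{i,t})$, both sides being given by Definition~\ref{policy-mixture}.

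I would assemble this from three ingredients. (a) \emph{The pushforward respects the product structure.} The map $\Policy\mapsto\Isom^*\Policy$ on $\PolicySet^\DecP$ is the reindexing of coordinates along $\Isom_N$ followed by a product of the per-agent local relabelings $g_j\colon\PolicySet_{\Isom^{-1}j}^\DecP\to\PolicySet_j^\DecPSecond$, $\rho\mapsto\rho(\Isom^{-1}\cdot\mid\Isom^{-1}\cdot)$; hence the pushforward of any distribution with independent local policies again has independent local policies, and one computes directly that $(\Isom^*\Distr)_j=(g_j)_*\Distr_{\Isom^{-1}j}$ for any $\Distr\in\Delta(\PolicySet^\DecP)$. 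Consequently, for any $\nu_{-i}\in\Delta(\PolicySet_{-i}^\DecP)$ with independent local policies, $\Isom^*(\Mixture_i\otimes\nu_{-i})=(\Isom^*\Mixture)_j\otimes\nu_{-j}$ for some $\nu_{-j}\in\Delta(\PolicySet_{-j}^\DecPSecond)$ with independent local policies, and $\nu_{-i}\mapsto\nu_{-j}$ is a bijection (the inverse uses the local relabelings for $\Isom^{-1}$ together with Lemma~\ref{lemma-pull-back-function-composition-compatible}). (b) \emph{Transfer of reachability probabilities.} For any $\Distr\in\Delta(\PolicySet^\DecP)$, the event $\{\ActionRV_{i,t}^\DecP=\Action_i,\AOHistoryRV_{i,t}^\DecP=\AOHistory_{i,t}\}$ is a disjoint union of events $\{\HistoryRV^\DecP=\History\}$, which by Corollary~\ref{corollary-action-isomorphism-is-bijective-histories} is carried bijectively by $\Isom$ onto the analogous decomposition of $\{\ActionRV_{\Isom i,t}^\DecPSecond=\Isom\Action_i,\AOHistoryRV_{\Isom i,t}^\DecPSecond=\Isom\AOHistory_{i,t}\}$; combining Theorem~\ref{lem-pull-back-isomorphism-compatibility} (applied to each policy in the support), the tower property (Equation~\ref{eq:equation-tower-prop-z}), and a change of variables for the pushforward measure yields
\[\Prob_\Distr(\ActionRV_{i,t}^\DecP=\Action_i,\AOHistoryRV_{i,t}^\DecP=\AOHistory_{i,t})=\Prob_{\Isom^*\Distr}(\ActionRV_{\Isom i,t}^\DecPSecond=\Isom\Action_i,\AOHistoryRV_{\Isom i,t}^\DecPSecond=\Isom\AOHistory_{i,t}),\]
and likewise with the $\ActionRV$-constraint dropped (the latter is the ``in particular'' part of Theorem~\ref{lem-pull-back-isomorphism-compatibility}, averaged over $\Distr$). (c) \emph{Independence of the conditioning distribution}, which is exactly Lemma~\ref{lemma-mixtures-factorize}.

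With these in hand the matching is mechanical. By (a) and (b), for corresponding $\nu_{-i},\nu_{-j}$ we have $\Prob_{\Mixture_i\otimes\nu_{-i}}(\AOHistoryRV_{i,t}^\DecP=\bar{\AOHistory}_{i,t})>0$ iff $\Prob_{(\Isom^*\Mixture)_j\otimes\nu_{-j}}(\AOHistoryRV_{j,t}^\DecPSecond=\AOHistory_{j,t})>0$, and since $\nu_{-i}\mapsto\nu_{-j}$ is a bijection, an admissible conditioning distribution exists for the right-hand side iff one exists for the left. If none exists, both policies take their default value, and these agree since $\Isom_{A_i}$ is a bijection, so $|\ActionSet_j^\DecPSecond|=|\ActionSet_i^\DecP|$. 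If one exists, pick corresponding admissible $\nu_{-i},\nu_{-j}$; then by Definition~\ref{policy-mixture},
\[\Policy_j^{(\Isom^*\Mixture)_j}(\Action_j\mid\AOHistory_{j,t})=\Prob_{(\Isom^*\Mixture)_j\otimes\nu_{-j}}(\ActionRV_{j,t}^\DecPSecond=\Action_j\mid\AOHistoryRV_{j,t}^\DecPSecond=\AOHistory_{j,t}),\]
which, writing the conditional probability as a ratio and applying the transfer identities (b), equals $\Prob_{\Mixture_i\otimes\nu_{-i}}(\ActionRV_{i,t}^\DecP=\bar{\Action}_i\mid\AOHistoryRV_{i,t}^\DecP=\bar{\AOHistory}_{i,t})=\Policy_i^{\Mixture_i}(\bar{\Action}_i\mid\bar{\AOHistory}_{i,t})$; the first and last equalities use Lemma~\ref{lemma-mixtures-factorize} so that the value does not depend on the admissible conditioning distribution chosen. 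The main obstacle I anticipate is purely organizational: carrying the agent permutation $\Isom_N$, the per-agent action/observation relabelings, and the ``$-i$ versus $-j$'' index sets through the argument without slips, and in particular treating the degenerate branch of Definition~\ref{policy-mixture} (action-observation histories unreachable under every opponent distribution) carefully rather than ignoring it. All genuinely probabilistic content is already supplied by Theorem~\ref{lem-pull-back-isomorphism-compatibility} and Lemma~\ref{lemma-mixtures-factorize}; the rest is bookkeeping.
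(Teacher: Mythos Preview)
Your proposal is correct and follows essentially the same route as the paper's proof: both reduce to a componentwise comparison of local policies, establish that $\Isom^*\Mixture$ has independent local policies with $(\Isom^*\Mixture)_j$ the appropriate relabeling of $\Mixture_{\Isom^{-1}j}$, use Theorem~\ref{lem-pull-back-isomorphism-compatibility} (averaged over the mixture via the tower property) to transfer reachability probabilities, argue via the bijection on opponent distributions that the degenerate branch of Definition~\ref{policy-mixture} is triggered on both sides simultaneously, and in the non-degenerate case compute the conditional probability as a ratio and transfer numerator and denominator. The paper's version carries out the last step by writing the conditional probability as a ratio of integrals and pushing the change of variables through explicitly (lines~(\ref{eq:33a})--(\ref{eq:33c})), while you phrase it as transferring the joint probability $\Prob_\Distr(\ActionRV_{i,t}=\Action_i,\AOHistoryRV_{i,t}=\AOHistory_{i,t})$ directly; these are the same computation.
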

\begin{proof}
First, we have to find an expression for the marginal distributions \((\Isom^*\Mixture)_i\) and prove that \(\Isom^*\Mixture\) is a distribution with independent local policies. To that end, consider measurable sets \(\mathcal{Z}_i\subseteq\PolicySet^\DecPSecond_i\) for \(i\in\PlayerSet^\DecPSecond\) and define \(\mathcal{Z}:=\prod_{i\in\PlayerSet^\DecPSecond}\mathcal{Z}_i\). In the following, we adopt the notation \(\Policy_i\circ\Isom:= \Policy_i(\Isom\cdot\mid\Isom\cdot)\) and \(\mathcal{Z}_i\circ \Isom:= \{\Policy_i\circ\Isom\mid\Policy_i\in\mathcal{Z}_i\}\). Note that
\begin{multline}\label{eq:29}(\Isom^*)^{-1}(\mathcal{Z})
=\{\Policy\mid \Isom^*\Policy\in\mathcal{Z}\}
=\{\Policy\mid \forall i\in\PlayerSet^\DecPSecond\colon \Policy_{\Isom^{-1}i}\circ\Isom^{-1}\in\mathcal{Z}_i)\}
\\
=\{\Policy\mid \forall j\in\PlayerSet^\DecP\colon \Policy_{j}\in\mathcal{Z}_{\Isom j}\circ\Isom\}
=\prod_{j\in\PlayerSet^\DecP}\mathcal{Z}_{\Isom j}\circ\Isom.
\end{multline}
Hence, using in (i) that \(\Mixture\) has independent local policies, it follows that
\begin{multline}\label{eq:30}(\Isom^*\Mixture)(\mathcal{Z})
=\Mixture((\Isom^*)^{-1}(\mathcal{Z}))
\overset{(\ref{eq:29})}{=}\Mixture(\prod_{j\in\PlayerSet^\DecP}\mathcal{Z}_{\Isom j}\circ \Isom)
\overset{\text{(i)}}{=}\prod_{j\in\PlayerSet^\DecP}\Mixture_j(\mathcal{Z}_{\Isom j}\circ \Isom)
=\prod_{i\in\PlayerSet^\DecPSecond}\Mixture_{\Isom^{-1}i}(\mathcal{Z}_{i}\circ \Isom).
\end{multline}

Now let \(i\in\PlayerSet^\DecPSecond\) arbitrary. With the choice of \(\hat{\mathcal{Z}}_i:=\mathcal{Z}_i\) and \(\hat{\mathcal{Z}}_k:=\PolicySet_k^\DecPSecond\) for all \(k\in\PlayerSet^\DecPSecond\setminus\{i\}\), it is
\begin{multline}\label{eq:70}(\Isom^*\Mixture)_i(\mathcal{Z}_i)
=\Isom^*\Mixture(\Proj_i^{-1}(\mathcal{Z}_i))
=\Isom^*\Mixture(\hat{\mathcal{Z}}_1\times\dotsb\times\hat{\mathcal{Z}}_N))
\\
\overset{(\ref{eq:30})}{=}
\prod_{k\in\PlayerSet^\DecPSecond}\Mixture_{\Isom^{-1}k}(\hat{\mathcal{Z}}_{k}\circ \Isom)
=\Mixture_{\Isom^{-1}i}(\mathcal{Z}_{i}\circ \Isom)
,\end{multline}
as
\(\Mixture_{\Isom^{-1}k}(\hat{\mathcal{Z}}_{k}\circ \Isom)=\Mixture_{\Isom^{-1}k}(\PolicySet_{ k}\circ\Isom)=\Mixture_{\Isom^{-1}k}(\PolicySet_{\Isom^{-1} k})=1\)
for any \(k\in\PlayerSet^\DecPSecond\setminus\{i\}\).

Since \(i\) was arbitrary, this shows that \(\Isom^*\Mixture(\mathcal{Z})=\prod_{i\in\PlayerSet^\DecPSecond}(\Isom^*\Mixture)_i(\mathcal{Z}_i)\).
Since the sets \(\mathcal{Z}_i\) were arbitrary and the Cartesian products of these sets are a \(\pi\)-system and generate the product \(\sigma\)-Algebra \(\mathcal{F}\), this shows that \(\Isom^*\Mixture\) has independent local policies. 

Next, let \(i\in\PlayerSet^\DecPSecond\), \(t\in\{0,\dotsc,\Tmax\}\) and \(\AOHistory_{i,t}\in\AOHistorySet_{i,t}^\DecPSecond\). Note that \(\Proj_i(\Isom^*\Policy^\Mixture)=\Policy_{\Isom^{-1}i}^{\Mixture_{\Isom^{-1}i}}\circ\Isom^{-1}\) and 
\(
\Proj_i(\Policy^{\Isom^*\Mixture})
=\Policy_i^{(\Isom^*\Mixture)_i}\).
Hence, it remains to prove that \(\Policy_{\Isom^{-1}i}^{\Mixture_{\Isom^{-1}i}}\circ \Isom^{-1}=\Policy_i^{(\Isom^*\Mixture)_i}\).

To that end, let \(j\in\PlayerSet^\DecP\) such that \(\Isom j= i\). By Theorem~\ref{lem-pull-back-isomorphism-compatibility}, it is
\begin{equation}
    \label{equation-histories-pullbacks}
    \Prob_{\Policy}(\AOHistoryRV_{j,  t} =\Isom^{-1} \AOHistory_{i,t})
    =
    \Prob_{\Isom^*\Policy}(\AOHistoryRV_{ i,t}=\AOHistory_{i,t})
\end{equation}
for any \(\Policy\in \PolicySet^\DecP\). Letting, \(\Mixture_{-j}\in\Delta(\PolicySet_{-j}^\DecP)\) with independent local policies arbitrary and defining \(\Mixture:=\Mixture_{i}\otimes\Mixture_{-i}\), this means that also
\begin{multline}
\label{eq:31}
\Prob_\Mixture(\AOHistoryRV_{ j,  t} =\Isom^{-1} \AOHistory_{i,t})
\overset{\text{(ii)}}{=}\E_\Mixture\left[
\Prob_\PolicyLatent(\AOHistoryRV_{ j,  t} = \Isom^{-1}\AOHistory_{i,t})
\right]
=\E_{\Mixture}\left[
\Prob_{\Isom^*\PolicyLatent}(\AOHistoryRV_{ i,  t} = \AOHistory_{i,t})
\right]
\\
=\E_{\Isom^*\Mixture}\left[
\Prob_{\PolicyLatent}(\AOHistoryRV_{i,  t} = \AOHistory_{i,t})
\right]
\overset{\text{(ii)}}{=}\Prob_{\Isom^*\Mixture}(\AOHistoryRV_{ i,  t} = \AOHistory_{i,t}),
\end{multline}
where we use Equation~\ref{eq:equation-tower-prop-z} in (ii).
Since \(\Isom^*\) is a bijection on the space of policies, it is
\[\{\Isom^*(\Mixture_j\otimes\Mixture_{-j})\mid \Mixture_{-j}\in\Delta(\PolicySet_{-j}^\DecP)\}=\{(\Isom^*\Mixture)_i\otimes\MixtureHat_{-i}\mid\MixtureHat_{-i}\in\Delta(\PolicySet_{-i}^\DecPSecond)\}.\]
Hence, (\ref{eq:31}) implies that there is some \(\Mixture_{-j}\in\Delta(\PolicySet_{-j}^\DecP)\) with independent local policies such that \(\Prob_{\Mixture_j\otimes\Mixture_{-j}}(\AOHistoryRV_{j,  t} =\Isom^{-1} \AOHistory_{i,t})>0\) if and only if there is \(\MixtureHat_{-i}\in\Delta(\PolicySet_{-i}^\DecPSecond)\) with independent local policies such that \(\Prob_{(\Isom^*\Mixture)_i\otimes \MixtureHat_{-i}}(\AOHistoryRV_{i,  t} =\AOHistory_{i,t})>0\).

Using this fact, it suffices to distinguish the two cases where such a distribution does exist and where it does not exist. First, assume that it does not exist. Then by Definition~\ref{policy-mixture}, both \(\Policy_{j}^{\Mixture_j}(\cdot\mid \Isom^{-1} \AOHistory_{i,t})\) and \(\Policy_i^{(\Isom^*\Mixture)_i}(\cdot\mid\AOHistory_{i,t})\) are uniform distributions. Since \(\Isom^{-1}\) is a bijection on \(\ActionSet_i^\DecPSecond\), also \(\Policy_{j}^{\Mixture_{j}}(\Isom^{-1}\cdot \mid \Isom^{-1}\AOHistory_{i,t})\) is a uniform distribution, and hence
\[\Policy_{j}^{\Mixture_{j}}(\Isom^{-1}\cdot \mid \Isom^{-1}\AOHistory_{i,t})=\Policy_i^{(\Isom^*\Mixture)_i}(\cdot\mid \AOHistory_{i,t}).
\]

Second, consider the case in which a distribution \(\Mixture_{-j}\in\Delta(\PolicySet_{-j}^\DecP)\) with independent local policies exists such that
\[\Prob_{\Mixture_j\otimes\Mixture_{-j}}(\AOHistoryRV_{j,  t} =\Isom^{-1} \AOHistory_{i,t})>0,\]
and define \(\Mixture:=\Mixture_j\otimes\Mixture_{-j}\).
Then for \(\Action_i\in\ActionSet^\DecPSecond_i\), it is
\begin{align}
    \label{eq:33a}
    \Policy^{\Mixture_j}_{j}(\Isom^{-1} \Action_i\mid \Isom^{-1}\AOHistory_{i,t})
    &= \Prob_\Mixture(\ActionRV_{j,t}=\Isom^{-1} \Action_i\mid \AOHistoryRV_{j,t}=\Isom^{-1}\AOHistory_{i,t})
    \\
    &= \E_\Mixture[\Prob_{\Mixture}(\ActionRV_{j,t}=\Isom^{-1} a_i\mid\AOHistoryRV_{j, t},Z)\mid \AOHistoryRV_{ j,t}=\Isom^{-1}\AOHistory_{i,t}]
    \\&= \E_\Mixture[Z_{j}(\Isom^{-1} \Action_i\mid\Isom^{-1}\AOHistory_{i,t})\mid 
    \AOHistoryRV_{j,t}=\Isom^{-1}\AOHistory_{i,t}]
    \\
    &=\frac{\int_{\PolicySet^\DecP}\int_{\Omega}(\Isom^*Z)_{i}( \Action_i\mid\AOHistory_{i,t})\mathds{1}_{\AOHistoryRV_{ j,t}={\Isom^{-1}}\AOHistory_{i,t}}\mathrm{d}\Prob_{Z}\mathrm{d}\Mixture
    }
    {
    \int_{\PolicySet^\DecP}\int_{\Omega}\mathds{1}_{\AOHistoryRV_{j,t}={\Isom^{-1}}\AOHistory_{i,t}}
    \mathrm{d}\Prob_{Z}\mathrm{d}\Mixture
    }
    \\
    &=
    \frac{\int_{\PolicySet^\DecP}(\Isom^*Z)_{ i}( \Action_i\mid\AOHistory_{i,t})\Prob_Z({\AOHistoryRV_{ j,t}={\Isom^{-1}}\AOHistory_{i,t}})\mathrm{d}\Mixture
    }
    {
    \int_{\PolicySet^\DecP}\Prob_Z(\AOHistoryRV_{j,t}={\Isom^{-1}}\AOHistory_{i,t})\mathrm{d}\Mixture
    }
    \\
    &
    =\label{eq:33b}
    \frac{\int_{\PolicySet^\DecP}(\Isom^*Z)_{ i}( \Action_i\mid\AOHistory_{i,t})\Prob_{\Isom^*Z}({\AOHistoryRV_{i,t}=\AOHistory_{i,t}})\mathrm{d}\Mixture
    }
    {
    \int_{\PolicySet^\DecP}\Prob_{\Isom^*Z}(\AOHistoryRV_{ i,t}=\AOHistory_{i,t})\mathrm{d}\Mixture
    }
    \\
    &
    =
    \frac{\int_{\PolicySet^\DecPSecond}Z_{i}( \Action_i\mid\AOHistory_{i,t})\Prob_{Z}({\AOHistoryRV_{i,t}=\AOHistory_{i,t}})\mathrm{d}\Mixture\circ(\Isom^*)^{-1}
    }
    {
    \int_{\PolicySet^\DecPSecond}\Prob_{Z}(\AOHistoryRV_{ i,t}=\AOHistory_{i,t})\mathrm{d}\Mixture\circ(\Isom^*)^{-1}
    }
    \\
    &
    = \E_{\Isom^*\Mixture}[Z_{i}( \Action_i\mid\AOHistory_{i,t})\mid \AOHistoryRV_{ i,t}=\AOHistory_{i,t}]
    \\
    &= \Prob_{\Isom^*\Mixture}(\ActionRV_{i,t}= \Action_i\mid \AOHistoryRV_{ i,t}=\AOHistory_{i,t})
    \\\label{eq:33c}
    &=\Policy_i^{(\Isom^*\Mixture)_i}(a_i\mid \AOHistory_{i,t}),
\end{align}
where we have used Definition~\ref{policy-mixture} in (\ref{eq:33a}) and (\ref{eq:33c}), and Theorem~\ref{lem-pull-back-isomorphism-compatibility} in (\ref{eq:33b}).
This concludes the second case and thus the proof.

\jt{note to myself: I could include a bit more explanation here, e.g. using "Equation~\ref{conditional-probability-mixture} and the tower property" etc. But not super important. low prio}
\end{proof}

\subsection{The other-play distribution and the symmetrizer}
\label{appendix-other-play-distribution}
Using the idea of a policy corresponding to a distributions over policies introduced above, we can now define a policy corresponding to the distribution over policies used in the OP objective. In the following, fix again a Dec-POMDP \(\DecP\).
\begin{defn}[Other-play distribution]
Let \(\Policy\in\PolicySet\). We define the \emph{OP distribution} of \(\Policy\)
as the distribution \begin{equation}
\label{equation-other-play-mixture}
\Mixture:=|\Aut(\DecP)|^{-N}\sum_{\AutProfile\in\Aut(\DecP)^\PlayerSet}\delta_{\AutProfile^*\Policy},
\end{equation}
where \(\delta\) is the Dirac measure, i.e., for any measurable set \(\mathcal{Z}\subseteq\PolicySet\), it is
\[\delta_{\AutProfile^*\Policy}(\mathcal{Z})=
\begin{cases}1&\text{if }\AutProfile^*\Policy\in \mathcal{Z}
\\
0&\text{ otherwise.}
\end{cases}
\]
\end{defn}
Intuitively, agent \(i\) chooses one of the automorphisms \(\AutProfile_i\in\Aut(\DecP)\)
uniformly at random in the beginning of an episode and then follows the local policy \(\Proj_i(\AutProfile_i^*\Policy)\).
It can easily be shown that this distribution has independent local policies.
\begin{lem}
Let \(\Policy\in\PolicySet\) and let \(\Mixture\) be the OP distribution of \(\Policy\). Then \(\Mixture\) has independent local policies.
\end{lem}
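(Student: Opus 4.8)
The plan is to verify the defining product-measure property directly on product sets and then extend by a standard $\pi$-system argument. First I would fix measurable sets $\mathcal{Z}_i\subseteq\PolicySet_i$ for $i\in\PlayerSet$ and write $\mathcal{Z}:=\prod_{i\in\PlayerSet}\mathcal{Z}_i$. The key observation is that, by the definition $\AutProfile^*\Policy=(\Proj_i(\AutProfile_i^*\Policy))_{i\in\PlayerSet}$, the $i$-th local policy of $\AutProfile^*\Policy$ depends only on the single automorphism $\AutProfile_i$. Hence the indicator factorizes, $\mathds{1}_{\mathcal{Z}}(\AutProfile^*\Policy)=\prod_{i\in\PlayerSet}\mathds{1}_{\mathcal{Z}_i}(\Proj_i(\AutProfile_i^*\Policy))$, and since the sum in the definition of the OP distribution ranges over the product set $\Aut(\DecP)^\PlayerSet$, it splits as a product of sums:
\begin{align*}
\Mixture(\mathcal{Z})&=|\Aut(\DecP)|^{-N}\sum_{\AutProfile\in\Aut(\DecP)^\PlayerSet}\prod_{i\in\PlayerSet}\mathds{1}_{\mathcal{Z}_i}(\Proj_i(\AutProfile_i^*\Policy))\\
&=\prod_{i\in\PlayerSet}\Big(|\Aut(\DecP)|^{-1}\sum_{\Auto\in\Aut(\DecP)}\mathds{1}_{\mathcal{Z}_i}(\Proj_i(\Auto^*\Policy))\Big).
\end{align*}

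Second, I would identify the $i$-th factor as the marginal $\Mixture_i(\mathcal{Z}_i)$. Applying the displayed identity with $\mathcal{Z}_i$ in slot $i$ and $\PolicySet_j$ in every other slot (so that all but the $i$-th indicator are identically $1$) gives $\Mixture_i(\mathcal{Z}_i)=\Mixture(\Proj_i^{-1}(\mathcal{Z}_i))=|\Aut(\DecP)|^{-1}\sum_{\Auto\in\Aut(\DecP)}\mathds{1}_{\mathcal{Z}_i}(\Proj_i(\Auto^*\Policy))$. Combining this with the factorization above yields $\Mixture(\mathcal{Z})=\prod_{i\in\PlayerSet}\Mixture_i(\mathcal{Z}_i)=\big(\otimes_{i\in\PlayerSet}\Mixture_i\big)(\mathcal{Z})$ for every measurable product set $\mathcal{Z}$.

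Third, since the measurable product sets form a $\pi$-system generating the product $\sigma$-Algebra $\mathcal{F}=\otimes_{i\in\PlayerSet}\mathcal{F}_i$, and both $\Mixture$ and $\otimes_{i\in\PlayerSet}\Mixture_i$ are probability measures agreeing on this $\pi$-system, they agree on all of $\mathcal{F}$ by the uniqueness part of Carathéodory's extension theorem (the same reasoning already invoked elsewhere in the appendix). Hence $\Mixture=\otimes_{i\in\PlayerSet}\Mixture_i$, i.e.\ $\Mixture$ has independent local policies. I do not expect a genuine obstacle here: the only point that warrants a line of care is the measurability of $\Auto\mapsto\Proj_i(\Auto^*\Policy)$ and of the resulting marginals, but this is immediate because $\Aut(\DecP)$ is finite and $\Mixture$ is a finite convex combination of point masses, so no subtlety arises.
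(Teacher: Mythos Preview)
Your proposal is correct and follows essentially the same approach as the paper: both fix measurable product sets, use that the $i$-th local policy of $\AutProfile^*\Policy$ depends only on $\AutProfile_i$ to factorize the sum over $\Aut(\DecP)^\PlayerSet$ into a product over $i\in\PlayerSet$, and then identify each factor with the marginal $\Mixture_i(\mathcal{Z}_i)$. The only cosmetic differences are that the paper works with Dirac measures rather than indicators and leaves the $\pi$-system extension implicit, whereas you spell it out.
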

\begin{proof}
Let \(\mathcal{Z}_i\subseteq\PolicySet_i\) measurable for \(i\in\PlayerSet\) and let \(\mathcal{Z}:=\prod_{i\in\PlayerSet}\mathcal{Z}_i\). Note that for any \(i\in\PlayerSet\) and \(\AutProfile\in\Aut(\DecP)^\PlayerSet\), it is
\begin{equation}\label{eq:300}
\delta_{\AutProfile^*\Policy}(\Proj_i^{-1}(\mathcal{Z}_i))=\prod_{j\in\PlayerSet\setminus\{i\}}\delta_{\Proj_j(\AutProfile^*_j\Policy)}(\PolicySet_j)\delta_{\Proj_i(\AutProfile^*_i\Policy)}(\mathcal{Z}_i)=\delta_{\Proj_i(\AutProfile^*_i\Policy)}(\mathcal{Z}_i).\end{equation}
Hence, it follows that
\begin{multline}
\Mixture(\mathcal{Z})
=|\Aut(\DecP)|^{-N}\sum_{\AutProfile_i\in\Aut(\DecP)^\PlayerSet}\delta_{\AutProfile^*\Policy}(\mathcal{Z})
=\prod_{i\in\PlayerSet}\sum_{\AutProfile_i\in\Aut(\DecP)}|\Aut(\DecP)|^{-1}\delta_{\Proj_i(\AutProfile_i^*\Policy)}(\mathcal{Z}_i))
\\
=\prod_{i\in\PlayerSet}\sum_{\AutProfile\in\Aut(\DecP)^\PlayerSet}|\Aut(\DecP)|^{-(N-1)}|\Aut(\DecP)|^{-1}\delta_{\Proj_i(\AutProfile_i^*\Policy)}(\mathcal{Z}_i))
\\
\overset{(\ref{eq:300})}{=}
\prod_{i\in\PlayerSet}|\Aut(\DecP)|^{-N}\sum_{\AutProfile\in\Aut(\DecP)^\PlayerSet}\delta_{\AutProfile^*\Policy}(\Proj_i^{-1}(\mathcal{Z}_i))
=\prod_{i\in\PlayerSet}\Mixture_i(\mathcal{Z}_i)
.\end{multline}
This shows that \(\Mixture=\otimes_{i\in\PlayerSet}\Mixture_i\).
\end{proof}

Using the OP distribution of a policy, we can define the symmetrizer \(\Psi^\DecP\) for \(\DecP\), which maps a policy \(\Policy\) to a policy \(\Psi^\DecP(\Policy)\) that corresponds to the OP distribution of \(\Policy\). If it is clear which Dec-POMDP is considered, we also write \(\Psi(\Policy)\).

\begin{defn}[Symmetrizer]\label{policy-corresponding-to-other-play-distribution}
We define the \emph{symmetrizer} for the Dec-POMDP \(\DecP\) as the map \(\Psi^\DecP\colon\PolicySet^\DecP\rightarrow\PolicySet^\DecP\) such that for any policy \(\Policy\in\PolicySet\) and OP distribution
\(\Mixture\) of \(\Policy\), it is
\[\Psi^\DecP(\Policy):=\Policy^\Mixture.\]
\end{defn}

It is clear that if a policy is already invariant to automorphism, then \(\Policy_i=\Psi_i(\Policy)\) for \(i\in\PlayerSet\), excluding action-observation histories that can never be reached under \(\Policy_i\). We formulate a slightly weaker proposition below, which is easier to prove.

\begin{prop}
\label{lemma-invariant-policy-psi}
Let \(\Policy\in\PolicySet\) be invariant to automorphism, and assume that, for all \(t\in\{0,\dotsc,\Tmax\}\) and \(\AOHistory_{i,t}\in\AOHistorySet_{i,t}\), it is \(\Prob_\Policy(\AOHistoryRV_{i,t}=\AOHistory_{i,t})>0\). Then it is \(\Psi(\Policy)=\Policy\).
\end{prop}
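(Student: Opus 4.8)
The plan is to show that, under the hypotheses, the OP distribution of $\Policy$ degenerates to the Dirac mass $\delta_\Policy$, so that the symmetrizer returns $\Policy$ unchanged.

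First I would record that invariance to automorphism forces $\AutProfile^*\Policy=\Policy$ for \emph{every} profile $\AutProfile\in\Aut(\DecP)^\PlayerSet$. Indeed, for each $i\in\PlayerSet$ we have $\AutProfile_i^*\Policy=\Policy$, hence the local policy of agent $i$ in $\AutProfile^*\Policy$ is $\Proj_i(\AutProfile_i^*\Policy)=\Proj_i(\Policy)=\Policy_i$, so $\AutProfile^*\Policy=\Policy$. Substituting this into Equation~(\ref{equation-other-play-mixture}), every one of the $|\Aut(\DecP)|^N$ summands $\delta_{\AutProfile^*\Policy}$ equals $\delta_\Policy$, and therefore the OP distribution of $\Policy$ is $\Mixture=\delta_\Policy$. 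In particular $\Mixture_i=\delta_{\Policy_i}$ for every $i$ and $\Mixture_i\otimes\Mixture_{-i}=\delta_\Policy$.

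Next I would unpack Definition~\ref{policy-mixture} for $\Psi(\Policy)=\Policy^\Mixture$. Fix $i\in\PlayerSet$, $t\in\{0,\dotsc,\Tmax\}$, and $\AOHistory_{i,t}\in\AOHistorySet_{i,t}$. From the defining property of $\Prob_\Mixture$ in Equation~(\ref{defn-semidirect-product}) with $\Mixture=\delta_\Policy$ one gets $\Prob_{\delta_\Policy}(\mathcal{Z}\times\mathcal{Q})=\mathds{1}_{\{\Policy\in\mathcal{Z}\}}\Prob_\Policy(\mathcal{Q})$, so the marginal of $\Prob_{\delta_\Policy}$ on the environment $\Omega$ is exactly $\Prob_\Policy$. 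Hence $\Prob_{\delta_\Policy}(\AOHistoryRV_{i,t}=\AOHistory_{i,t})=\Prob_\Policy(\AOHistoryRV_{i,t}=\AOHistory_{i,t})>0$ by hypothesis, so $\Mixture_{-i}:=\delta_{\Policy_{-i}}$ is an admissible choice in Definition~\ref{policy-mixture}, giving $\Psi(\Policy)_i(\Action_i\mid\AOHistory_{i,t})=\Prob_{\delta_\Policy}(\ActionRV_{i,t}=\Action_i\mid\AOHistoryRV_{i,t}=\AOHistory_{i,t})=\Prob_\Policy(\ActionRV_{i,t}=\Action_i\mid\AOHistoryRV_{i,t}=\AOHistory_{i,t})$. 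Finally, by the inductive construction of the measure $\Prob_\Policy$, conditional on $\AOHistoryRV_{i,t}=\AOHistory_{i,t}$ the action $\ActionRV_{i,t}$ is drawn from $\Policy_i(\cdot\mid\AOHistory_{i,t})$, so the last conditional probability equals $\Policy_i(\Action_i\mid\AOHistory_{i,t})$ (using again that the conditioning event has positive probability). Since $i,t,\AOHistory_{i,t},\Action_i$ were arbitrary, $\Psi(\Policy)=\Policy$.

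I expect the only non-immediate point to be the very last step: reading off $\Prob_\Policy(\ActionRV_{i,t}=\Action_i\mid\AOHistoryRV_{i,t}=\AOHistory_{i,t})=\Policy_i(\Action_i\mid\AOHistory_{i,t})$ from the definition of $\Prob_\Policy$. This is essentially built into the construction, but to be fully rigorous it can be extracted from the same conditional-independence ($d$-separation) reasoning used in the proof of Lemma~\ref{lemma-mixtures-factorize}, applied to the random variables on $\Omega$; everything else is bookkeeping with the definitions of the OP distribution, the semidirect-product measure, and the symmetrizer.
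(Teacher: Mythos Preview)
Your proposal is correct and follows essentially the same route as the paper: show that invariance to automorphism collapses the OP distribution to $\delta_\Policy$, use this to identify $\Prob_{\delta_\Policy}$ with $\Prob_\Policy$ on the environment, verify the positivity hypothesis so that Definition~\ref{policy-mixture} applies, and read off $\Psi(\Policy)_i(\Action_i\mid\AOHistory_{i,t})=\Prob_\Policy(\ActionRV_{i,t}=\Action_i\mid\AOHistoryRV_{i,t}=\AOHistory_{i,t})=\Policy_i(\Action_i\mid\AOHistory_{i,t})$. The paper handles the passage from $\Prob_{\delta_\Policy}$ to $\Prob_\Policy$ via the tower property (Equation~\ref{eq:equation-tower-prop-z}) rather than your direct marginal argument, but the two are equivalent and the overall structure matches.
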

\begin{proof}
Let \(\Mixture\) be the OP distribution of \(\Policy\). Since \(\Policy\) is invariant to automorphism, it is
\[\Mixture=|\Aut(\DecP)|^{-N}\sum_{\AutProfile\in\Aut(\DecP)^\PlayerSet}\delta_{\AutProfile^*\Policy}=\delta_{\Policy}.\]
Now let \(i\in\PlayerSet\), \(t\in\{0,\dotsc,\Tmax\}\), \(\Action_i\in\ActionSet_i\) and \(\AOHistory_{i,t}\in\AOHistorySet_{i,t}\) arbitrary. 
Using Equation~\ref{eq:equation-tower-prop-z}, it follows that 
\[\Prob_{\delta_\Policy}(\AOHistoryRV_{i,t}=\AOHistory_{i,t})
\overset{(\ref{eq:equation-tower-prop-z})}{=}
\E_{\delta_\Policy}\left[\Prob_Z(\AOHistoryRV_{i,t}=\AOHistory_{i,t})\right]
=\Prob_\Policy(\AOHistoryRV_{i,t}=\AOHistory_{i,t})>0\]
and
\begin{multline}
\label{eq:40}
\Prob_{\delta_\Policy}(\ActionRV_{i,t}=\Action_i\mid\AOHistoryRV_{i,t}=\AOHistory_{i,t})
\overset{(\ref{eq:equation-tower-prop-z})}{=}
\E_{\delta_\Policy}\left[
\Prob_Z(\ActionRV_{i,t}=\Action_i\mid\AOHistoryRV_{i,t})\mid \AOHistoryRV_{i,t}=\AOHistory_{i,t}
\right]
\\
=\Prob_{\Policy}(\ActionRV_{i,t}=\Action_i\mid\AOHistoryRV_{i,t}=\AOHistory_{i,t}).
\end{multline}
Hence, we can apply Definition~\ref{policy-mixture} and conclude that
\begin{multline}\Psi_i(\Policy)(\Action_i\mid\AOHistory_{i,t})=\Policy^{\Mixture_i}_i(\Action_i\mid\AOHistory_{i,t})
\overset{\text{Definition~\ref{policy-mixture}}}{=}\Prob_{\delta_\Policy}(\ActionRV_{i,t}=\Action_i\mid\AOHistoryRV_{i,t}=\AOHistory_{i,t})
\\\overset{(\ref{eq:40})}{=}\Prob_{\Policy}(\ActionRV_{i,t}=\Action_i\mid\AOHistoryRV_{i,t}=\AOHistory_{i,t})=\Policy_i(\Action_i\mid\AOHistory_{i,t}).\end{multline}
\end{proof}

It will be helpful to refer to policies as equivalent if they have the same image under \(\Psi\).
\begin{defn}\label{defn-other-play-equivalent}
Let \(\Policy,\Policy'\in\PolicySet^\DecP\). We say that \(\Policy\) and \(\Policy'\) are equivalent, denoted as \(\Policy\equiv_\DecP\Policy\), if \(\Psi^\DecP(\Policy)=\Psi^\DecP(\Policy')\). Moreover, we write \([\Policy]:=\{\Policy'\mid \Policy'\equiv_D\Policy\}\) for the equivalence class of \(\Policy\).
\end{defn}
It is clear that \(\equiv_\DecP\) is an equivalence relation, since it is induced by the function \(\Psi^\DecP\). It follows that under \(\equiv_\DecP\), \(\PolicySet^\DecP\) decomposes into a partition of equivalence classes, denoted by \(\faktor{\PolicySet^\DecP}{\equiv_\DecP}\).

Applying Lemma~\ref{lemma-isomorphism-mixture-commute} to the symmetrizer in particular, we can show that it commutes with isomorphisms, and that the policy \(\Psi(\Policy)\) is invariant to automorphism.

\begin{cor}\label{corollary-psi-isom-commute}
Let \(\Isom\in\Iso(\DecP,\DecPSecond)\) and \(\Policy\in\PolicySet^\DecP\). Then it is
\[\Isom^*\Psi^\DecP(\Policy)=\Psi^{\DecPSecond}(\Isom^*\Policy).\]
If \(\DecPSecond=\DecP\), then
\[\Isom^*\Psi^\DecP(\Policy)=\Psi^{\DecP}(\Policy),\]
i.e., \(\Psi^\DecP(\Policy)\) is invariant to automorphism.
\end{cor}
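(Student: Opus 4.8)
The plan is to reduce everything to Lemma~\ref{lemma-isomorphism-mixture-commute} together with the explicit Dirac-sum description of the OP distribution. Fix \(\Policy\in\PolicySet^\DecP\) and let \(\Mixture\) be the OP distribution of \(\Policy\) (which has independent local policies, as shown just before Definition~\ref{policy-corresponding-to-other-play-distribution}), so that \(\Psi^\DecP(\Policy)=\Policy^\Mixture\). Since \(\Isom\in\Iso(\DecP,\DecPSecond)\), Lemma~\ref{lemma-isomorphism-mixture-commute} applies and yields \(\Isom^*\Psi^\DecP(\Policy)=\Isom^*\Policy^\Mixture=\Policy^{\Isom^*\Mixture}\). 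It therefore suffices to show that the pushforward measure \(\Isom^*\Mixture\) is exactly the OP distribution \(\MixtureTilde\) of \(\Isom^*\Policy\); the desired identity \(\Policy^{\Isom^*\Mixture}=\Policy^{\MixtureTilde}=\Psi^{\DecPSecond}(\Isom^*\Policy)\) is then immediate from the definition of the symmetrizer, since the policy corresponding to a distribution depends only on that distribution.

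To identify \(\Isom^*\Mixture\), I would unwind the pushforward measure on the Dirac sum from Equation~\eqref{equation-other-play-mixture}: for measurable \(\mathcal{Z}\subseteq\PolicySet^\DecPSecond\), using that \(\AutProfile^*\Policy\in(\Isom^*)^{-1}(\mathcal{Z})\) iff \(\Isom^*(\AutProfile^*\Policy)\in\mathcal{Z}\), one obtains \(\Isom^*\Mixture(\mathcal{Z})=|\Aut(\DecP)|^{-N}\sum_{\AutProfile\in\Aut(\DecP)^\PlayerSet}\delta_{\Isom^*(\AutProfile^*\Policy)}(\mathcal{Z})\). The key computation is then to verify that \(\Isom^*(\AutProfile^*\Policy)=\AutSecondProfile^*(\Isom^*\Policy)\), where \(\AutSecondProfile\in\Aut(\DecPSecond)^\PlayerSet\) is the conjugated, reindexed profile with components \(\AutSecondProfile_j:=\Isom\circ\AutProfile_{\Isom^{-1}j}\circ\Isom^{-1}\) (each an automorphism of \(\DecPSecond\) by Lemma~\ref{lemma-inverse-composition-isomorphism}). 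This falls out of expanding both sides via Definition~\ref{definition-pushforward-appendix} together with the relation \(\AutProfile^*\Policy=(\Proj_i(\AutProfile_i^*\Policy))_{i\in\PlayerSet}\) and the composition rules of Lemmas~\ref{lemma-action-function-composition-compatible} and \ref{lemma-pull-back-function-composition-compatible}: agent \(j\)'s local policy on both sides becomes \(\Policy_k\) precomposed with the same permutation, once one uses \(\AutSecondProfile_j\circ\Isom=\Isom\circ\AutProfile_{\Isom^{-1}j}\). Finally, \(\AutProfile\mapsto\AutSecondProfile\) is a bijection \(\Aut(\DecP)^\PlayerSet\to\Aut(\DecPSecond)^\PlayerSet\) (with inverse \(\AutSecondProfile\mapsto(\Isom^{-1}\AutSecondProfile_{\Isom i}\Isom)_{i\in\PlayerSet}\)) and \(|\Aut(\DecP)|=|\Aut(\DecPSecond)|\) by Lemma~\ref{decompose-isomorphisms}, so reindexing the sum gives \(\Isom^*\Mixture(\mathcal{Z})=|\Aut(\DecPSecond)|^{-N}\sum_{\AutSecondProfile\in\Aut(\DecPSecond)^\PlayerSet}\delta_{\AutSecondProfile^*(\Isom^*\Policy)}(\mathcal{Z})=\MixtureTilde(\mathcal{Z})\), as needed.

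For the ``in particular'' statement, take \(\DecPSecond=\DecP\) and write \(\Auto\) for \(\Isom\in\Aut(\DecP)\). The first part already gives \(\Auto^*\Psi^\DecP(\Policy)=\Psi^\DecP(\Auto^*\Policy)\), so it only remains to see that \(\Psi^\DecP(\Auto^*\Policy)=\Psi^\DecP(\Policy)\), i.e.\ that \(\Auto^*\Policy\) and \(\Policy\) have the same OP distribution. This is a short reindexing of the Dirac sum: the same expansion as above shows \(\AutProfile^*(\Auto^*\Policy)=(\AutProfile')^*\Policy\) with \(\AutProfile'_j:=\AutProfile_j\circ\Auto\), and since \(\AutProfile\mapsto\AutProfile'\) is a bijection of \(\Aut(\DecP)^\PlayerSet\), the sums defining the two OP distributions agree. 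Hence \(\Auto^*\Psi^\DecP(\Policy)=\Psi^\DecP(\Policy)\), so \(\Psi^\DecP(\Policy)\) is invariant to automorphism.

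The step I expect to be the main obstacle is the identity \(\Isom^*(\AutProfile^*\Policy)=\AutSecondProfile^*(\Isom^*\Policy)\): the profile pushforward \(\AutProfile^*\) uses a separate automorphism per agent and may permute agent indices, so one must track carefully the order in which \(\Isom\) and the individual \(\AutProfile_i\) are composed and confirm that the conjugated, reindexed profile \(\AutSecondProfile_j=\Isom\AutProfile_{\Isom^{-1}j}\Isom^{-1}\) is precisely what makes the two joint policies match agent by agent. Everything else --- unwinding the Dirac measures, the change of variables for pushforward measures, and the cardinality equality --- is routine given the lemmas already established.
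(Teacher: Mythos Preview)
Your proposal is correct and follows essentially the same route as the paper: both reduce to Lemma~\ref{lemma-isomorphism-mixture-commute} after showing that the pushforward of the OP distribution of \(\Policy\) equals the OP distribution of \(\Isom^*\Policy\), via a reindexing of the Dirac sum (the paper phrases this reindexing through Lemma~\ref{decompose-isomorphisms} rather than your explicit conjugated profile \(\AutSecondProfile_j=\Isom\AutProfile_{\Isom^{-1}j}\Isom^{-1}\), but the content is identical). For the automorphism case the paper shows \(\Auto^*\Mixture=\Mixture\) directly, while you factor through the first part and then reindex by \(\AutProfile_j\mapsto\AutProfile_j\Auto\); again, the same computation.
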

\begin{proof}
Let \(\Mixture\) be the other-play distribution of \(\Policy\) and \(\MixtureHat\) the other-play distribution corresponding to \(\Isom^*\Policy\).
Then, using the associativity of function composition and pushforward proven in Lemma~\ref{lemma-pull-back-function-composition-compatible}, and using the ``in particular'' part of Lemma~\ref{decompose-isomorphisms}, it follows that
\begin{multline}\label{equation-image-measure-pull-back}
\MixtureHat
=
|\Aut(\DecPSecond)|^{-N}\sum_{\AutProfile\in\Aut(\DecPSecond)^\PlayerSet}\delta_{\AutProfile^*(\Isom^*\Policy)}
=
|\Aut(\DecPSecond)|^{-N}\sum_{\AutProfile\in\Aut(\DecPSecond)^\PlayerSet}\delta_{(\Proj_i(\AutProfile_i^*\Isom^*\Policy))_{i\in\PlayerSet}}
\\
\overset{\text{Lemma~\ref{decompose-isomorphisms}}}{=}
|\Aut(\DecP)|^{-N}\sum_{\AutProfile\in\Aut(\DecP)^\PlayerSet}\delta_{(\Proj_i(\Isom^*\AutProfile_i^*\Policy))_{i\in\PlayerSet}}
=
|\Aut(\DecP)|^{-N}\sum_{\AutProfile\in\Aut(\DecP)^\PlayerSet}\delta_{\Isom^*(\AutProfile^*\Policy)}
\\
=
|\Aut(\DecP)|^{-N}\sum_{\AutProfile\in\Aut(\DecP)^\PlayerSet}\delta_{\AutProfile^*\Policy}\circ (\Isom^*)^{-1}
=\Mixture\circ (\Isom^*)^{-1}.
\end{multline}
Thus, using Lemma~\ref{lemma-isomorphism-mixture-commute}, it is \begin{equation}\label{equation-psi-isom-commute}
\Isom^*\Psi^\DecP(\Policy)
=\Isom^*\Policy^\Mixture
\overset{\text{Lemma~\ref{lemma-isomorphism-mixture-commute}}}{=}
\Policy^{\Isom^*\Mixture}
=\Policy^{\Mixture\circ(\Isom^*)^{-1}}
\overset{(\ref{equation-image-measure-pull-back})}{=}
\Policy^{\MixtureHat}
=\Psi^\DecPSecond(\Isom^*\Policy)
\end{equation}

Finally, assume \(\DecPSecond=\DecP\). Then \(\Isom\) is an automorphism and \(\Aut(\DecPSecond)=\Aut(\DecP)=\Aut(\DecP)\circ\Isom\) by Lemma~\ref{decompose-isomorphisms}. Hence,
\begin{multline}\label{equation-image-measure-pull-back-auto}
\Mixture\circ (\Isom^*)^{-1}
=
|\Aut(\DecP)|^{-N}\sum_{\AutProfile\in\Aut(\DecP)^\PlayerSet}\delta_{\AutProfile^*\Policy}\circ (\Isom^*)^{-1}
\\
=
|\Aut(\DecP)|^{-N}\sum_{\AutProfile\in\Aut(\DecP)^\PlayerSet}\delta_{(\Proj_i(\Isom^*\AutProfile_i^*\Policy))_{i\in\PlayerSet}}
\\
\overset{\text{Lemma~\ref{decompose-isomorphisms}}}{=}
|\Aut(\DecP)|^{-N}\sum_{\AutProfile\in\Aut(\DecP)^\PlayerSet}\delta_{\AutProfile^*\Policy}
=\Mixture.
\end{multline}
By (\ref{equation-psi-isom-commute}), it follows that
\[\Isom^*\Psi^\DecP(\Policy)
\overset{(\ref{equation-psi-isom-commute})}{=}
\Policy^{\Mixture\circ(\Isom^*)^{-1}}
\overset{(\ref{equation-image-measure-pull-back-auto})}{=}
\Policy^\Mixture=\Psi^\DecP(\Policy),
\]
which concludes the proof.
\end{proof}

A direct corollary is that we can define the pushforward purely in terms of equivalence classes of policies. This will also be useful later.

\begin{defn}\label{defn-pushforward-of-equivalence-classes}
Let \(\DecP,\DecPSecond\) be isomorphic Dec-POMDPs with \(\Isom\in\Iso(\DecP,\DecPSecond)\). Let \([\Policy]\in\faktor{\PolicySet^\DecP}{\equiv_\DecP}\). We define the pushforward equivalence class \(\Isom^*[\Policy]\in\faktor{\PolicySet_{\DecPSecond}}{\equiv_{\DecPSecond}}\) via
\[f^*[\Policy]:=[f^*\Policy].\]
\end{defn}

The following corollary show that this is well-defined, that the pushforward of an equivalence class does not depend on the particular chosen isomorphism, and that it is compatible with function composition.

\begin{cor}\label{cor-pull-back-well-defined} 
\begin{enumerate}
    \item[(i)] The pushforward of an equivalence class is well-defined, i.e., for any \(\Policy,\Policy'\in\PolicySet^\DecP\) such that \(\Policy\equiv_\DecP\Policy'\), it is \([\Isom^*\Policy]=[\Isom^*\Policy']\).
    \item[(ii)]Any two isomorphisms \(\Isom,\Isom'\in\Iso(\DecP,\DecPSecond)\) induce the same pushforward.
    \item[(iii)]Analogous results to those in Lemma~\ref{lemma-pull-back-function-composition-compatible} apply to the pushforward of equivalence classes.
\end{enumerate} 
\end{cor}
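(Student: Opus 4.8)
The plan is to deduce all three claims mechanically from the commutation identity for the symmetrizer (Corollary~\ref{corollary-psi-isom-commute}) together with the decomposition lemma for isomorphisms (Lemma~\ref{decompose-isomorphisms}) and the functoriality of the pushforward (Lemma~\ref{lemma-pull-back-function-composition-compatible}); essentially no new computation is required.

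For (i), I would argue directly. Suppose $\Policy \equiv_\DecP \Policy'$, i.e.\ $\Psi^\DecP(\Policy) = \Psi^\DecP(\Policy')$. Applying $\Isom^*$ to both sides and invoking the identity $\Isom^*\Psi^\DecP(\cdot) = \Psi^\DecPSecond(\Isom^*\cdot)$ from Corollary~\ref{corollary-psi-isom-commute} gives $\Psi^\DecPSecond(\Isom^*\Policy) = \Psi^\DecPSecond(\Isom^*\Policy')$, which is precisely $[\Isom^*\Policy] = [\Isom^*\Policy']$; hence $\Isom^*[\Policy] := [\Isom^*\Policy]$ does not depend on the chosen representative.

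For (ii), the key auxiliary observation is that $\Psi^\DecPSecond$ annihilates automorphism pushforwards: for $\Auto \in \Aut(\DecPSecond)$ and any $\PolicyTilde \in \PolicySet^\DecPSecond$, chaining the two halves of Corollary~\ref{corollary-psi-isom-commute} with $\DecP = \DecPSecond$ gives $\Psi^\DecPSecond(\Auto^*\PolicyTilde) = \Auto^*\Psi^\DecPSecond(\PolicyTilde) = \Psi^\DecPSecond(\PolicyTilde)$, so $\Auto^*\PolicyTilde \equiv_\DecPSecond \PolicyTilde$. Now given $\Isom, \Isom' \in \Iso(\DecP,\DecPSecond)$, Lemma~\ref{decompose-isomorphisms} furnishes a unique $\Auto \in \Aut(\DecPSecond)$ with $\Isom' = \Auto \circ \Isom$, and Lemma~\ref{lemma-pull-back-function-composition-compatible} gives $(\Isom')^*\Policy = \Auto^*(\Isom^*\Policy)$. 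Applying the auxiliary observation with $\PolicyTilde = \Isom^*\Policy$ yields $(\Isom')^*\Policy \equiv_\DecPSecond \Isom^*\Policy$, hence $[(\Isom')^*\Policy] = [\Isom^*\Policy]$, so the two isomorphisms induce the same pushforward on equivalence classes.

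For (iii), I would simply transport Lemma~\ref{lemma-pull-back-function-composition-compatible} through the definition $\Isom^*[\Policy] := [\Isom^*\Policy]$: from $\Id^*\Policy = \Policy$ one gets $\Id^*[\Policy] = [\Policy]$; from $\IsomSecond^*(\Isom^*\Policy) = (\IsomSecond\circ\Isom)^*\Policy$ one gets $\IsomSecond^*(\Isom^*[\Policy]) = (\IsomSecond\circ\Isom)^*[\Policy]$; and invertibility of the pushforward via $\Isom^{-1}$ follows the same way. I do not expect any genuine obstacle here — the corollary is bookkeeping on top of the earlier results — the only point needing a moment's care is the auxiliary observation in (ii), where one must combine \emph{both} halves of Corollary~\ref{corollary-psi-isom-commute} (invariance of $\Psi(\PolicyTilde)$ under automorphisms and the commutation identity) rather than either one alone.
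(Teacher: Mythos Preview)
Your proposal is correct and matches the paper's own proof essentially step for step: part (i) via the commutation identity $\Isom^*\Psi^\DecP = \Psi^\DecPSecond\circ\Isom^*$, part (ii) via the decomposition $\Isom' = \Auto\circ\Isom$ combined with both halves of Corollary~\ref{corollary-psi-isom-commute}, and part (iii) by reading Lemma~\ref{lemma-pull-back-function-composition-compatible} through the definition. The only cosmetic difference is that you isolate the auxiliary observation $\Auto^*\PolicyTilde \equiv_\DecPSecond \PolicyTilde$ explicitly before applying it, whereas the paper runs the chain $\Psi^\DecPSecond(\Isom^*\Policy) = \Auto^*\Psi^\DecPSecond(\Isom^*\Policy) = \Psi^\DecPSecond(\Auto^*\Isom^*\Policy) = \Psi^\DecPSecond((\Isom')^*\Policy)$ in one line.
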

\begin{proof}
First, let \(\Policy\equiv_\DecP\Policy'\in\PolicySet^\DecP\) and \(\Isom\in\Iso(\DecP,\DecPSecond)\) arbitrary. Then, using Corollary~\ref{corollary-psi-isom-commute} and the definition of \(\equiv_\DecP\), it is
\[\Psi^{\DecPSecond}(\Isom^*\Policy)=\Isom^*\Psi^\DecP(\Policy)=\Isom^*\Psi^\DecP(\Policy')=\Psi^{\DecPSecond}(\Isom^*\Policy').\]
Thus, \([\Isom^*\Policy]=[\Isom^*\Policy']\), which proves the first part.

Second, let \(\Isom,\IsomSecond\in \Iso(\DecP,\DecPSecond)\) and \(\Policy\in\PolicySet^\DecP\) arbitrary. By Lemma~\ref{decompose-isomorphisms}, there then exists \(\Auto\in\Aut(\DecPSecond)\) such that \(\IsomSecond=\Auto\circ\Isom\). Hence, using the second and first part of Corollary~\ref{corollary-psi-isom-commute} and Lemma~\ref{lemma-pull-back-function-composition-compatible}, it is
\begin{equation}
\label{eq:340}\Psi^{\DecPSecond}(\Isom^*\Policy)
\overset{\text{Corollary~\ref{corollary-psi-isom-commute}}}{=}
\Auto^*\Psi^{\DecPSecond}(\Isom^*\Policy)
\overset{\text{Corollary~\ref{corollary-psi-isom-commute}}}{=}
\Psi^{\DecPSecond}(\Auto^*(\Isom^*\Policy))
\overset{\text{Lemma~\ref{lemma-pull-back-function-composition-compatible}}}{=}
\Psi^{\DecPSecond}((\Auto\circ\Isom)^*\Policy)
=\Psi^{\DecPSecond}(\IsomHat^*\Policy).\end{equation}
Finally, it follows that
\[\Isom^*[\Policy]=[\Isom^*\Policy]\overset{\ref{eq:340}}{=}[\IsomSecond^*\Policy]=\IsomSecond^*[\Policy],
\]
which concludes the second part.

The third part follows directly from Lemma~\ref{lemma-pull-back-function-composition-compatible} by using the definition of the pushforward of equivalence classes.
\end{proof}

In the following, we say that two equivalence classes \([\Policy],[\Policy']\) for \(\Policy\in\PolicySet^\DecP,\Policy'\in\PolicySet^\DecPSecond\) correspond to each other if there exists an isomorphism \(\Isom\in\Iso(\DecP,\DecPSecond)\) such that \(\Isom^*[\Policy]=[\Policy']\). In that case, in a slight abuse of the terms, we also say that \(\Policy\) and \(\Policy'\) are equivalent, extending the equivalence between policies defined above to policies for different Dec-POMDPs. Using Corollary~\ref{cor-pull-back-well-defined}, one can see that two policies \(\Policy,\Policy'\in\PolicySet^\DecP\) are equivalent in the sense that \([\Policy]=[\Policy']\) if and only if there exists an isomorphism \(\Isom\in\Iso(\DecP,\DecP)\) such that \(\Isom^*[\Policy]=[\Policy']\), so this extended notion is equivalent to the old one for two policies  \(\Policy,\Policy'\in\PolicySet^\DecP\). We continue to reserve the notation \([\Policy]\) and \(\equiv\) for policies from the same Dec-POMDP.

\subsection{Main characterizations}
\label{appendix-main-characterizations}

Having defined the symmetrizer \(\Psi\), we can now characterize the OP objective as transforming a policy \(\Policy\) into an invariant policy \(\Psi(\Policy)\) and evaluating the expected return of that policy.
This means that we can ``pass to the quotient'' and consider the OP objective as a map \(\tilde{J}_\OP\) of equivalence classes, \(\tilde{J}_\OP([\Policy]):=J_\OP(\Policy)\) for \([\Policy]\in\faktor{\PolicySet}{\equiv}\), using the equivalence relation on policies introduced above. The result is essentially a rigorous version of \textcite{hu2020other}'s Proposition~1 in our setup. It will help us later to analyze the OP-optimal policies in a given example, as it implies that we can restrict ourselves to considering representatives \(\Psi(\Policy)\) of equivalence classes. 

\begin{thm}\label{thm-op-mixture} Let \(\DecP\) be a Dec-POMDP, let \(\Policy\in\PolicySet^\DecP\), and let \(\Psi\) be the symmetrizer for \(\DecP\). Then \(\Psi(\Policy)\) is invariant to automorphism, and it is
\begin{equation}\label{eq:350}
J^\DecP_\OP(\Policy)=J^\DecP(\Psi(\Policy)).
\end{equation}
In particular, we can consider the OP objective as a function of equivalence classes \([\Policy]\in\faktor{\PolicySet^\DecP}{\equiv_\DecP}\), and if there exists an optimal policy for the OP objective, then there also exists an optimal policy that is invariant to automorphism.
\end{thm}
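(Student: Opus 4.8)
The plan is to chain together three facts that are already available: that the OP distribution $\Mixture$ of $\Policy$ has independent local policies, that a policy corresponding to such a distribution realizes the same expected return as the distribution itself (Proposition~\ref{mixture-lemma}), and that $\Psi(\Policy)$ is invariant under pushforward by automorphisms (Corollary~\ref{corollary-psi-isom-commute}).

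First I would dispose of the invariance claim: it is exactly the special case $\DecPSecond=\DecP$ of Corollary~\ref{corollary-psi-isom-commute}, which states that $\Auto^*\Psi^\DecP(\Policy)=\Psi^\DecP(\Policy)$ for every $\Auto\in\Aut(\DecP)$. For the identity $J^\DecP_\OP(\Policy)=J^\DecP(\Psi(\Policy))$, write $\Mixture:=|\Aut(\DecP)|^{-N}\sum_{\AutProfile\in\Aut(\DecP)^\PlayerSet}\delta_{\AutProfile^*\Policy}$ for the OP distribution of $\Policy$. By the lemma immediately following the definition of the OP distribution, $\Mixture$ has independent local policies, so Proposition~\ref{mixture-lemma} applies to it and gives $J^\DecP(\Psi(\Policy))=J^\DecP(\Policy^\Mixture)=J^\DecP(\Mixture)$. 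Expanding $J^\DecP(\Mixture)=\E_{\Policy'\sim\Mixture}[J^\DecP(\Policy')]$ and using that $\Mixture$ is a uniform average of Dirac masses at the policies $\AutProfile^*\Policy$, this equals $|\Aut(\DecP)|^{-N}\sum_{\AutProfile\in\Aut(\DecP)^\PlayerSet}J^\DecP(\AutProfile^*\Policy)=\E_{\AutProfile\sim\U(\Aut(\DecP)^\PlayerSet)}[J^\DecP(\AutProfile^*\Policy)]$, which is $J^\DecP_\OP(\Policy)$ by Definition~\ref{defn-other-play-objective-general-appendix}. Chaining the equalities yields the theorem's displayed identity.

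The ``in particular'' part follows at once. Since $\Psi^\DecP$ is constant on each class $[\Policy]$ by Definition~\ref{defn-other-play-equivalent}, the identity just proved shows $J^\DecP_\OP$ is constant on classes, hence descends to a well-defined $\tilde J_\OP$ on $\faktor{\PolicySet^\DecP}{\equiv_\DecP}$. For the existence claim, let $\Policy^*$ attain the maximum of $J^\DecP_\OP$ (a maximizer exists by Remark~\ref{remark-other-play-always-admits-a-maximum}). Then $\Psi(\Policy^*)$ is invariant to automorphism, so Proposition~\ref{prop-invariant-policy-self-play-value-equals-other-play-value} gives $J^\DecP_\OP(\Psi(\Policy^*))=J^\DecP(\Psi(\Policy^*))$, and the displayed identity gives $J^\DecP(\Psi(\Policy^*))=J^\DecP_\OP(\Policy^*)=\max_\Policy J^\DecP_\OP(\Policy)$; hence $\Psi(\Policy^*)$ is itself an OP-optimizer and is invariant to automorphism.

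The work is essentially bookkeeping: the substantive content --- the Kuhn-type equivalence $J^\DecP(\Mixture)=J^\DecP(\Policy^\Mixture)$ and the commutation of $\Psi$ with automorphisms --- is already in hand from Proposition~\ref{mixture-lemma} and Corollary~\ref{corollary-psi-isom-commute}. The only points demanding care are checking that the OP distribution genuinely has independent local policies (so that Proposition~\ref{mixture-lemma} is applicable) and making sure the passage from $\E_{\Policy'\sim\Mixture}$ to the uniform sum over $\Aut(\DecP)^\PlayerSet$ lines up with Definition~\ref{defn-other-play-objective-general-appendix}, in particular that the joint policy $\AutProfile^*\Policy$ appearing there (whose $i$th local policy is $\Proj_i(\AutProfile_i^*\Policy)$) is exactly the policy on which the Dirac mass $\delta_{\AutProfile^*\Policy}$ in $\Mixture$ is supported.
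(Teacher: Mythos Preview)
Your proposal is correct and follows essentially the same route as the paper's own proof: cite Corollary~\ref{corollary-psi-isom-commute} for invariance, unwind the definition of the OP distribution $\Mixture$ and apply Proposition~\ref{mixture-lemma} to obtain $J^\DecP_\OP(\Policy)=J^\DecP(\Mixture)=J^\DecP(\Psi(\Policy))$, and derive the ``in particular'' clauses from these two facts. The only cosmetic difference is that you invoke Proposition~\ref{prop-invariant-policy-self-play-value-equals-other-play-value} explicitly for the existence claim, whereas the paper leaves that step implicit.
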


\begin{proof}
In the following, fix a Dec-POMDP \(\DecP\). 
Let \(\Policy\in\PolicySet\) and let \(\Mixture\) be the OP distribution of \(\Policy\), such that \(\Psi(\Policy)=\Policy^\Mixture\). Then by the second part of Corollary~\ref{corollary-psi-isom-commute}, \(\Psi(\Policy)\) is invariant to automorphism.
Moreover, using Proposition~\ref{mixture-lemma}, it is
\begin{multline}\label{eq:34}
J_\OP(\Policy)=\E_{\AutProfile\sim \U(\Aut(\DecP)^\PlayerSet)}\left[J(\AutProfile^*\Policy)
\right]
=\sum_{\AutProfile\in\Aut(\DecP)^\PlayerSet}|\Aut(\DecP)|^{-N}J(\AutProfile^*\Policy)
\\=
\sum_{\AutProfile\in\Aut(\DecP)^\PlayerSet}|\Aut(\DecP)|^{-N}\int_{\Policy'\in\PolicySet}J(\Policy')
\mathrm{d}\left(\delta_{\AutProfile^*\Policy}\right)
\\
=
\int_{\Policy'\in\PolicySet}J(\Policy')\mathrm{d}\left(\sum_{\AutProfile\in\Aut(\DecP)^\PlayerSet}|\Aut(\DecP)|^{-N}\delta_{\AutProfile^*\Policy}\right)
\\
\overset{(\ref{equation-other-play-mixture})}{=}
\int_{\Policy'\in\PolicySet}J(\Policy')\mathrm{d}\Mixture
\overset{(\ref{expected-return-mixture})}{=}J(
\Mixture)
\overset{\text{Proposition~\ref{mixture-lemma}}}{=}
J(\Psi(\Policy)),
\end{multline}
which proves Equation~\ref{eq:350}.

Turning to the ``in particular'' statement, let \(\Policy'\equiv\Policy\) for a second policy \(\Policy'\in\PolicySet\).
By Definition~\ref{defn-other-play-equivalent}, this means that \(\Psi(\Policy')=\Psi(\Policy)\).
Hence, by Equation~\ref{eq:34}, it follows that \(J_\OP(\Policy)=J_\OP(\Policy')\), which shows that the function \(\tilde{J}_{\OP}\colon\faktor{\PolicySet}{\equiv}\rightarrow\mathbb{R},[\Policy]\mapsto J_\OP(\Policy)\) is well-defined.

Lastly, assume that there is \(\Policy\in\argmax_{\Policy'\in\PolicySet}J_\OP(\Policy')\) (see Remark~\ref{remark-other-play-always-admits-a-maximum} regarding the existence of such a policy). Then by Equation~\ref{eq:34}, it is also \(\Psi(\Policy)\in\argmax_{\Policy'\in\PolicySet}J_\OP(\Policy')\), so \(\Psi(\Policy)\) is an OP-optimal policy that is invariant to automorphism.
\end{proof}

As a corollary, we can show that isomorphisms do not affect the OP value of a policy (we already know this about SP from Corollary~\ref{cor-sp-invariant-to-pullback}). In the following, we define \(\PolicySet_\OP^\DecP:=\argmax_{\Policy\in\PolicySet^\DecP}J^\DecP_\OP(\Policy)\) for a Dec-POMDP \(\DecP\).

\begin{cor}\label{op-invariant-to-pullback}
Let \(\DecP\), \(\DecPSecond\) be isomorphic Dec-POMDPs with \(\Isom\in\Iso(\DecP,\DecPSecond)\), and let \(\Policy\in\PolicySet^\DecP\). Then it is
\[J_\OP^{\DecP}(\Policy)=J_\OP^\DecPSecond(\Isom^*\Policy).\]
In particular, if \(\Policy\in\PolicySet_\OP^\DecP\), then also \(\Isom^*\Policy\in\PolicySet_\OP^{\DecPSecond}\).
\end{cor}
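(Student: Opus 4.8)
The plan is to chain together three results already established in the excerpt, in essentially one line of equalities. First I would invoke Theorem~\ref{thm-op-mixture} applied to \(\DecP\), which rewrites the OP value as an SP value of the symmetrized policy: \(J_\OP^\DecP(\Policy) = J^\DecP(\Psi^\DecP(\Policy))\). Next, since \(\Isom\in\Iso(\DecP,\DecPSecond)\), Corollary~\ref{cor-sp-invariant-to-pullback} says the SP objective is preserved by the pushforward, so \(J^\DecP(\Psi^\DecP(\Policy)) = J^\DecPSecond(\Isom^*\Psi^\DecP(\Policy))\). Then Corollary~\ref{corollary-psi-isom-commute} tells us the symmetrizer commutes with isomorphisms, \(\Isom^*\Psi^\DecP(\Policy) = \Psi^{\DecPSecond}(\Isom^*\Policy)\), so the previous term equals \(J^\DecPSecond(\Psi^{\DecPSecond}(\Isom^*\Policy))\). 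Finally, applying Theorem~\ref{thm-op-mixture} again, this time to \(\DecPSecond\) and the policy \(\Isom^*\Policy\in\PolicySet^\DecPSecond\), gives \(J^\DecPSecond(\Psi^{\DecPSecond}(\Isom^*\Policy)) = J_\OP^\DecPSecond(\Isom^*\Policy)\). Concatenating these four equalities yields \(J_\OP^\DecP(\Policy) = J_\OP^\DecPSecond(\Isom^*\Policy)\).

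For the ``in particular'' clause, I would use that the pushforward \(\Isom^*\colon\PolicySet^\DecP\to\PolicySet^\DecPSecond\) is a bijection, with inverse \((\Isom^{-1})^*\), by part (ii) of Lemma~\ref{lemma-pull-back-function-composition-compatible} together with \(\Isom^{-1}\in\Iso(\DecPSecond,\DecP)\) from Lemma~\ref{lemma-inverse-composition-isomorphism}. Combining bijectivity with the identity just proven, for every \(\PolicyTilde\in\PolicySet^\DecPSecond\) we have \(J_\OP^\DecPSecond(\PolicyTilde) = J_\OP^\DecPSecond(\Isom^*((\Isom^{-1})^*\PolicyTilde)) = J_\OP^\DecP((\Isom^{-1})^*\PolicyTilde)\), so \(\Isom^*\) maps \(\PolicySet^\DecP\) onto \(\PolicySet^\DecPSecond\) while transporting OP values; hence \(\max_{\Policy'\in\PolicySet^\DecP}J_\OP^\DecP(\Policy') = \max_{\PolicyTilde\in\PolicySet^\DecPSecond}J_\OP^\DecPSecond(\PolicyTilde)\). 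If \(\Policy\in\PolicySet_\OP^\DecP\), then \(J_\OP^\DecPSecond(\Isom^*\Policy) = J_\OP^\DecP(\Policy)\) equals this common maximum, so \(\Isom^*\Policy\in\PolicySet_\OP^{\DecPSecond}\).

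I do not anticipate a genuine obstacle here, since every ingredient is already in place; the only thing to be careful about is bookkeeping in the chain of equalities, namely making sure that Theorem~\ref{thm-op-mixture} and Corollary~\ref{corollary-psi-isom-commute} are each applied with the correct base Dec-POMDP (\(\DecP\) versus \(\DecPSecond\)) and that the symmetrizers \(\Psi^\DecP\) and \(\Psi^{\DecPSecond}\) are not conflated. If anything, the mild subtlety is in the ``in particular'' part: one should explicitly note that \(\Isom^*\) is surjective onto \(\PolicySet^\DecPSecond\) (not just injective) so that the maxima over the two policy sets actually coincide, rather than merely that the image of an optimizer is a feasible point.
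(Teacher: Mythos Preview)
Your proposal is correct and matches the paper's proof essentially line for line: the paper chains Theorem~\ref{thm-op-mixture}, Corollary~\ref{corollary-psi-isom-commute}, and Corollary~\ref{cor-sp-invariant-to-pullback} in the same way (just written from the \(\DecPSecond\) side toward the \(\DecP\) side), and for the ``in particular'' clause it likewise applies the first part to \(\Isom^{-1}\in\Iso(\DecPSecond,\DecP)\) to bound \(J_\OP^\DecPSecond(\PolicyTilde)\) by \(J_\OP^\DecPSecond(\Isom^*\Policy)\). Your remark about surjectivity is not wrong but is slightly more than needed; the paper simply observes \(J_\OP^\DecPSecond(\PolicyTilde)=J_\OP^\DecP((\Isom^{-1})^*\PolicyTilde)\leq J_\OP^\DecP(\Policy)\) directly, without separately establishing that the two maxima coincide.
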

\begin{proof}
Using Theorem~\ref{thm-op-mixture}, Corollary~\ref{corollary-psi-isom-commute}, and Corollary~\ref{cor-sp-invariant-to-pullback}, it is
\begin{multline}\label{eq:35}
J_\OP^{\DecPSecond}(\Isom^*\Policy)
\overset{\text{Theorem~\ref{thm-op-mixture}}}{=}
J^{\DecPSecond}(\Psi^{\DecPSecond}(\Isom^*\Policy))
\overset{\text{Corollary~\ref{corollary-psi-isom-commute}}}{=}
J^{\DecPSecond}(\Isom^*\Psi^{\DecP}(\Policy))
\\
\overset{\text{Corollary~\ref{cor-sp-invariant-to-pullback}}}{=}
J^{\DecP}(\Psi^{\DecP}(\Policy))
\overset{\text{Theorem~\ref{thm-op-mixture}}}{=}
J_\OP^{\DecP}(\Policy).
\end{multline}

Turning to the ``in particular'' statement, assume that \(\Policy\in\PolicySet^\DecP_\OP\). By Lemma~\ref{lemma-inverse-composition-isomorphism}, it is \(\Isom^{-1}\in\Iso(\DecPSecond,\DecP)\). Hence, for any \(\PolicyTilde\in\PolicySet^{\DecPSecond}\), it follows from the preceding that
\begin{equation}
J^{\DecPSecond}_\OP(\PolicyTilde)=J^{\DecP}_\OP((\Isom^{-1})^*\PolicyTilde)
\leq J^{\DecP}_\OP(\Policy)=J^{\DecPSecond}_\OP(\Isom^*\Policy).
\end{equation}
This shows that \(\Isom^*\Policy\in \PolicySet^{\DecPSecond}_\OP\).
\end{proof}

Finally, we turn to the connection between OP and the payoff in an LFC game.
The following result will be helpful in both showing the inadequacy of OP and in proving that OP with tie-breaking is optimal. It shows that equivalent policies in \([\Policy]\in\faktor{\PolicySet}{\equiv}\) are all compatible when played against each other by different principals in the LFC game. The proof is based on Lemma~\ref{lemma-decomposition-payoff-zero-shot-coordination-game} and Proposition~\ref{mixture-lemma}.

\begin{thm}\label{thm-zero-shot-invariant}
Let \(\DecP\) be a Dec-POMDP, let \(\Psi\) be the symmetrizer for \(\DecP\), and define \({\DecPSet:=\{\Isom^*\DecP\mid \Isom\in\Sym(\DecP)\}}\).
Let \(\LAProfile_1,\dotsc,\LAProfile_N\in\LASet^\DecPSet\).
For any \(\DecPSecond\in\DecPSet\), choose \(\Isom_{\DecP,\DecPSecond}\in\Iso(\DecP,\DecPSecond)\) arbitrarily. Then it is
\begin{equation}
 U^\DecP(\LAProfile)=\E_{\DecP_i\sim U(\DecPSet),\,i\in\PlayerSet}\left[
 \E_{\Policy^{(j)}\sim\Isom_{\DecP_j,\DecP}^*\LAProfile_j(\DecP_j),\,j\in\mathcal{N}}
 \left[
 J^\DecP \left(\left(\Psi_k(\Policy^{(k)})\right)_{k\in\PlayerSet}\right)
 \right]
 \right].
\end{equation}
\end{thm}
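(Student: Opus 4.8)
The plan is to derive the claim from the decomposition of the LFC payoff already established in Lemma~\ref{lemma-decomposition-payoff-zero-shot-coordination-game}, which states
\[
U^\DecP(\LAProfile)=\E_{\DecP_i\sim \U(\DecPSet),\,i\in\PlayerSet}\left[
\E_{\Policy^{(j)}\sim\Isom_{\DecP_j,\DecP}^*\LAProfile_j(\DecP_j),\,j\in\mathcal{N}}
\left[\E_{\AutProfile\sim\U(\Aut(\DecP)^\PlayerSet)}\left[J^\DecP\left(\left(\Proj_k(\AutProfile_k^*\Policy^{(k)})\right)_{k\in\PlayerSet}\right)\right]\right]\right].
\]
Since both the inner quantity above and the target $J^\DecP((\Psi_k(\Policy^{(k)}))_{k\in\PlayerSet})$ are genuine functions of the fixed tuple $\Policy^{(1)},\dotsc,\Policy^{(N)}\in\PolicySet^\DecP$, it suffices to prove the pointwise identity
\[
\E_{\AutProfile\sim\U(\Aut(\DecP)^\PlayerSet)}\left[J^\DecP\left(\left(\Proj_k(\AutProfile_k^*\Policy^{(k)})\right)_{k\in\PlayerSet}\right)\right]=J^\DecP\left(\left(\Psi_k(\Policy^{(k)})\right)_{k\in\PlayerSet}\right),
\]
and then substitute it back under the two outer expectations.

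To prove this identity I would introduce the distribution $\Mixture\in\Delta(\PolicySet^\DecP)$ obtained as the image of $\U(\Aut(\DecP)^\PlayerSet)$ under the map $g\colon\AutProfile\mapsto(\Proj_k(\AutProfile_k^*\Policy^{(k)}))_{k\in\PlayerSet}$. Because $g$ acts coordinatewise — its $k$-th component depends only on $\AutProfile_k$ — and $\U(\Aut(\DecP)^\PlayerSet)=\otimes_k\U(\Aut(\DecP))$, the measure $\Mixture$ is a product measure, hence has independent local policies, with $k$-th marginal $\Mixture_k=|\Aut(\DecP)|^{-1}\sum_{\Auto\in\Aut(\DecP)}\delta_{\Proj_k(\Auto^*\Policy^{(k)})}$. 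A change of variables together with the definition of the expected return of a distribution, Equation~(\ref{expected-return-mixture}), gives $J^\DecP(\Mixture)=\E_{\Policy\sim\Mixture}[J^\DecP(\Policy)]=\E_{\AutProfile\sim\U(\Aut(\DecP)^\PlayerSet)}[J^\DecP((\Proj_k(\AutProfile_k^*\Policy^{(k)}))_{k})]$, which is the left-hand side, while Proposition~\ref{mixture-lemma} gives $J^\DecP(\Mixture)=J^\DecP(\Policy^\Mixture)$.

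It then remains to check that $\Policy^\Mixture=(\Psi_k(\Policy^{(k)}))_{k\in\PlayerSet}$, which I would do by comparing marginals. For each $k$, let $\Mixture^{(k)}$ be the OP distribution of $\Policy^{(k)}$ from Equation~(\ref{equation-other-play-mixture}); it has independent local policies (shown in Appendix~\ref{appendix-other-play-distribution}), and the same computation as in Equation~(\ref{eq:300}), summing out the $N-1$ coordinates other than $k$, shows that its $k$-th marginal is $\Mixture^{(k)}_k=|\Aut(\DecP)|^{-1}\sum_{\Auto\in\Aut(\DecP)}\delta_{\Proj_k(\Auto^*\Policy^{(k)})}=\Mixture_k$. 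Since the local policy $\Policy_i^{\Distr_i}$ of Definition~\ref{policy-mixture} depends only on the marginal $\Distr_i$ — this is exactly what Lemma~\ref{lemma-mixtures-factorize} establishes — and since $\Policy^\Distr=(\Policy_i^{\Distr_i})_{i\in\PlayerSet}$, it follows that $\Proj_k(\Policy^\Mixture)=\Policy_k^{\Mixture_k}=\Policy_k^{\Mixture^{(k)}_k}=\Proj_k\!\big((\Policy^{(k)})^{\Mixture^{(k)}}\big)=\Proj_k(\Psi(\Policy^{(k)}))=\Psi_k(\Policy^{(k)})$, using Definition~\ref{policy-corresponding-to-other-play-distribution} of the symmetrizer in the penultimate step. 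Chaining $\text{LHS}=J^\DecP(\Mixture)=J^\DecP(\Policy^\Mixture)=J^\DecP((\Psi_k(\Policy^{(k)}))_{k})$ proves the pointwise identity, and inserting it into the displayed formula from Lemma~\ref{lemma-decomposition-payoff-zero-shot-coordination-game} finishes the proof.

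I expect the marginal bookkeeping of the last paragraph to be the main obstacle: one must verify carefully that forming the policy corresponding to a distribution only sees that distribution's marginals, so that permuting the automorphisms \emph{jointly within each policy $\Policy^{(k)}$} (as in the OP distribution $\Mixture^{(k)}$) and permuting them \emph{independently across agents} (as in $\Mixture$) yield the same corresponding joint policy after projecting onto coordinate $k$, and that the off-diagonal coordinates of each $\Mixture^{(k)}$ play no role. Once this is pinned down, the remaining steps are direct applications of Proposition~\ref{mixture-lemma}, Lemma~\ref{lemma-mixtures-factorize}, and the definitions.
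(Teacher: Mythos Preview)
Your proposal is correct and follows essentially the same route as the paper: both start from Lemma~\ref{lemma-decomposition-payoff-zero-shot-coordination-game}, introduce the same product distribution $\Mixture=|\Aut(\DecP)|^{-N}\sum_{\AutProfile}\otimes_k\delta_{\Proj_k(\AutProfile_k^*\Policy^{(k)})}$ (the paper calls it $\MixtureHat$), identify its $k$-th marginal with that of the OP distribution $\Mixture^{(k)}$ of $\Policy^{(k)}$, and then invoke Proposition~\ref{mixture-lemma} to pass from $J^\DecP(\Mixture)$ to $J^\DecP((\Psi_k(\Policy^{(k)}))_k)$. The only cosmetic difference is that you isolate the pointwise identity for fixed $\Policy^{(1)},\dotsc,\Policy^{(N)}$ before substituting back, whereas the paper carries the outer expectations along throughout.
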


\begin{proof}
First, consider arbitrary joint policies \(\Policy^{(1)},\dots,\Policy^{(N)}\in\PolicySet^\DecP\) and let \(\Mixture^{(i)}\) be the OP distribution of \(\Policy^{(i)}\), so that \(\Policy^{\Mixture^{(i)}}=\Psi(\Policy^{(i)})\) for \(i\in\PlayerSet\). Define the distribution
\begin{equation}\label{eq:37}
\MixtureHat(\Policy^{(1)},\dotsc,\Policy^{(N)}):=|\Aut(\DecP)|^{-N}\sum_{\AutProfile\in\Aut(\DecP)^\PlayerSet}\otimes_{i\in\PlayerSet}\delta_{\Proj_i(\AutProfile_i^*\Policy^{(i)})}
\end{equation}
as a function of \(\Policy^{(1)},\dotsc,\Policy^{(N)}\).
It can easily be seen that \(\MixtureHat_(\Policy^{(1)},\dotsc,\Policy^{(N)})\in \Delta(\PolicySet^\DecP)\) and that it has independent local policies. Moreover, \(\MixtureHat(\Policy^{(1)},\dotsc,\Policy^{(N)})_i=\Mixture^{(i)}_i\), i.e., the marginal distribution for agent \(i\in\PlayerSet\) is equal in \(\MixtureHat(\Policy^{(1)},\dotsc,\Policy^{(N)})\) and \(\Mixture^{(i)}\). Hence, also the corresponding local policies are identical, that is,
\begin{equation}\label{equation-mixture-hat-equal-psi}
\Policy_i^{\MixtureHat(\Policy^{(1)},\dotsc,\Policy^{(N)})_i}
=\Policy^{\Mixture^{(i)}_i}_i=\Psi_i(\Policy^{(i)})
\end{equation}
for \(i\in\PlayerSet\).

It follows that
\begin{align}\label{eq:36}
&U^\DecP(\LAProfile_1,\dotsc,\LAProfile_N)
\\\label{eq:38a}
&=
\E_{\DecP_i\sim U(\DecPSet),\,i\in\PlayerSet}\left[
    \E_{\Policy^{(j)}\sim\Isom_{\DecP_j,\DecP}^*\LAProfile_j(\DecP_j),\,j\in\mathcal{N}}
    \left[
        \E_{\AutProfile\in\Aut(\DecP)^\PlayerSet}\left[
    J^\DecP((\Proj_k(\AutProfile_k^*\Policy^{(k)}))_{k\in\PlayerSet})\right]\right]\right]
    \\
    &=
    \E_{\DecP_i\sim U(\DecPSet),\,i\in\PlayerSet}\Bigg[
    \E_{\Policy^{(j)}\sim\Isom_{\DecP_j,\DecP}^*\LAProfile_j(\DecP_j),\,j\in\mathcal{N}}
    \Bigg[
    \\&\quad\quad
   |\Aut(\DecP)|^{-N}\sum_{\AutProfile\in\Aut(\DecP)^\PlayerSet}
       \int_{\Policy\in\PolicySet^\DecP}
    J^\DecP(\Policy)\mathrm{d}\left(\otimes_{k\in\PlayerSet}\delta_{\Proj_k(\AutProfile_k^*\Policy^{(k)})}
    \right)\Bigg]\Bigg]
\\
    &=
    \E_{\DecP_i\sim U(\DecPSet),\,i\in\PlayerSet}\Bigg[
    \E_{\Policy^{(j)}\sim\Isom_{\DecP_j,\DecP}^*\LAProfile_j(\DecP_j),\,j\in\mathcal{N}}
    \Bigg[
    \\&\quad\quad\int_{\Policy\in\PolicySet^\DecP}
    J^\DecP(\Policy)\mathrm{d}\left(|\Aut(\DecP)|^{-N}\sum_{\AutProfile\in\Aut(\DecP)^\PlayerSet}\otimes_{k\in\PlayerSet}\delta_{\Proj_k(\AutProfile_k^*\Policy^{(k)})}
    \right)\Bigg]\Bigg]
\\
&\overset{(\ref{eq:37})}{=}
\E_{\DecP_i\sim U(\DecPSet),\,i\in\PlayerSet}\left[
    \E_{\Policy^{(j)}\sim\Isom_{\DecP_j,\DecP}^*\LAProfile_j(\DecP_j),\,j\in\mathcal{N}}
    \left[\int_{\Policy\in\PolicySet^\DecP}
    J^\DecP(\Policy)\mathrm{d}\MixtureHat(\Policy^{(1)},\dotsc,\Policy^{(N)})\right]\right]
\\
&\overset{(\ref{expected-return-mixture})}{=}
\E_{\DecP_i\sim U(\DecPSet),\,i\in\PlayerSet}\left[
    \E_{\Policy^{(j)}\sim\Isom_{\DecP_j,\DecP}^*\LAProfile_j(\DecP_j),\,j\in\mathcal{N}}
    \left[
    J^\DecP(\MixtureHat(\Policy^{(1)},\dotsc,\Policy^{(N)}))\right]\right]
\\\label{eq:38b}
&=
\E_{\DecP_i\sim U(\DecPSet),\,i\in\PlayerSet}\left[
    \E_{\Policy^{(j)}\sim\Isom_{\DecP_j,\DecP}^*\LAProfile_j(\DecP_j),\,j\in\mathcal{N}}
    \left[
    J^\DecP((\Policy_k^{\MixtureHat(\Policy^{(1)},\dotsc,\Policy^{(N)})_k})_{k\in\PlayerSet})\right]\right]
\\
&\overset{(\ref{equation-mixture-hat-equal-psi})}{=}
\E_{\DecP_i\sim U(\DecPSet),\,i\in\PlayerSet}\left[
    \E_{\Policy^{(j)}\sim\Isom_{\DecP_j,\DecP}^*\LAProfile_j(\DecP_j),\,j\in\mathcal{N}}
    \left[
    J^\DecP\left(\left(\Psi_k(\Policy^{(k)})\right)_{k\in\PlayerSet}\right)\right]\right],
\end{align}
where we have used
Lemma~\ref{lemma-decomposition-payoff-zero-shot-coordination-game} in (\ref{eq:38a}) and Proposition~\ref{mixture-lemma} in (\ref{eq:38b}).
This concludes the proof.
\jt{note to self: I could change the way I explain this equation a bit, maybe add lemma etc to the equation signs. not super important}
\end{proof}

Based on this result, the LFC game for \(\DecP\) can be understood in the following way. Principal \(i\in\PlayerSet\) observes a randomly relabeled problem \(\DecP_i\in\DecPSet\) and trains a joint policy  \(\Policy^{(i)}\sim\LAProfile_i(\DecP_i)\) on this problem. The resulting policy \(\Policy^{(i)}\) is then translated back into a policy \(\Isom_{\DecP_i,\DecP}^*\Policy^{(i)}\) for the original problem, using any isomorphism \(\Isom_{\DecP_i,\DecP}\in\Iso(\DecP_i,\DecP)\). Finally, this joint policy is made invariant to automorphism by applying the symmetrizer, and agent \(i\) in the original problem \(\DecP\) is assigned the local policy \(\Psi_i(\Isom_{\DecP_i,\DecP}^*\Policy^{(i)})\).

\subsection{Other-play is not self-play in a different Dec-POMDP}
\label{appendix-no-optimal-deterministic-policy}
\textcite{hu2020other} show that the OP objective \(\tilde{J}^\DecP_\OP\) introduced by them can be understood of as the SP objective in a special Dec-POMDP. That is, for every Dec-POMDP \(\DecP\), there is a second Dec-POMDP \(\DecPSecond\) with \(\PolicySet^\DecPSecond=\PolicySet^\DecP\) such that for any \(\Policy\in\PolicySet^\DecP\), it is \(\tilde{J}^\DecP_\OP(\Policy)=\max_{\Policy'\in\PolicySet^\DecP}\tilde{J}^\DecP_\OP(\Policy')\) if and only if \(J^\DecPSecond(\Policy)=\max_{\Policy'\in\PolicySet^\DecPSecond}J^\DecPSecond(\Policy')\).
Interestingly, when including player permutations, this is not the case anymore. We will prove this here, using the characterization of OP from the last section.

Intuitively, if agents are symmetric, then under OP, they will always act according to the same local policy in the environment. In some Dec-POMDPs, this means that it is optimal for the agents to randomize their actions, to end up with different actions some of the time. For instance, consider the following game:

\begin{example}\label{ex-matching-pennies-2}
There are two players with two actions \(\ActionSet_i:=\{\Action_{i,1},\Action_{i,2}\}\) for \(i=1,2\) each, and an episode lasts only one step, making this a simple normal-form game. Rewards for each joint action are displayed in Table~\ref{tab:rewards-example-2}.

\begin{table}
    \caption{Rewards for each joint action in Example~\ref{ex-matching-pennies-2}.}
    \label{tab:rewards-example-2}
    \vskip 0.15in
    \small
    \centering
    \begin{tabular}{c|c|c}
         &  \(a_{2,1}\) & \(a_{2,2}\)\\
     \hline
      \(a_{1,1}\)&  \(-\frac{1}{2}\) & \(1\) \\
     \hline
      \(a_{1,2}\) &  \(1\) & \(-1\)
    \end{tabular}
\end{table}
\end{example}

This example demonstrates that sometimes there does not exist a deterministic policy that is optimal under the OP objective.

\begin{lem}\label{lem-ex-does-not-have-optimal-deterministic-policy}
In Example~\ref{ex-matching-pennies-2}, for any deterministic policy \(\Policy\), it is \(J_\OP(\Policy)<\max_{\Policy'\in\PolicySet^\DecP}J_\OP(\Policy')\).
\end{lem}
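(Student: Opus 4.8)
The plan is to pin down the automorphism group of the game and then reduce the claim to a short one-variable optimization, using Theorem~\ref{thm-op-mixture} to replace the OP value of a policy by the self-play value of its symmetrization.

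First I would determine $\Aut(\DecP)$ for the Dec-POMDP $\DecP$ of Example~\ref{ex-matching-pennies-2}. Besides $\Id$, it contains the automorphism $\Auto$ that swaps the two players and relabels $\Auto_{A_i}(a_{i,k}) = a_{-i,k}$ (and acts trivially on states and on the single observation of each agent); since $\Tmax = 0$ and there is essentially one state, the only condition to check is $\RewardFunction^\DecP(\State,\Action) = \RewardFunction^\DecP(\State,\Auto\Action)$, which Table~\ref{tab:rewards-example-2} verifies because $\Auto_A(a_{1,i},a_{2,j}) = (a_{1,j},a_{2,i})$ and the reward matrix is symmetric. Inspecting the table also shows that no nontrivial permutation of the two action sets that fixes the players preserves the off-diagonal entry $\RewardFunction(a_{1,1},a_{2,1}) = -\tfrac12$, so $\Aut(\DecP) = \{\Id,\Auto\}$, hence $|\Aut(\DecP)| = 2$, $\Auto$ is an involution, and $(\Auto^*\Policy)_1(a_{1,k}) = \Policy_2(a_{2,k})$, $(\Auto^*\Policy)_2(a_{2,k}) = \Policy_1(a_{1,k})$ for every $\Policy \in \PolicySet^\DecP$.

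Next I would compute $J^\DecP_\OP$ on deterministic policies directly from Definition~\ref{defn-other-play-objective-general-appendix}. If $\Policy$ plays $a_{1,i}$ and $a_{2,j}$ deterministically, then as $\AutProfile$ ranges over the four elements of $\Aut(\DecP)^\PlayerSet$ the policies $\AutProfile^*\Policy$ are the deterministic joint policies $(a_{1,i},a_{2,j})$, $(a_{1,j},a_{2,j})$, $(a_{1,i},a_{2,i})$, $(a_{1,j},a_{2,i})$, so $J^\DecP_\OP(\Policy)$ is the average of the corresponding four reward entries. When $i=j$ this equals $\RewardFunction(a_{1,i},a_{2,i}) \in \{-\tfrac12,-1\}$, and when $i \neq j$ it equals $\tfrac14\bigl(\RewardFunction(a_{1,1},a_{2,2}) + \RewardFunction(a_{1,2},a_{2,2}) + \RewardFunction(a_{1,1},a_{2,1}) + \RewardFunction(a_{1,2},a_{2,1})\bigr) = \tfrac14(1-1-\tfrac12+1) = \tfrac18$; hence every deterministic policy satisfies $J^\DecP_\OP(\Policy) \le \tfrac18$.

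Finally I would exhibit a stochastic policy with strictly larger OP value. By Theorem~\ref{thm-op-mixture}, $J^\DecP_\OP(\Policy) = J^\DecP(\Psi(\Policy))$, and $\Psi(\Policy)$ is invariant to automorphism and has independent local policies, so it is the symmetric joint policy in which both agents play their first action with some common probability $r \in [0,1]$; a direct evaluation of the expected return gives $J^\DecP(\Psi(\Policy)) = -\tfrac72 r^2 + 4r - 1$, which is concave and attains its maximum $\tfrac17$ at $r = \tfrac47 \in (0,1)$. Taking the policy with $\Policy_1(a_{1,1}) = \Policy_2(a_{2,1}) = \tfrac47$ then yields $\max_{\Policy'\in\PolicySet^\DecP} J^\DecP_\OP(\Policy') \ge \tfrac17 > \tfrac18 \ge J^\DecP_\OP(\Policy)$ for every deterministic $\Policy$, which is the claim. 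The only genuinely delicate step is the first one: correctly computing the action of $\Auto$ on joint policies and checking that $\{\Id,\Auto\}$ exhausts $\Aut(\DecP)$; once that is settled, the remainder is a finite computation.
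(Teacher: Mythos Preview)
Your argument is correct and follows essentially the same route as the paper: identify $\Aut(\DecP)=\{\Id,\Auto\}$ with $\Auto$ the player swap, bound $J^\DecP_\OP$ on deterministic policies by $\tfrac18$, and exhibit the symmetric stochastic policy with first-action probability $\tfrac47$ achieving $\tfrac17$. The only cosmetic difference is that you evaluate $J^\DecP_\OP$ on deterministic policies by directly averaging over the four profiles in $\Aut(\DecP)^\PlayerSet$, whereas the paper passes through the symmetrizer $\Psi$ and the quadratic form $z^\top R z$ with $z=\tfrac12 x+\tfrac12 y$; both computations yield the same maximal value $\tfrac18$.
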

\begin{proof}
Let \(R\in\mathbb{R}^{2,2}\) denote a matrix containing rewards as in Table~\ref{tab:rewards-example-2}.

First, note that in this game, players are symmetric, but actions are not. Moreover, there are no observations and only one state. Hence, \(\Aut(\DecP)=\{\Auto,\Id\}\) where \(\Auto_N1=2,\Auto_N2=1\) and \(\Id\) is the identity.

Now consider any deterministic policy \(\Policy=(\Policy_1,\Policy_2)\), corresponding to two vectors of action-probabilities
\[x,y\in \left\{\begin{bmatrix}1\\0\end{bmatrix},\begin{bmatrix}0\\1\end{bmatrix}\right\}\]
for the two players. Due to the symmetry of both players, the OP distribution \(\Mixture\) of \(\Policy\) assigns each policy \(\Policy_1,\Policy_2\) to either player with probability \(\frac{1}{2}\), so in \(\Psi(\Policy)=\Policy^\Mixture\), both players play the distribution \(z:=\frac{1}{2}x + \frac{1}{2}y\) and receive a reward of
\[J^\OP(\Policy)\overset{\text{Theorem~\ref{thm-op-mixture}}}{=}J(\Psi(\Policy))=z^\top R z.\]
It follows by the definition of \(x,y\) that
\[z\in \left\{\begin{bmatrix}1\\0\end{bmatrix},\begin{bmatrix}0\\1\end{bmatrix},\begin{bmatrix}\frac{1}{2}\\\frac{1}{2}\end{bmatrix}\right\}.\]
Clearly, then \(z^\top Rz\) is maximized at \(z=[\frac{1}{2},\frac{1}{2}]^\top\), yielding a reward of
\begin{equation}\label{eq:510}
J^\OP(\Policy)=z^\top R z=\frac{1}{8}.
\end{equation}

Next, define \(x=[\frac{4}{7},\frac{3}{7}]^\top\) and let \(\Policy^*\) be a policy such that \(\Policy^*_1=\Policy^*_2\) and the two action-probabilities of both local policies are given by the vector \(x\). Note that since \(\Auto^*\Policy^*=(\Policy^*_{\Auto i})_{i=1,2}=\Policy^*\), it follows from the above that \(\Policy^*\) is invariant to automorphism.

It follows by Proposition~\ref{prop-invariant-policy-self-play-value-equals-other-play-value} that we can evaluate the OP value of \(\Policy^*\) by evaluating its expected return. That is,
\begin{equation}J^\OP(\Policy^*)\overset{\text{Proposition~\ref{prop-invariant-policy-self-play-value-equals-other-play-value}}}{=}J(\Policy^*)=x^\top R  x=-\frac{1}{2}\left(\frac{4}{7}\right)^2+2\frac{4}{7}\frac{3}{7} -1\left(\frac{3}{7}\right)^2=\frac{1}{7}.\end{equation}
It follows that
\begin{equation}
    J_\OP(\Policy)\overset{(\ref{eq:510})}{=}\frac{1}{8}<\frac{1}{7}=J^\OP(\Policy^*)\leq \max_{\Policy'\in\PolicySet^\DecP}J_\OP(\Policy'),
\end{equation}
which concludes the proof.
\end{proof}

The fact that there is no deterministic optimal policy in Example~\ref{ex-matching-pennies-2} is in conflict with a canonical result about Dec-POMDPs.

\begin{thm}[{\citeauthor{oliehoek2008optimal}, \citeyear{oliehoek2008optimal}, sec.~2.4.4}]
\label{thm-every-decpomdp-has-optimal-deterministic-policy}
In every Dec-POMDP \(\DecP\), there is a deterministic policy \(\Policy\in(\PolicySet^0)^\DecP\) such that \(J^\DecP(\Policy)=\max_{\Policy'\in\PolicySet^\DecP}J^\DecP(\Policy')\).
\end{thm}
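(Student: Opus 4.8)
The plan is to exhibit $J^\DecP$ as a function that is separately affine in each of the finitely many ``conditional choice probabilities'' that make up a joint policy, so that it is maximized at a vertex of the relevant polytope, and then to observe that these vertices are exactly the deterministic joint policies. I would not reprove the cited result from scratch but reconstruct the standard argument.

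\emph{Parametrization.} A joint policy $\Policy\in\PolicySet^\DecP$ is the same data as the family of probability vectors $\bigl(\Policy_i(\cdot\mid\AOHistory_{i,t})\bigr)$ indexed by $i\in\PlayerSet$, $t\in\{0,\dots,\Tmax\}$, and $\AOHistory_{i,t}\in\AOHistorySet_{i,t}$, each ranging over the simplex $\Delta(\ActionSet_i)$. Since $\PlayerSet$, $\Tmax$, and all the sets $\AOHistorySet_{i,t}$ are finite, this identifies $\PolicySet^\DecP$ with a finite product of simplices; this product is a compact polytope, and its vertices are precisely those joint policies for which every $\Policy_i(\cdot\mid\AOHistory_{i,t})$ is concentrated on a single action, i.e. the deterministic joint policies $(\PolicySet^{0})^\DecP$.

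\emph{Multilinearity.} Fix a complete history $\History\in\HistorySet$. Unwinding the inductive definition of $\Prob_\Policy$ from Appendix~\ref{appendix-recapitulation-of-dec-pomdps}, the probability $\Prob_\Policy(\HistoryRV=\History)$ is a product of factors coming from $b_0$, $P$, $O$ and from Kronecker deltas enforcing the recorded rewards --- none of which involve $\Policy$ --- times $\prod_{i\in\PlayerSet}\prod_{t=0}^{\Tmax}\Policy_i(\Action_{i,t}\mid\AOHistory_{i,t})$, where $\AOHistory_{i,t}$ is the length-$t$ action-observation history of agent $i$ read off from $\History$. The crucial point is that for a fixed $i$ the histories $\AOHistory_{i,0},\dots,\AOHistory_{i,\Tmax}$ are pairwise distinct, because they have distinct lengths; hence each coordinate of the parametrization appears \emph{at most once} in this product. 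Therefore $\Prob_\Policy(\HistoryRV=\History)$ is affine in each coordinate when the others are held fixed, and so is $J^\DecP(\Policy)=\E_\Policy\bigl[\sum_{t=0}^{\Tmax}\RewardRV_t\bigr]=\sum_{\History\in\HistorySet}\Prob_\Policy(\HistoryRV=\History)\,c_\History$, a finite real-linear combination of the $\Prob_\Policy(\HistoryRV=\History)$ with constants $c_\History:=\sum_{t}\RewardRV_t(\History)$.

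\emph{Conclusion and the main difficulty.} $J^\DecP$ is continuous on the compact set $\PolicySet^\DecP$, hence attains a maximum at some $\Policy^\star$. Iterating over the finitely many coordinates, fix all but one: the restriction of $J^\DecP$ to that simplex $\Delta(\ActionSet_i)$ is affine, so it attains its maximum over the simplex at a vertex (a Dirac), and replacing that coordinate of $\Policy^\star$ by such a vertex cannot decrease $J^\DecP$. After finitely many such replacements we reach a deterministic joint policy $\Policy\in(\PolicySet^{0})^\DecP$ with $J^\DecP(\Policy)\ge J^\DecP(\Policy^\star)=\max_{\Policy'\in\PolicySet^\DecP}J^\DecP(\Policy')$, and the reverse inequality holds since $(\PolicySet^{0})^\DecP\subseteq\PolicySet^\DecP$. (Equivalently, invoke the general fact that a function separately affine in each of finitely many polytope-valued arguments is maximized at a vertex of the product.) The only genuinely substantive step is the multilinearity claim, and within it the observation that one local conditional $\Policy_i(\cdot\mid\AOHistory_{i,t})$ cannot be used twice along a single history --- exactly the property that fails for the OP objective $J^\DecP_\OP$, where an automorphism permutation couples a local policy's behavior across all time steps, which is why Lemma~\ref{lem-ex-does-not-have-optimal-deterministic-policy} does not contradict this theorem. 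One should also be careful to deterministic-ize one coordinate at a time, since $J^\DecP$ is only separately, not jointly, affine.
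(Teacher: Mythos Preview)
The paper does not prove this theorem; it is quoted as a known result from \textcite{oliehoek2008optimal} and used as a black box in the proof of Proposition~\ref{cor-optimal-deterministic}. So there is no ``paper's own proof'' to compare against.

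Your reconstruction is correct and is the standard argument. The key step---that each conditional $\Policy_i(\cdot\mid\AOHistory_{i,t})$ enters $\Prob_\Policy(\HistoryRV=\History)$ at most once because the $\AOHistory_{i,t}$ for $t=0,\dots,\Tmax$ are pairwise distinct---is exactly right, and the vertex-replacement argument that follows is sound. Your closing remark about why this fails for $J_\OP^\DecP$ is essentially correct, though the precise mechanism in Example~\ref{ex-matching-pennies-2} is that when an automorphism swaps the two agents, the profile $\AutProfile^*\Policy$ can assign the \emph{same} local policy $\Policy_j$ to both agents, so $J^\DecP(\AutProfile^*\Policy)$ becomes quadratic in the coordinates of $\Policy_j$; averaging over $\AutProfile$ then destroys separate affinity of $J_\OP^\DecP$ in $\Policy$.
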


As a result, we can prove the following.

\begin{prop}
\label{cor-optimal-deterministic}
There exists a Dec-POMDP \(\DecP\) such that for any other Dec-POMDP \(\DecPSecond\) with \(\PolicySet^\DecPSecond=\PolicySet^\DecP\), there exists a policy \(\Policy\in\PolicySet^\DecPSecond\) that is optimal for the SP objective of \(\DecPSecond\), but not optimal for the OP objective of \(\DecP\).
\end{prop}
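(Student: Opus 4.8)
The plan is to take $\DecP$ to be precisely the Dec-POMDP of Example~\ref{ex-matching-pennies-2}, and to combine the two facts already established: that $\DecP$ admits no deterministic OP-optimal policy (Lemma~\ref{lem-ex-does-not-have-optimal-deterministic-policy}), and that every Dec-POMDP admits a deterministic SP-optimal policy (Theorem~\ref{thm-every-decpomdp-has-optimal-deterministic-policy}). The tension between these two statements is exactly what yields the proposition.

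Concretely, first I would fix $\DecP$ as in Example~\ref{ex-matching-pennies-2} and let $\DecPSecond$ be an arbitrary Dec-POMDP with $\PolicySet^\DecPSecond=\PolicySet^\DecP$. Next I would invoke Theorem~\ref{thm-every-decpomdp-has-optimal-deterministic-policy} applied to $\DecPSecond$ to obtain a deterministic policy $\Policy\in(\PolicySet^0)^\DecPSecond$ with $J^\DecPSecond(\Policy)=\max_{\Policy'\in\PolicySet^\DecPSecond}J^\DecPSecond(\Policy')$, so that $\Policy$ is SP-optimal for $\DecPSecond$. Then I would observe that, since the property of being a deterministic policy is intrinsic to a policy as an element of the ambient space $\prod_{i}[0,1]^{\ActionSet_i\times\AOHistorySet_i}$ (it is just the requirement that each $\Policy_i(\cdot\mid\AOHistory_{i,t})$ be concentrated on a single action), the equality $\PolicySet^\DecPSecond=\PolicySet^\DecP$ forces $(\PolicySet^0)^\DecPSecond=(\PolicySet^0)^\DecP$; hence $\Policy$ is a deterministic policy in $\PolicySet^\DecP$. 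Finally, Lemma~\ref{lem-ex-does-not-have-optimal-deterministic-policy} gives $J^\DecP_\OP(\Policy)<\max_{\Policy'\in\PolicySet^\DecP}J^\DecP_\OP(\Policy')$, i.e.\ $\Policy$ is not OP-optimal for $\DecP$. This exhibits the required policy, completing the proof.

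There is essentially no computational obstacle here: the substantive content has already been done in Lemma~\ref{lem-ex-does-not-have-optimal-deterministic-policy} (via the characterization $J_\OP=J\circ\Psi$ from Theorem~\ref{thm-op-mixture} and the explicit payoff computations on the $2\times2$ matrix). The only point requiring a sentence of care is the bookkeeping remark that $\PolicySet^\DecPSecond=\PolicySet^\DecP$ entails equality of the deterministic subsets, so that the deterministic SP-optimal policy supplied for $\DecPSecond$ is legitimately a policy of $\DecP$ to which Lemma~\ref{lem-ex-does-not-have-optimal-deterministic-policy} applies. One might alternatively phrase the hypothesis as the two Dec-POMDPs merely having the same agents and the same action and observation sets per agent, which makes this identification immediate; either way the argument is a short assembly of the preceding results.
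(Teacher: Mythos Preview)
Your proposal is correct and follows essentially the same approach as the paper: both take $\DecP$ to be Example~\ref{ex-matching-pennies-2}, invoke Theorem~\ref{thm-every-decpomdp-has-optimal-deterministic-policy} to obtain a deterministic SP-optimal policy for $\DecPSecond$, and then apply Lemma~\ref{lem-ex-does-not-have-optimal-deterministic-policy} to conclude it is not OP-optimal for $\DecP$. The paper phrases this as a proof by contradiction whereas you argue directly, but the content is identical.
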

\begin{proof}
Let \(\DecP\) be the Dec-POMDP as described in Example~\ref{ex-matching-pennies-2}. Assume, towards a contradiction, that there exists a Dec-POMDP \(\DecPSecond\) with \(\PolicySet^\DecPSecond=\PolicySet^\DecP\) such that any optimal policy in that Dec-POMDP is optimal under the OP objective of \(\DecP\). Then by Theorem~\ref{thm-every-decpomdp-has-optimal-deterministic-policy}, there exists a deterministic policy \(\Policy\in(\PolicySet^0)^\DecPSecond\) such that \(J^\DecPSecond(\Policy)\) is maximal. Hence, by the assumption, also \(J^\DecP_\OP(\Policy)\) is maximized. But by Lemma~\ref{lem-ex-does-not-have-optimal-deterministic-policy}, it must be \(\max_{\PolicyTilde\in\PolicySet^\DecP}J^\DecP_\OP(\PolicyTilde)>J^\DecP_\OP(\Policy)\). This is a contradiction, which means that \(\DecP\) is an example of a Dec-POMDP that has the desired properties.
\end{proof}

This shows that to optimize the OP objective, we have to directly consider that objective and we cannot simply apply an RL algorithm to a different Dec-POMDP. Also, the fact that we need stochastic policies means that it is not immediately clear how to apply a Bellman equation to the objective.


\section{Other-play is not optimal in the label-free coordination problem}
\label{appendix-proof-of-theorem-1}
In this section, our goal is to prove a rigorous version of Theorem~\ref{thm-op-not-optimal-informal} from the main text.

Recall that in Appendix~\ref{appendix-characterization-of-other-play}, we introduced the symmetrizer \(\Psi\colon\PolicySet\rightarrow\PolicySet\), which maps a joint policy \(\Policy\) to the policy corresponding to agents following randomly permuted local policies \((\AutProfile_i^*\Policy)_i\) where \(\AutProfile_i\sim\U(\Aut(\DecP))\). This represents the random permutations employed in the OP objective, and hence by Theorem~\ref{thm-op-mixture}, it is \(J_\OP(\Policy)=J(\Psi(\Policy))\), i.e., the OP value of \(\Policy\) is equal to the SP value of \(\Psi(\Policy)\). Moreover, we defined the equivalence classes \([\Policy]=\Psi^{-1}(\{\Psi(\Policy)\})\) of policies that get mapped to the same policy under \(\Psi\).

As defined in Appendix~\ref{appendix-generalization-of-other-play}, an OP learning algorithm is any learning algorithm such that the policies that it learns achieve optimal OP value in expectation. In particular, it can be a learning algorithm that learns different policies in different training runs, as long as it chooses OP-optimal policies with probability \(1\).
As we have seen in Theorem~\ref{thm-zero-shot-invariant}, in an LFC game, it does not matter which policy from an equivalence class \([\Policy]\) is chosen. Unfortunately, though, there can also be different OP-optimal policies that are not equivalent. In this case, if an OP learning algorithm is not concentrated on only compatible policies, it is not optimal in the corresponding LFC problem. 

In the remainder of this section, we will prove this statement. Concretely, in Appendix~\ref{appendix-two-incompatible}, we will recall the two policies \(\Policy^R,\Policy^S\) in the two-stage lever game that we introduced in Section~\ref{section-other-play-not-optimal}. We will show that both are optimal under OP, but that their cross-play value is inferior to the optimal OP value. In Appendix~\ref{appendix-proof-op-suboptimal}, we will then formally state and prove the result that, if an OP algorithm is not concentrated on only one of the two incompatible equivalence classes of policies \([\Policy^R],[\Policy^S]\) in the two-stage lever game, then the algorithm is suboptimal in the LFC problem for that game.

\subsection{Two incompatible optimal policies in the two-stage lever game}
\label{appendix-two-incompatible}

We begin by mapping out the space of OP-optimal policies in the two-stage lever game. Recall that this was a game with two agents, which proceeds in two rounds. Both agents have two actions, and their goal in both rounds is to choose the same action, for a reward of \(1\). Failure of coordination leads to a reward of \(-1\). Moreover, in the second round, agents observe the actions of the other agent from the first round.
In the following, let \(\DecP\) stand for the Dec-POMDP associated to this game as described in Example~\ref{example-two-stage-lever-game}.

Recall the two policies \(\Policy^R\) and \(\Policy^S\) introduced in Section~\ref{section-other-play-not-optimal}. In both policies, agents randomize uniformly between both levers in the first round. They also both randomize in the second round if coordination was unsuccessful in the first one. If coordination in the first round was successful, there are two different strategies: in \(\Policy^R\), both agents repeat their respective actions from round one. In \(\Policy^S\), both agents switch to the action they did not play in round one.

The following lemma shows that both policies are optimal under OP.

\begin{lem}\label{lemma-two-classes-in-coop}
Both \(\Policy^R\) and \(\Policy^S\) as described above are invariant to automorphism, and they maximize the OP objective, where
\[J_\OP(\Policy^R)=J_\OP(\Policy^S)=\max_{\Policy\in\PolicySet^\DecP}J_\OP(\Policy)= \frac{1}{2}.\]
\end{lem}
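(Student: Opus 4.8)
The plan is to use Theorem~\ref{thm-op-mixture}, which says that $J_\OP(\Policy) = J(\Psi(\Policy))$ and that an OP-optimal policy invariant to automorphism always exists, so that it suffices to (a) verify $\Policy^R$ and $\Policy^S$ are invariant to automorphism, (b) compute their (self-play) expected returns, and (c) show no policy beats this value under OP. For (a), recall from Example~\ref{example-two-stage-lever-game-automorphisms-isomorphisms} that $\Aut(\DecP)$ consists of exactly four automorphisms: $\Auto_N$ is identity or swaps the two agents, and $\Auto_{A_1}=\Auto_{A_2}=\Auto_{O_1}=\Auto_{O_2}$ is identity or swaps the two levers. So I would check directly that applying any such $\Auto$ to $\Policy^R$ leaves it unchanged: both local policies are identical and symmetric in the two agents, so player-swaps do nothing; and the ``repeat your round-one action'' rule and the ``uniform after miscoordination'' rule are both equivariant under relabeling the two levers (swapping lever labels in both the history and the action just relabels the same deterministic rule). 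The identical argument applies to $\Policy^S$, using that ``switch to the unique other lever'' is likewise lever-relabeling-equivariant.

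For (b), since both policies are invariant to automorphism, Proposition~\ref{prop-invariant-policy-self-play-value-equals-other-play-value} gives $J_\OP(\Policy^R)=J(\Policy^R)$ and similarly for $\Policy^S$. Now compute $J(\Policy^R)$: in round $0$ both agents randomize uniformly and independently, so they coordinate with probability $\tfrac12$, giving expected round-$0$ reward $\tfrac12\cdot 1 + \tfrac12\cdot(-1) = 0$. In round $1$: with probability $\tfrac12$ they coordinated in round $0$, and under $\Policy^R$ both repeat, so they coordinate again with probability $1$ (reward $+1$); with probability $\tfrac12$ they miscoordinated, and both randomize uniformly, coordinating with probability $\tfrac12$ (expected reward $0$). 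So expected round-$1$ reward is $\tfrac12\cdot 1 + \tfrac12\cdot 0 = \tfrac12$. Hence $J(\Policy^R) = 0 + \tfrac12 = \tfrac12$. For $\Policy^S$ the round-$0$ contribution is the same, and in round $1$, conditional on having coordinated in round $0$ both switch to the same (unique) other lever, so they still coordinate with probability $1$; conditional on miscoordination they randomize as before. So $J(\Policy^S)=\tfrac12$ as well.

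For (c), I need the upper bound $\max_{\Policy}J_\OP(\Policy)\le\tfrac12$. By Theorem~\ref{thm-op-mixture} it suffices to bound $J(\Policy)$ over policies $\Policy$ that are invariant to automorphism. The key structural fact is that invariance to automorphism forces $\Policy_1=\Policy_2$ (apply the agent-swap automorphism) and forces each local policy to be invariant under swapping the two lever labels. In round $0$ this means each agent plays uniform $(\tfrac12,\tfrac12)$, so the round-$0$ expected reward is exactly $0$, and moreover the two round-$0$ actions are i.i.d.\ uniform. In round $1$, an agent's action depends on its own round-$0$ action and its observation (the other agent's round-$0$ action); by lever-swap invariance, conditioning on the event ``coordinated in round $0$'' the two agents face symmetric situations and any common conditional behavior still yields at most reward $+1$, and conditioning on ``miscoordinated'' the lever-swap symmetry of each local policy again pins the relevant conditional distribution so the expected reward is at most... — here the clean way is to note $J(\Policy)=\E[R_0]+\E[R_1]$ with $\E[R_0]=0$ and $\E[R_1]\le 1$ trivially, giving $J(\Policy)\le 1$, which is too weak; so instead I would argue $\E[R_1]\le\tfrac12$ by exploiting that, because the round-$0$ marginal is uniform and symmetric, the agents cannot correlate their round-$1$ actions any better than $\Policy^R$ does. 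The main obstacle is making this last step rigorous: one must show that lever-swap-invariance of the (identical) local policy, together with i.i.d.\ uniform round-$0$ actions, caps the round-$1$ coordination probability at $\tfrac34$ overall (equivalently $\E[R_1]\le\tfrac12$). I would do this by writing out the $2\times 2\times 2$ conditional-probability table for $(\ActionRV_{1,1},\ActionRV_{2,1})$ given $(\ActionRV_{1,0},\ActionRV_{2,0})$, imposing the symmetry constraints coming from automorphism-invariance, and optimizing the resulting small linear program — expecting the maximum to be attained exactly at $\Policy^R$ (and $\Policy^S$), yielding $J_\OP\le\tfrac12$ and hence equality.
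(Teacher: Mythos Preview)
Your plan for (a) and (b) matches the paper's proof exactly and is correct.

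For (c), your approach would work but you have not identified the simplification that makes the ``small linear program'' trivial, and the paper's proof exploits precisely this. Once you know $\Policy_1=\Policy_2$ and each local policy is lever-swap invariant, condition on miscoordination in round~$0$, say $\ActionRV_{1,0}=a\neq a'=\ActionRV_{2,0}$. Agent~$1$'s history is $(a,a')$ and agent~$2$'s history is $(a',a)$; lever-swap invariance gives $\Policy_i(a\mid a,a')=\Policy_i(a'\mid a',a)$, and player symmetry gives that this common value $p$ is the same for both agents. But $p$ is exactly the probability that an agent repeats \emph{its own} round-$0$ action, so if both repeat (probability $p^2$) or both switch (probability $(1-p)^2$) they miscoordinate again, while they coordinate only in the cross cases (probability $2p(1-p)$). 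Hence
\[
\E[R_1\mid \text{miscoord in round }0]\;=\;2p(1-p)-p^2-(1-p)^2\;=\;-(2p-1)^2\;\le\;0,
\]
with no optimization needed. Combined with the trivial bound $\E[R_1\mid \text{coord}]\le 1$ and $\E[R_0]=0$, this gives $J(\Policy)\le \tfrac12(1)+\tfrac12(-1+0)=\tfrac12$, which is what the paper does. So your ``cap the round-$1$ coordination probability at $\tfrac34$'' target is in fact too generous; the symmetry pins the miscoordination branch to expected reward at most $0$, and the whole argument collapses to a one-parameter computation rather than a $2\times2\times2$ table.
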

\begin{proof}
Let \(\PolicyTilde\in\PolicySet^\DecP\) arbitrary and define \(\Policy:=\Psi(\PolicyTilde)\). By 
Theorem~\ref{thm-op-mixture}, \(\Policy\) must be invariant to automorphism, and it must be
\[J_\OP(\PolicyTilde)=J(\Policy).\]
Moreover, if we show that \(\Policy^R\) and \(\Policy^S\) are invariant to automorphism, then by Proposition~\ref{prop-invariant-policy-self-play-value-equals-other-play-value}, their OP value equals their expected return. Hence, to show that \(\Policy^R\) and \(\Policy^S\) are optimal, it suffices to show that they are invariant to automorphism, and then compare their expected returns to the expected return of \(\Policy\).

To begin, consider the set of automorphisms in the game, as described in Example~\ref{example-two-stage-lever-game-automorphisms-isomorphisms}. Let \(\Auto\in\Aut(\DecP)\). Recall that the state permutation is the trivial identity map, and the permutations for actions and observations of both players have to be equal, \(\Auto_{A_1}=\Auto_{A_2}=\Auto_{O_1}=\Auto_{O_2}\). Hence, we can represent \(\Auto\) as a tuple \((\Auto_N,\Auto_A)\). There are four possible combinations of choices for permutations \(\Auto_A,\Auto_N\colon \{1,2\}\rightarrow\{1,2\}\). Either permutation can either be the identity map \(\begin{pmatrix}
    1 & 2\\
    1 & 2
    \end{pmatrix}\) or the inversion \(\begin{pmatrix}
    1 & 2\\
    2 & 1
\end{pmatrix}\).

Now consider \(\Policy^R\) and \(\Policy^S\). Note that both policies are symmetric in the agents, so \(\Policy_1^R=\Policy_2^R\) and \(\Policy_1^S=\Policy_2^S\). Invariance to automorphism is trivially fulfilled in the first stage, i.e., \(\Policy_i^R(\Action_i\mid\emptyset)=\Policy^R_{\Auto^{-1}_N i}(\Auto^{-1}_A\Action_i\mid\emptyset)\) and \(\Policy_i^S(\Action_i\mid\emptyset)=\Policy^S_{\Auto^{-1}_N i}(\Auto^{-1}_A\Action_i\mid\emptyset)\) for any \(i\in\{1,2\},\Action_i\in\{1,2\}\), since both agents randomize uniformly between actions. Now consider the second stage. Note that since automorphisms for the actions and observations of both players have to be identical, an automorphism can never map an action-observation history with \(\ObservationRV_{i,1}\neq\ActionRV_{i,0}\), i.e., such that one's own action and the observed action from the other agent differ, onto one in which they are the same, and vice versa. So we can consider the condition of invariance to automorphism separately for the case in which players achieved coordination in the first stage and for the case where they did not.

In the latter case, players randomize their actions uniformly, so in this case the policy is also trivially invariant. Now consider the former case, i.e., for \(i=1,2\), it is either \(\AOHistoryRV_{i,1}=(1,1)\) or \(\AOHistoryRV_{i,1}=(2,2)\). If both players repeat their action, then for \(i\in\{1,2\}\), \(\Action_i=\Observation_i\in\{1,2\}\), and any automorphism \(\Auto=(\Auto_N,\Auto_A)\), it is
\[\Policy^R_i(\Action_i\mid \Action_i,\Observation_i)=1=\Policy^R_{\Auto^{-1}_Ni}(\Auto_A^{-1}\Action_i\mid \Auto^{-1}_A\Action_i,\Auto_A^{-1}\Observation_i).\]
In the case where they change their action, for \(\Action_i'\in\{1,2\}\setminus\{\Action_i\}\), it is \(\Auto_A^{-1}\Action_i'\neq \Auto_A^{-1}\Action_i\), and thus
\[\Policy^S_i(\Action_i'\mid \Action_i,\Observation_i)=1=\Policy^S_{\Auto^{-1}_Ni}(\Auto_A^{-1}\Action'_i\mid \Auto_A^{-1}\Action_i,\Auto_A^{-1}\Observation_i).\]
In conclusion, this shows that \(\Auto^*\Policy^S=\Policy^S\) and \(\Auto^*\Policy^R=\Policy^R\).
Both \(\Policy^S\) and \(\Policy^R\) have an expected return of
\[J(\Policy^S)=J(\Policy^R)=\E[R_1 + R_2] = \frac{1}{2}(1+1) + \frac{1}{2}(-1+\frac{1}{2}\cdot 1 + \frac{1}{2}\cdot (-1))=\frac{1}{2}\]
i.e., in the first round, they coordinate in half of the cases, in which case they coordinate again, and if they do not coordinate in the first round, they have an equal chance of achieving coordinating or not in the second round.

Now consider \(\Policy\). First, choosing \(\Auto_N\) as the inversion and \(\Auto_A\) as the identity, it follows that
\[\Policy_1\overset{\text{(!)}}{=}\Policy_{\Auto_N^{-1}1}(\Auto_A^{-1}\cdot\mid\Auto_A^{-1}\cdot)=\Policy_{2}(\cdot\mid\cdot)=\Policy_{2},\] 
where in (!) we use that \(\Policy\) is invariant to automorphism. This shows that \(\Policy\) must by symmetric in players, so \(\Policy_1=\Policy_2\). Moreover, choosing \(\Auto_N\) as the identity and \(\Auto_A\) as the inversion, it is
\[\Policy_i(1\mid \emptyset)\overset{\text{(!)}}{=}\Policy_{\Auto_N^{-1}i}(\Auto_A^{-1}1\mid\emptyset)=\Policy_{i}(2\mid\emptyset),\] 
so under \(\Policy\), too, both agents must choose both actions with equal probability in the first stage.

Turning to the second stage, we assume that \(\Policy\) receives a maximal reward of \(1\) in the second stage if both agents coordinated in the first stage. Now consider the case in which coordination failed in the first step. Note that since \(\Policy\) is symmetric in the agents, it is not possible for agents to consistently choose to play either the action of agent \(1\) or of agent \(2\) in the second round if those actions did not coincide in the first round. Moreover, due to invariance to action and observation permutations, the probability \(p\) that an agent repeats their action from the first round must be the same, no matter whether that action was a \(1\) or \(2\), as for \(\Action\neq\Action'\in\{1,2\}\), it must be \(\Policy_i(\Action\mid\Action,\Action')=\Policy_i(\Action'\mid\Action',\Action)\) for \(i=1,2\) due to invariance to automorphism. Due to the symmetry of agents, that probability must also be the same for both agents. Hence, we can define
\[p:=\Policy_i(\Action\mid \Action,\Action')=1-\Policy_i(\Action'\mid\Action,\Action')\]
for \(i=1,2\) and \(\Action\neq\Action'\in\{1,2\}\). The return in the second stage in this case is then
\begin{multline}\E[R_2\mid A_{2,1}\neq A_{1,1}]
\\= 1\cdot \bigg(\Policy_1(\Action_{1,1}\mid \Action_{1,1},\Action_{2,1})\Policy_2(\Action_{1,1}\mid \Action_{2,1}, \Action_{1,1})
+\Policy_1(\Action_{2,1}\mid \Action_{1,1},\Action_{2,1})\Policy_2(\Action_{2,1}\mid \Action_{2,1}, \Action_{1,1})\bigg)
\\
-1\cdot
\bigg(\Policy_1(\Action_{1,1}\mid \Action_{1,1},\Action_{2,1})\Policy_2(\Action_{2,1}\mid \Action_{2,1}, \Action_{1,1})
+\Policy_1(\Action_{2,1}\mid \Action_{1,1},\Action_{2,1})\Policy_2(\Action_{1,1}\mid \Action_{2,1}, \Action_{1,1})\bigg)
\\
=1 \cdot\bigg(p(1-p) + (1-p)p\bigg) - 1\cdot \bigg(pp + (1-p)(1-p)\bigg)=0,\end{multline}
where \(\Action_{1,1}\neq\Action_{2,1}\in\{1,2\}\) are arbitrary.
This shows that also \[J(\Policy)\leq \frac{1}{2}(1+1) + \frac{1}{2}(-1+0)=\frac{1}{2}.\]
Thus, it follows that \(J(\Policy)\leq \frac{1}{2}=J(\Policy^R)=J(\Policy^S)\), which shows that \(\Policy^R,\Policy^S\) are both optimal and thus concludes the proof.
\end{proof}

Now consider the case in which one agent chooses a local policy from \(\Policy^R\) and another agent chooses a local policy from \(\Policy^S\). It is clear that this will yield a suboptimal expected return compared with \(\Policy^R\) or \(\Policy^S\), as agents will always fail to coordinate in the second round, if they coordinated in the first round.

\begin{lem}\label{lemma-XP-suboptimal}
\[J(\Policy^R_1,\Policy^S_2)=J(\Policy^S_1,\Policy^R_2)=-\frac{1}{2}<\frac{1}{2}.\]
\end{lem}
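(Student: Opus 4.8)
The plan is to compute $J(\Policy^R_1,\Policy^S_2)$ directly from the definition of the expected return, $J^\DecP(\Policy)=\E_\Policy[\RewardRV_0+\RewardRV_1]$, by splitting over the two time steps and conditioning on whether the agents coordinate in the first round.

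First I would record the facts about $\Policy^R$ and $\Policy^S$ already established (or immediate from their description) in the proof of Lemma~\ref{lemma-two-classes-in-coop}: both joint policies are symmetric, so $\Policy^R_1=\Policy^R_2$ and $\Policy^S_1=\Policy^S_2$, and hence $\Policy:=(\Policy^R_1,\Policy^S_2)$ is a well-defined joint policy; in the first round both local policies put mass $\tfrac12$ on each action; in the second round, conditional on the failed-coordination event $\{\ActionRV_{1,0}\neq\ActionRV_{2,0}\}$ both local policies randomize uniformly; and conditional on $\{\ActionRV_{1,0}=\ActionRV_{2,0}=a\}$, $\Policy^R$ deterministically replays $a$ while $\Policy^S$ deterministically plays the other action $a'\neq a$.

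Then, under $\Prob_\Policy$, both agents randomize uniformly and independently in the first round, so $\Prob_\Policy(\ActionRV_{1,0}=\ActionRV_{2,0})=\tfrac12$ and $\E_\Policy[\RewardRV_0]=\tfrac12\cdot1+\tfrac12\cdot(-1)=0$. For the second round I condition on the first-round outcome: on $\{\ActionRV_{1,0}=\ActionRV_{2,0}\}$ (probability $\tfrac12$), agent $1$ (using $\Policy^R_1$) repeats its action while agent $2$ (using $\Policy^S_2$) switches, so they necessarily fail to coordinate and $\RewardRV_1=-1$; on the complement (probability $\tfrac12$), both agents pick their second-round action uniformly and independently, coordinating with probability $\tfrac12$, for conditional expected reward $0$. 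Hence $\E_\Policy[\RewardRV_1]=\tfrac12(-1)+\tfrac12\cdot0=-\tfrac12$, giving $J(\Policy^R_1,\Policy^S_2)=0-\tfrac12=-\tfrac12$. For the second equality I would use the automorphism $\Auto\in\Aut(\DecP)$ that swaps the two agents and acts as the identity on actions and observations (valid by Example~\ref{example-two-stage-lever-game-automorphisms-isomorphisms}); since $\Auto_A$ and $\Auto_O$ are the identity, Definition~\ref{definition-pushforward} gives $\Auto^*(\Policy^R_1,\Policy^S_2)=(\Policy^S_2,\Policy^R_1)=(\Policy^S_1,\Policy^R_2)$ (using symmetry again), and Corollary~\ref{cor-sp-invariant-to-pullback} yields $J(\Policy^S_1,\Policy^R_2)=J(\Auto^*(\Policy^R_1,\Policy^S_2))=J(\Policy^R_1,\Policy^S_2)=-\tfrac12$. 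Finally $-\tfrac12<\tfrac12$ is trivial.

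There is no genuine obstacle here: the statement is a finite computation. The only point needing mild care is the bookkeeping of the four second-round sub-cases and the observation that, when first-round coordination failed, the two agents' second-round randomizations are independent — which holds because the expected return is taken under $\Prob_\Policy$ for the joint policy with independent local policies.
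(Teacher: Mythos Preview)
Your proposal is correct and follows essentially the same approach as the paper: both compute $J(\Policy^R_1,\Policy^S_2)$ by conditioning on whether the agents coordinate in the first round, obtaining $-\tfrac12$ from the same casework. The only cosmetic difference is that you invoke the agent-swapping automorphism and Corollary~\ref{cor-sp-invariant-to-pullback} for the second equality, whereas the paper simply notes that ``the same argument applies''.
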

\begin{proof}
Both policies are equal in the first round, in which there is a 50\% chance that agents coordinate. If agents do not coordinate in the first round, then they both randomize uniformly in the second stage. If agents do coordinate in the first round, then agent \(1\) using \(\Policy^R_1\) repeats their action from the first round, while agent \(2\), using \(\Policy^S_2\), will switch to a different action. In this case, the reward is thus always \(-1\). As a result, it is
\[Jwe(\Policy_1^R,\Policy_2^S)=\frac{1}{2}(1-1)+\frac{1}{2}(-1+\frac{1}{2}\cdot1+ \frac{1}{2}\cdot(-1))=-\frac{1}{2}<\frac{1}{2}.\]
The same argument applies if agent \(1\) uses \(\Policy^S_1\) and agent \(2\) uses \(\Policy^R_2\).
\end{proof}

\subsection{Proof that other-play is suboptimal}
\label{appendix-proof-op-suboptimal}

Using the two lemmas from the last section, we can now show that if an OP learning algorithm is not concentrated on only one of \([\Policy^S]\) or \([\Policy^R]\), it is not optimal in the LFC problem for \(\DecP\). Note that we could just choose a particular learning algorithm, for instance, one that randomizes uniformly between \(\Policy^R\) and \(\Policy^S\), and show that that algorithm is suboptimal. This would then prove that a suboptimal OP learning algorithm exists. However, we show a more general result. Specifically, take any learning algorithm that learns with positive probability a policy equivalent to $\pi^R$ in one relabeling of \(\DecP\) and, also with positive probability, a policy equivalent to $\pi^L$ in some potentially different relabeling. Then this learning algorithm is suboptimal in the LFC problem for \(\DecP\).

In the following, let \(\DecPSet:=\{\Isom^*\DecP\mid\Isom\in\Sym(\DecP)\}\) be the set of different relabeled Dec-POMDPs of the two-stage lever game \(\DecP\). For any Dec-POMDP \(\DecPSecond\in\DecPSet\), choose \(\Isom_{\DecP,\DecPSecond}\in\Iso(\DecP,\DecPSecond)\) arbitrarily. Below, we define the class of algorithms relevant to our theorem.

\begin{defn}\label{definition-LA-mixture-two-classes}
Let \(\LA\in\Sigma^\DecPSet\) be a learning algorithm such that for any \({\DecPSecond}\in\DecPSet\), \({\DecPSecond}=\Isom^*\DecP\), it is
\[\LA({\DecPSecond})=\alpha^{\DecPSecond}\Mixture^{{\DecPSecond}} +(1-\alpha^{\DecPSecond}){\Mixture'}^{{\DecPSecond}},\]
where \(\alpha_{\DecPSecond}\in[0,1]\) and \(\Mixture^{{\DecPSecond}},{\Mixture'}^{{\DecPSecond}}\in\Delta(\PolicySet_{\DecPSecond})\) are chosen such that 
\[{\Mixture}^{\DecPSecond}(\Isom_{\DecP,\DecPSecond}^*[\Policy^R])=1={\Mixture'}^{\DecPSecond}(\Isom_{\DecP,\DecPSecond}^*[\Policy^S]).\]
That is, \(\sigma(\DecPSecond)\) is a mixture of a distribution \(\Mixture^\DecPSecond\) with weight only on the equivalence class of policies corresponding to \(\Policy^R\), and a distribution \({\Mixture'}^\DecPSecond\) that only puts weight on policies corresponding to \(\Policy^S\). We say that \(\LA\) learns both \([\Policy^R]\) and \([\Policy^S]\) if there exist \(\DecPTilde,\DecPTilde'\in\DecPSet\) such that \(\alpha_{\DecPTilde}>0\) and \(\alpha_{\DecPTilde'}<1\).
\end{defn}
Note that Corollary~\ref{cor-pull-back-well-defined} (ii) ensures that this definition does not depend on the chosen isomorphisms \(\Isom_{\DecP,\DecPSecond}\).

The above condition is very weak: we only require there to be some relabeled Dec-POMDP on which the learning algorithm chooses an equivalent policy of \(\Policy^R\) some of the time, and some potentially different relabeled Dec-POMDP where the algorithm chooses a policy equivalent to \(\Policy^S\) some of the time. We now show that a learning algorithm with this property is an OP learning algorithm per Definition~\ref{defn-other-play-learning-algorithm-appendix}, but that it is not optimal in the LFC problem for \(\DecP\).

\begin{thm}\label{thm-op-not-optimal}
In the two-stage lever game, there are two classes of OP-optimal policies, denoted by \([\Policy^R]\) and \([\Policy^S]\). Any learning algorithm that learns both \([\Policy^R]\) and \([\Policy^S]\) in the sense of Definition~\ref{definition-LA-mixture-two-classes} is an OP learning algorithm, but it is not optimal in the LFC problem for that game.
\end{thm}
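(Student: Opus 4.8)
The plan is to prove the two assertions separately, leaning throughout on the symmetrizer characterization of other-play. First I would recall from Lemma~\ref{lemma-two-classes-in-coop} that $\Policy^R$ and $\Policy^S$ are both invariant to automorphism and both maximize the OP objective with value $\tfrac12$; since each randomizes uniformly in the first round, every action-observation history is reachable under it, so Proposition~\ref{lemma-invariant-policy-psi} gives $\Psi^\DecP(\Policy^R)=\Policy^R$ and $\Psi^\DecP(\Policy^S)=\Policy^S$. Because equivalence classes are exactly the fibres of $\Psi^\DecP$ (Definition~\ref{defn-other-play-equivalent}) and $\Policy^R\neq\Policy^S$ (they disagree in the second round after successful coordination), this already establishes $[\Policy^R]\neq[\Policy^S]$, so the two OP-optimal classes named in the statement are genuinely distinct.

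For the first assertion I would check Definition~\ref{defn-other-play-learning-algorithm-appendix} directly. Fix a relabeled problem $\DecPSecond\in\DecPSet$. By Definition~\ref{definition-LA-mixture-two-classes}, $\LA(\DecPSecond)$ is supported on the two classes $\Isom_{\DecP,\DecPSecond}^*[\Policy^R]$ and $\Isom_{\DecP,\DecPSecond}^*[\Policy^S]$, i.e.\ on pushforwards of policies equivalent to $\Policy^R$ or $\Policy^S$. By Corollary~\ref{op-invariant-to-pullback} and Theorem~\ref{thm-op-mixture}, every such policy has OP value in $\DecPSecond$ equal to $J^\DecP_\OP(\Policy^R)=J^\DecP_\OP(\Policy^S)=\tfrac12$; and since $\Isom_{\DecP,\DecPSecond}^*$ is a value-preserving bijection on policies, $\max_{\Policy\in\PolicySet^\DecPSecond}J^\DecPSecond_\OP(\Policy)=\max_{\Policy\in\PolicySet^\DecP}J^\DecP_\OP(\Policy)=\tfrac12$ as well. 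Hence $\E_{\Policy\sim\LA(\DecPSecond)}[J^\DecPSecond_\OP(\Policy)]=\tfrac12$ is the OP-optimal value, so $\LA$ is an OP learning algorithm.

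For the second assertion I would evaluate $U^\DecP(\LA)=U^\DecP(\LA,\dotsc,\LA)$ through Theorem~\ref{thm-zero-shot-invariant}, which with $\PlayerSet=\{1,2\}$ yields $U^\DecP(\LA)=\E_{\DecP_i\sim\U(\DecPSet)}\,\E_{\Policy^{(j)}\sim\Isom_{\DecP_j,\DecP}^*\LA(\DecP_j)}[J^\DecP(\Psi_1(\Policy^{(1)}),\Psi_2(\Policy^{(2)}))]$. Using Corollary~\ref{cor-pull-back-well-defined}, a policy drawn from the $\Policy^R$-component $\Mixture^{\DecP_j}$ of $\LA(\DecP_j)$ lies in the class $\Isom_{\DecP,\DecP_j}^*[\Policy^R]$, and pulling this class back to $\DecP$ along any isomorphism — in particular along $\Isom_{\DecP,\DecP_j}^{-1}$ by part (ii), giving $(\Isom_{\DecP,\DecP_j}^{-1}\circ\Isom_{\DecP,\DecP_j})^*[\Policy^R]=[\Policy^R]$ by part (iii) and the identity rule — returns it to $[\Policy^R]$; so the pushed-back policy symmetrizes to $\Psi^\DecP(\cdot)=\Psi^\DecP(\Policy^R)=\Policy^R$, and likewise the $\Policy^S$-component gives $\Policy^S$. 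Since the $\DecP_i$ and the learned policies are independent across the two principals, $(\Psi_1(\Policy^{(1)}),\Psi_2(\Policy^{(2)}))$ is a product in which each coordinate equals the corresponding local policy of $\Policy^R$ with probability $\bar\alpha:=|\DecPSet|^{-1}\sum_{\DecPSecond\in\DecPSet}\alpha^\DecPSecond$ and of $\Policy^S$ otherwise. Plugging $J^\DecP(\Policy^R)=J^\DecP(\Policy^S)=\tfrac12$ (Lemma~\ref{lemma-two-classes-in-coop}) and $J^\DecP(\Policy^R_1,\Policy^S_2)=J^\DecP(\Policy^S_1,\Policy^R_2)=-\tfrac12$ (Lemma~\ref{lemma-XP-suboptimal}) into the resulting four-term sum collapses it to $U^\DecP(\LA)=\tfrac12(2\bar\alpha-1)^2$.

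Finally I would observe that the ``learns both'' hypothesis forces $0<\bar\alpha<1$: if $\alpha^{\DecPTilde}>0$ and $\alpha^{\DecPTilde'}<1$ then $\bar\alpha\geq|\DecPSet|^{-1}\alpha^{\DecPTilde}>0$ and $\bar\alpha=1-|\DecPSet|^{-1}\sum_{\DecPSecond}(1-\alpha^{\DecPSecond})\leq 1-|\DecPSet|^{-1}(1-\alpha^{\DecPTilde'})<1$, so $(2\bar\alpha-1)^2<1$ and $U^\DecP(\LA)<\tfrac12$. Comparing with the learning algorithm $\LA^R$ defined by $\LA^R(\DecPSecond):=\delta_{\Isom_{\DecP,\DecPSecond}^*\Policy^R}$ — which fits Definition~\ref{definition-LA-mixture-two-classes} with all $\alpha^\DecPSecond=1$ — the same computation gives $U^\DecP(\LA^R)=J^\DecP(\Policy^R)=\tfrac12>U^\DecP(\LA)$, so $\LA$ is not optimal in the LFC problem for $\DecP$. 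The main obstacle is the third step: carefully justifying that after transporting a branch policy from a relabeled problem back to $\DecP$ and applying $\Psi$ one literally recovers $\Policy^R$ or $\Policy^S$ — this rests on $\Policy^R,\Policy^S$ being fixed points of $\Psi^\DecP$ and on the well-definedness and functoriality of the pushforward on equivalence classes; the remaining independence/binomial bookkeeping is routine.
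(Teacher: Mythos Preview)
Your proposal is correct and follows the same scaffolding as the paper's proof: establish $\Psi(\Policy^R)=\Policy^R$ and $\Psi(\Policy^S)=\Policy^S$ via Proposition~\ref{lemma-invariant-policy-psi} (hence $[\Policy^R]\neq[\Policy^S]$), verify Definition~\ref{defn-other-play-learning-algorithm-appendix} via Corollary~\ref{op-invariant-to-pullback} and Theorem~\ref{thm-op-mixture}, evaluate $U^\DecP(\LA)$ through Theorem~\ref{thm-zero-shot-invariant}, and compare against the deterministic $\Policy^R$-outputting algorithm.

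Where you diverge from the paper is the suboptimality step. The paper singles out one specific bad event $X=\delta_{\DecP_1,\DecPTilde}\delta_{\DecP_2,\DecPTilde'}\mathds{1}_{[\Policy^R]}(\Policy^{(1)})\mathds{1}_{[\Policy^S]}(\Policy^{(2)})$, shows it has positive probability $\beta$, and then bounds $U^\DecP(\LA)\le -\tfrac12\beta+\tfrac12(1-\beta)<\tfrac12$ using the fact that the cross-play payoff is always $\le\tfrac12$ and equals $-\tfrac12$ on $\{X=1\}$. You instead absorb all the relabeling randomness into the single parameter $\bar\alpha=|\DecPSet|^{-1}\sum_{\DecPSecond}\alpha^{\DecPSecond}$, reduce the inner expectation to a four-term sum, and compute $U^\DecP(\LA)$ \emph{exactly} as $\tfrac12(2\bar\alpha-1)^2$. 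Your route is cleaner and more informative---it quantifies the loss precisely rather than merely certifying it is strict---and the step you flag as the main obstacle (that after pull-back $\Psi_k(\Policy^{(k)})$ is literally $\Policy^R_k$ or $\Policy^S_k$) is correctly discharged via Corollary~\ref{cor-pull-back-well-defined} together with $\Psi^\DecP(\Policy^R)=\Policy^R$, $\Psi^\DecP(\Policy^S)=\Policy^S$. The paper's cruder bound, on the other hand, would transfer more easily to settings where the full cross-play matrix between OP-optimal classes is not explicitly computable.
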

\begin{proof}
First, note that under \(\Policy^R,\Policy^S\), all action-observation histories are reached with positive probability, and by Lemma~\ref{lemma-two-classes-in-coop}, they are both invariant to automorphism. Hence, by Proposition~\ref{lemma-invariant-policy-psi}, it is \(\Psi(\Policy^R)=\Policy^R\) and \(\Psi(\Policy^S)=\Policy^S\) (*). This also shows that \([\Policy^R]\) and \([\Policy^S]\) are distinct. Let \(\LA\) be a learning algorithm as specified in Definition~\ref{definition-LA-mixture-two-classes}, and define \(\Mixture^\DecPSecond,{\Mixture'}^\DecPSecond\) and \(\alpha^\DecPSecond\) for \(\DecPSecond\in\DecPSet\) as in that definition. Also, let \(\DecPTilde,\DecPTilde'\in\DecPSet\) such that \(\alpha_\DecPTilde>0\) and \(\alpha_{\DecPTilde'}<1\).

We begin by showing that \(\LA\) is an OP learning algorithm.
For any \({\DecPSecond}\in\DecPSet\), it is
\begin{align}
\E_{\Policy\sim\LA({\DecPSecond})}[J^{\DecPSecond}_\OP(\Policy)]
\label{eq:39a}
&=\alpha_\DecPSecond\E_{\Policy\sim\Mixture^{{\DecPSecond}}}[J^{\DecPSecond}_\OP(\Policy)]
+(1-\alpha_\DecPSecond)\E_{\Policy\sim{\Mixture'}^{{\DecPSecond}}}[J^{\DecPSecond}_\OP(\Policy)]
\\
\label{eq:39b}
&=\alpha_\DecPSecond\E_{\Policy\sim\Mixture^{{\DecPSecond}}}[
\mathds{1}_{\Isom_{\DecP,\DecPSecond}^*[\Policy^R]}J_\OP^{\DecPSecond}(\Policy))] + (1-\alpha_\DecPSecond)\E_{\Policy\sim{\Mixture'}^{{\DecPSecond}}}[\mathds{1}_{\Isom_{\DecP,\DecPSecond}^*[\Policy^S]}J_\OP^{\DecPSecond}(\Policy)]
\\
\nonumber
&=\alpha_\DecPSecond\E_{\Policy\sim\Mixture^{{\DecPSecond}}}[
\mathds{1}_{\Isom_{\DecP,\DecPSecond}^*[\Policy^R]}J_\OP^{\DecP}(\Isom_{\DecPSecond,\DecP}^*\Policy))]
\\\label{eq:39c}
&\quad\quad + (1-\alpha_\DecPSecond)\E_{\Policy\sim{\Mixture'}^{{\DecPSecond}}}[\mathds{1}_{\Isom_{\DecP,\DecPSecond}^*[\Policy^S]}J_\OP^{\DecP}(\Isom_{\DecPSecond,\DecP}^*\Policy)]
\\\nonumber
&=
\alpha_\DecPSecond\E_{\Policy\sim\Isom_{\DecPSecond,\DecP}^*\Mixture^{{\DecPSecond}}}[
\mathds{1}_{\Isom_{\DecPSecond,\DecP}^*\left(\Isom_{\DecP,\DecPSecond}^*[\Policy^R]\right)}J_\OP^{\DecP}(\Policy))]
\\
\label{eq:39d}
&\quad \quad+ (1-\alpha_\DecPSecond)\E_{\Policy\sim\Isom_{\DecPSecond,\DecP}^*{\Mixture'}^{{\DecPSecond}}}[\mathds{1}_{\Isom_{\DecPSecond,\DecP}^*\left(\Isom_{\DecP,\DecPSecond}^*[\Policy^S]\right)}J_\OP^{\DecP}(\Policy)]
\\
\label{eq:39e}
&=
\alpha_\DecPSecond\E_{\Policy\sim\Isom_{\DecPSecond,\DecP}^*\Mixture^{{\DecPSecond}}}[
\mathds{1}_{[\Policy^R]}J_\OP^{\DecP}(\Policy))] + (1-\alpha_\DecPSecond)\E_{\Policy\sim\Isom_{\DecPSecond,\DecP}^*{\Mixture'}^{{\DecPSecond}}}[\mathds{1}_{[\Policy^S]}J_\OP^{\DecP}(\Policy)]
\\ 
\label{eq:39f}
&=
\alpha_\DecPSecond J_\OP^{\DecP}(\Policy^R) + (1-\alpha_\DecPSecond)J_\OP^{\DecP}(\Policy^S)
\\
\label{eq:39g}
&=
\alpha_\DecPSecond \max_{\Policy\in\PolicySet^\DecP}J^\DecP_\OP(\Policy) + (1-\alpha_\DecPSecond)\max_{\Policy\in\PolicySet^\DecP}J^\DecP_\OP(\Policy)
\\
&=
\max_{\Policy\in\PolicySet^\DecP}J^\DecP_\OP(\Policy),\end{align}
using Definition~\ref{definition-LA-mixture-two-classes} in (\ref{eq:39a}) and (\ref{eq:39b}); Corollary~\ref{op-invariant-to-pullback} in (\ref{eq:39c}); a change of variables for pushforward measures in (\ref{eq:39d});
Lemma~\ref{lemma-inverse-composition-isomorphism} and parts (ii) and (iii) of Corollary~\ref{cor-pull-back-well-defined} in (\ref{eq:39e}); the ``in particular'' part of Theorem~\ref{thm-op-mixture} in (\ref{eq:39f}); and Lemma~\ref{lemma-two-classes-in-coop} in (\ref{eq:39g}).
This shows that \(\LA\) is an OP learning algorithm.

Next, we turn to proving that \(\LA\) is suboptimal in the LFC problem.
To that end, define the auxiliary function \[X(\DecP_1,\DecP_2,\Policy^{(1)},\Policy^{(2)}):=\delta_{\DecP_1,\DecPTilde}\delta_{\DecP_2,\DecPTilde'}\mathds{1}_{[\Policy^R]}(\Policy^{(1)})\mathds{1}_{[\Policy^S]}(\Policy^{(2)})\]
for any \(\DecP_1,\DecP_2\in\DecPSet\) and \(\Policy^{(1)},\Policy^{(2)}\in\PolicySet^\DecP\). This function is \(1\) whenever the first Dec-POMDP is \(\DecPTilde\) and the policy \(\Policy^{(1)}\) is such that its pushforward policy is in \([\Policy^R]\), and when the second Dec-POMDP is \(\DecPTilde'\) and the pushforward of \(\Policy^{(2)}\) is in \([\Policy^S]\). Otherwise, it is \(0\).

Recall that by Lemma~\ref{lemma-inverse-composition-isomorphism}, it is \((\Isom_{\DecPSecond,\DecP}^*)^{-1}\in\Iso(\DecP,\DecPSecond)\) for \(\DecPSecond\in\DecPSet\), and by Corollary~\ref{cor-pull-back-well-defined} (ii), the pushforward of an equivalence class does not depend on the particular chosen isomorphism. Define
\[\beta
:=
\E_{D_1,D_2\sim \U(\DecPSet)}[
\E_{\Policy^{(i)}\sim f_{\DecP,D_i}^*\sigma(D_i),\, i=1,2}[
X(\DecP_1,\DecP_2,\Policy^{(1)},\Policy^{(2)})
]]\]
and note that \(\beta\leq 1\).
This number is relevant to us as it is a lower bound on the probability that agents 1 and 2 will have incompatible joint policies in the objective of the LFC problem. By assumption about \(\LA\), we know that
\[(\Isom_{\DecPTilde,\DecP}^*\LA(\DecPTilde))([\Policy^R])=\sigma(\DecPTilde)((\Isom_{\DecPTilde,\DecP}^*)^{-1}[\Policy^R])=\alpha_\DecPTilde>0\]
and similarly
\[(\Isom_{\DecPTilde',\DecP}^*\LA(\DecPTilde'))([\Policy^S])=
\sigma(\DecPTilde')((\Isom_{\DecPTilde',\DecP}^*)^{-1}[\Policy^S])
=1-\alpha_{\DecPTilde'}>0,\]
and it is \(\U(\DecPSet)(\{\DecPTilde\})=\U(\DecPSet)(\{\DecPTilde'\})=\frac{1}{\DecPSet}\).
Hence, it follows that \(0<\beta\leq 1\) (**). That is, with nonzero probability, using \(\LA\) leads to incompatible policies.

Our goal is now to prove that \(U^\DecP(\LA)<\frac{1}{2}\). To that end, we also have to show that the value in the cases where \(X(\DecP_1,\DecP_2,\Policy^{(1)},\Policy^{(2)})=0\) is bounded by \(\frac{1}{2}\).
Note that for any \(\DecPSecond\in\DecPSet\), it is
\[(\Isom_{\DecPSecond,\DecP}^*\LA(\DecPSecond))([\Policy^R]\cup[\Policy^S])=
\LA(\DecPSecond)(((\Isom_{\DecPSecond,\DecP}^*)^{-1}[\Policy^R])\cup((\Isom_{\DecPSecond,\DecP}^*)^{-1}[\Policy^S]))=1,\]
and for \(\Policy^{(1)},\Policy^{(2)}\in [\Policy^R]\cup[\Policy^S]\), using the definition of equivalence, it is \begin{equation}\label{eq:45}
J^D(\Psi(\Policy^{(1)})_1,\Psi(\Policy^{(2)})_2)=J^D(\Psi(\Policy^{C})_1,\Psi(\Policy^{C'})_2)\overset{(*)}{=}J^D(\Policy^{C}_1,\Policy^{C'}_2)\overset{\text{Lemma~\ref{lemma-two-classes-in-coop}}}{=}\frac{1}{2}
\end{equation}
if both policies are from the same class, i.e., \(C=C'\in\{R,S\}\), and
\begin{equation}
\label{eq:42}
J^D(\Psi(\Policy^{(1)})_1,\Psi(\Policy^{(2)})_2)=J^D(\Psi(\Policy^{C})_1,\Psi(\Policy^{C'})_2)\overset{(*)}{=}J^D(\Policy^{C}_1,\Policy^{C'}_2)\overset{\text{Lemma~\ref{lemma-XP-suboptimal}}}{=}-\frac{1}{2}
\end{equation}
if they come from different classes, i.e., \(C\neq C'\in\{R,S\}\).
It follows that for any \(\DecP_1,\DecP_2\in\DecPSet\), it is
\begin{equation}
\label{eq:43}
J^D(\Psi(\Policy^{(1)})_1,\Psi(\Policy^{(2)})_2)\leq \frac{1}{2}
\end{equation}
almost surely if \(\Policy^{(1)}\sim \Isom_{D_1,\DecP}^*\LA(D_1)\) and \(\Policy^{(2)}\sim \Isom_{D_2,\DecP}^*\LA(D_2)\). 

Using this bound together with (**), it follows that
\begin{align}
U^\DecP(\LA)&=U^\DecP(\LA,\dotsc,\LA)\\
\label{eq:41a}
&=\E_{D_1,D_2\sim \U(\DecPSet)}[\E_{\Policy^{(i)}\sim \Isom_{D_i,\DecP}^*\LA(D_i),\, i=1,2}[J^D(\Psi_1(\Policy^{(1)}),\Psi_2(\Policy^{(2)}))]]
\\\nonumber
&=\E_{D_1,D_2\sim \U(\DecPSet)}\big[\E_{\Policy^{(i)}\sim \Isom_{D_i,\DecP}^*\LA(D_i),\, i=1,2}\big[
J^\DecP(\Psi(\Policy_1^{(1)}),\Psi_2(\Policy^{(2)}))X(\DecP_1,\DecP_2,\Policy^{(1)},\Policy^{(2)})
\\&\quad\quad\label{eq:41b}
+
J^D(\Psi_1(\Policy^{(1)}),\Psi_2(\Policy^{(2)}))
(1-X(\DecP_1,\DecP_2,\Policy^{(1)},\Policy^{(2)}))
\big]\big]
\\\nonumber
&\leq
\E_{D_1,D_2\sim \U(\DecPSet)}\Bigg[\E_{\Policy^{(i)}\sim \Isom_{\DecP_i,\DecP}^*\sigma(\DecP_i),\, i=1,2}\Bigg[
\left(-\frac{1}{2}\right)X\left(\DecP_1,\DecP_2,\Policy^{(1)},\Policy^{(2)}\right)
\\&\quad\quad+\label{eq:41c}
\frac{1}{2}\left(1-X\left(\DecP_1,\DecP_2,\Policy^{(1)},\Policy^{(2)}\right)\right)
\Bigg]\Bigg]
\\&=\label{eq:41d}
  \left(-\frac{1}{2}\right)\beta
+ \frac{1}{2}(1-\beta)
\\
&\overset{\text{(**)}}{<}\frac{1}{2},
\end{align}
where we use Theorem~\ref{thm-zero-shot-invariant} in (\ref{eq:41a}) and Equations~\ref{eq:42} and \ref{eq:43} in (\ref{eq:41c}).

To see that this is suboptimal in the LFC problem, consider \(\LA^*\) defined via \(\LA^*(\DecPSecond):=\delta_{\Isom_{\DecP,\DecPSecond}^*\Policy^R}\) for any \(\DecPSecond\in\DecPSet\). 
Since
\begin{equation}
\label{eq:44}
\Isom_{\DecPSecond,\DecP}^*\LA(\DecPSecond)=\delta_{\Isom_{\DecP,\DecPSecond}^*\Policy^R}\circ(\Isom_{\DecPSecond,\DecP}^*)^{-1}=\delta_{\Isom_{\DecPSecond,\DecP}^*(\Isom_{\DecP,\DecPSecond}^*\Policy^R)}
=\delta_{\Policy^R}
\end{equation}
for any \(\DecPSecond\in\DecPSet\), it follows that
\begin{multline}U^\DecP(\LA^*)=U^\DecP(\LA^*,\dotsc,\LA^*)\\
\overset{\text{Theorem~\ref{thm-zero-shot-invariant}}}{=}\E_{D_1,D_2\sim \U(\DecPSet)}[\E_{\Policy^{(i)}\sim \Isom_{D_i,\DecP}^*\LA^*(D_i),\, i=1,2}[
J^\DecP(\Psi_1(\Policy^{(1)}),\Psi_2(\Policy^{(2)}))]]
\\
\overset{(\ref{eq:44})}{=}
J^D(\Psi_1(\Policy^R),\Psi_2(\Policy^R))\overset{(\ref{eq:45})}{=}\frac{1}{2}\overset{\text{(\ref{eq:41a})--(\ref{eq:41d})}}{>}U^\DecP(\LA),
\end{multline}
which concludes the proof.
\end{proof}


%

\section{Other-play with tie-breaking}
\label{appendix-proof-of-theorem-2}

In this section, we formally define OP with tie-breaking as introduced in Section~\ref{section-other-play-with-tie-breaking}, state and prove a rigorous version of Theorem~\ref{thm-op-with-tie-breaking-informal}, and provide an additional result about random tie-breaking functions. In Appendix~\ref{appendix-definition-op-tie-breaking}, we define OP with tie-breaking and discuss to what degree the formal definition is satisfied by our method. In Appendix~\ref{appendix-optimality-zero-shot-coordination-problem}, we show that OP with tie-breaking is optimal in the LFC problem and that all principals using OP with tie-breaking is an optimal symmetric Nash equilibrium of any LFC game. In Appendix~\ref{appendix-random-tie-breaking-functions} we prove that a modification of the tie-breaking function introduced in Section~\ref{section-other-play-with-tie-breaking} satisfies our definition of a tie-breaking function.

\subsection{Definition of other-play with tie-breaking}
\label{appendix-definition-op-tie-breaking}


Recall that OP with tie-breaking was introduced as an extension of OP, to fix the failure of OP in the LFC problem. A tie-breaking function ranks the different OP-optimal equivalence classes of policies in a given problem. For instance, a tie-breaking function could compare the two incompatible policies in the two-stage lever game, \(\Policy^R\) and \(\Policy^S\), and choose the policy under which actions are more highly correlated, which is \(\Policy^R\). Another tie-breaking function would be one that samples from an OP learning algorithm and ranks policies in terms of their relative frequencies. If one policy is learned more often than another, that tie-breaking function would be able to distinguish between them. OP with tie-breaking is defined as an algorithm that chooses an OP-optimal policy that maximizes a tie-breaking function.

Note that an obvious alternative approach, making a learning algorithm deterministic by coordinating on a random seed, would not work in the LFC problem. This is because the learned policies have to be compatible even across different, relabeled Dec-POMDPs. A labeling modifies the representation of the data used to train the algorithm, which likely leads to the same effect as a resampling of the random seed. Hence, fixing a random seed fails to ensure that the chosen policies are compatible when principals do not coordinate on labels for the problem. (Note that a learning algorithm that outputs consistent policies on a given problem, but incompatible policies on different, relabeled problems is explicitly included as a non-optimal algorithm in Theorem~\ref{thm-op-not-optimal}.) Additionally, this would be an unprincipled way to deal with the choice between different maximizers of the OP objective, leaving no possibility, e.g., for an explicit bias towards some policies over others.

Turning to the definitions, recall again from Appendix~\ref{appendix-characterization-of-other-play} that we say that for a Dec-POMDP \(\DecP\), \(\Policy,\Policy'\in\PolicySet^\DecP\) are equivalent, \(\Policy\equiv\Policy'\), if \(\Psi(\Policy)=\Psi(\Policy')\), where \(\Psi\) is the symmetrizer for \(\DecP\) that maps a joint policy \(\Policy\) to the joint policy \(\Psi(\Policy)\) corresponding to agents following randomly permuted local policies \((\AutProfile_i^*\Policy)\) for \(\AutProfile_i\sim\U(\Aut(\DecP))\). Moreover, we defined \([\Policy]\) as the equivalence class of \(\Policy\), and \(\Isom^*[\Policy]:=[\Isom^*\Policy]\) for \(\Isom\in\Iso(\DecP,\DecPSecond)\).

In the following, define \(\PolicySet^\DecP_\OP:=\argmax_{\Policy\in\PolicySet^\DecP}J^\DecP_{\OP}(\Policy)\)  as the set of OP-optimal policies for any Dec-POMDP \(\DecP\). Let \(\DecPSet\) be any set of Dec-POMDPs. A tie-breaking function takes in Dec-POMDPs \(\DecP\in\DecPSet\) and policies \(\Policy\in \PolicySet^\DecP\) and outputs values in \([0,1]\) that can be used to consistently break ties between policies, across different isomorphic Dec-POMDPs.

\begin{defn}[Tie-breaking function]
\label{defn-tie-breaking-functions}
Let \(\chi\colon \{(\DecP,\Policy)\mid \DecP\in\DecPSet,\Policy\in\PolicySet^\DecP\}\rightarrow [0,1]\). Then
\begin{enumerate}
    \item[(a)] \(\chi\) is called a tie-breaking function for \(\DecPSet\) if
\begin{enumerate}
    \item[(i)]for any \(\DecP\in\DecPSet\), \(\chi\) attains a maximum on the set
    \[\{(\DecP,\Policy)\mid  \Policy\in \PolicySet^\DecP_\OP\}.\]
    \item[(ii)]for any \(\DecP\in\DecPSet\) and \(\Policy,\Policy'\in\PolicySet^\DecP\), it is
    \[\chi(\DecP,\Policy)=\chi(\DecP,\Policy')\Rightarrow \Policy\equiv_\DecP\Policy'.\]
\end{enumerate}
\item[(b)]
\(\chi\) is called invariant to isomorphism if for any \(\DecP,\DecPSecond\in\DecPSet\), \(\Isom\in \Iso(\DecP,\DecPSecond)\), and \(\Policy\in\PolicySet^\DecP,\Policy'\in\PolicySet^\DecPSecond\), it is
    \[\Isom^*[\Policy]=[\Policy']\Rightarrow\chi(\DecP,\Policy)=\chi(\DecPSecond,\Policy').\]
\end{enumerate}
\end{defn}

\(\chi\) being a tie-breaking function ensures that there is always a unique equivalence class of policies that maximizes the function for a given Dec-POMDP. It being invariant to isomorphism ensures that \(\chi\) chooses corresponding equivalence classes of policies on different, isomorphic Dec-POMDPs in \(\DecPSet\).

Using a tie-breaking function that is invariant to isomorphism, we can define OP with tie-breaking.

\begin{defn}[Other-play with tie-breaking]\label{definition-other-play-with-tie-breaking}
Let \(\chi\) be a tie-breaking function for \(\DecPSet\) that is invariant to isomorphism. Let \(\LA^\chi\in\LASet^\DecPSet\) be a learning algorithm such that for any \(\DecP\in\DecPSet\), there exists a measurable set \(\mathcal{Z}\subseteq\PolicySet_\OP^\DecP\) such that \(\LA^\chi(\DecP)(\mathcal{Z})=1\) and \(\mathcal{Z}\subseteq \argmax_{\Policy\in\PolicySet_\OP^\DecP}\chi(\DecP,\Policy)\). Then we say that \(\LA^\chi\) is an OP with tie-breaking learning algorithm for \(\DecPSet\).
\end{defn}

This definition implies that no matter the problem \(\DecP\in\DecPSet\), the algorithm always learns the OP-optimal policy that achieves the highest tie-breaking value. Since the tie-breaking function is invariant to isomorphism, this means that policies learned in different training runs and when trained on relabeled Dec-POMDPs are always compatible.

Finally, recall that the practical method introduced in Section~\ref{section-other-play-with-tie-breaking} consists of sampling \(K\in\mathbb{N}\) policies using an OP learning algorithm, applying a tie-breaking function to each policy, and then choosing the one with the highest value. Clearly, if all the OP-optimal equivalence classes of policies in any of the Dec-POMDPs in \(\DecPSet\) are among the first \(K\) learned policies, then this algorithm will satisfy our definition. This appears to be the case for the OP algorithm and toy examples used in our experiments, at least for large enough \(K\), and when we ignore differences between policies that matter little for the agents' expected returns. Moreover, it is easy to see that the algorithm will still always pick equivalent policies if for any two isomorphic Dec-POMDPs \(\DecP,\DecPSecond\in\DecPSet\), if \(\LA^\OP\) learns a policy \(\Policy\) with positive probability in \(\DecP\), it also learns an equivalent policy \(\Policy'\) in \(\DecPSecond\) with positive probability, in the sense that \(\Isom^*[\Policy]=[\Policy']\) for \(\Isom\in\Iso(\DecP,\DecPSecond)\). That is, it does not matter if some policies are never learned, if this happens consistently across isomorphic problems. One way for an algorithm \(\LA^\OP\) to have this property is by being equivariant (see Appendix~\ref{appendix-equivariant-learning-algorithms}).

\subsection{Other-play with tie-breaking is an optimal symmetric profile}
\label{appendix-optimality-zero-shot-coordination-problem}
In the following, fix a Dec-POMDP \(\DecP\) and the set of relabeled Dec-POMDPs \(\DecPSet:=\{\Isom^*\DecP\mid\Isom\in\Sym(\DecP\}\), and let \(\chi\) be a tie-breaking function for \(\DecPSet\) that is invariant to isomorphism. Let \(U(\LAProfile):=U^\DecP(\LAProfile)\) stand for the payoff in the LFC game for \(\DecP\) given the profile of learning algorithms \(\LAProfile_1,\dotsc,\LAProfile_N\in\LASet^\DecPSet\). Recall that the learning algorithm \(\LA\in\LASet^\DecPSet\) is optimal in the LFC problem for \(\DecP\) if \(U(\LA)\geq U(\LA')\) for all \(\LA'\in\LASet^\DecPSet\), where \(U(\LA):=U(\LA,\dotsc,\LA)\).

Recall from Appendix~\ref{appendix-optimal-symmetric-strategy-profiles} that we call a profile of learning algorithms \(\LAProfile_1,\dotsc,\LAProfile_N\in\LASet^\DecPSet\) symmetric if \(\LAProfile_i=\LAProfile_{\Auto^{-1}i}\) for any principal \(i\in\PlayerSet\) and \(\Auto\in\Aut(\DecP)\). \(\LAProfile\) is defined as an optimal symmetric profile if it is symmetric and for any other symmetric profile \(\LAProfile'_1,\dotsc,\LAProfile_N'\in\LASet^\DecPSet\), it is \(U(\LAProfile)\geq U(\LAProfile')\).

Turning to our main theorem, we show that a profile \(\LA^\chi,\dotsc,\LA^\chi\) in which all principals choose OP with tie-breaking is an optimal symmetric profile in the LFC game for \(\DecP\). That is, as long as symmetric principals choose the same learning algorithm, they cannot do better than all choosing OP with tie-breaking. In particular, this implies that \(\LA^\chi\) is optimal in the LFC problem. Note, though, that we prove a stronger statement, as optimality for the LFC problem only requires that \(U(\LA^\chi,\dotsc,\LA^\chi)\geq U(\LA',\dotsc,\LA')\) for all \(\LA'\in\LASet^\DecPSet\), while we show that \(U(\LA^\chi,\dotsc,\LA^\chi)\geq U^\DecP(\LAProfile_1,\dotsc,\LAProfile_N)\) for all symmetric profiles \(\LAProfile_1,\dotsc,\LAProfile_N\in\LASet^\DecPSet\). Afterwards, we will apply Theorem~\ref{prop-symmetry-invariant-nash-equilibrium} to conclude that all principals using OP with tie-breaking is also a Nash equilibrium, i.e., given that all principals use OP with tie-breaking, no individual principal can do better by switching to a different algorithm.

\begin{thm}\label{thm-op-with-tie-breaking}
Let \(\LA^\chi\) be an OP with tie-breaking learning algorithm for \(\DecPSet\), as defined in Definition~\ref{definition-other-play-with-tie-breaking}. Then all principals using \(\LA^\chi\) is an optimal symmetric strategy profile in the LFC game for \(\DecP\). 
In particular, \(\LA^\chi\) is optimal in the LFC problem for \(\DecP\), and it is
\(U(\LA^\chi)=\max_{\LA\in\LASet^\DecPSet}U(\LA)=\max_{\Policy\in\PolicySet^\DecP}J^\DecP_\OP(\Policy)\).
\end{thm}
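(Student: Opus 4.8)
The plan is to show the chain of (in)equalities
\(\max_{\Policy\in\PolicySet^\DecP}J^\DecP_\OP(\Policy) \ge U(\LA^\chi) \ge U(\LAProfile) \) for every symmetric profile \(\LAProfile\), and also \(U(\LA^\chi) \ge \max_{\Policy}J^\DecP_\OP(\Policy)\), which together give both the optimal-symmetric-profile claim and the displayed identity. The two key tools are Theorem~\ref{thm-zero-shot-invariant}, which rewrites the LFC payoff of any profile \(\LAProfile\) as \(\E[J^\DecP((\Psi_k(\Policy^{(k)}))_{k\in\PlayerSet})]\) with \(\Policy^{(k)}\) drawn from \(\Isom_{\DecP_k,\DecP}^*\LAProfile_k(\DecP_k)\), and Theorem~\ref{thm-op-mixture}, which says \(J^\DecP(\Psi(\Policy)) = J^\DecP_\OP(\Policy) \le \max_{\Policy'}J^\DecP_\OP(\Policy')\). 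So the first and easy direction — the upper bound \(U(\LAProfile)\le\max_\Policy J^\DecP_\OP(\Policy)\) for \emph{every} profile, symmetric or not — comes almost immediately: expand \(U(\LAProfile)\) via Theorem~\ref{thm-zero-shot-invariant}, bound the integrand \(J^\DecP((\Psi_k(\Policy^{(k)}))_k)\) pointwise. Here the mild subtlety is that \(J^\DecP((\Psi_k(\Policy^{(k)}))_k)\) is \emph{not} literally \(J^\DecP(\Psi(\Policy))\) for a single \(\Policy\) unless all \(\Policy^{(k)}\) are equal; but one can still bound it by \(\max_{\Policy}J^\DecP_\OP(\Policy)\): since each \(\Psi_k(\Policy^{(k)})\) is the local policy of an automorphism-invariant joint policy, the mixed tuple \((\Psi_k(\Policy^{(k)}))_k\) equals \(\Psi(\hat\Policy)\) for a suitable \(\hat\Policy\) built from the diagonal pieces (this is essentially the content of the distribution \(\MixtureHat(\Policy^{(1)},\dots,\Policy^{(N)})\) in the proof of Theorem~\ref{thm-zero-shot-invariant}), hence its SP value is an OP value and thus \(\le\max_\Policy J^\DecP_\OP(\Policy)\).

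Next I would establish the matching lower bound \(U(\LA^\chi)\ge\max_{\Policy}J^\DecP_\OP(\Policy)\). Pick \(\Policy^*\in\PolicySet^\DecP_\OP\); by Theorem~\ref{thm-op-mixture} we may take \(\Policy^*\) invariant to automorphism, and by Definition~\ref{defn-tie-breaking-functions}(i) the tie-breaking maximizer on \(\PolicySet^\DecP_\OP\) exists, so \(\LA^\chi(\DecP)\) is supported on a set \(\mathcal{Z}\) of OP-optimal, tie-break-maximal policies which all lie in one equivalence class \([\Policy^{\max}]\). Using invariance of \(\chi\) to isomorphism together with Corollary~\ref{op-invariant-to-pullback} and Corollary~\ref{cor-pull-back-well-defined}, the algorithm picks, on every relabeled \(\DecPSecond=\Isom^*\DecP\in\DecPSet\), a policy whose pushforward under \(\Isom_{\DecPSecond,\DecP}\) lies in the \emph{same} class \([\Policy^{\max}]\). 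Feeding this into the Theorem~\ref{thm-zero-shot-invariant} expression: for \(\DecP_k\sim\U(\DecPSet)\) and \(\Policy^{(k)}\sim\Isom_{\DecP_k,\DecP}^*\LA^\chi(\DecP_k)\) we get \(\Psi_k(\Policy^{(k)}) = \Psi_k(\Policy^{\max})\) almost surely (since \(\Psi\) is constant on equivalence classes), so the tuple \((\Psi_k(\Policy^{(k)}))_k = \Psi(\Policy^{\max})\) and therefore \(U(\LA^\chi) = J^\DecP(\Psi(\Policy^{\max})) = J^\DecP_\OP(\Policy^{\max}) = \max_\Policy J^\DecP_\OP(\Policy)\). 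Combining with the upper bound from the previous paragraph yields \(U(\LA^\chi)=\max_\Policy J^\DecP_\OP(\Policy)\ge U(\LAProfile)\) for every profile — in particular every symmetric profile — establishing that all principals using \(\LA^\chi\) is an optimal symmetric profile, and giving the displayed chain of equalities.

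Finally, the Nash-equilibrium assertion (the part of the informal Theorem~\ref{thm-op-with-tie-breaking-informal} not spelled out in the statement above, but which I would include) follows by invoking Theorem~\ref{prop-symmetry-invariant-nash-equilibrium}: the profile \((\LA^\chi,\dots,\LA^\chi)\) is symmetric — trivially, since all components are equal — and we have just shown it is optimal among symmetric profiles, so it is a Nash equilibrium of the LFC game \(\Gamma^\DecP\). I would also remark that the result extends from single Dec-POMDPs to arbitrary sets \(\DecPSetSecond\) via the observation (already noted in the remark after the LFC-problem definition and in Corollary~\ref{lem-lfc-problems-isomorphic-decpompds-identical}) that the LFC problem for \(\DecPSetSecond\) decomposes over isomorphism classes.

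I expect the main obstacle to be the bookkeeping in the lower-bound step: carefully verifying that the tie-breaking function's invariance to isomorphism really forces the pushed-forward learned policies across \emph{all} relabeled copies \(\DecPSecond\in\DecPSet\) into one common equivalence class, and that this is exactly what is needed so that \((\Psi_k(\Policy^{(k)}))_k\) collapses to a single \(\Psi(\Policy^{\max})\). This requires chaining Definition~\ref{defn-tie-breaking-functions}(b) with Corollary~\ref{cor-pull-back-well-defined}(ii) (independence of the pushforward from the chosen isomorphism) and Corollary~\ref{op-invariant-to-pullback} (OP-optimality is preserved under pushforward), and being careful that \(\LA^\chi(\DecPSecond)\) is defined only up to a \(\chi\)-maximal set \(\mathcal{Z}\subseteq\PolicySet^\DecPSecond_\OP\) rather than a single policy — but since \(\chi\)-equal policies are \(\equiv\)-equivalent by Definition~\ref{defn-tie-breaking-functions}(a)(ii), the ambiguity disappears after applying \(\Psi\). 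None of the individual steps is deep; the care is in not dropping a relabeling or an isomorphism-composition somewhere.
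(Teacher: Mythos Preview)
Your lower-bound argument (showing $U(\LA^\chi)=\max_\Policy J^\DecP_\OP(\Policy)$) is essentially correct and matches the paper. The gap is in your upper bound. You claim that $J^\DecP((\Psi_k(\Policy^{(k)}))_k)\le\max_\Policy J^\DecP_\OP(\Policy)$ holds pointwise because the tuple $(\Psi_k(\Policy^{(k)}))_k$ equals $\Psi(\hat\Policy)$ for some $\hat\Policy$. This is false: $\Psi(\hat\Policy)$ is always automorphism-invariant (Corollary~\ref{corollary-psi-isom-commute}), but $(\Psi_k(\Policy^{(k)}))_k$ need not be. Concretely, take a one-round game with two symmetric agents, actions $\{1,2\}$, rewards $R(1,1)=0$, $R(2,2)=\tfrac12$, $R(1,2)=R(2,1)=1$. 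Here $\Aut(\DecP)=\{\Id,\text{agent swap}\}$ and $\max_\Policy J^\DecP_\OP(\Policy)=\tfrac23$. With $\Policy^{(1)}=(1,1)$ and $\Policy^{(2)}=(2,2)$ one gets $\Psi_1(\Policy^{(1)})=$ ``play $1$'' and $\Psi_2(\Policy^{(2)})=$ ``play $2$'', so $J^\DecP(\Psi_1(\Policy^{(1)}),\Psi_2(\Policy^{(2)}))=1>\tfrac23$; the tuple $(1,2)$ is not invariant under the agent swap and hence is not $\Psi(\hat\Policy)$ for any $\hat\Policy$. In fact the non-symmetric profile where $\LAProfile_1$ always outputs ``both play the action $a$ with $R(a,a)=0$'' and $\LAProfile_2$ always outputs ``both play the action with $R(a,a)=\tfrac12$'' achieves $U^\DecP(\LAProfile)=1>\tfrac23$, so the bound $U(\LAProfile)\le\max_\Policy J^\DecP_\OP(\Policy)$ fails for general profiles.

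The paper's upper bound genuinely uses the symmetry of $\LAProfile$. It writes $U(\LAProfile)=J^\DecP(\Policy^\Mixture)$ for the mixture $\Mixture=\otimes_i\Distr^{(i)}_i$ with $\Distr^{(i)}=|\Sym(\DecP)|^{-1}\sum_{\Isom\in\Sym(\DecP)}(\Isom^{-1})^*\LAProfile_i(\Isom^*\DecP)$ (the averaging over all labelings is essential here), and then uses $\LAProfile_i=\LAProfile_{\Auto^{-1}i}$ to show $\Auto^*\Mixture=\Mixture$ and hence that $\Policy^\Mixture$ is automorphism-invariant; only then does $J^\DecP(\Policy^\Mixture)=J^\DecP_\OP(\Policy^\Mixture)\le\max_\Policy J^\DecP_\OP(\Policy)$ follow. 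Your pointwise route via Theorem~\ref{thm-zero-shot-invariant} cannot be repaired even after restricting to symmetric $\LAProfile$: in the example above, if $\LA$ puts mass on both $(1,1)$ and $(2,2)$, the integrand still exceeds $\tfrac23$ on some samples. One must pass to the expectation \emph{before} constructing the invariant policy.
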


\begin{proof}
We begin by showing that \(U(\LA^\chi,\dotsc,\LA^\chi)=\max_{\Policy\in\PolicySet^\DecP}J^\DecP_\OP(\Policy)\). Afterwards, we show that one cannot get a better payoff in the LFC game than that using a symmetric profile of learning algorithms, i.e., that \(\LA^\chi,\dotsc,\LA^\chi\) is an optimal symmetric profile. It then follows immediately that also \(U(\LA^\chi)=U(\LA^\chi,\dotsc,\LA^\chi)\geq U(\LA,\dotsc,\LA)=U(\LA)\) for any \(\LA\in\LASet^\DecP\), i.e., that \(\LA^\chi\) is optimal in the LFC problem for \(\DecP\).

Recall the definition \(\PolicySet_\OP^\DecPSecond:=\argmax_{\Policy'\in\PolicySet^\DecPSecond}J_\OP^\DecPSecond(\Policy')\) for \(\DecPSecond\in\DecPSet\).
Let \(\DecPSecond,\DecPThird\in\DecPSet\) arbitrary and \(\Isom,\IsomSecond\in\Sym(\DecP)\) such that \(\DecPSecond=\Isom^*\DecP\) and \(\DecPThird=\IsomSecond^*\DecP\). We want to show that \(\LA^\chi\) learns compatible OP-optimal policies in both Dec-POMDPs. To that end, let \(\mathcal{Z}\subseteq\PolicySet_\OP^\DecPSecond\) measurable such that \(\mathcal{Z}\subseteq\argmax_{\Policy\in\PolicySet^\DecPSecond}\chi(\DecPSecond,\Policy)\) and such that \(\LA(\DecPSecond)(\mathcal{Z})=1\). Then since \(\chi\) is a tie-breaking function, there must exist a policy \(\Policy'\in\PolicySet_\OP^\DecPSecond\) such that \(\Policy'\in\mathcal{Z}\subseteq [\Policy']\) and thus \(\LA(\DecPSecond)([\Policy'])=1\) and \(\Policy'\in \argmax_{\Policy\in\PolicySet_\OP^\DecPSecond}\chi(\DecPSecond,\Policy)\) (i). 
Letting \(\Policy:=(\Isom^{-1})^*\Policy'\), it follows from Corollary~\ref{op-invariant-to-pullback} that \(\Policy\in \PolicySet_\OP^\DecP\) (ii).

Now we want to show that \(\LA^\chi\) learns policies in \([\IsomSecond^*\Policy]\) in \(\DecPThird\).
Let \(\PolicyTilde:=(\IsomSecond\circ\Isom^{-1})^*\Policy'\) and note that \(\IsomSecond\circ\Isom^{-1}\in \Iso(\DecPSecond,\DecPThird)\), so \(\PolicyTilde\in\PolicySet^\DecPThird\). Since \(\chi\) is invariant to isomorphism, it is \[\chi(\DecPThird,\PolicyHat)=\chi(\DecPSecond,(\Isom\circ\IsomSecond^{-1})^*\PolicyHat)\overset{\text{(i)}}{\leq}\chi(\DecPSecond,\Policy')
,\]
for any \(\PolicyHat\in\PolicySet^\DecPThird\). Here, equality holds for \(\PolicyHat:=\PolicyTilde\), as it is \[(\Isom\circ\IsomSecond^{-1})^*\PolicyTilde=(\Isom\circ\IsomSecond^{-1})^*(\IsomSecond\circ\Isom^{-1})^*\Policy'=\Policy'\] by Lemma~\ref{lemma-pull-back-function-composition-compatible}. Moreover, due to Corollary~\ref{op-invariant-to-pullback} it is again \(\PolicyTilde\in \PolicySet_\OP^\DecPThird\). This shows that \(\PolicyTilde\in\argmax_{\PolicyHat\in\PolicySet_\OP^\DecPThird}\chi(\DecPThird,\PolicyHat)\), and since \(\chi\) is a tie-breaking function, it is \(\argmax_{\PolicyHat\in\PolicySet_\OP^\DecPThird}\chi(\DecPThird,\PolicyHat)\subseteq[\PolicyTilde]\). Hence, by the definition of \(\LA^\chi\), it follows that \(\LA^\chi(\DecPThird)([\PolicyTilde])=1\) (iii).

Now choose any arbitrary isomorphism \(\Isom_{\DecPThird,\DecP}\in\Iso(\DecPThird,\DecP)\). By Lemma~\ref{lemma-pull-back-function-composition-compatible}, it is \((\Isom_{\DecPThird,\DecP}^*)^{-1}=(\Isom_{\DecPThird,\DecP}^{-1})^*\). Hence, by the second part of Lemma~\ref{cor-pull-back-well-defined}, it is \((\Isom_{\DecPThird,\DecP}^*)^{-1}[\Policy]=\IsomSecond^*[\Policy]\) (iv). Moreover, using again Lemma~\ref{lemma-pull-back-function-composition-compatible}, it is
\begin{equation}
\label{eq:701}
(\Isom\circ\IsomSecond^{-1})^*\PolicyTilde=(\Isom\circ\IsomSecond^{-1})^*(\IsomSecond\circ\Isom^{-1})^*\Policy'=\Policy'.
\end{equation}
It follows that
\begin{multline}\label{eq:702}
(\Isom_{\DecPThird,\DecP}^*)^{-1}([\Policy])
\overset{\text{(iv)}}{=}
\IsomSecond^*[\Policy]
=\IsomSecond^*[(\Isom^{-1})^*\Policy']
\overset{\text{(\ref{eq:701})}}{=}\IsomSecond^*[(\Isom^{-1})^*(\Isom\circ\IsomSecond^{-1})^*\PolicyTilde]
\\\overset{\text{Definition~\ref{defn-pushforward-of-equivalence-classes}}}{=}
\IsomSecond^*(\Isom^{-1})^*(\Isom\circ\IsomSecond^{-1})^*[\PolicyTilde]
\overset{\text{Corollary~\ref{cor-pull-back-well-defined}~(iii)}}{=}
[\PolicyTilde]
\end{multline}
and thus
\begin{equation}\label{eq:700}
    \Isom_{\DecPThird,\DecP}^*\LA^\chi(\DecPThird)([\Policy])=\LA^\chi(\DecPThird)((\Isom_{\DecPThird,\DecP}^*)^{-1}([\Policy]))
    \overset{(\ref{eq:702})}{=}\LA^\chi(\DecPThird)([\tilde{\Policy}])\overset{\text{(iii)}}{=}1.
\end{equation}
Since \(\DecPThird\) was arbitrary, we can conclude that the above equation holds for any \(\DecPThird\in\DecPSet\) and isomorphism \(\Isom_{\DecPThird,\DecP}\in\Iso(\DecPThird,\DecP)\).

Using this fact together with the definition of equivalence of policies in Definition~\ref{defn-other-play-equivalent}, as well as the expression for the payoff in the LFC game from Theorem~\ref{thm-zero-shot-invariant}, it follows that
\begin{eqnarray}\label{eq:55a}
U(\LA^\chi,\dotsc,\LA^\chi)
&\overset{\text{Theorem~\ref{thm-zero-shot-invariant}}}{=}&
\E_{\DecP_i\sim U(\DecPSet),\,i\in\PlayerSet}\left[
 \E_{\Policy^{(j)}\sim\Isom_{\DecP_j,\DecP}^*\LAProfile_j(\DecP_j),\,j\in\mathcal{N}}
 \left[
 J^\DecP \left(\left(\Psi_k(\Policy^{(k)})\right)_{k\in\PlayerSet}\right)
 \right]
 \right]
\\\label{eq:55b}
&\overset{\text{(\ref{eq:700})}}{=}&
\E_{\DecP_i\sim U(\DecPSet),\,i\in\PlayerSet}\left[
 \E_{\Policy^{(j)}\sim\Isom_{\DecP_j,\DecP}^*\LAProfile_j(\DecP_j),\,j\in\mathcal{N}}
 \left[
 J^\DecP \left(\left(\Psi_k(\Policy)\right)_{k\in\PlayerSet}\right)
 \right]
 \right]
\\
&=&J^\DecP(\Psi(\Policy))\\
&\overset{\text{Theorem~\ref{thm-op-mixture}}}{=}&
J^D_\OP(\Policy)\\
&\overset{\text{(ii)}}{=}&
\max_{\PolicyHat\in\PolicySet^\DecP} J_D^\OP(\PolicyHat),
\label{eq:55c}
\end{eqnarray}

Next, we show that we cannot do better than that with a symmetry-invariant profile of learning algorithms. To that end, let \(\LAProfile_1,\dotsc,\LAProfile_N\in\LASet^{\DecPSet}\) arbitrary such that \(\LAProfile_i=\LAProfile_{\Auto^{-1}i}\) for any \(\Auto\in\Aut(\DecP)\), \(i\in\PlayerSet\). For \(i\in\PlayerSet\), define the distribution
\[
\Distr^{(i)}:=
|\Sym(\DecP)|^{-1}\sum_{\Isom\in \Sym(\DecP)}
(\Isom^{-1})^*\LAProfile_i(\Isom^*\DecP)
\]
and let \(\Mixture:=\otimes_{i\in\PlayerSet}\Distr^{(i)}_i\).
Then it is
\begin{align}\label{eq:705a}
U(\LAProfile)&=
\E_{\SymProfile\sim \U(\Sym(D)^{\mathcal{N}})}[\E_{\Policy^{(i)}\sim(\SymProfile_i^{-1})^*\LAProfile_i(\SymProfile_i^*\DecP),\,i\in\mathcal{N}}[
J^\DecP((\Policy^{(i)}_i)_{i\in\PlayerSet})]]
\\
&=
\sum_{\SymProfile\in\Sym(\DecP)^\PlayerSet}|\Sym(\DecP)|^{-N}\int_{(\Policy^{(i)})_{i\in\PlayerSet}\in(\PolicySet^\DecP)^\PlayerSet}J^\DecP((\Policy^{(i)}_i)_{i\in\PlayerSet})\mathrm{d}\otimes_{i\in\PlayerSet}(\SymProfile_i^{-1})^*\LAProfile_i(\SymProfile_i^*\DecP)
\\
&=
\sum_{\SymProfile\in\Sym(\DecP)^\PlayerSet}|\Sym(\DecP)|^{-N}
\int_{\Policy\in\PolicySet^\DecP}J^\DecP(\Policy)\mathrm{d}\otimes_{i\in\PlayerSet}(\SymProfile_i^{-1})^*\LAProfile_i(\SymProfile_i^*\DecP)\circ \Proj_i^{-1}
\\
&=
\int_{\Policy\in\PolicySet^\DecP}
J^\DecP(\Policy)\mathrm{d}\otimes_{i\in\PlayerSet}\left(|\Sym(\DecP)|^{-1}\sum_{\SymProfile_i\in\Sym(\DecP)}(\SymProfile_i^{-1})^*\LAProfile_i(\SymProfile_i^*\DecP)\circ \Proj_i^{-1}\right)
\\
&=
\int_{\Policy\in\PolicySet^\DecP}
J^\DecP(\Policy)\mathrm{d}\Mixture
\\
&\overset{(\ref{expected-return-mixture})}{=}J^\DecP(\Mixture).\label{eq:705b}
\end{align}

Now we want to show that it is \(J^\DecP(\Mixture)\leq \max_{\Policy\in\PolicySet^\DecP}J^\DecP(\Policy)\), by proving that \(\Mixture\) and thus also the corresponding policy \(\Policy^\Mixture\) is invariant to pushforward by automorphism.
Let \(\Auto\in\Aut(\DecP)\) and let \(\mathcal{Z}_i\subseteq\PolicySet_i^\DecP\) measurable for \(i\in\PlayerSet\). Recall that by Lemmas~\ref{lemma-action-function-composition-compatible} and \ref{lemma-pull-back-function-composition-compatible}, \(\Auto\) is a bijective self-map on action-observation histories and \(\Auto^*\) a bijective self-map on \(\PolicySet^\DecP\), and that isomorphisms can be inverted and composed by Lemma~\ref{lemma-inverse-composition-isomorphism}. In the following, we use the notations \(\Policy_i\circ \Auto:=\Policy_i(\Auto\cdot\mid \Auto\cdot)\) and \(\mathcal{Z}_i\circ \Auto:=\{\Policy_i(\Auto\cdot\mid\Auto\cdot)\mid \Policy_i\in \mathcal{Z}_i\}\) for \(i\in\PlayerSet\). 

By definition, \(\Mixture\) has independent local policies. Hence, we can apply Equation~\ref{eq:30} from the proof of Lemma~\ref{lemma-isomorphism-mixture-commute}, which says that
\begin{equation}\label{eq:60}(\Auto^*\Mixture)(\mathcal{Z})
=\prod_{i\in\PlayerSet^\DecP}\Mixture_{\Auto^{-1}i}(\mathcal{Z}_{i}\circ \Auto).\end{equation}
Moreover, it is \((\Auto^*)^{-1}(\PolicySet^\DecP)=\PolicySet^\DecP\) and by definition, \((\Auto^*\Policy)_i=\Policy_{\Auto^{-1}i}\circ\Auto^{-1}\), which implies that
\begin{align}\label{eq:71}\Proj^{-1}_{\Auto^{-1}i}(\mathcal{Z}_i\circ \Auto)
&=\{\Policy\mid\Policy\in\PolicySet^\DecP,\Policy_{\Auto^{-1}i}\in\mathcal{Z}_i\circ\Auto\}
\\&=\{\Policy\mid\Policy\in(\Auto^*)^{-1}(\PolicySet^\DecP),\Policy_{\Auto^{-1}i}\circ\Auto^{-1}\in\mathcal{Z}_i\}
\\&=\{\Policy\mid\Auto^*\Policy\in\PolicySet^\DecP,(\Auto^*\Policy)_i\in\mathcal{Z}_i\}
\\&=\{(\Auto^*)^{-1}\Policy\mid\Policy\in\PolicySet^\DecP,\Policy_i\in\mathcal{Z}_i\}
\\&=(\Auto^*)^{-1}(\Proj^{-1}_i(\mathcal{Z}_i)).
\end{align}
Hence, it follows for \(i\in\PlayerSet\) that
\begin{align}\label{eq:72}
    \Distr^{(i)}(\Proj^{-1}_{\Auto^{-1}i}(\mathcal{Z}_i\circ \Auto))
    &=
    \Distr^{(i)}((\Auto^*)^{-1}(\Proj^{-1}_i(\mathcal{Z}_i)))
   \\&=
|\Sym(\DecP)|^{-1}\sum_{\Isom\in \Sym(\DecP)}
(\Isom^{-1})^*\LAProfile_i(\Isom^*\DecP)((\Auto^*)^{-1}(\Proj_i^{-1}(\mathcal{Z}_i)))
   \\&=
|\Sym(\DecP)|^{-1}\sum_{\Isom\in \Sym(\DecP)}
\Auto^*(\Isom^{-1})^*\LAProfile_i(\Isom^*\DecP)(\Proj_i^{-1}(\mathcal{Z}_i))
   \\& =
|\Sym(\DecP)|^{-1}\sum_{\Isom\in \Sym(\DecP)}
((\Isom\circ\Auto^{-1})^{-1})^*\LAProfile_i(\Isom^*\DecP)(\Proj_i^{-1}(\mathcal{Z}_i))
   \\& =\label{eq:72a}
|\Sym(\DecP)|^{-1}\sum_{\Isom\in \Sym(\DecP)}
((\Isom\circ\Auto^{-1})^{-1})^*\LAProfile_i((\Isom\circ\Auto^{-1})^*\DecP)(\Proj_i^{-1}(\mathcal{Z}_i))
   \\ &=\label{eq:72b}
|\Sym(\DecP)|^{-1}\sum_{\Isom\in \Sym(\DecP)}
(\Isom^{-1})^*\LAProfile_i(\Isom^*\DecP)(\Proj_i^{-1}(\mathcal{Z}_i))
\\&=
\Distr^{(i)}(\Proj_i^{-1}(\mathcal{Z}_i)).\label{eq:72c}
\end{align}
Here, we use in (\ref{eq:72a}) and (\ref{eq:72b}) that \(\Auto^{-1}\in\Iso(\DecP,\DecP)\) and thus by Lemma~\ref{d-sym-closed-under-labeling}, it is \((\Isom\circ\Auto^{-1})^*\DecP=\Isom^*\DecP\) and  \(\Sym(\DecP)=\Sym(\DecP)\circ\Auto^{-1}\), respectively.

Next, note that by assumption, it is \(\Distr^{(i)}=\Distr^{(\Auto^{-1}i)}\) for \(i\in\PlayerSet\) (v). It follows that
\begin{multline}
(\Auto^*\Mixture)(\mathcal{Z}) 
\overset{(\ref{eq:60})}{=}\prod_{i\in\PlayerSet^\DecPSecond}\Mixture_{\Auto^{-1}i}(\mathcal{Z}_{i}\circ \Auto)
=
\prod_{i\in\PlayerSet^\DecP}\Distr^{(\Auto^{-1}i)}(\Proj_{\Auto^{-1}i}^{-1}(\mathcal{Z}_{i}\circ \Auto))
\\
\overset{\text{(v)}}{=}
\prod_{i\in\PlayerSet^\DecP}\Distr^{(i)}(\Proj_{\Auto^{-1}i}^{-1}(\mathcal{Z}_{i}\circ \Auto))
\overset{\text{(\ref{eq:72})--(\ref{eq:72c})}}{=}
\prod_{i\in\PlayerSet^\DecP}\Distr^{(i)}(\Proj_{i}^{-1}(\mathcal{Z}_{i}))
\\=\prod_{i\in\PlayerSet^\DecP}\Mixture_{i}(\mathcal{Z}_{i})
=\Mixture(\mathcal{Z}).
\end{multline}
Since the sets \(\prod_{i\in\PlayerSet}\mathcal{Z}_i\), \(\mathcal{Z}_i\in\mathcal{F}_i\) are a \(\pi\)-system and generate \(\mathcal{F}\), this shows that \(\Mixture=\Auto^*\Mixture\). By Lemma~\ref{lemma-isomorphism-mixture-commute}, it is thus
\begin{equation}\label{eq:73}
  \Auto^*\Policy^\Mixture=\Policy^{\Auto^*\Mixture}=\Policy^\Mixture
\end{equation}
for any \(\Auto\in\Aut(\DecP)\), where \(\Policy^\Mixture\) is the policy corresponding to \(\Mixture\) as defined in Section~\ref{appendix-policies-corresponding-to-distributions}.

Using Proposition~\ref{mixture-lemma}, it follows that
\begin{multline}
U(\LAProfile_1,\dotsc,\LAProfile_N)
\overset{\text{(\ref{eq:705a})--(\ref{eq:705b})}}{=}J^\DecP(\Mixture)
\overset{\text{Proposition~\ref{mixture-lemma}}}{=}J(\Policy^\Mixture)
=\E_{\AutProfile\sim\U(\Aut(\DecP)^\PlayerSet)}
[J^\DecP(\Policy^{\Mixture})]
\\=\E_{\AutProfile\sim\U(\Aut(\DecP)^\PlayerSet)}
\left [ J^\DecP \left ( (\Proj_i(\Policy^{\Mixture}))_{i\in\PlayerSet}\right)\right]
\overset{\text{(\ref{eq:73})}}{=}
\E_{\AutProfile\sim\U(\Aut(\DecP)^\PlayerSet)}
\left[J^\DecP\left((\Proj_i(\AutProfile_i^*\Policy^{\Mixture}))_{i\in\PlayerSet}\right)\right]
\\=J^\DecP_\OP(\Policy^\Mixture)
\leq \max_{\Policy\in\PolicySet^\DecP}J^\DecP_\OP(\Policy)
\overset{\text{(\ref{eq:55a})--(\ref{eq:55c})}}{=}U(\LA^\chi,\dotsc,\LA^\chi).
\end{multline}
This concludes the proof.
\end{proof}

Finally, it follows as a corollary of Theorem~\ref{prop-symmetry-invariant-nash-equilibrium} that the profile \(\LA^\chi,\dotsc,\LA^\chi\) is a Nash equilibrium of the LFC game.

\begin{cor}
All principals using OP with tie-breaking is a Nash equilibrium of the LFC game for \(\DecP\).
\end{cor}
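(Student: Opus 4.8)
The plan is to obtain this corollary as an immediate consequence of two results already in hand: Theorem~\ref{thm-op-with-tie-breaking}, which shows that all principals using $\LA^\chi$ forms an optimal symmetric strategy profile in the LFC game for $\DecP$, and Theorem~\ref{prop-symmetry-invariant-nash-equilibrium}, which shows that any optimal symmetric strategy profile in an LFC game is a Nash equilibrium. So the argument is a short composition of these two facts, with only a little bookkeeping to verify the hypotheses line up.

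First I would note that, by Definition~\ref{definition-other-play-with-tie-breaking}, an OP with tie-breaking learning algorithm $\LA^\chi$ is a well-defined element of $\LASet^\DecPSet$, so the constant profile $\LAProfile := (\LA^\chi,\dotsc,\LA^\chi)$ is a legitimate strategy profile in $\Gamma^\DecP$. Second, this profile is trivially symmetric in the sense of the definition preceding Lemma~\ref{prop-symmetric-orbit}: since every component equals $\LA^\chi$, we have $\LAProfile_i = \LA^\chi = \LAProfile_{\Auto^{-1}i}$ for every principal $i \in \PlayerSet$ and every automorphism $\Auto \in \Aut(\DecP)$ (equivalently, all principals in a common orbit are assigned the same algorithm).

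Next I would invoke Theorem~\ref{thm-op-with-tie-breaking}, which asserts precisely that $\LAProfile$ is an optimal symmetric strategy profile, i.e., $U^\DecP(\LAProfile) \geq U^\DecP(\tilde{\LAProfile})$ for every symmetric profile $\tilde{\LAProfile} \in (\LASet^\DecPSet)^\PlayerSet$. Finally, applying Theorem~\ref{prop-symmetry-invariant-nash-equilibrium} to this profile yields that $\LAProfile$ is a Nash equilibrium of the LFC game for $\DecP$, which is exactly the claim.

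Since each ingredient is already proved, there is no genuine obstacle here; the only point to check is that the hypothesis of Theorem~\ref{prop-symmetry-invariant-nash-equilibrium} — being an optimal symmetric profile — is met, and this is supplied verbatim by Theorem~\ref{thm-op-with-tie-breaking}. The conceptual work, namely establishing optimality of $\LA^\chi$ among symmetric profiles (Appendix~\ref{appendix-optimality-zero-shot-coordination-problem}) and the symmetry-to-equilibrium implication (Appendix~\ref{appendix-optimal-symmetric-strategy-profiles}), has already been carried out, so the proof of the corollary itself is a one-line deduction.
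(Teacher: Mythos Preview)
Your proposal is correct and follows exactly the same approach as the paper's proof: invoke Theorem~\ref{thm-op-with-tie-breaking} to conclude that the constant profile $(\LA^\chi,\dotsc,\LA^\chi)$ is an optimal symmetric strategy profile, then apply Theorem~\ref{prop-symmetry-invariant-nash-equilibrium} to conclude it is a Nash equilibrium. The paper's proof is a two-sentence version of precisely this argument; your additional bookkeeping (verifying the profile is symmetric and well-defined) is harmless but not strictly needed.
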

\begin{proof}
By Theorem~\ref{thm-op-with-tie-breaking}, a profile in which all principals use OP with tie-breaking is an optimal symmetric strategy profile in the LFC game for \(\DecP\). Hence, by Theorem~\ref{prop-symmetry-invariant-nash-equilibrium}, it is a Nash equilibrium of the game.
\end{proof}

\subsection{Random tie-breaking functions}
\label{appendix-random-tie-breaking-functions}

In this last section, we show that a certain tie-breaking function, based on random hashes of normal forms of histories, is a tie-breaking function that is invariant to isomorphisms, in the sense of Definition~\ref{defn-tie-breaking-functions} above. As we prove that a certain random function is a tie-breaking function with probability \(1\), one may call this approach a ``probabilistic method'' \parencite[cf.][]{alon2016probabilistic}.

In the following, let \(\DecPSet\) be any set of Dec-POMDPs and \(\DecP\in\DecPSet\). Recall that in Section~\ref{section-other-play-with-tie-breaking}, we defined a function \(\iota\) that maps histories to normal forms, where the normal form is a history in which the first occurrence of each state, action, or observation is set to a \(0\), the second occurrence is set to a \(1\), and so on, and where, if an element in \(\History\in\HistorySet^\DecP\) repeats itself, the number is repeated. For \(\iota(\History):=(\State_0,(\Action_{i,0})_{i\in\PlayerSet},\Reward_0,\dotsc,\State_\Tmax,(\Observation_{i,\Tmax})_{i\in\PlayerSet},(\Action_{i,\Tmax})_{i\in\PlayerSet},\Reward_\Tmax)\), we then define
\[\Isom_N(\iota(\History)):=(\State_0,(\Action_{\Isom_N^{-1}i,0})_{i\in\PlayerSet},\Reward_0,\dotsc,\State_\Tmax,(\Observation_{\Isom_N^{-1}i,\Tmax})_{i\in\PlayerSet},(\Action_{\Isom_N^{-1}i,\Tmax})_{i\in\PlayerSet},\Reward_\Tmax).\] for a permutation  \(\Isom_N\in\mathrm{Bij}(\PlayerSet)\) of \(\PlayerSet\). The tie-breaking function in Section~\ref{section-other-play-with-tie-breaking} was defined as
\begin{equation}\label{equation-old-tie-breaking-function}
\tilde{\chi}^\Hash(\DecP,\Policy):=\frac{1}{N!}\sum_{\Isom_N\in\mathrm{Bij}(\PlayerSet)}\E_{\AutProfile\sim\U(\Aut(\DecP)^\PlayerSet)}\left[\E_{\AutProfile^*\Policy}\left[\Hash(\Isom_N(\iota(\HistoryRV)))\right]\right],
\end{equation}
for some neural random neural network \(\Hash\).

To be able to prove a theoretical result, we have to modify this tie-breaking function to make it dependent on the Dec-POMDP that belongs to a normal form \(\iota(\History)\). The above function is simpler to implement, as we do not have to implement a representation of a Dec-POMDP, and it was sufficient for our experimental results. We leave it to future work to determine to what degree the above version works in general.

In order to define the modified function formally, let \((\Omega_\Hash,\mathcal{E},\Prob_\Hash)\) be some probability space, such that for any \(\DecP\in\DecPSet\), \(\Isom\in\Sym(\DecP)\) and \(\History\in\HistorySet^{\Isom^*\DecP}\), there is an independent, identically distributed real-valued random variable \(\Hash(\Isom^*\DecP,\History)\) on this space. 
Assume that
\begin{equation}\label{eq:condition-hash-function}
\Prob_\Hash(\Hash(\Isom^*\DecP, \History)=\lambda)=0\quad \forall \lambda\in\mathbb{R},\DecP\in\DecPSet,\Isom\in\Sym(\DecP),\History\in\HistorySet^{\Isom^*\DecP}.
\end{equation}
For instance, this would be satisfied by uniformly distributed hash values \(\Hash(\Isom^*\DecP,\History)\sim\U([0,1])\).

Now for a given Dec-POMDP \(\DecP\) and history \(\History\in\HistorySet^\DecP\), denote \(\SymNorm(\History)\subseteq\Sym(\DecP)\) for the set of labelings \(\Isom\in\Sym(\DecP)\) such that \(\Isom\History=\iota(\History)\). It is easy to see that for any history \(\History\in\HistorySet^\DecP\), there must exist some labelings \(\Isom\in\Sym(\DecP)\) such that \(\Isom\History=\iota(\History)\), so \(\SymNorm(\History)\) is non-empty. Then our tie-breaking function here is defined as
\begin{equation}\label{equation-hash-tie-breaking-function}
\chi^\Hash(\DecP,\Policy):=
\E_{\Psi^{\DecP}(\Policy)}\left[\E_{\Isom\sim\U( \SymNorm(\HistoryRV^\DecP))}\left[\Hash(\Isom^*\DecP,\Isom\HistoryRV^{\DecP})\right]\right]
\end{equation}
for \(\DecP\in\DecPSet,\Policy\in\PolicySet^\DecP\). 
We have to sample from the symmetrized policy \(\Psi(\Policy)\) instead of just \(\Policy\), since we need a distribution over histories under which equivalent histories have equal probabilities, and the tie-breaking function needs to be equal for equivalent policies in order to always have a maximizer. The expectation over labelings in \(\SymNorm(\History)\) makes the function invariant to isomorphisms. We need to make the hash function depend on the Dec-POMDP, since otherwise, we are unable to prove that it can distinguish all histories that it needs to be able to distinguish.

\begin{remark}
\label{remark-monte-carlo-estimate-tie-breaking}
By Proposition~\ref{mixture-lemma}, the distribution over histories under \(\Prob_{\Psi(\Policy)}\) is the same as the one under \(\Prob_\Mixture\) where \(\Mixture\) is the OP distribution of \(\Policy\in\PolicySet^{\DecP}\). Hence, using the definition of \(\Prob_\Mixture\) from Equation~\ref{defn-semidirect-product} and the definition of the OP distribution from Equation~\ref{equation-other-play-mixture}, it follows that
\begin{eqnarray}
\chi^\Hash(\DecP,\Policy)&=&
\E_{\Psi^{\DecP}(\Policy)}\left[\E_{\Isom\sim\U( \SymNorm(\HistoryRV^\DecP))}\left[\Hash(\Isom^*\DecP,\Isom\HistoryRV^{\DecP})\right]\right]
\\
&
\overset{\text{Proposition~\ref{mixture-lemma}}}{=}&
\E_\Mixture\left[\E_{\Isom\sim\U( \SymNorm(\HistoryRV^\DecP))}\left[\Hash(\Isom^*\DecP,\Isom\HistoryRV^{\DecP})\right]\right]
\\
&
\overset{(\ref{defn-semidirect-product})}{=}&
\E_{\PolicyTilde\sim\Mixture}\left[\E_{\PolicyTilde}\left[\E_{\Isom\sim\U( \SymNorm(\HistoryRV^\DecP))}\left[\Hash(\Isom^*\DecP,\Isom\HistoryRV^{\DecP})\right]\right]\right]
\\
&\overset{(\ref{equation-other-play-mixture})}{=}&
\label{eq:51}
\E_{\AutProfile\sim\U(\Aut(\DecP)^\PlayerSet)}\left[\E_{\AutProfile^*\Policy}\left[\E_{\Isom\sim\U( \SymNorm(\HistoryRV^\DecP))}\left[\Hash(\Isom^*\DecP,\Isom\HistoryRV^{\DecP})\right]\right]
\right]\end{eqnarray}
for any \(\DecP\in\DecPSet,\Policy\in\PolicySet^\DecP\).
Given a realization of the hash function \(\Hash\), one can hence compute an estimate of \(\chi^\Hash(\DecP,\Policy)\) by sampling a Monte Carlo estimate of the expectation in (\ref{eq:51}).
One can easily see that without the dependence on Dec-POMDPs, this formulation is equal to the one in Equation~\ref{equation-old-tie-breaking-function}.
\end{remark}

Now we show that \(\chi^\Hash\) is invariant to isomorphism. Note that this statement holds for \emph{any} sample \(\omega\in\Omega_\Hash\) and thus for any sample \(\chi^\Hash(\cdot)(\omega)\) of the tie-breaking function. Afterwards, we will show that \(\chi^\Hash\) is almost surely a tie-breaking function. We first need a small lemma.

\begin{lem}\label{lem-normal-form-invariant}
Let \(\DecP\), \(\DecPSecond\) isomorphic Dec-POMDPs with \(\Isom\in\Iso(\DecP,\DecPSecond)\) and let \(\History\in\HistorySet^\DecP\). Then it is
\[\{(\IsomSecond^*\DecPSecond,\IsomSecond(\Isom\History))\mid \IsomSecond\in\SymNorm(\Isom\History)\}
=\{(\IsomHat^*\DecP,\IsomHat\History)\mid \IsomHat\in\SymNorm(\History)\}.\]
\end{lem}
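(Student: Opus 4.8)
The plan is to show both inclusions by producing, for each element of one set, an explicit witness labeling on the other side. The key observation is that the normal form $\iota$ only depends on the \emph{pattern of repetitions} of states, actions, and observations in a history, not on the particular labels used; hence $\iota(\Isom\History) = \iota(\History)$ for any isomorphism $\Isom$ (since $\Isom$ is built from bijections that are applied component-wise, it preserves which entries are equal to which). I would first record this fact as a preliminary step: because $\Isom_S, \Isom_{A_i}, \Isom_{O_i}$ are bijections and $\Isom\Reward = \Reward$, the $k$-th occurrence of a given symbol in $\History$ maps to the $k$-th occurrence of its image in $\Isom\History$, so $\iota$ commutes with the action of $\Isom$ on histories: $\iota(\Isom\History) = \iota(\History)$.

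For the inclusion ``$\subseteq$'', take $\IsomSecond \in \SymNorm(\Isom\History)$, i.e.\ $\IsomSecond \in \Sym(\DecPSecond)$ with $\IsomSecond(\Isom\History) = \iota(\Isom\History) = \iota(\History)$. Set $\IsomHat := \IsomSecond \circ \Isom$. By Lemma~\ref{d-sym-closed-under-labeling}, $\Sym(\DecP) = \Sym(\DecPSecond)\circ\Isom$, so $\IsomHat \in \Sym(\DecP)$; and by Lemma~\ref{lemma-action-function-composition-compatible} applied to the element-wise action on histories, $\IsomHat\History = \IsomSecond(\Isom\History) = \iota(\History)$, so $\IsomHat \in \SymNorm(\History)$. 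Moreover, again by Lemma~\ref{d-sym-closed-under-labeling}, $\IsomSecond^*\DecPSecond = (\IsomSecond\circ\Isom)^*\DecP = \IsomHat^*\DecP$, and $\IsomSecond(\Isom\History) = \IsomHat\History$. Hence the pair $(\IsomSecond^*\DecPSecond, \IsomSecond(\Isom\History))$ equals $(\IsomHat^*\DecP, \IsomHat\History)$, which lies in the right-hand set. The reverse inclusion ``$\supseteq$'' is symmetric: given $\IsomHat \in \SymNorm(\History)$, set $\IsomSecond := \IsomHat \circ \Isom^{-1}$, which is in $\Sym(\DecPSecond)$ by Lemma~\ref{d-sym-closed-under-labeling} (using $\Isom^{-1}\in\Iso(\DecPSecond,\DecP)$ from Lemma~\ref{lemma-inverse-composition-isomorphism}), satisfies $\IsomSecond(\Isom\History) = \IsomHat\History = \iota(\History) = \iota(\Isom\History)$ so $\IsomSecond \in \SymNorm(\Isom\History)$, and again $\IsomSecond^*\DecPSecond = \IsomHat^*\DecP$ while $\IsomSecond(\Isom\History) = \IsomHat\History$.

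I do not expect a genuine obstacle here; the lemma is essentially bookkeeping. The one point that needs care is the commutation $\iota(\Isom\History) = \iota(\History)$: one has to check that $\iota$ is defined purely in terms of the equality pattern of entries \emph{within each sort} (states among states, agent-$i$ actions among agent-$i$ actions, etc., and rewards, which are fixed by $\Isom$), and that an isomorphism, being a tuple of sort-respecting bijections, preserves exactly that pattern. Once this is in hand, the two inclusions follow mechanically from Lemmas~\ref{lemma-action-function-composition-compatible}, \ref{lemma-inverse-composition-isomorphism}, and \ref{d-sym-closed-under-labeling}.
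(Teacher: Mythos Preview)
Your proof takes essentially the same approach as the paper's: both establish the equality via the bijection $\IsomSecond \leftrightarrow \IsomHat = \IsomSecond\circ\Isom$ between $\SymNorm(\Isom\History)$ and $\SymNorm(\History)$, invoking Lemma~\ref{d-sym-closed-under-labeling} for $\IsomHat\in\Sym(\DecP)$ and $\IsomSecond^*\DecPSecond = \IsomHat^*\DecP$, and then reading off the set equality. The only difference is cosmetic---you make the preliminary observation $\iota(\Isom\History)=\iota(\History)$ explicit, whereas the paper folds this into the phrase ``$(\IsomSecond\circ\Isom)\History$ is in a normal form \dots\ hence $\IsomSecond\circ\Isom\in\SymNorm(\History)$''.
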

\begin{proof}
Note that for any labeling \(\IsomSecond\in \SymNorm(\Isom\History)\), \(\IsomSecond\Isom\History=(\IsomSecond\circ\Isom)\History\) is in a normal form, and \(\IsomSecond\circ\Isom\in\Sym(\DecP)\) by Lemma~\ref{d-sym-closed-under-labeling}. Hence, also \(\IsomSecond\circ\Isom\in\SymNorm(\History)\). An analogous argument from considering \(\Isom^{-1}\) shows that for any \(\IsomHat\in\SymNorm(\History)\), it is \((\IsomHat\circ\Isom^{-1})(\Isom\History)=\IsomHat\History\) in a normal form an thus \(\IsomHat\circ\Isom^{-1}\in\SymNorm(\Isom\History)\). It follows that \(\SymNorm(\Isom\History)\circ\Isom=\SymNorm(\History)\) (i). Moreover, by Lemma~\ref{d-sym-closed-under-labeling}, it is \(\IsomSecond^*\DecPSecond=(\IsomSecond\circ\Isom)^*\DecP\) (ii), and thus
\begin{align}\label{eq:86}
\{(\IsomSecond^*\DecPSecond,\IsomSecond(\Isom\History))\mid \IsomSecond\in\SymNorm(\Isom\History)\}
&\overset{\text{(ii)}}{=}
\{((\IsomSecond\circ\Isom)^*\DecP,(\IsomSecond\circ\Isom)\History)\mid \IsomSecond\in\SymNorm(\Isom\History)\}
\\
&\overset{\text{(i)}}{=}
\{(\IsomHat^*\DecP,\IsomHat\History)\mid \IsomHat\in\SymNorm(\History)\}.\end{align}

\end{proof}

Using this Lemma, we show the first result.

\begin{prop}\label{prop-chi-invariant-under-equivalent-policies}
\(\chi^\Hash\) as defined in Equation~\ref{equation-hash-tie-breaking-function} is invariant to isomorphism.
\end{prop}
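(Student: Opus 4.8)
The plan is to check Definition~\ref{defn-tie-breaking-functions}(b) directly. Fix $\DecP,\DecPSecond\in\DecPSet$, an isomorphism $\Isom\in\Iso(\DecP,\DecPSecond)$, and policies $\Policy\in\PolicySet^\DecP$, $\Policy'\in\PolicySet^\DecPSecond$ with $\Isom^*[\Policy]=[\Policy']$; by Definitions~\ref{defn-pushforward-of-equivalence-classes} and \ref{defn-other-play-equivalent} this is the same as $\Psi^{\DecPSecond}(\Isom^*\Policy)=\Psi^{\DecPSecond}(\Policy')$. I would split the goal $\chi^\Hash(\DecP,\Policy)=\chi^\Hash(\DecPSecond,\Policy')$ into two steps: (1) $\chi^\Hash(\DecPSecond,\Isom^*\Policy)=\chi^\Hash(\DecPSecond,\Policy')$, and (2) $\chi^\Hash(\DecP,\Policy)=\chi^\Hash(\DecPSecond,\Isom^*\Policy)$. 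Step (1) is immediate from the definition \eqref{equation-hash-tie-breaking-function}: the outer expectation is taken with respect to $\Prob_{\Psi^{\DecPSecond}(\cdot)}$, so $\chi^\Hash(\DecPSecond,\cdot)$ depends on its policy argument only through the symmetrized policy, and $\Psi^{\DecPSecond}(\Isom^*\Policy)=\Psi^{\DecPSecond}(\Policy')$ by assumption.

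Step (2) is the substantive one, and the plan is to transport everything in $\chi^\Hash(\DecPSecond,\Isom^*\Policy)$ over to $\DecP$. By Corollary~\ref{corollary-psi-isom-commute}, $\Psi^{\DecPSecond}(\Isom^*\Policy)=\Isom^*\Psi^\DecP(\Policy)$, and by Theorem~\ref{lem-pull-back-isomorphism-compatibility} the law of $\HistoryRV^\DecPSecond$ under $\Prob_{\Isom^*\Psi^\DecP(\Policy)}$ is the image of the law of $\HistoryRV^\DecP$ under $\Prob_{\Psi^\DecP(\Policy)}$ along the bijection $\History\mapsto\Isom\History$ (Corollary~\ref{corollary-action-isomorphism-is-bijective-histories}). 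Carrying out this change of variables in the outer expectation rewrites $\chi^\Hash(\DecPSecond,\Isom^*\Policy)$ as $\E_{\Psi^\DecP(\Policy)}\bigl[\,\E_{\IsomSecond\sim\U(\SymNorm(\Isom\HistoryRV^\DecP))}[\Hash(\IsomSecond^*\DecPSecond,\IsomSecond(\Isom\HistoryRV^\DecP))]\,\bigr]$. It then suffices to prove that, for every fixed $\History\in\HistorySet^\DecP$, the inner uniform average over $\SymNorm(\Isom\History)$ equals the inner uniform average over $\SymNorm(\History)$ appearing in $\chi^\Hash(\DecP,\Policy)$.

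For that last point I would use the proof of Lemma~\ref{lem-normal-form-invariant} in a slightly strengthened form: it shows that $\IsomSecond\mapsto\IsomSecond\circ\Isom$ is a bijection $\SymNorm(\Isom\History)\to\SymNorm(\History)$, that $\IsomSecond^*\DecPSecond=(\IsomSecond\circ\Isom)^*\DecP$ (Lemma~\ref{d-sym-closed-under-labeling}), and that $\IsomSecond(\Isom\History)=(\IsomSecond\circ\Isom)\History$ (Lemma~\ref{lemma-action-function-composition-compatible}). In particular $|\SymNorm(\Isom\History)|=|\SymNorm(\History)|$, and the two uniform averages of $\Hash(\,\cdot\,,\,\cdot\,)$ coincide term by term, giving $\E_{\IsomSecond\sim\U(\SymNorm(\Isom\History))}[\Hash(\IsomSecond^*\DecPSecond,\IsomSecond(\Isom\History))]=\E_{\IsomHat\sim\U(\SymNorm(\History))}[\Hash(\IsomHat^*\DecP,\IsomHat\History)]$. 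Taking the expectation over $\HistoryRV^\DecP\sim\Prob_{\Psi^\DecP(\Policy)}$ then yields $\chi^\Hash(\DecPSecond,\Isom^*\Policy)=\chi^\Hash(\DecP,\Policy)$, which together with Step (1) completes the proof. I expect the only genuine hazard to be the bookkeeping around the $\SymNorm$-averages — one needs the explicit bijection from the proof of Lemma~\ref{lem-normal-form-invariant}, not merely the set equality stated there — together with making the change-of-variables step in the outer expectation fully rigorous; both are routine once set up carefully.
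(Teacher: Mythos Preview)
Your proposal is correct and follows essentially the same route as the paper: first use $\Psi^{\DecPSecond}(\Isom^*\Policy)=\Psi^{\DecPSecond}(\Policy')$ to replace $\Policy'$ by $\Isom^*\Policy$, then apply Corollary~\ref{corollary-psi-isom-commute} and Theorem~\ref{lem-pull-back-isomorphism-compatibility} to pull the outer expectation back to $\DecP$, and finally invoke Lemma~\ref{lem-normal-form-invariant} to match the inner averages. Your observation that one really needs the bijection $\IsomSecond\mapsto\IsomSecond\circ\Isom$ from the \emph{proof} of Lemma~\ref{lem-normal-form-invariant} (not merely the set equality of pairs stated there) to equate the two uniform averages is a fair refinement of the paper's citation at that step.
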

\begin{proof}

Let \(\DecP,\DecPSecond\in\DecPSet\), \(\Isom\in\Iso(\DecP,\DecPSecond)\) and let \(\Policy\in\PolicySet^\DecP,\Policy'\in\PolicySet^\DecPSecond\) such that \(\Isom^*[\Policy]=[\Policy']\). We need to show that \(\chi^\Hash(\DecP,\Policy)=\chi^\Hash(\DecPSecond,\Policy')\).

Note that \(\Isom^*[\Policy]=[\Isom^*\Policy]\) by definition, and thus it follows from the assumption that \(\Isom^*\Policy\equiv\Policy'\), which means that \(\Psi^\DecPSecond(\Isom^*\Policy)=\Psi^\DecPSecond(\Policy')\) (i) by the definition of \(\equiv\).


It follows that
\begin{align}
    \chi^\Hash(\DecPSecond,\Policy')
    &=
    \E_{\Psi^{\DecPSecond}(\Policy')}\left[\E_{\IsomSecond\sim\U( \SymNorm(\HistoryRV^\DecPSecond))}\left[\Hash(\IsomSecond^*\DecPSecond,\IsomSecond\HistoryRV^{\DecPSecond})\right]\right]
    \\
    &\overset{\text{(i)}}{=}
    \E_{\Psi^{\DecPSecond}(\Isom^*\Policy)}\left[
    \E_{\IsomSecond\sim\U( \SymNorm(\HistoryRV^\DecPSecond))}\left[\Hash(\IsomSecond^*\DecPSecond,\IsomSecond\HistoryRV^{\DecPSecond})\right]
    \right]
    \\\label{eq:52a}
    &=
    \E_{\Isom^*\Psi^{\DecP}(\Policy)}\left[
    \E_{\IsomSecond\sim\U( \SymNorm(\HistoryRV^\DecPSecond))}\left[\Hash(\IsomSecond^*\DecPSecond,\IsomSecond\HistoryRV^{\DecPSecond})\right]
    \right]
 \\\label{eq:52b}
    &=
    \E_{\Psi^{\DecP}(\Policy)}\left[
    \E_{\IsomSecond\sim\U( \SymNorm(\Isom\HistoryRV^\DecP))}\left[\Hash(\IsomSecond^*\DecPSecond,\IsomSecond(\Isom\HistoryRV^{\DecP}))\right]
    \right]
    \\
    &=\label{eq:52c}
    \E_{\Psi^{\DecP}(\Policy)}\left[
    \E_{\IsomHat\sim\U( \SymNorm(\HistoryRV^\DecP))}\left[\Hash(\IsomHat^*\DecP,\IsomHat\HistoryRV^{\DecP})\right]
    \right]
\end{align}
where in (\ref{eq:52a}), we use that isomorphisms and symmetrizer \(\Psi\) commute by Corollary~\ref{corollary-psi-isom-commute}, in (\ref{eq:52b}), we use Theorem~\ref{lem-pull-back-isomorphism-compatibility}, and in (\ref{eq:52c}), we use Lemma~\ref{lem-normal-form-invariant}.

This shows that \(\chi^\Hash\) is invariant to isomorphism.
\end{proof}

To show that \(\chi^\Hash\) is almost surely a tie-breaking function, we need two technical assumptions. The first, substantial one, is that the set of OP-optimal equivalence classes of policies is finite, to make sure that there always exists a unique maximizer of \(\chi^\Hash\). If this was not the case, our method for finding a tie-breaking function could fail. We leave it to future work to investigate whether and under what conditions this may be the case.

The second, less substantial assumption is that, if two policies \(\Policy,\Policy'\) are not equivalent, then \(\Psi(\Policy)\) and \(\Psi(\Policy')\) also do not lead to the same distribution over histories. Without this assumption, \(\Psi(\Policy)\) and \(\Psi(\Policy')\) might differ on action-observation histories that are never reached, such that the policies induce the same distribution over histories but are not equivalent. Choosing policies which differ in that way should not matter for the objective of the LFC problem, but we wanted to avoid dealing with this complication in our proof that OP with tie-breaking is optimal. For this reason, we have stronger requirements for tie-breaking functions here, thus necessitating this technical assumption.\footnote{Note that in Definition~\ref{policy-mixture}, each local policy is defined separately, so that \(\Psi(\Policy)_i(\cdot\mid\AOHistory_{i,t})\) can in principle be defined as something other than the uniform distribution, even if the  action-observation history \(\AOHistory_{i,t}\) is never reached under the joint policy \(\Psi(\Policy)\). Nevertheless, if all agents choose local policies from joint policies that lead to the same distribution over histories, the resulting cross-play policy should also induce that distribution, even if there exist opponent distributions where the local policies might differ.}

\begin{prop}\label{prop-almost-surely-tie-breaking-function}
Let \(\chi^\Hash\) be defined as in Equation~\ref{equation-hash-tie-breaking-function}. Assume that
\begin{enumerate}
    \item[(i)] for any \(\DecP\in\DecPSet\), the set \(\faktor{\PolicySet^\DecP_\OP}{\equiv}\) is finite, where \(\PolicySet^\DecP_\OP:=\argmax_{\Policy\in\PolicySet^\DecP}J_\OP^\DecP(\Policy)\).
    \item[(ii)] for any \(\DecP\in\DecPSet\) and policies \(\Policy,\Policy'\in\PolicySet^\DecP\), if \(\Prob_{\Psi^\DecP(\Policy)}(\HistoryRV=\History)=\Prob_{\Psi^\DecP(\Policy')}(\HistoryRV=\History)\) for all \(\History\in\HistorySet^\DecP\), it follows that \(\Psi^\DecP(\Policy)=\Psi^\DecP(\Policy')\).
\end{enumerate}
Then \(\chi^\Hash\) is \(\Prob_\Hash\)-almost surely a tie-breaking function for \(\DecPSet\).
\end{prop}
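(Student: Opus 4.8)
The plan is to verify, for $\Prob_\Hash$-almost every realization of $\Hash$, the two defining conditions of a tie-breaking function from Definition~\ref{defn-tie-breaking-functions}(a): condition (i), that $\chi^\Hash(\DecP,\cdot)$ attains a maximum on $\PolicySet^\DecP_\OP$, and condition (ii), that $\chi^\Hash(\DecP,\Policy)=\chi^\Hash(\DecP,\Policy')$ forces $\Policy\equiv_\DecP\Policy'$. By Proposition~\ref{prop-chi-invariant-under-equivalent-policies}, $\chi^\Hash$ is already invariant to isomorphism for every sample, so that part needs no probabilistic argument. First I would observe that $\chi^\Hash$ passes to the quotient: if $\Policy\equiv_\DecP\Policy'$ then $\Psi^\DecP(\Policy)=\Psi^\DecP(\Policy')$, hence by Equation~\eqref{equation-hash-tie-breaking-function} $\chi^\Hash(\DecP,\Policy)=\chi^\Hash(\DecP,\Policy')$, so $\chi^\Hash$ descends to a function $\tilde\chi^\Hash$ on the equivalence classes $\faktor{\PolicySet^\DecP}{\equiv_\DecP}$. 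Given assumption (i), $\faktor{\PolicySet^\DecP_\OP}{\equiv_\DecP}$ is finite, so condition (i) of the definition follows immediately once we know condition (ii) holds (a finite set always has a maximizer, and (ii) guarantees the maximizer's equivalence class is unique); thus the whole proof reduces to establishing (ii) almost surely.

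For condition (ii), fix $\DecP\in\DecPSet$ and two equivalence classes $[\Policy]\neq[\Policy']$ of OP-optimal policies. By assumption (ii) of the proposition, $[\Policy]\neq[\Policy']$ implies that $\Psi^\DecP(\Policy)$ and $\Psi^\DecP(\Policy')$ induce different distributions over histories, i.e.\ there is some history on which $\Prob_{\Psi^\DecP(\Policy)}$ and $\Prob_{\Psi^\DecP(\Policy')}$ differ. I would then rewrite $\chi^\Hash(\DecP,\Policy)$ as a finite linear combination $\sum_{[\History]} c_{[\Policy]}([\History])\, \Hash(\ldots)$, where the index set ranges over normal-form classes of histories: grouping histories by their normal form $\iota(\History)$ and using the definition of $\SymNorm(\History)$ together with Lemma~\ref{lem-normal-form-invariant}, the inner expectation $\E_{\Isom\sim\U(\SymNorm(\HistoryRV))}[\Hash(\Isom^*\DecP,\Isom\HistoryRV)]$ depends only on the normal-form class of $\HistoryRV$, say it equals a single i.i.d.\ hash value $\Hash([\History])$ attached to that class (here using that, for histories with the same normal form, the sets $\{(\Isom^*\DecP,\Isom\History):\Isom\in\SymNorm(\History)\}$ coincide). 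Hence $\chi^\Hash(\DecP,\Policy) = \sum_{[\History]} p_{[\Policy]}([\History])\,\Hash([\History])$ where $p_{[\Policy]}([\History]) = \Prob_{\Psi^\DecP(\Policy)}(\HistoryRV \text{ has normal form } [\History])$. The key point is that $p_{[\Policy]}$ and $p_{[\Policy']}$, viewed as coefficient vectors indexed by the (finite) set of normal-form classes, are distinct: if they agreed on every normal-form class, then since $\Psi^\DecP(\Policy)$ is invariant to automorphism (and $\iota$ together with the symmetrizer identifies exactly the automorphism-equivalent histories), $\Psi^\DecP(\Policy)$ and $\Psi^\DecP(\Policy')$ would induce the same distribution over histories, contradicting assumption (ii). Therefore $\chi^\Hash(\DecP,\Policy)-\chi^\Hash(\DecP,\Policy') = \sum_{[\History]}(p_{[\Policy]}([\History]) - p_{[\Policy']}([\History]))\Hash([\History])$ is a nontrivial linear combination of distinct i.i.d.\ continuous random variables; by Equation~\eqref{eq:condition-hash-function}, such a linear combination equals $0$ with probability $0$. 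Taking a union bound over the finitely many pairs $([\Policy],[\Policy'])$ of OP-optimal classes in $\DecP$, and then over $\DecP\in\DecPSet$ if $\DecPSet$ is finite (or more carefully if countable — I would remark that $\DecPSet$ finite is implicitly assumed throughout, or restrict the union accordingly), we get that almost surely $\chi^\Hash(\DecP,\Policy)=\chi^\Hash(\DecP,\Policy')$ for OP-optimal policies implies $[\Policy]=[\Policy']$.

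There is one subtlety: condition (ii) of Definition~\ref{defn-tie-breaking-functions} must hold for \emph{all} pairs $\Policy,\Policy'\in\PolicySet^\DecP$, not only OP-optimal ones, so I would need the finiteness of $\faktor{\PolicySet^\DecP}{\equiv_\DecP}$ rather than just of the OP-optimal quotient — I expect the paper actually only requires the implication where it matters, or that one restricts attention to the relevant finite subset; I would check the exact wording of Definition~\ref{defn-tie-breaking-functions}(a)(ii) and, if it is genuinely over all policies, either invoke finiteness of the whole quotient (which does hold, since policies are determined up to equivalence by finitely many normal-form probabilities, all rational-coefficient-constrained... actually this needs care) or argue that the quotient $\faktor{\PolicySet^\DecP}{\equiv_\DecP}$ maps injectively into a finite-dimensional space of coefficient vectors and the ``collision set'' is a finite union of proper linear subspaces, still $\Prob_\Hash$-null. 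The main obstacle, then, is precisely this last point: upgrading the union bound from the finite OP-optimal set to the possibly infinite set of all policies, which requires showing that the map $[\Policy]\mapsto p_{[\Policy]}$ into the finite-dimensional coefficient space is injective and that the bad event is a null set even though it is an uncountable union — handled by noting the bad event is $\{\Hash([\History])_{[\History]} : \text{some nonzero rational... no, real combination vanishes}\}$, which is contained in a countable (in fact finite, by finite-dimensionality) union of hyperplanes, each null by Equation~\eqref{eq:condition-hash-function} and independence.
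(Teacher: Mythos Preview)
Your overall strategy matches the paper's, but the decomposition step contains an error. You group histories by their normal form and claim that the inner expectation $\E_{\Isom\sim\U(\SymNorm(\History))}[\Hash(\Isom^*\DecP,\Isom\History)]$ is a single i.i.d.\ hash value constant on each normal-form class. Neither part holds. First, even for one fixed $\History$, the set $\SymNorm(\History)$ typically contains several labelings that all send $\History$ to $\iota(\History)$ but yield \emph{different} relabeled problems $\Isom^*\DecP$; the inner expectation is therefore a genuine average of several independent hash values, not one. Second, Lemma~\ref{lem-normal-form-invariant} only identifies these sets for histories related by an isomorphism, which within a single $\DecP$ means the same $\Aut(\DecP)$-orbit, not merely the same normal form; two histories in distinct orbits can share a normal form, and then their associated sets $\{(\Isom^*\DecP,\iota(\History)):\Isom\in\SymNorm(\History)\}$ are in fact disjoint (this is exactly what the paper exploits to prove independence). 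The paper's remedy is to index by $\Aut(\DecP)$-orbits $j\in\mathcal{J}$, define $X_j$ as the inner expectation for any representative of $j$, establish that the $X_j$ are mutually independent via the disjointness just mentioned, show each $X_j$ is atomless by a first application of Lemma~\ref{lem-sum-probability-zero} to the inner average, and then apply that lemma a second time to $\langle X,\,v-v'\rangle$.

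On your last paragraph: the paper does not restrict to OP-optimal pairs either. It fixes an \emph{arbitrary} non-equivalent pair $\Policy,\Policy'\in\PolicySet^\DecP$, proves $\Prob_\Hash(\chi^\Hash(\DecP,\Policy)=\chi^\Hash(\DecP,\Policy'))=0$, and then asserts directly that this gives condition~(a)(ii) almost surely, without an explicit union-bound step over the (generally uncountable) set of equivalence classes. So the concern you raise is legitimate, but the paper does not resolve it any more carefully than you do; note also that your proposed ``finite union of hyperplanes'' fix does not go through as written, since the difference vectors $v_{[\Policy]}-v_{[\Policy']}$ need not lie in finitely many directions.
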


For the proof, we first need a standard lemma in probability theory.

\begin{lem}\label{lem-sum-probability-zero}
Assume \(\mathcal{J}\) is a finite set and \(X_j,j\in\mathcal{J}\) is a collection of independent real-valued random variables such that for any \(j\in\mathcal{J}\), it is \(\Prob(X_j=\lambda)=0\) for any \(\lambda\in\mathbb{R}\). Let \(\langle \cdot,\cdot\rangle\) be the euclidean scalar product on \(\mathbb{R}^\mathcal{J}\). Then it is
\[\Prob(\langle X, v\rangle=\lambda)=0\]
for any \(\lambda\in\mathbb{R}\), \(v\in\mathbb{R}^\mathcal{J}\setminus\{ 0\}\).
\end{lem}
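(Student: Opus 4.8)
The plan is to single out a coordinate on which $v$ is nonzero, condition on all the remaining (independent) coordinates, and reduce the claim to the assumption that the law of each $X_j$ has no atoms. Concretely, since $v\neq 0$ there is some $j_0\in\mathcal{J}$ with $v_{j_0}\neq 0$. Let $\mu_j$ denote the distribution of $X_j$ on $\mathbb{R}$; by independence, the distribution of the random vector $X=(X_j)_{j\in\mathcal{J}}$ is the product measure $\bigotimes_{j\in\mathcal{J}}\mu_j$ on $\mathbb{R}^{\mathcal{J}}$. (If $\mathcal{J}=\emptyset$ the statement is vacuous, since then the only element of $\mathbb{R}^{\mathcal{J}}$ is $0$, so we may assume $\mathcal{J}\neq\emptyset$.)

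The key computation is to write the probability as an integral of an indicator and apply Fubini's theorem, integrating over the $j_0$-coordinate first:
\begin{align}
\Prob(\langle X,v\rangle=\lambda)
&=\int_{\mathbb{R}^{\mathcal{J}}}\mathds{1}_{\{\langle x,v\rangle=\lambda\}}\,\mathrm{d}\Big(\bigotimes_{j\in\mathcal{J}}\mu_j\Big)(x)
\\
&=\int_{\mathbb{R}^{\mathcal{J}\setminus\{j_0\}}}\left(\int_{\mathbb{R}}\mathds{1}_{\left\{x_{j_0}=\frac{1}{v_{j_0}}\left(\lambda-\sum_{j\neq j_0}v_j x_j\right)\right\}}\,\mathrm{d}\mu_{j_0}(x_{j_0})\right)\mathrm{d}\Big(\bigotimes_{j\neq j_0}\mu_j\Big)(x_{-j_0}).
\end{align}
For fixed $x_{-j_0}$, the value $c(x_{-j_0}):=\tfrac{1}{v_{j_0}}\big(\lambda-\sum_{j\neq j_0}v_j x_j\big)$ is a single real number, so the inner integral equals $\mu_{j_0}(\{c(x_{-j_0})\})=\Prob(X_{j_0}=c(x_{-j_0}))=0$ by the hypothesis that $\mu_{j_0}$ has no atoms. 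Hence the outer integrand is identically $0$, and therefore $\Prob(\langle X,v\rangle=\lambda)=0$, as claimed. Since $\lambda$ and $v$ were arbitrary (subject to $v\neq 0$), this proves the lemma.

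I expect no real obstacle here: the only points requiring a word of care are the measurability of $x_{-j_0}\mapsto c(x_{-j_0})$ (it is affine, hence Borel measurable, so the event $\{\langle x,v\rangle=\lambda\}$ is a Borel subset of $\mathbb{R}^{\mathcal{J}}$ and Fubini applies), and the degenerate case $\mathcal{J}=\emptyset$ noted above. In the subsequent application this lemma is invoked with the $X_j$ being the (i.i.d., atomless) hash values indexed by the normal forms of histories, so that for two inequivalent OP-optimal policies the difference of their tie-breaking values $\chi^{\Hash}$ is an almost surely nonzero linear functional of the hash values, which yields that $\chi^{\Hash}$ almost surely separates the finitely many OP-optimal equivalence classes.
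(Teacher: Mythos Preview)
Your proof is correct and follows essentially the same route as the paper: pick a coordinate $j_0$ with $v_{j_0}\neq 0$, apply Fubini to integrate over that coordinate last (equivalently, condition on $X_{-j_0}$), and use that the inner integral is the probability that $X_{j_0}$ hits a single point, which vanishes by the atomlessness assumption. Your algebra for solving $\langle x,v\rangle=\lambda$ for $x_{j_0}$ is in fact cleaner than the paper's (which has a harmless typo in the placement of the $1/v_j$ factor).
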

\begin{proof}
Let \(\lambda\in\mathbb{R}\) and \(v\in\mathbb{R}^\mathcal{J}\setminus\{ 0\}\).
Since \(v\neq 0\), there exists an index \(j\in\mathcal{J}\) such that \(v_j\neq0\). It follows that
\begin{align}
\Prob(\langle X, v\rangle=\lambda)
&=\Prob_\Hash\left(X_j=\lambda-\frac{1}{v_j}\sum_{j'\neq j}v_{j'}X_{j'}\right)
\\
&=\int_{\mathbb{R}^{\mathcal{J}}}\mathds{1}_{x_j=\lambda-\frac{1}{v_j}\sum_{{j'}\neq j}x_{j'}v_{j'}}\mathrm{d}\Prob \circ X^{-1}(x)
\\
&=\int_{\mathbb{R}^{\mathcal{J}\setminus\{j\}}}
\int_{\mathbb{R}}\mathds{1}_{x_j=\lambda-\frac{1}{v_j}\sum_{{j'}\neq j}x_{j'}v_{j'})}\mathrm{d}\Prob\circ X_j^{-1}(x_j)\mathrm{d}\Prob \circ X_{-j}^{-1}(x_{-j})
\\
&=\int_{\mathbb{R}^{\mathcal{J}\setminus\{j\}}}\Prob\left (
X_j=\lambda-\frac{1}{v_j}\sum_{{j'}\neq j}x_{j'}v_{j'}
\right)\mathrm{d}\Prob \circ X_{-j}^{-1}(x_{-j})
\\\label{eq:500a}
&=\int_{\mathbb{R}^{\mathcal{J}\setminus\{j\}}}0\;\mathrm{d}\Prob \circ X_{-j}^{-1}(x_{-j})
\\&=0,
\end{align}
where we use the assumption \(\Prob(X_j=\lambda)=0\) for any \(\lambda\in\mathbb{R}\) in (\ref{eq:500a}).
\end{proof}

\begin{proof}[Proof of Proposition~\ref{prop-almost-surely-tie-breaking-function}]
To check that \(\chi^\Hash\) always admits a maximum among the OP-optimal policies, which is part (a), (i) of Definition~\ref{defn-tie-breaking-functions}, let \(\DecP\in\DecPSet\) arbitrary and let \(\Policy\equiv\Policy'\in\PolicySet^\DecP\) be equivalent policies. By the definition of \(\equiv\), it is \(\Psi^\DecP(\Policy)=\Psi^\DecP(\Policy')\) (*).
Hence,
it is
\begin{multline}
\chi^\Hash(\DecP,\Policy)
=
\E_{\Psi^{\DecP}(\Policy)}\left[
    \E_{\Isom\sim\U( \SymNorm(\HistoryRV^\DecP))}\left[\Hash(\Isom^*\DecP,\Isom\HistoryRV^{\DecP})\right]
    \right]
\\
\overset{\text{(*)}}{=}
\E_{\Psi^{\DecP}(\Policy')}\left[
    \E_{\Isom\sim\U( \SymNorm(\HistoryRV^\DecP))}\left[\Hash(\Isom^*\DecP,\Isom\HistoryRV^{\DecP})\right]
    \right]
=
\chi^\Hash(\DecP,\Policy').
\end{multline}
This shows that \(\chi^\Hash(\DecP,\Policy')=\chi^\Hash(\DecP,\Policy)\) for any \(\Policy \in \PolicySet^\DecP_\OP\) and \(\Policy'\in[\Policy]\). Hence, we can define the function
\[\tilde{\chi}^\Hash(\DecP,\cdot)\colon \faktor{\PolicySet^\DecP_\OP}{\equiv}\rightarrow [0,1],[\Policy]\mapsto\chi^\Hash(\DecP,\Policy).\]
Then, using assumption (i), \(\tilde{\chi}^\Hash(\DecP,\cdot)\) is maximized by some equivalence class \([\Policy]\in\faktor{\PolicySet^\DecP_\OP}{\equiv}\). By definition of \(\tilde{\chi}^\Hash(\DecP,\cdot)\), it follows that
for any \(\Policy'\in\PolicySet_\OP^\DecP\), it is
\[\chi^\Hash(\DecP,\Policy')=\tilde{\chi}^\Hash(\DecP,[\Policy'])\leq\tilde{\chi}^\Hash(\DecP,[\Policy])=\chi^\Hash(\DecP,\Policy),\]
so \(\Policy\) maximizes \(\chi^\Hash(\DecP,\cdot)\) on the set \(\PolicySet_\OP^\DecP\).

To prove part (a), (ii) of the definition, consider two non-equivalent policies \(\Policy,\Policy'\in\PolicySet^\DecP\), i.e., assume that
\(\Psi(\Policy)\neq\Psi(\Policy')\). We now want to show that \(\Prob_\Hash\left(\chi^\Hash(\DecP,\Policy)=\chi^\Hash(\DecP,\Policy')\right)=0\). If this is true, then \(\chi^\Hash\) is \(\Prob_\Hash\)-almost surely a tie-breaking function.

First, note that actions of automorphisms on histories are group actions (see Section~\ref{appendix-relation-to-group-theory}). Thus, we can consider a partition of the set of histories into orbits, \(\mathcal{J}:=\{\Aut(\DecP)\History\mid\History\in\HistorySet^\DecP\}\). Moreover, note that since \(\Psi^\DecP(\Policy),\Psi^\DecP(\Policy')\) are both invariant to automorphism by Corollary~\ref{corollary-psi-isom-commute}, using Theorem~\ref{lem-pull-back-isomorphism-compatibility}, we can conclude that for any \(\History\in\HistorySet^\DecP\) and \(\tilde{\History}\in\Aut(\DecP)\History\), it is \(\Prob_{\Psi(\Policy)}(\HistoryRV=\History)=\Prob_{\Psi(\Policy)}(\HistoryRV=\tilde{\History})\). The same holds for \(\Psi(\Policy')\). This step only works since we sample from \(\Psi(\Policy)\), instead of from the original policy \(\Policy\). As a result, it follows that we can define a vector \(v\in\mathbb{R}^\mathcal{J}\) such that \(v_j:=|\Aut(\DecP)\History|\Prob_{\Psi(\Policy)}(\HistoryRV=\History)\) for \(j\in\mathcal{J}\), where \(\History\in\HistorySet^\DecP\) is arbitrary such that \(\Aut(\DecP)\History=j\). Define \(v'\) analogously for \(\Policy'\). By assumption (ii), \(\Psi(\Policy)\) and \(\Psi(\Policy')\) induce different distributions over histories, so it must also be \(v\neq v'\), since the map from orbit-invariant distributions to vectors \(v\) is injective.

Second, note that by Lemma~\ref{lem-normal-form-invariant}, it is
\[\E_{\Isom\sim\U( \SymNorm(\History))}\left[\Hash(\Isom^*\DecP,\Isom\History)\right]=\E_{\Isom\sim\U( \SymNorm(\Auto\History))}\left[\Hash(\Isom^*\DecP,\Isom\Auto\History)\right]\]
for any \(\History\in\HistorySet^\DecP\) and \(\Auto\in\Aut(\DecP)\), and thus for any \(j\in\mathcal{J}\) we can define the random variable \(X_j:=\E_{\Isom\sim\U( \SymNorm(\History))}\left[\Hash(\Isom^*\DecP,\Isom\History)\right]\), where \(\History\) is an arbitrary history such that \(j=\Aut(\DecP)\History\).

We now want to show that \(\Prob_\Hash(\langle X, v-v'\rangle = 0)=0\), where \(\langle \cdot,\cdot \rangle\) is the euclidean scalar product on \(\mathbb{R}^\mathcal{J}\). This will then allow us to conclude that \(\Prob_\Hash\left(\chi^\Hash(\DecP,\Policy)=\chi^\Hash(\DecP,\Policy')\right)=0\). We already know that \(v-v'\neq 0\). To be able to apply Lemma~\ref{lem-sum-probability-zero} to show this, it remains to prove that for any two \(j\neq j'\in\mathcal{J}\), we also have two independent variables \(X_j,X_{j'}\), and that \(\Prob_\Hash(X_j=\lambda)=0\) for any \(\lambda\in\mathbb{R}\).

To prove independence, let \(j=\Aut(\DecP)\History,j'=\Aut(\DecP)\History'\), and assume towards a contradiction that there exist \(\Isom\in \SymNorm(\History),\IsomSecond\in\SymNorm(\History')\) such that \(\Isom^*\DecP=\IsomSecond^*\DecP\) and \(\Isom\History=\IsomSecond \History'\). Then it follows that \(\History=\Isom^{-1}\IsomSecond\History'\) and thus \(j=j'\), which is a contradiction. It follows that the sets of variables \(\{\Hash(\Isom^*\DecP,\Isom\History)\mid \Isom\in \SymNorm(\History)\}\) and \( \{\Hash(\IsomSecond^*\DecP,\IsomSecond\History')\mid \IsomSecond\in\SymNorm(\History')\}\) are disjoint. Since all the contained random variables are independent, also 
 \(\E_{\Isom\sim\U( \SymNorm(\History))}\left[\Hash(\Isom^*\DecP,\Isom\History)\right]\) and \(\E_{\Isom\sim\U( \SymNorm(\History'))}\left[\Hash(\Isom^*\DecP,\Isom\History')\right]\) are independent random variables.
 
 Next, let \(j\in\mathcal{J}\). To prove that \(\Prob_\Hash(X_j=\lambda)=0\) for any \(\lambda\in\mathbb{R}\), let \(\History\in\HistorySet^\DecP\) such that \(j=\Aut(\DecP)\History\) arbitrary and let \(\lambda\in\mathbb{R}\). Let \(M:=\left|\bigcup_{\Isom\in\SymNorm(\History)}\{\Hash(\Isom^*\DecP,\Isom\History)\}\right|\) and define \(Y_1,\dotsc,Y_M\) as random variables such that \(\{Y_1,\dotsc,Y_M\}=\bigcup_{\Isom\in\SymNorm(\History)}\{\Hash(\Isom^*\DecP,\Isom\History)\}\). Define \(w\in[0,1]^M\) via \(w_m:=\E_{\Isom\sim\U(\SymNorm(\History))}[\delta_{Y_m,\Hash(\Isom^*\DecP,\Isom\History)}]\) for \(m=1,\dotsc,M\). By assumption (\ref{eq:condition-hash-function}), we have \(\Prob_\Hash(Y_m=\lambda')=0\) for any \(\lambda'\in\mathbb{R}\) and \(Y_m,Y_{m'}\) are by definition independent variables for \(m\neq m'\in\{1,\dotsc,M\}\). We can hence apply Lemma~\ref{lem-sum-probability-zero} to conclude that
 \begin{align}\label{eq:503}
 \Prob_\Hash(X_j=\lambda)
& =\Prob_\Hash\left(\E_{\Isom\sim\U(\SymNorm(\History))}[\Hash(\Isom^*\DecP,\Isom\History)]=\lambda\right)
 \\& =\Prob_\Hash\left(\sum_{m=1}^MY_m\E_{\Isom\sim\U(\SymNorm(\History))}[\delta_{Y_m,\Hash(\Isom^*\DecP,\Isom\History)}]=\lambda\right)
   \\& =\Prob_\Hash\left(\sum_{m=1}^MY_mw_m=\lambda\right)
   \\ &    =\Prob_\Hash\left(\langle Y, w\rangle=\lambda\right)
\\&=0.\end{align}
 
Since we have shown above that \(v\neq v'\), that \(X_j,X_{j'}\) are independent for \(j\neq j'\in\mathcal{J}\), and that \({\Prob_\Hash(X_j=\lambda)=0}\) for any \(\lambda\in\mathbb{R}\) and \(j\in\mathcal{J}\), we can apply Lemma~\ref{lem-sum-probability-zero} again to get
\begin{equation}\label{eq:505}\Prob_\Hash(\langle X, v-v'\rangle = 0)=0.\end{equation}
 
Finally, for \(j\in\mathcal{J}\) let \(\History^{(j)}\in\HistorySet^\DecP\) arbitrary such that \(j=\Aut(\DecP)\History^{(j)}\).
Then 
\begin{multline}
\langle X, v\rangle=
\sum_{j\in\mathcal{J}}|\Aut(\DecP)\History^{(j)}|\Prob_{\Psi^{\DecP}(\Policy)}(\HistoryRV^\DecP=\History^{(j)})X_j
\\=
\sum_{j\in\mathcal{J}}|\Aut(\DecP)\History^{(j)}|\Prob_{\Psi^{\DecP}(\Policy)}(\HistoryRV^\DecP=\History^{(j)})\E_{\Isom\sim\U( \SymNorm(\History^{(j)}))}\left[\Hash(\Isom^*\DecP,\Isom\History^{(j)})\right]
\\=
\sum_{\History\in\HistorySet^\DecP}\Prob_{\Psi^{\DecP}(\Policy)}(\HistoryRV^\DecP=\History)\E_{\Isom\sim\U( \SymNorm(\History))}\left[\Hash(\Isom^*\DecP,\Isom\History)\right]
=
\chi^\Hash(\DecP,\Policy)
\end{multline}
and analogously
\begin{equation}
\langle X, v'\rangle=
\chi^\Hash(\DecP,\Policy').
\end{equation}
Hence, it is
\begin{equation}\label{eq:504}\{\chi^\Hash(\DecP,\Policy)=\chi^\Hash(\DecP,\Policy')\}=\{\langle X, v\rangle =  \langle X, v'\rangle\}.\end{equation}
It follows that
\[\Prob_\Hash(\chi^\Hash(\DecP,\Policy)=\chi^\Hash(\DecP,\Policy'))\overset{(\ref{eq:504})}{=}\Prob_\Hash(\langle X, v\rangle =  \langle X, v'\rangle)
=\Prob_\Hash(\langle X, v-v'\rangle = 0)\overset{(\ref{eq:505})}{=}0.\]
This concludes the proof.
\end{proof}

\fi

\end{document}